\title{Achievable Fairness on Your Data With Utility Guarantees}
\author{%
  Muhammad Faaiz Taufiq\thanks{
  Corresponding authors: \texttt{faaiz.taufiq@bytedance.com} and \texttt{jeanfrancois@bytedance.com}} \\
  ByteDance Research \\
  \texttt{faaiz.taufiq@bytedance.com}\\
  \And
  Jean-Fran\c cois Ton \\
  ByteDance Research\\
  \texttt{jeanfrancois@bytedance.com}\\
  \And
  Yang Liu\\
  University of California Santa Cruz\\
  \texttt{yangliu@ucsc.edu}\\
}
\begin{document}
\maketitle
\begin{abstract}

In machine learning fairness, training models that minimize disparity across different sensitive groups often leads to diminished accuracy, a phenomenon known as the fairness-accuracy trade-off. The severity of this trade-off inherently depends on dataset characteristics such as dataset imbalances or biases and therefore, using a uniform fairness requirement across diverse datasets remains questionable.
To address this, we present a computationally efficient approach to approximate the fairness-accuracy trade-off curve tailored to individual datasets, backed by rigorous statistical guarantees. By utilizing the You-Only-Train-Once (YOTO) framework, our approach mitigates the computational burden of having to train multiple models when approximating the trade-off curve.
Crucially, we introduce a novel methodology for quantifying uncertainty in our estimates, thereby providing practitioners with a robust framework for auditing model fairness while avoiding false conclusions due to estimation errors.
Our experiments spanning tabular (e.g., \texttt{Adult}), image (\texttt{CelebA}), and language (\texttt{Jigsaw}) datasets underscore that our approach not only reliably quantifies the optimum achievable trade-offs across various data modalities but also helps detect suboptimality in SOTA fairness methods.

\end{abstract}

\section{Introduction}\label{sec:intro}

A key challenge in fairness for machine learning is to train models that minimize disparity across various sensitive groups such as race or gender \citep{caton2020fairness, ustun2019fairness, celis2019classification}. This often comes at the cost of reduced model accuracy, a phenomenon termed accuracy-fairness trade-off \citep{valdivia2021how, martinez2020minimax}. This trade-off can differ significantly across datasets, depending on factors such as dataset biases, imbalances etc.
\citep{agarwal2018reductions, bendekgey2021scalable, celis2021fair}.  

To demonstrate how these trade-offs are inherently dataset-dependent, we consider a simple example involving two distinct crime datasets. Dataset A has records from a community where crime rates are uniformly distributed across all racial groups, whereas Dataset B comes from a community where historical factors have resulted in a disproportionate crime rate among a specific racial group. 
Intuitively, training models which are racially agnostic is more challenging for Dataset B, due to the unequal distribution of crime rates across racial groups, and will result in a greater loss in model accuracy as compared to Dataset A. 

This example underscores that setting a uniform fairness requirement across diverse datasets (such as requiring the fairness violation metric to be below 10\% for both datasets), while also adhering to essential accuracy benchmarks is impractical. Therefore, choosing fairness guidelines for any dataset necessitates careful consideration of
its individual characteristics and underlying biases. 
In this work, 
we advocate against the use of one-size-fits-all fairness mandates by proposing a nuanced, dataset-specific framework for quantifying acceptable range of accuracy-fairness trade-offs. To put it concretely, the question we consider is:
\begin{figure}[t]
  \begin{center}
  \subfloat{\includegraphics[width=0.50\textwidth]{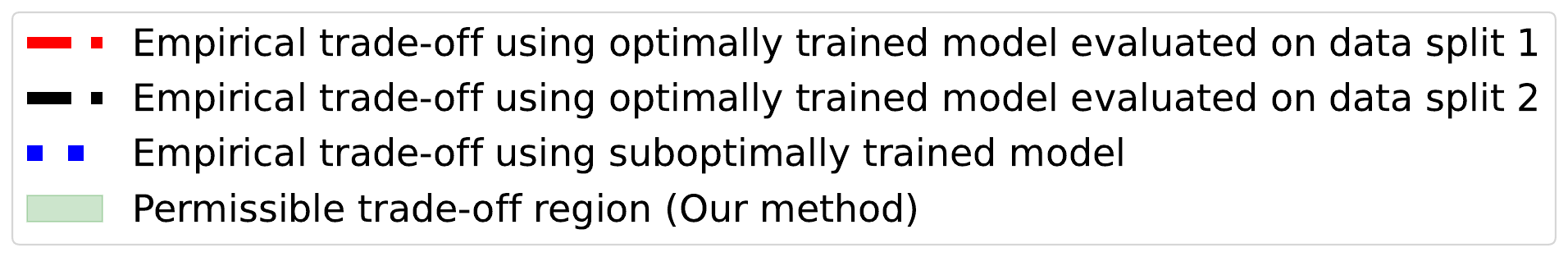}}\\
    \subfloat{\includegraphics[width=0.75\textwidth]{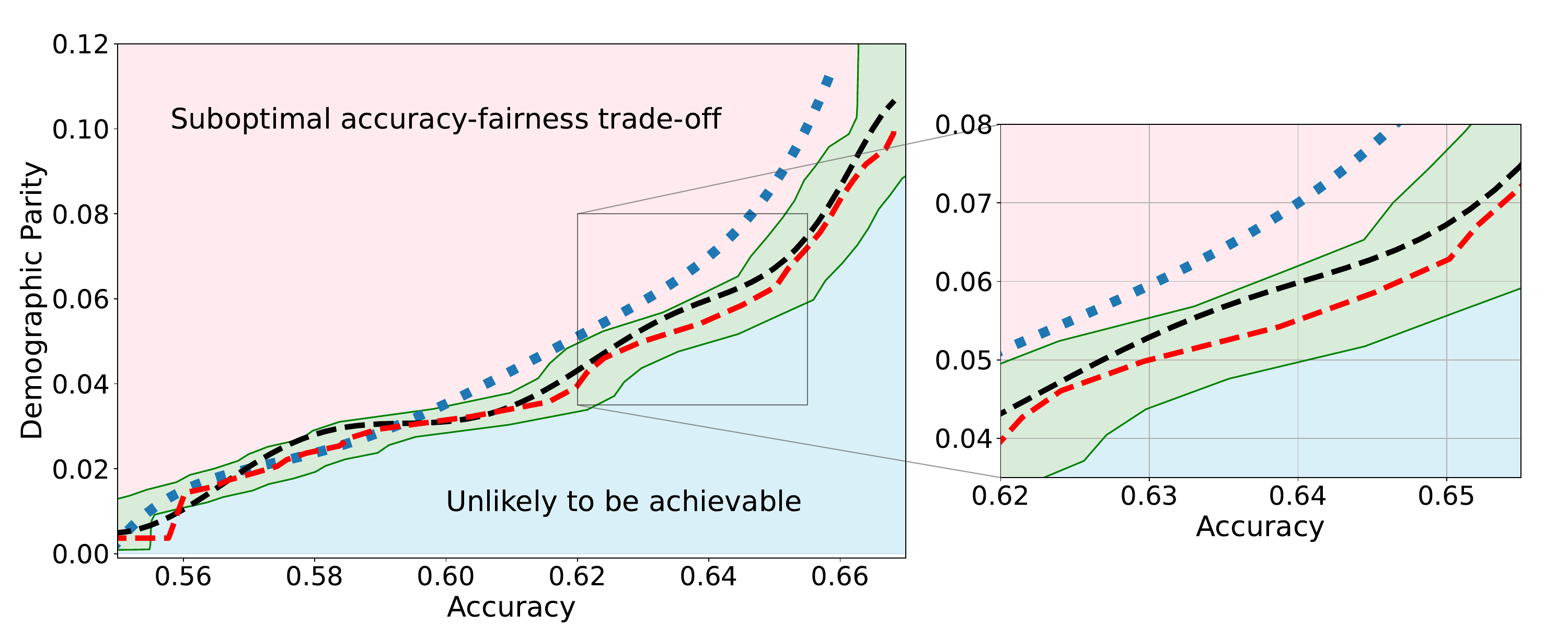}}
  \end{center}
  \caption{
  Accuracy-fairness trade-offs for COMPAS dataset (on held-out data). The black and red curves are obtained using the same optimally trained model evaluated on different splits. The blue curve is obtained using a suboptimally trained model.
  The green area depicts the range of permissible fairness violations for each accuracy, pink area shows suboptimal accuracy-fairness trade-offs, and blue area shows unlikely-to-be-achieved ones. (Details in Appendix \ref{subsec:figure_details})}
  \label{fig:illu}
\end{figure}
\begin{center}
\textit{Given a dataset, what is the range of permissible fairness violations corresponding to each accuracy threshold for models in a given class $\mathcal{H}$?} 
\end{center}

This question can be addressed by considering the optimum accuracy-fairness trade-off, which shows the minimum fairness violation achievable for each level of accuracy.  
Unfortunately, this curve is typically unavailable and hence, various optimization techniques have been proposed to approximate this curve, ranging from regularization \citep{bendekgey2021scalable, olfat2020flexible} to adversarial learning \citep{zhang2018mitigating, yang2023adversarial}.

However, approximating the trade-off curve using these aforementioned methods has some serious limitations. Firstly, these methods require retraining hundreds if not thousands of models to obtain a good approximation of the trade-off curve, making them computationally infeasible for large datasets or models. 
Secondly, these works do not account for finite-sampling errors in the obtained curve. 
This is problematic since the empirical trade-off evaluated over a finite dataset may not match the exact trade-off over the full data distribution.

We illustrate this phenomenon in Figure \ref{fig:illu} where the black and red trade-off curves are obtained using the same model but evaluated over two different test data draws. 
Here, relying solely on the estimated curves without accounting for the uncertainty could lead us to make the incorrect conclusion that the methodology used to obtain the black trade-off curve is sub-optimal (compared to the red curve) as it achieves a higher fairness violation for accuracies in the range $[0.62, 0.66]$. However, this discrepancy arises solely due to finite-sampling errors. 

In this paper, we address these challenges by introducing a computationally efficient method of approximating the optimal accuracy-fairness trade-off curve, supported by rigorous statistical guarantees. 
Our methodology not only circumvents the need to train multiple models (leading to at least a 10-fold reduction in computational cost) but is also the first to quantify the uncertainty in the estimated curve, arising from both, finite-sampling error as well as estimation error. 
To achieve this, our approach adopts a novel probabilistic perspective and provides guarantees that remain valid across all finite-sample draws.
This also allows practitioners to distinguish if an apparent suboptimality in a baseline could be explained by finite-sampling errors (as in the black curve in Figure \ref{fig:illu}), or if it stems from genuine deficiencies in the fairness interventions applied (as in the blue curve).

The contributions of this paper are three-fold:
\begin{itemize}
    \item We present a computationally efficient methodology for approximating the accuracy-fairness
    trade-off curve by training only a single model. This is achieved by adapting a technique from \cite{Dosovitskiy2020You} called You-Only-Train-Once (YOTO) to the fairness setting. 
    \item To account for the approximation and finite-sampling errors, we introduce a novel technical framework to construct confidence intervals (using the trained YOTO model) which contain the optimal accuracy-fairness trade-off curve with statistical guarantees. For any accuracy threshold $\psi$ chosen at \emph{inference time}, this gives us a statistically backed range of permissible fairness violations $[l(\psi), u(\psi)]$, allowing us to answer our previously posed question:
    
    \begin{center}
    \textit{Given a dataset, the permissible range of fairness violations corresponding to an accuracy threshold of $\psi$ is $[l(\psi), u(\psi)]$ for models in a given class $\mathcal{H}$.}
    \end{center}

    \item Lastly, we showcase the vast applicability of our method empirically across various data modalities including tabular, image and text datasets. We evaluate our framework on a suite of SOTA fairness methods and show that our intervals are both reliable and informative.
\end{itemize}

\section{Preliminaries}\label{sec:preliminaries}
\paragraph{Notation}
Throughout this paper, we consider a binary classification task, where each training sample is composed of triples, $(X, A, Y)$. $X\in\mathcal{X}$ denotes a vector of features, $A \in \mathcal{A}$ indicates a discrete sensitive attribute, and $Y \in \mathcal{Y}\coloneqq \{0, 1\}$ represents a label. To make this more concrete, if we take loan default prediction as the classification task, $X$ represents individuals' features such as their income level and loan amount; $A$ represents their racial identity; and $Y$ represents their loan default status.
Having established the notation, for completeness, we provide some commonly used fairness violations $\fairnessloss(h) \in [0, 1]$ for a classifier model $h:\mathcal{X} \rightarrow \mathcal{Y} $ when $\mathcal{A} =\{0, 1\}$:

\paragraph{Demographic Parity (DP)} The DP condition states that the selection rates for all sensitive groups are equal, i.e.  $\prob(h(X) = 1 \mid A=a) = \prob(h(X) = 1)$ for any $a \in \mathcal{A}$. The absolute DP violation is:
    \begin{align}
        \dpmetric(h) \coloneqq | \prob(h(X) = 1 \mid A=1) - \prob(h(X) = 1 \mid A=0)|. \nonumber
    \end{align}
\paragraph{Equalized Opportunity (EOP)} The EOP condition states that the true positive rates for all sensitive groups are equal, i.e. $\prob(h(X) = 1 \mid A=a, Y=1) = \prob(h(X) = 1 \mid Y=1)$ for any $a \in \mathcal{A}$. The absolute EOP violation is:
    \begin{align*}
        \eopmetric(h) \coloneqq& | \prob(h(X) = 1 \mid A=1, Y=1) - \prob(h(X) = 1 \mid A=0, Y=1)|.
    \end{align*}

\subsection{Problem setup}
Next, we formalise the notion of \emph{accuracy-fairness trade-off}, which is the main quantity of interest in our work.
For a model class $\mathcal{H}$ (e.g., neural networks) and a given accuracy threshold $\psi \in [0, 1]$, we define the optimal accuracy-fairness trade-off $\tradeoff(\psi)$ as,
\begin{align}
    \tradeoff(\psi) \coloneqq \min_{h \in \mathcal{H}} \fairnessloss(h) \quad \textup{subject to} \quad \Acc(h) \geq \psi. \label{eq:tradeoff-def}   
\end{align}
Here, $\fairnessloss(h)$ and $\Acc(h)$ denote the fairness violation and accuracy of $h$ over the \emph{full data distribution}.
For an accuracy $\psi'$ which is unattainable, we define $\tradeoff(\psi')=1$ and we focus on models $\mathcal{H}$ trained using gradient-based methods.
Crucially,
our goal is not to estimate the trade-off at a fixed accuracy level, but instead to reliably and efficiently estimate the \emph{entire} trade-off curve $\tradeoff$. 
In contrast, previous works
\citep{agarwal2018reductions, celis2021fair} impose an apriori fairness constraint during training and therefore each trained model only recovers one point on the trade-off curve corresponding to this pre-specified constraint.

If available, this trade-off curve would allow practitioners to characterise 
exactly how, for a given dataset, the minimum fairness violation varies as model accuracy increases. This not only provides a principled way of selecting data-specific fairness requirements, but
also serves as a tool to audit if a model meets acceptable fairness standards by checking if its accuracy-fairness trade-off lies on this curve. Nevertheless, obtaining this ground-truth trade-off curve exactly is impossible within the confines of a finite-sample regime (owing to finite-sampling errors). 
This means that even if baseline A's empirical trade-off evaluated on a finite dataset is suboptimal compared to the empirical trade-off of baseline B, this does not necessarily imply suboptimality on the full data distribution. 

We illustrate this in Figure \ref{fig:illu} where both red and black trade-off curves are obtained using the same model but evaluated on different test data splits. Here, even though the black curve appears suboptimal compared to the red curve for accuracies in $[0.62,0.66]$, this apparent suboptimality is solely due to finite-sampling errors (since the discrepancy between the two curves arises only due to different evaluation datasets). If we rely only on comparing empirical trade-offs, we would incorrectly flag the methodology used to obtain the black curve as suboptimal. 

To address this, we construct confidence intervals (CIs), shown as the green region in Figure \ref{fig:illu}, that account for such finite-sampling errors. In this case 
both trade-offs fall within our CIs 
which correctly indicates that this apparent suboptimality could stem from finite-sample variability. 
Conversely, a baseline's trade-off falling above our CIs (as in the blue curve in Figure \ref{fig:illu}) offers a confident assessment of suboptimality, as this cannot be explained away by finite-sample variability. Therefore, our CIs equip practitioners with a robust auditing tool. They can confidently identify suboptimal baselines while avoiding false conclusions caused by considering empirical trade-offs alone.

\paragraph{High-level road map} 
To achieve this, our proposed methodology adopts a two-step approach:
\begin{enumerate}
    \item Firstly, we propose \emph{loss-conditional fairness training}, a computationally efficient methodology of estimating the entire trade-off curve $\tradeoff$ by training a single model, obtained by adapting the YOTO framework 
    \citep{Dosovitskiy2020You} to the fairness setting. 
    \item Secondly, to account for the approximation and finite-sampling errors in our estimates, we introduce a novel methodology of constructing confidence intervals on the trade-off curve $\tradeoff$ using the trained YOTO model.
    Specifically, given $\alpha \in (0, 1)$, we construct confidence intervals $\tradeoffCIs^{\alpha}\subseteq [0,1]$ which satisfy guarantees of the form:
    \[
        \prob(\tradeoff(\Psi) \in \tradeoffCIs^{\alpha}) \geq 1-\alpha.
    \]
    Here, $\tradeoffCIs^{\alpha}$ and $\Psi \in [0, 1]$ are random variables obtained  
    using a held-out calibration dataset $\caldata$ (see Section \ref{sec:cis}) and the probability is taken over different draws of $\caldata$.
\end{enumerate}

\section{Methodology}\label{sec:methodology}
First, we demonstrate how our 2-step approach offers a practical and statistically sound method for estimating $\tradeoff(\psi)$. Figure \ref{fig:illu} provides an illustration of our proposed confidence intervals (CIs) $\tradeoffCIs^{\alpha}$ and shows how they can be interpreted as a range of `permissible' values of accuracy-fairness trade-offs (the green region). 
Specifically, if for a classifier $h_0$, the accuracy-fairness pair $(\Acc(h_0), \fairnessloss(h_0))$ lies above the CIs $\tradeoffCIs^{\alpha}$ (i.e., the pink region in Figure \ref{fig:illu}), then $h_0$ is likely to be suboptimal in terms of the fairness violation, i.e., 
there likely exists $h' \in \mathcal{H}$ with $\Acc(h') \geq \Acc(h_0)$ and $\fairnessloss(h')\leq \fairnessloss(h_0)$.
On the other hand, 
it is unlikely for any model $h' \in \mathcal{H}$ to achieve a trade-off below the CIs $\tradeoffCIs^{\alpha}$ (the blue region in Figure \ref{fig:illu}).
Next, we outline how to construct such intervals.

\subsection{Step 1: Efficient estimation of trade-off curve}\label{sec:estimation}
The first step of constructing the intervals is to approximate the trade-off curve by recasting the problem into a constrained optimization objective. The optimization problem formulated in \eqref{eq:tradeoff-def} is however, often too complex to solve, because the accuracy $\Acc(h)$ and fairness violations $\fairnessloss(h)$ are both non-smooth \citep{agarwal2018reductions}. These constraints make it hard to use standard optimization methods that rely on gradients \citep{kingma2014adam}. To get around this issue, previous works \citep{agarwal2018reductions, bendekgey2021scalable} replace the non-smooth constrained optimisation problem with a smooth surrogate loss. Here, we consider parameterized family of classifiers $\mathcal{H} = \{h_\theta: \mathcal{X} \rightarrow \mathbb{R} \,|\,\theta\in\Theta\}$ (such as neural networks) 
trained using the regularized loss:
\begin{align}
    \mathcal{L}_{\lambda}(\theta) = \E[\celoss(h_\theta(X), Y)] + \lambda\,\fairnesslossrelaxed(h_\theta). \label{eq:relaxed-loss-combined}
\end{align}
where, $\celoss$ is the cross-entropy loss for the classifier $h_\theta$ and $\fairnesslossrelaxed(h_\theta)$ is a smooth relaxation of the fairness violation $\fairnessloss$ \citep{bendekgey2021scalable, loahaus2020too}. 
For example, when the fairness violation is DP, \cite{bendekgey2021scalable} consider
    \[
    \fairnesslossrelaxed(h_\theta) = \E[g(h_\theta(X))\mid A = 1] - \E[g(h_\theta(X))\mid A = 0],
    \]
for different choices of $g(x)$, including the identity and sigmoid functions. We include more examples of such regularizers in Appendix \ref{subsec:smooth_fairness_app}.
The parameter $\lambda$ in $\mathcal{L}_{\lambda}$ modulates the accuracy-fairness trade-off with lower values of $\lambda$ favouring higher accuracy over reduced fairness violation. 

Now that we defined the optimization objective, obtaining the trade-off curve becomes straightforward by simply optimizing multiple models over a grid of regularization parameters $\lambda$. 
However, training multiple models can be computationally expensive, especially when this involves large-scale models (e.g. neural networks). 
To circumvent this computational challenge, we introduce loss-conditional fairness training obtained by adapting the YOTO framework proposed by \cite{Dosovitskiy2020You}.

\subsubsection{Loss-conditional fairness training}
As we describe above, a popular approach for approximating the accuracy-fairness trade-off $\tradeoff(\psi)$ involves training multiple models $h_{\theta^\ast_\lambda}$ over a discrete grid of $\lambda$ hyperparameters with the regularized loss $\mathcal{L}_{\lambda}$. To avoid the computational overhead of training multiple models, \cite{Dosovitskiy2020You} propose `You Only Train Once' (YOTO), a methodology of training one model $h_\theta: \mathcal{X} \times \Lambda \rightarrow \mathbb{R}$, which takes $\lambda \in \Lambda \subseteq \mathbb{R}$ as an additional input using Feature-wise Linear Modulation (FiLM) \citep{perez2017film} layers. YOTO is trained such that at inference time $h_\theta(\cdot, \lambda')$ recovers the classifier obtained by minimising $\mathcal{L}_{\lambda'}$ in \eqref{eq:relaxed-loss-combined}. 

Recall that we are interested in minimising the family of losses $\mathcal{L}_\lambda$, parameterized by $\lambda \in \Lambda$ (\eqref{eq:relaxed-loss-combined}). Instead of fixing $\lambda$, YOTO solves an optimisation problem where the parameter $\lambda$ is sampled from a distribution $P_\lambda$. As a result, during training the model observes many values of $\lambda$ and learns to optimise the loss $\mathcal{L}_\lambda$ for all of them simultaneously. At inference time, the model can be conditioned on a chosen value $\lambda'$ and recovers the model trained to optimise $\mathcal{L}_{\lambda'}$. 
Hence, once adapted to our setting, the YOTO loss becomes:
\begin{align}
     \argmin_{h_\theta: \mathcal{X} \times \Lambda \rightarrow \mathbb{R}} \E_{\lambda \sim P_{\lambda}} \left[ \E[\celoss(h_\theta(X, \lambda), Y)] + \lambda\,\fairnesslossrelaxed(h_\theta(\cdot, \lambda))\right]. \nonumber
\end{align}
 Having trained a YOTO model, the trade-off curve $\tradeoff(\psi)$ can be approximated by simply plugging in different values of $\lambda$ at inference time and thus avoiding additional training. From a theoretical point of view, 
 \citep[Proposition 1]{Dosovitskiy2020You} proves that under the assumption of large enough model capacity, training the loss-conditional YOTO model performs as well as the separately trained models while only requiring a single model. Although the model capacity assumption might be hard to verify in practice, our experimental section has shown that the trade-off curves estimates $\widehat{\tradeoff(\psi)}$ obtained using YOTO are consistent with the ones obtained using separately trained models.

It should be noted, as is common in optimization problems, that the estimated trade-off curve \( \widehat{\tradeoff(\psi)} \) may not align precisely with the true trade-off curve \( \tradeoff(\psi) \). This discrepancy originates from two key factors. Firstly, the limited size of the training and evaluation datasets introduces errors in the estimation of \( \widehat{\tradeoff(\psi)} \). Secondly, we opt for a computationally tractable loss function instead of the original optimization problem in \eqref{eq:tradeoff-def}. This may result in our estimation $\widehat{\tradeoff(\psi)}$ yielding sub-optimal trade-offs, as can be seen from Figure \ref{fig:illu}. 
Therefore, to ensure that our procedure yields statistically sound inferences, we next construct confidence intervals using the YOTO model, designed to contain the true trade-off curve \( \tradeoff(\psi) \) with high probability.

\subsection{Step 2: Constructing confidence intervals}\label{sec:cis}
As mentioned above, our goal here is to use our trained YOTO model to construct confidence intervals (CIs) for the optimal trade-off curve $\tradeoff(\psi)$ defined in \eqref{eq:tradeoff-def}. 
Specifically, 
we assume access to a held-out \emph{calibration} dataset $\caldata \coloneqq \{(X_i, A_i, Y_i)\}_i$ which is disjoint from the training data. Given a level $\alpha \in [0, 1]$, we construct CIs $\tradeoffCIs^{\alpha} \subseteq [0, 1]$ using $\caldata$, which provide guarantees of the form:
\begin{align}
\prob(\tradeoff(\Psi) \in \tradeoffCIs^{\alpha}) \geq 1-\alpha. \label{eq:prob-guarantee-ldelta}
\end{align}
Here, it is important to note that $\Psi \in [0, 1]$ and $\tradeoffCIs^{\alpha}$ are random variables obtained from the calibration data $\caldata$, and the guarantee in \eqref{eq:prob-guarantee-ldelta} holds marginally over $\Psi$ and $\tradeoffCIs^\alpha$.
While our CIs in this section require the availability of the sensitive attributes in $\caldata$, in Appendix \ref{sec:scarce_sens_atts_app} we also extend our methodology to the setting where sensitive attributes are missing. 
In this section, for notational convenience we use $h_\lambda(\cdot)$ to denote the YOTO model $h_\theta(\cdot, \lambda)$ for $\lambda \in \Lambda$.
The uncertainty in our trade-off estimate arises, in part, from the uncertainty in the accuracy and fairness violations of our trained model. Therefore, 
our methodology of constructing CIs on $\tradeoff$, involves first constructing CIs on test accuracy $\Acc(h_\lambda)$ and fairness violation $\fairnessloss(h_\lambda)$ for a given value of $\lambda$ using $\caldata$, denoted as $\accintervalset^{\alpha}(\lambda)$ and $\fairnessintervalset^{\alpha}(\lambda)$ respectively satisfying,
\begin{align}
    \prob(\Acc(h_\lambda) \in \accintervalset^{\alpha}(\lambda)) \geq 1-\alpha, \nonumber \quad \textup{and} \quad
    \prob(\fairnessloss(h_\lambda) \in \fairnessintervalset^{\alpha}(\lambda)) \geq 1-\alpha.\nonumber
\end{align}
One way to construct these CIs involves using assumption-light concentration inequalities such as Hoeffding's inequality. To be more concrete, for the accuracy $\Acc(h_\lambda)$:
\begin{lemma}[Hoeffding's inequality]\label{prop:heoff}
    Given a classifier \( h_\lambda: \mathcal{X} \rightarrow \mathcal{Y} \), we have that,
    \begin{align*}
        \prob\left( \Acc(h_\lambda) \in
        \left[\widetilde{\Acc(h_\lambda)}  - \delta, \widetilde{\Acc(h_\lambda)} + \delta\right] \right) \geq 1-\alpha. 
    \end{align*}
    Here, 
    $\widetilde{\Acc(h)} \coloneqq \sum_{(X_i, A_i, Y_i)\in \caldata}\frac{\ind(h(X_i) = Y_i)}{|\caldata|}$ 
    and $\delta \coloneqq \sqrt{\frac{1}{2|\caldata|}\log{(\frac{2}{\alpha})}}$.
\end{lemma}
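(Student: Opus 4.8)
The plan is to recognize this as a textbook application of the two-sided Hoeffding concentration inequality for bounded i.i.d. random variables; almost all the content lies in correctly identifying the relevant random variables and then verifying the algebra that the stated $\delta$ yields exactly the $1-\alpha$ guarantee.

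First, I would define, for each calibration point $(X_i, A_i, Y_i) \in \caldata$, the Bernoulli random variable $Z_i \coloneqq \ind(h_\lambda(X_i) = Y_i) \in \{0,1\}$. Because the calibration set is disjoint from the training data, conditioning on the training data renders $h_\lambda$ a fixed measurable function, so the $Z_i$ are i.i.d. and bounded in $[0,1]$. By definition, $\E[Z_i] = \prob(h_\lambda(X) = Y) = \Acc(h_\lambda)$, while the empirical mean $\frac{1}{|\caldata|}\sum_i Z_i$ is exactly $\widetilde{\Acc(h_\lambda)}$.

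Next, I would invoke Hoeffding's inequality in the form $\prob(|\bar{Z} - \E[Z]| \geq t) \leq 2\exp(-2nt^2)$ with $n = |\caldata|$ and range $[0,1]$, and then substitute $t = \delta$. The remaining step is purely algebraic: plugging $\delta = \sqrt{\frac{1}{2|\caldata|}\log(\frac{2}{\alpha})}$ gives $2|\caldata|\delta^2 = \log(\frac{2}{\alpha})$, so the tail bound collapses to $2\exp(-\log(\frac{2}{\alpha})) = \alpha$. Taking the complement event then yields the claimed coverage $\prob(\Acc(h_\lambda) \in [\widetilde{\Acc(h_\lambda)} - \delta,\, \widetilde{\Acc(h_\lambda)} + \delta]) \geq 1-\alpha$.

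Since the inequality itself is standard, there is no genuine analytic obstacle here. The one point requiring care—more a modelling subtlety than a difficulty—is the independence between $h_\lambda$ and $\caldata$ that justifies treating the $Z_i$ as i.i.d.; this is precisely what the train/calibration split buys us, and the guarantee should be understood as holding conditionally on the trained model (hence marginally over draws of $\caldata$), consistent with how the paper frames its subsequent guarantees.
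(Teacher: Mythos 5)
Your proof is correct and matches the paper's approach exactly: the paper's own proof simply states that the lemma is a straightforward application of Hoeffding's inequality, and you have filled in precisely those standard details (i.i.d.\ Bernoulli indicators on the held-out calibration set, the two-sided tail bound, and the algebra showing $2\exp(-2|\caldata|\delta^2)=\alpha$). Your remark about conditioning on the trained model to justify treating the indicators as i.i.d.\ is the right modelling point and is consistent with how the paper frames its guarantees.
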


Lemma \ref{prop:heoff} illustrates that we can use Hoeffding's inequality to construct confidence interval $\accintervalset^\alpha(\lambda) = [\widetilde{\Acc(h_\lambda)}  - \delta, \widetilde{\Acc(h_\lambda)}  + \delta]$ on $\Acc(h_\lambda)$ such that the true $\Acc(h_\lambda)$ will lie inside the CI with probability $1-\alpha$. Analogously, we also construct CIs for fairness violations, $\fairnessloss(h_\lambda)$, although this is subject to additional nuanced challenges, which we address using a novel sub-sampling based methodology in Appendix \ref{sec:fairness_cis_app}. 
Once we have CIs over $
\Acc(h_\lambda)$ and $\fairnessloss(h_\lambda)$ for a model $h_\lambda$, we next outline how to use these to derive CIs for the minimum achievable fairness $\tradeoff$, satisfying \eqref{eq:prob-guarantee-ldelta}. 
We proceed by explaining how to construct the upper and lower CIs separately, as the latter requires additional considerations regarding 
the trade-off achieved by YOTO. 

\subsubsection{Upper confidence intervals}\label{sec:upper_ci}
We first outline
how to obtain one-sided upper confidence intervals on the
optimum accuracy-fairness trade-off
$\tradeoff(\Psi)$ of the form $\tradeoffCIs^{\alpha} = [0, \fairnessintervaluplambda]$, which satisfies the probabilistic guarantee in \eqref{eq:prob-guarantee-ldelta}. To this end, given a classifier $h_\lambda \in \mathcal{H}$, our methodology involves constructing one-sided lower CI on the accuracy $\Acc(h_\lambda)$ and upper CI on the fairness violation $\fairnessloss(h_\lambda)$. We make this concrete below:

\begin{proposition}\label{prop:probability-guarantee-ub}
    Given $h_\lambda\in \mathcal{H}$, let $\accintervallolambda, \fairnessintervaluplambda \in [0, 1]$ be lower and upper CIs on $\Acc(h_\lambda)$ and $\fairnessloss(h_\lambda)$, i.e. 
    \begin{align*}
        \prob\left(\Acc(h_\lambda) \geq \accintervallolambda\right) \geq 1-\alpha/2, \quad \textup{and} \quad 
        \prob(\fairnessloss(h_\lambda) \leq \fairnessintervaluplambda) \geq 1-\alpha/2.
    \end{align*}
    Then, $\prob\left(\tradeoff(\accintervallolambda) \leq  \fairnessintervaluplambda\right) \geq 1-\alpha$.
\end{proposition}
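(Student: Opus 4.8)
The plan is to reduce this probabilistic statement to a deterministic, event-wise inequality combined with a union bound. First I would introduce the two high-probability events guaranteed by the hypotheses, namely $E_1 = \{\Acc(h_\lambda) \geq \accintervallolambda\}$ with $\prob(E_1) \geq 1 - \alpha/2$, and $E_2 = \{\fairnessloss(h_\lambda) \leq \fairnessintervaluplambda\}$ with $\prob(E_2) \geq 1 - \alpha/2$.

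The crux is to establish the deterministic inclusion $E_1 \cap E_2 \subseteq \{\tradeoff(\accintervallolambda) \leq \fairnessintervaluplambda\}$. I would fix an outcome in $E_1 \cap E_2$ and argue pointwise. On $E_1$, the classifier $h_\lambda$ satisfies the accuracy constraint $\Acc(h_\lambda) \geq \accintervallolambda$, so $h_\lambda$ lies in the feasible set $\{h \in \mathcal{H} : \Acc(h) \geq \accintervallolambda\}$ of the minimization defining $\tradeoff(\accintervallolambda)$ in \eqref{eq:tradeoff-def}. Since $\tradeoff(\accintervallolambda)$ is the minimum of $\fairnessloss$ over this feasible set and $h_\lambda$ is itself feasible, we obtain $\tradeoff(\accintervallolambda) \leq \fairnessloss(h_\lambda)$; on $E_2$ we additionally have $\fairnessloss(h_\lambda) \leq \fairnessintervaluplambda$, and chaining the two inequalities yields $\tradeoff(\accintervallolambda) \leq \fairnessintervaluplambda$.

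Finally, I would translate this inclusion into the desired probability bound. By the union bound, $\prob(E_1 \cap E_2) = 1 - \prob(E_1^c \cup E_2^c) \geq 1 - \prob(E_1^c) - \prob(E_2^c) \geq 1 - \alpha/2 - \alpha/2 = 1 - \alpha$, and monotonicity of probability together with the inclusion above gives $\prob(\tradeoff(\accintervallolambda) \leq \fairnessintervaluplambda) \geq \prob(E_1 \cap E_2) \geq 1 - \alpha$, as claimed.

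There is no genuinely hard step here; the one point requiring care is that $\accintervallolambda$ is itself a random threshold, so $\tradeoff(\accintervallolambda)$ is a random quantity and one cannot reason through a single fixed value of $\psi$. The feasibility argument sidesteps this precisely because it holds for every outcome in $E_1$ simultaneously: whatever value $\accintervallolambda$ happens to take, the observed model $h_\lambda$ is automatically admissible for that very threshold, so the constrained minimum is dominated by $\fairnessloss(h_\lambda)$ without ever invoking monotonicity of $\tradeoff$ in $\psi$ or requiring $\accintervallolambda$ to be known in advance. (As a sanity check, the event $E_1$ also rules out the degenerate case where $\accintervallolambda$ is an unattainable accuracy, since $h_\lambda$ witnesses its attainability.)
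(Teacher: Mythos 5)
Your proposal is correct and follows essentially the same route as the paper's proof: a union bound to show the joint event $\{\Acc(h_\lambda) \geq \accintervallolambda,\ \fairnessloss(h_\lambda) \leq \fairnessintervaluplambda\}$ holds with probability at least $1-\alpha$, followed by the observation that on this event $h_\lambda$ is feasible for the minimization defining $\tradeoff(\accintervallolambda)$, so $\tradeoff(\accintervallolambda) \leq \fairnessloss(h_\lambda) \leq \fairnessintervaluplambda$. Your closing remark about the randomness of the threshold $\accintervallolambda$ is a nice clarification that the paper leaves implicit, but the underlying argument is identical.
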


Proposition \ref{prop:probability-guarantee-ub} shows that for any model $h_\lambda$, the upper CI on model fairness, $\fairnessintervaluplambda$, provides a valid upper CI for the trade-off value at $\accintervallolambda$, i.e. $\tradeoff(\accintervallolambda)$. This can be used to construct upper CIs on $\tradeoff(\psi)$ for a given accuracy level $\psi$. To understand how this can be achieved, 
we first find $\lambda\in \Lambda$ such that the lower CI on the accuracy of model $h_\lambda$, $\accintervallolambda$, satisfies $\accintervallolambda \geq \psi$. Then, since by definition $\tradeoff$ is a monotonically increasing function, we know that $\tradeoff(\accintervallolambda) \geq \tradeoff(\psi)$. 
Since Proposition \ref{prop:probability-guarantee-ub} tells us that $\fairnessintervaluplambda$ is an upper CI for $\tradeoff(\accintervallolambda)$, it follows that $\fairnessintervaluplambda$ is also a valid upper CI for $\tradeoff(\psi)$.

Intuitively, Proposition \ref{prop:probability-guarantee-ub} provides the `worst-case' optimal trade-off, accounting for finite-sample uncertainty. 
It is important to note that this result
does not rely on any assumptions regarding the optimality of the trained classifiers. This means that the upper CIs will remain valid even if the YOTO classifier $h_\lambda$ is not trained well (and hence achieves sub-optimal accuracy-fairness trade-offs), although in such cases the CI may be conservative. 

Having explained how to construct upper CIs on $\tradeoff(\psi)$, we next move on to the lower CIs.

\subsubsection{Lower confidence intervals}

Obtaining lower confidence intervals on $\tradeoff(\psi)$ is more challenging than obtaining upper confidence intervals. We begin by explaining at an intuitive level why this is the case. 
Suppose that $h_\lambda \in \mathcal{H}$ is such that $\Acc(h_\lambda)=\psi$, then since $\tradeoff$ denotes the minimum attainable fairness violation (\eqref{eq:tradeoff-def}), we have that $\tradeoff(\psi)\leq \fairnessloss(h_\lambda)$. Therefore, any valid upper confidence interval on $\fairnessloss(h_\lambda)$ will also be valid for $\tradeoff(\psi)$. 
However, a lower bound on $\fairnessloss(h_\lambda)$ cannot be used as a lower bound for the minimum achievable fairness $\tradeoff(\psi)$ in general. 
A valid lower CI for $\tradeoff(\psi)$ will therefore depend on the gap between fairness violation achieved by $h_\lambda$, $\fairnessloss(h_\lambda)$, and minimum achievable fairness violation $\tradeoff(\psi)$ (i.e., $\Delta(h_\lambda)$ term in Figure \ref{fig:delta}). We make this concrete by constructing lower CIs depending on $\Delta(h_\lambda)$ explicitly.

\begin{proposition}\label{prop:probability-guarantee-lb}
    Given $h_\lambda\in \mathcal{H}$, let $\accintervaluplambda, \fairnessintervallolambda \in [0, 1]$ be upper and lower CIs on $\Acc(h_\lambda)$ and $\fairnessloss(h_\lambda)$, i.e. 
    \begin{align*}
        \prob(\Acc(h_\lambda) \leq \accintervaluplambda) \geq 1-\alpha/2, \quad \textup{and} \quad
        \prob(\fairnessloss(h_\lambda) \geq \fairnessintervallolambda) \geq 1-\alpha/2.
    \end{align*}
    Then, $\prob\left(\tradeoff(\accintervaluplambda) \geq  \fairnessintervallolambda - \Delta(h_\lambda)\right) \geq 1-\alpha$, 
    where $\Delta(h_\lambda) \coloneqq \fairnessloss(h_\lambda) - \tradeoff(\Acc(h_\lambda)) \geq 0$.
\end{proposition}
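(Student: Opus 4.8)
The plan is to mirror the argument behind Proposition \ref{prop:probability-guarantee-ub}, but with the key twist that a lower bound on $\fairnessloss(h_\lambda)$ alone does \emph{not} lower bound $\tradeoff$, which is exactly what the correction term $\Delta(h_\lambda)$ is designed to repair. First I would observe that $h_\lambda$ is a fixed trained model, so the population quantities $\Acc(h_\lambda)$, $\fairnessloss(h_\lambda)$, the curve $\tradeoff(\cdot)$, and hence the gap $\Delta(h_\lambda) = \fairnessloss(h_\lambda) - \tradeoff(\Acc(h_\lambda))$ are all deterministic; the only randomness (inherited from $\caldata$) enters through the two interval endpoints $\accintervaluplambda$ and $\fairnessintervallolambda$. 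The strategy is then to define the two coverage events, show the target inequality holds deterministically on their intersection, and finish with a union bound.

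Concretely, let $E_1 \coloneqq \{\Acc(h_\lambda) \leq \accintervaluplambda\}$ and $E_2 \coloneqq \{\fairnessloss(h_\lambda) \geq \fairnessintervallolambda\}$, each of which holds with probability at least $1 - \alpha/2$ by hypothesis. On $E_1$, monotonicity of $\tradeoff$ (established in the discussion following Proposition \ref{prop:probability-guarantee-ub}) gives $\tradeoff(\accintervaluplambda) \geq \tradeoff(\Acc(h_\lambda))$. Rearranging the definition of the gap yields $\tradeoff(\Acc(h_\lambda)) = \fairnessloss(h_\lambda) - \Delta(h_\lambda)$, and on $E_2$ we may replace $\fairnessloss(h_\lambda)$ by its lower bound $\fairnessintervallolambda$. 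Chaining these three steps produces
\[
    \tradeoff(\accintervaluplambda) \;\geq\; \fairnessloss(h_\lambda) - \Delta(h_\lambda) \;\geq\; \fairnessintervallolambda - \Delta(h_\lambda)
\]
on the event $E_1 \cap E_2$.

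It then remains to bound $\prob(E_1 \cap E_2)$. Since $\prob(E_1^c) \leq \alpha/2$ and $\prob(E_2^c) \leq \alpha/2$, a union bound gives $\prob(E_1 \cap E_2) \geq 1 - \alpha$, and because the displayed inequality holds surely on this event, $\prob(\tradeoff(\accintervaluplambda) \geq \fairnessintervallolambda - \Delta(h_\lambda)) \geq 1-\alpha$, as claimed. The one genuinely conceptual point — rather than a computational obstacle — is the asymmetry with the upper-CI case: here we must pair an \emph{upper} accuracy CI with a \emph{lower} fairness CI so that monotonicity of $\tradeoff$ pushes in the correct direction, and we must carry the unavoidable $\Delta(h_\lambda)$ term precisely because the trained model $h_\lambda$ need not attain the optimal trade-off at its own accuracy level. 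Keeping the direction of monotonicity and the sign of the $\Delta$ correction mutually consistent is the only place where care is needed.
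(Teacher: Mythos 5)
Your proof is correct and follows essentially the same route as the paper's: both rest on a union bound over the two coverage events, the identity $\fairnessloss(h_\lambda) = \tradeoff(\Acc(h_\lambda)) + \Delta(h_\lambda)$, and the monotonicity of $\tradeoff$. The only (cosmetic) difference is that you establish the inequality deterministically on the intersection event and then bound its probability, whereas the paper chains the inclusions inside the probability expressions directly.
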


Proposition \ref{prop:probability-guarantee-lb} can be used to derive lower CIs on $\tradeoff(\psi)$ at a specified accuracy level $\psi$, using a methodology analogous to that described in Section \ref{sec:upper_ci}. Intuitively, this result provides the `best-case' optimal trade-off, accounting for finite-sample uncertainty. However, unlike the upper CI, the lower CI includes the $\Delta(h_\lambda)$ term, which is typically unknown. To circumvent this, we 
propose a strategy for obtaining plausible approximations for $\Delta(h_\lambda)$ in practice in the following section. 
\subsubsection{Sensitivity analysis for $\Delta(h_\lambda)$}
Recall that $\Delta(h_\lambda)$ quantifies the difference between the fairness loss of classifier $h_\lambda$ and the minimum attainable fairness loss $\tradeoff(\Acc(h_\lambda))$, and is an unknown quantity in general (see Figure \ref{fig:delta}). 
Here, we propose a practical strategy for positing values for $\Delta(h_\lambda)$ which encode our belief on how close the fairness loss $\fairnessloss(h_\lambda)$ is to $\tradeoff(\Acc(h_\lambda))$. This allows us to construct CIs which not only incorporate finite-sampling uncertainty from calibration data, but also account for the possible sub-optimality in the trade-offs achieved by $h_\lambda$. 
The main idea behind our approach is to calibrate $\Delta(h_\lambda)$ using additional separately trained standard models without imposing significant computational overhead.

\paragraph{Details\,\,\,} 
Our sensitivity analysis uses $k$ additional models $\mathcal{M} \coloneqq \{h^{(1)}, h^{(2)}, \ldots, h^{(k)}\} \subseteq \mathcal{H}$ trained separately using the standard regularized loss $\mathcal{L}_{\lambda'}$ (\eqref{eq:relaxed-loss-combined}) for some randomly chosen values of $\lambda'$.
Let $\mathcal{M}_0\subseteq\mathcal{M}$ denote the models which achieve a better empirical trade-off than the YOTO model on $\caldata$, i.e. the empirical trade-offs for models in $\mathcal{M}_0$ lie below the YOTO trade-off curve (see Figure \ref{fig:sens_anal_illu}). We choose $\Delta(h_\lambda)$ for our YOTO model to be the maximum gap between empirical trade-offs of these separately trained models in $\mathcal{M}_0$ and the YOTO model. It can be seen from Proposition \ref{prop:probability-guarantee-lb} that, in practice, this will result in a downward shift in the lower CI until all the separately trained models in $\mathcal{M}$ lie above the lower CI. 
As a result, our methodology yields increasingly conservative lower CIs as the number of additional models  $|\mathcal{M}|$ increases. 
Even though the procedure above requires training additional models $\mathcal{M}$, 
it does not impose the same computational overhead as training models over the full range of $\lambda$ values.
We show empirically in Section \ref{sec:experiments} that in practice 2 models are usually sufficient to obtain informative and reliable intervals. 
Additionally, we also show that when YOTO achieves the optimal trade-off (i.e., $\Delta(h_\lambda)=0$), our sensitivity analysis leaves the CIs unchanged, thereby preventing unnecessary conservatism. 

\begin{figure}[t!]
    \centering
    \begin{subfigure}[t]{0.48\textwidth}
        \centering
        \includegraphics[height=1.5in]{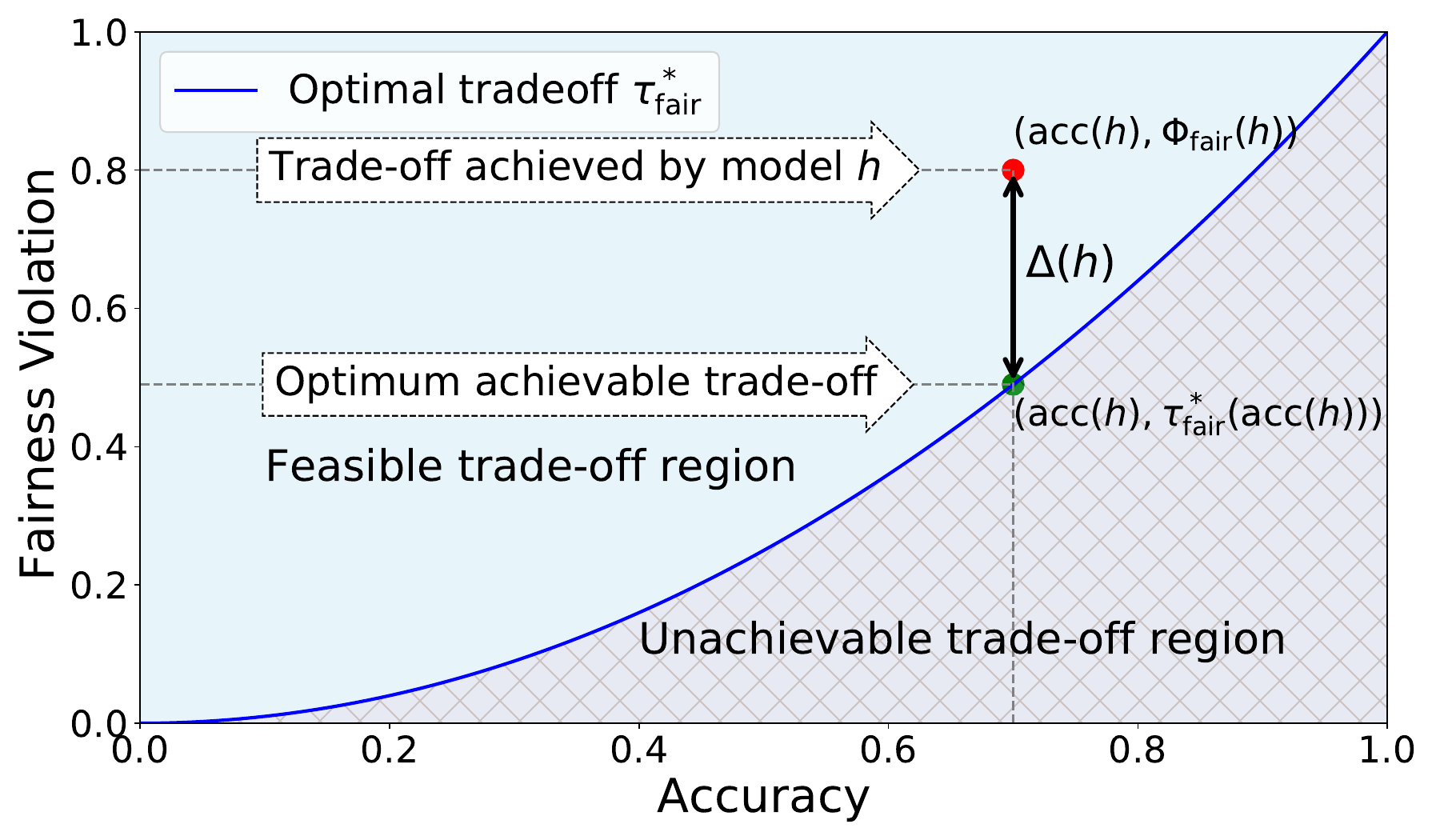}
         \caption{Visual representation of the difference between the optimal and model achieved trade-offs.}
  \label{fig:delta}
    \end{subfigure}%
    ~ 
    \begin{subfigure}[t]{0.48\textwidth}
        \centering
        \includegraphics[height=1.5in]{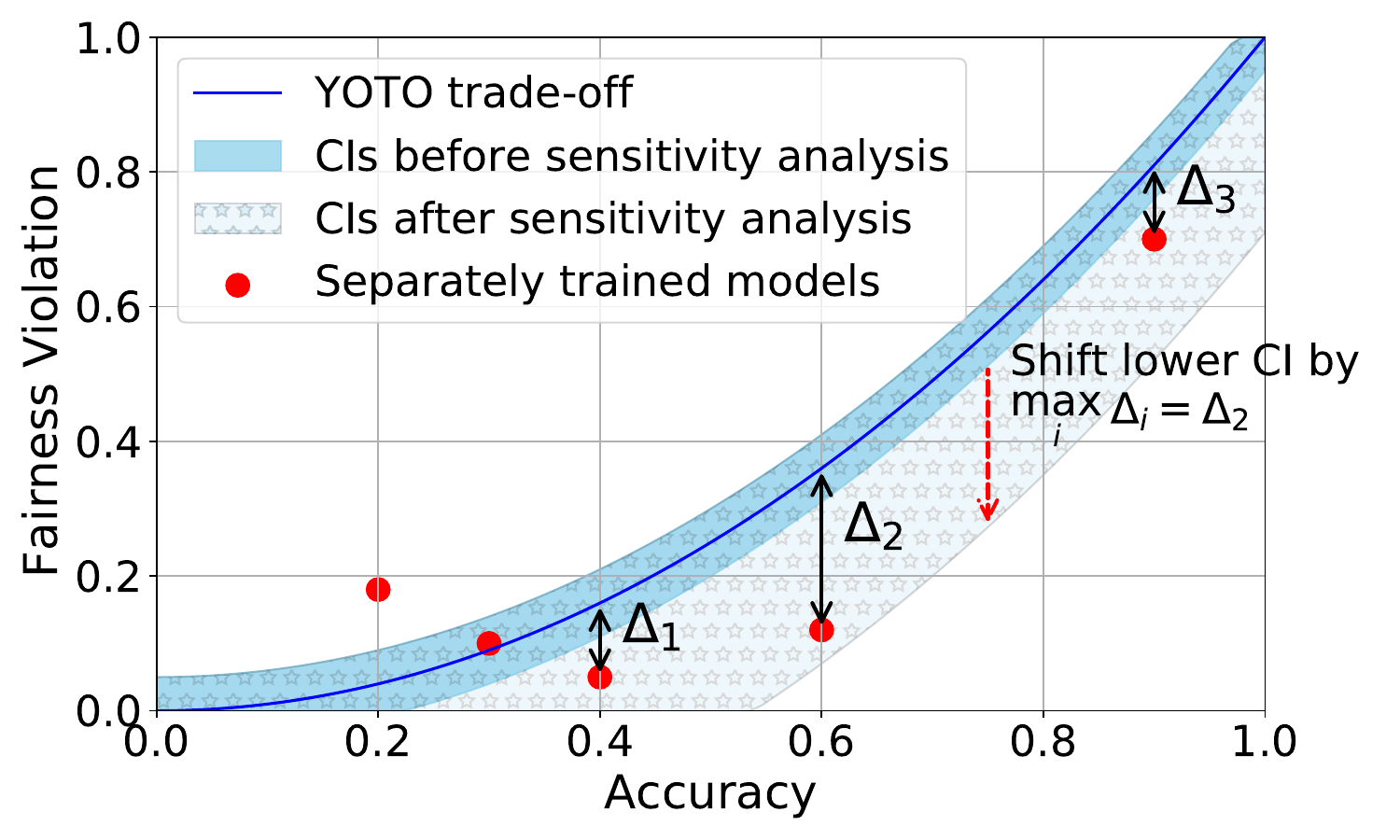}
        \caption{Sensitivity analysis shifts lower CIs below by a constant amount whereas upper CIs remain unchanged.}
  \label{fig:sens_anal_illu}
    \end{subfigure}
    \caption{Visual illustrations for $\Delta(h)$ (Figure \ref{fig:delta}) and our sensitivity analysis procedure (Figure \ref{fig:sens_anal_illu}).}
    \vspace{-0.4cm}
\end{figure}

\paragraph{Asymptotic analysis of $\Delta(h_\lambda)$\,\,\,}
While the procedure described above provides a practical solution for obtaining plausible approximations for $\Delta(h_\lambda)$, we next present a theoretical result which provides reassurance that this gap should become negligible as the number of training data $\trdata$ increases. 

\label{sec:delta_analysis}

\begin{theorem}\label{theorem:delta_h}
    Let $\widehat{\fairnessloss(h')}, \widehat{\Acc(h')}$ denote the fairness violation and accuracy for $h' \in \mathcal{H}$ evaluated on training data $\trdata$ and let
    \begin{align}
        h \coloneqq \arg\min_{h' \in \mathcal{H}} \widehat{\fairnessloss(h')} \text{ subject to } \widehat{\Acc(h')} \geq \delta. \label{eq:empirical_loss}
    \end{align}
    Then, given $\eta \in (0, 1)$, under standard regularity assumptions, we have that with probability at least $1-\eta$,
    $\Delta(h) \leq \widetilde{O}(\trsize^{-\gamma}),$ for some $\gamma \in (0, 1/2]$ 
    where $\widetilde{O}(\cdot)$ suppresses dependence on $\log{(1/\eta)}$.
\end{theorem}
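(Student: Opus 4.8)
The plan is to treat this as a uniform-convergence argument for a constrained optimization problem, combined with a regularity (Lipschitz) assumption on the trade-off curve $\tradeoff$. The key is to control the population gap $\Delta(h) = \fairnessloss(h) - \tradeoff(\Acc(h))$ by relating the empirical constrained minimizer $h$ to a population-feasible reference classifier whose accuracy constraint has been tightened by the uniform deviation.

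First I would establish a uniform deviation bound: under the regularity assumptions (finite complexity of $\mathcal{H}$, e.g. bounded Rademacher complexity or VC dimension), there is $\epsilon_n = \widetilde{O}(\trsize^{-\gamma})$ with $\gamma \in (0, 1/2]$ such that, with probability at least $1 - \eta$,
\[
\sup_{h' \in \mathcal{H}} |\widehat{\Acc(h')} - \Acc(h')| \leq \epsilon_n \quad \text{and} \quad \sup_{h' \in \mathcal{H}} |\widehat{\fairnessloss(h')} - \fairnessloss(h')| \leq \epsilon_n.
\]
The accuracy part is standard, since it is a bounded $0$--$1$ loss. The fairness part is the delicate one, because $\fairnessloss$ is built from conditional probabilities such as $\prob(h(X)=1 \mid A=a)$; I would handle it by writing each conditional term as a ratio of empirical averages and applying uniform concentration to numerator and denominator separately, with the minimum group probability $\min_a \prob(A=a)$ entering the constant. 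This is precisely where the rate can degrade to $\gamma < 1/2$ when group sizes or the richness of $\mathcal{H}$ are unfavorable, which is why the statement only claims some $\gamma \in (0, 1/2]$.

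Given the deviation bound, I would sandwich $\Delta(h)$ from both sides. For the lower bound on $\tradeoff(\Acc(h))$: since $h$ satisfies $\widehat{\Acc(h)} \geq \delta$, the deviation bound gives $\Acc(h) \geq \delta - \epsilon_n$, and using monotonicity of $\tradeoff$ together with its assumed Lipschitz continuity (constant $L$) yields $\tradeoff(\Acc(h)) \geq \tradeoff(\delta - \epsilon_n) \geq \tradeoff(\delta) - L\epsilon_n$. For the upper bound on $\fairnessloss(h)$: let $h^{\star}$ be a population-optimal classifier at the tightened accuracy level $\delta + \epsilon_n$, so $\fairnessloss(h^{\star}) = \tradeoff(\delta + \epsilon_n)$ and, by the deviation bound, $\widehat{\Acc(h^{\star})} \geq \Acc(h^{\star}) - \epsilon_n \geq \delta$, making $h^{\star}$ empirically feasible. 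Empirical optimality of $h$ then gives $\widehat{\fairnessloss(h)} \leq \widehat{\fairnessloss(h^{\star})}$, and converting both sides back to the population via the deviation bound and Lipschitzness yields $\fairnessloss(h) \leq \fairnessloss(h^{\star}) + 2\epsilon_n = \tradeoff(\delta + \epsilon_n) + 2\epsilon_n \leq \tradeoff(\delta) + (L+2)\epsilon_n$. Combining the two directions gives $\Delta(h) \leq (2L + 2)\epsilon_n = \widetilde{O}(\trsize^{-\gamma})$, as claimed.

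The main obstacle I anticipate is the uniform concentration of the fairness violation: its conditional-probability structure prevents a direct application of a single bounded-loss inequality, and carefully tracking how the group-probability denominators and the complexity of $\mathcal{H}$ propagate into the final constant is exactly what pins down the admissible range $\gamma \in (0, 1/2]$. A secondary point worth stating explicitly is that the Lipschitz continuity of $\tradeoff$ must be listed among the regularity assumptions, since without it the $\epsilon_n$-tightening of the accuracy constraint could translate into an uncontrolled jump in the optimal fairness level and break the two-sided bound above.
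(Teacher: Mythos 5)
Your proposal is correct and follows essentially the same route as the paper's proof: a uniform deviation bound for the accuracy and for the fairness violation (the latter handled through its conditional-expectation structure, exactly as the paper does via Lemmas 4 and 6 of Agarwal et al.), a population-optimal reference classifier at an accuracy level tightened by the deviation, empirical feasibility of that reference to invoke the optimality of $h$, and Lipschitz continuity of $\tradeoff$ to absorb the shift in the constraint level. The only cosmetic differences are that the paper anchors the reference classifier at $\Acc(h)+\epsilon$ rather than at the deterministic level $\delta+\epsilon_n$, and that in the paper the exponent $\gamma$ comes purely from the assumed Rademacher-complexity decay of $\mathcal{H}$, with the group probabilities $\prob(\mathcal{E}_j)$ entering only the constants rather than degrading the rate.
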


Theorem \ref{theorem:delta_h} shows that as the training data size $\trsize$ increases, the error term $\Delta(h_\lambda)$ will become negligible with a high probability for any model $h_\lambda$ which minimises the empirical training loss in \eqref{eq:empirical_loss}. 
In this case, $\Delta(h_\lambda)$ should not have a significant impact on the lower CIs in Proposition \ref{prop:probability-guarantee-lb} and the CIs will reflect the uncertainty in $\tradeoff$ arising mostly due to finite calibration data.
We also verify this empirically in Appendix \ref{subsec:synth_data_experiments}. 
It is worth noting that Theorem \ref{theorem:delta_h} relies on the same assumptions as used in Theorem 2 in \cite{agarwal2018reductions}, which have been provided in Appendix \ref{sec:theorem_app}.

\section{Related works}\label{sec:related-works}
Many previous fairness methods in the literature, termed in-processing methods, introduce constraints or regularization terms to the optimization objective. 
For instance, \cite{agarwal2018reductions, celis2019classification} impose a priori uniform constraints on model fairness at training time. 
However, given the data-dependent nature of accuracy-fairness trade-offs, setting a uniform fairness threshold may not be suitable. 
Other in-processing methods \cite{wang2023aleatoric, kim2020fact} 
consider information-theoretic bounds on the optimal trade-off in infinite data limit, independent of a specific model class.
While these works offer valuable theoretical insights, there is no guarantee that these frontiers are attainable by models within a given model class $\mathcal{H}$. We verify this empirically in Appendix \ref{subsec:fact_frontiers} by showing that, for the Adult dataset, the frontiers proposed in \cite{kim2020fact} are not achieved by any SOTA method we considered. In contrast, our method provides guarantees on the \emph{achievable} trade-off curve within realistic constraints of model class and data availability.  
Various other regularization approaches \citep{wei2022fairness, olfat2020flexible, bendekgey2021scalable, donini2018empirical, zafar2015fairness, zafar2017fairness, zafar2019fairness} have also been proposed, but these often necessitate training multiple models, making them computationally intensive.
Alternative strategies include learning `fair' representations \citep{zemel2013learning, louizos2017variational, lum2016statistical}, or re-weighting data based on sensitive attributes \citep{grover2019bias, kamiran2011data}. These, however, provide limited control over accuracy-fairness trade-offs. 

Besides this, post-processing methods \citep{hardt2016equality, wei2020optimized} enforce fairness after training but can lead to other forms of unfairness such as disparate treatment of similar individuals \citep{eeoc1979uniform}. 
Moreover, many post-hoc approaches such as \cite{alghamdi2022adult, alabdulmohsin2021near} still require solving different optimisation problems for different fairness thresholds. Other methods such as \cite{tifrea2024frappe, liu2021just} involve learning a post-hoc module in addition to the base classifier. As a result, the computational cost of training the YOTO model is similar to (and in many cases lower than) the combined cost of training a base model and subsequently applying a post-processing intervention to this pre-trained classifier. We confirm this empirically in Section \ref{sec:experiments}.

\begin{figure*}[t]
    \centering
    \captionsetup[subfigure]{aboveskip=0pt,belowskip=0pt}
    \subfloat{\includegraphics[width=0.8\textwidth]{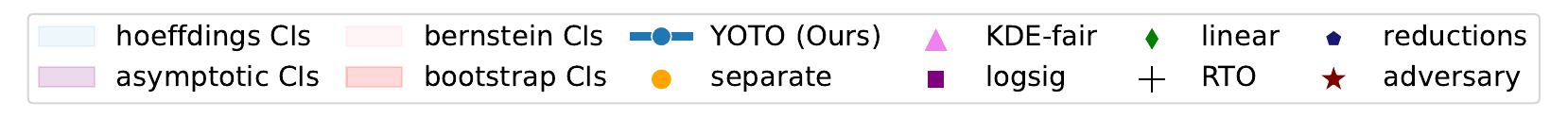}}\\
    \begin{tabular}{cccc}
    Adult dataset  \qquad &  \qquad \quad COMPAS dataset \qquad & \qquad  CelebA dataset \qquad & \qquad  Jigsaw dataset \\
    \end{tabular}
    \subfloat{\includegraphics[width=0.25\textwidth]{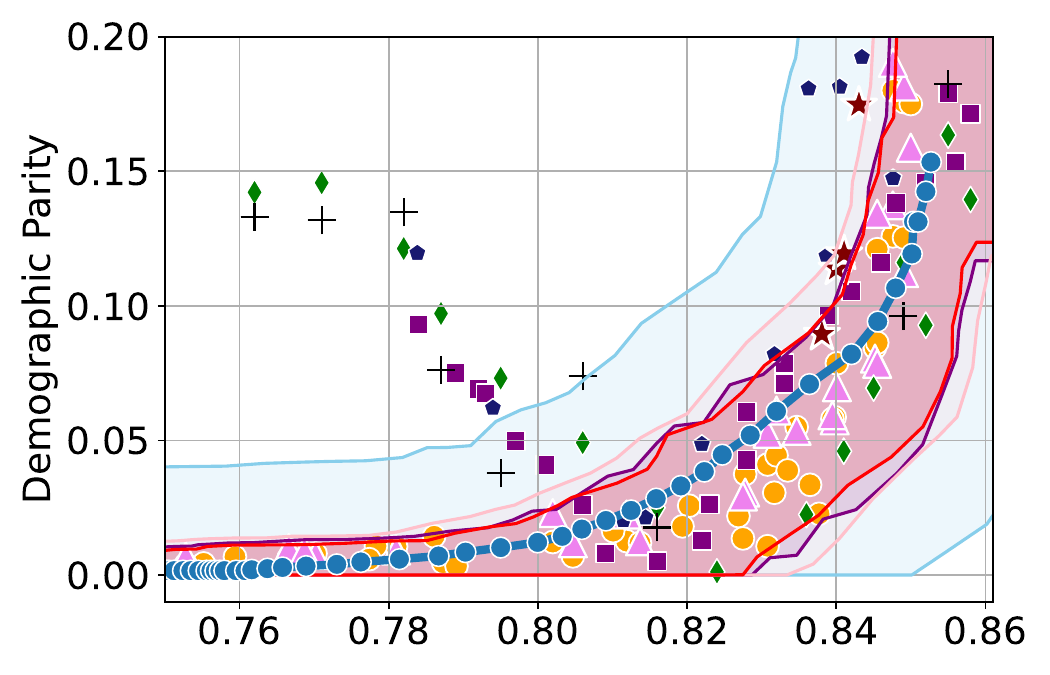}}
    \subfloat{\includegraphics[width=0.25\textwidth]{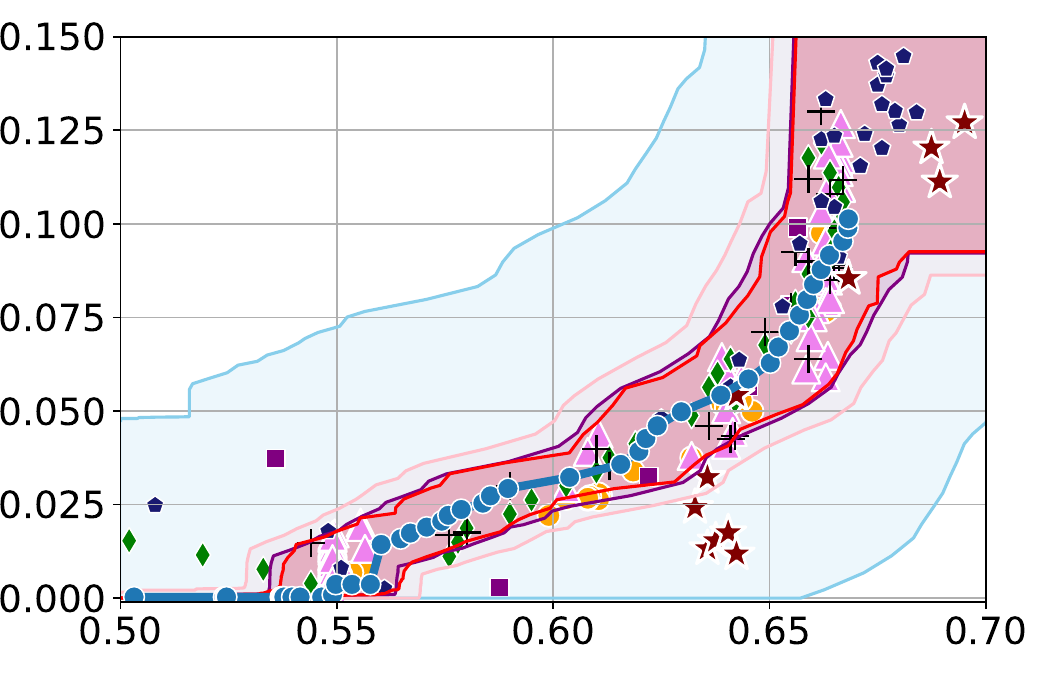}}
    \subfloat{\includegraphics[width=0.25\textwidth]{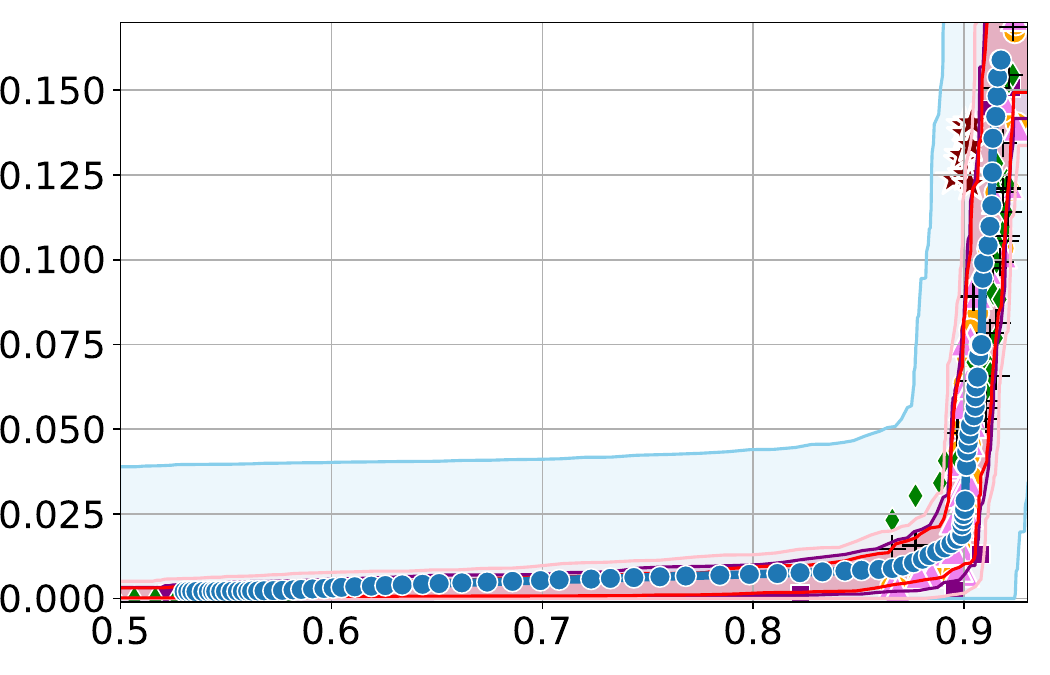}}
    \subfloat{\includegraphics[width=0.25\textwidth]{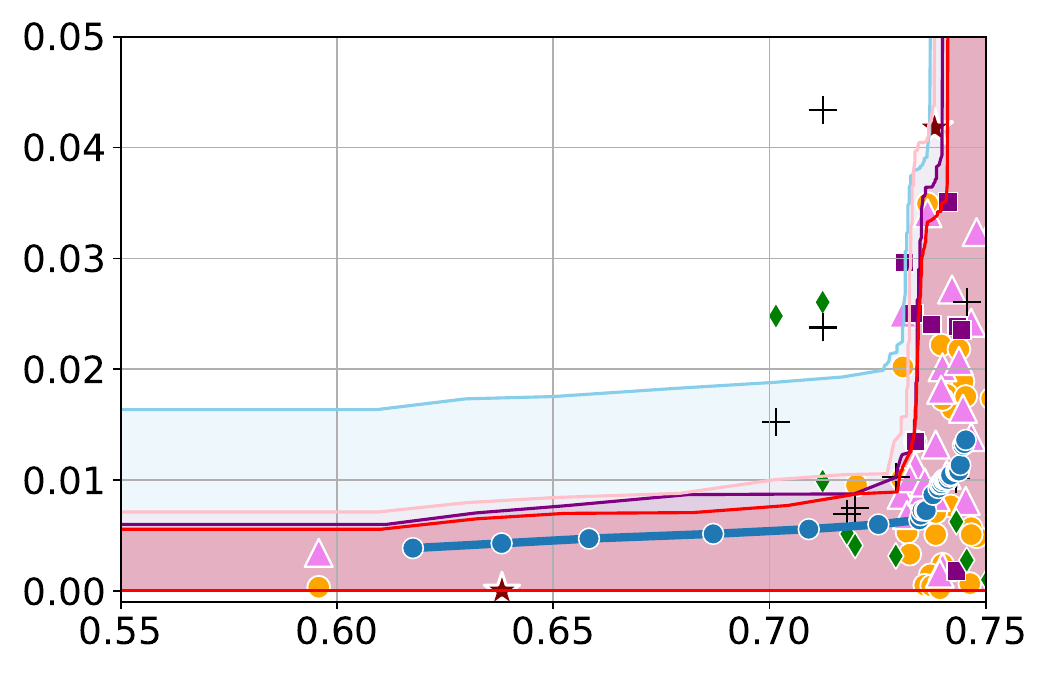}}\\
    \subfloat{\includegraphics[width=0.25\textwidth]{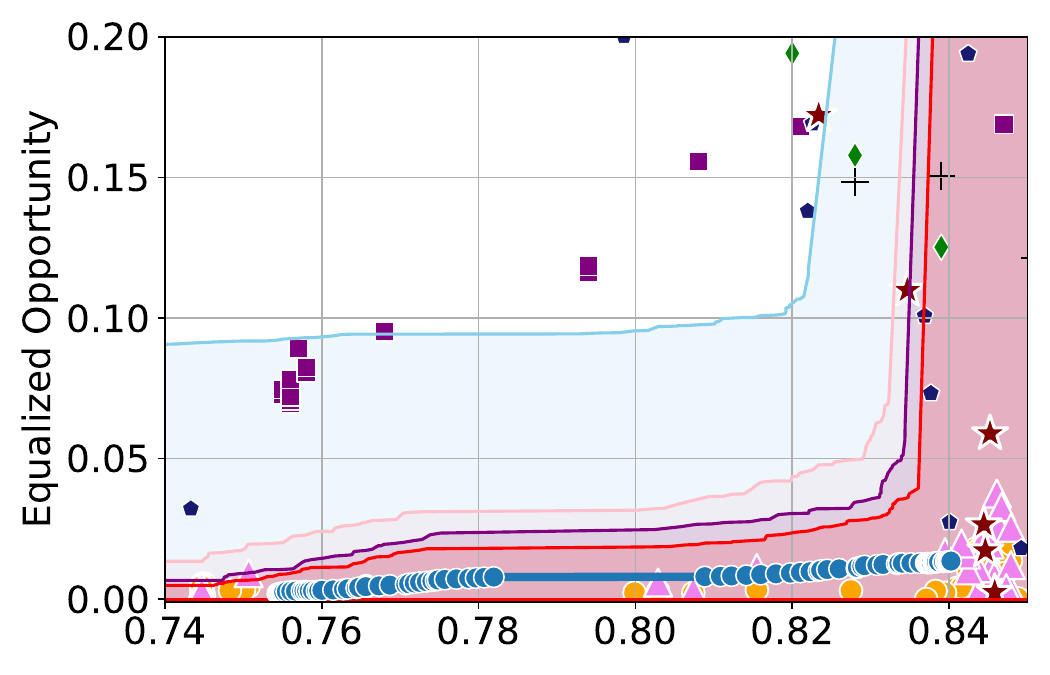}}
    \subfloat{\includegraphics[width=0.25\textwidth]{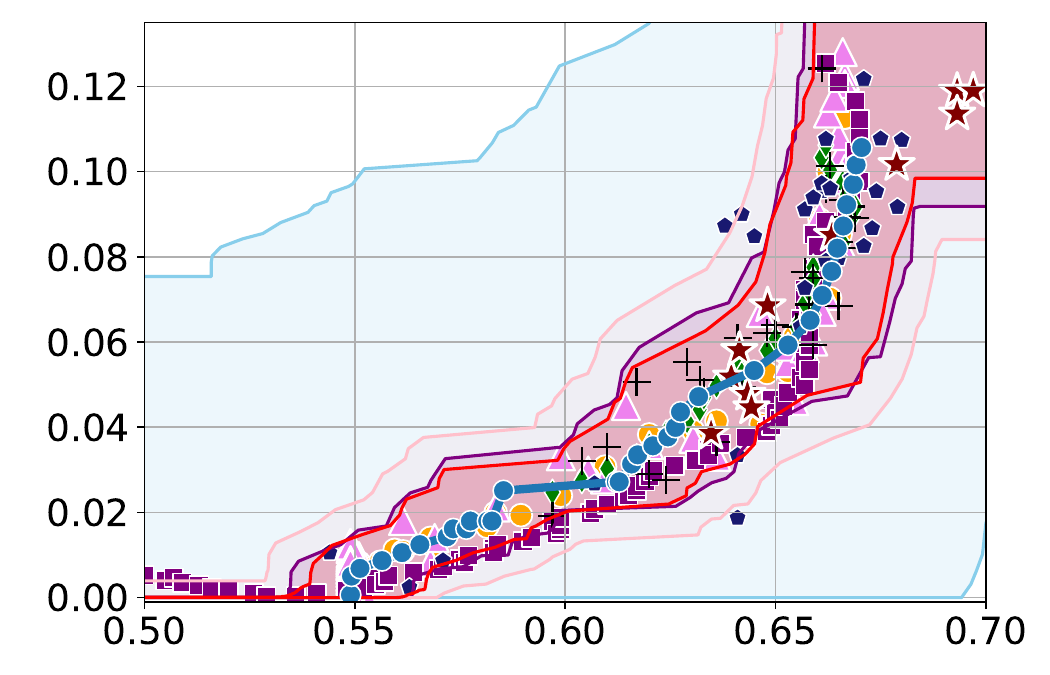}}
    \subfloat{\includegraphics[width=0.25\textwidth]{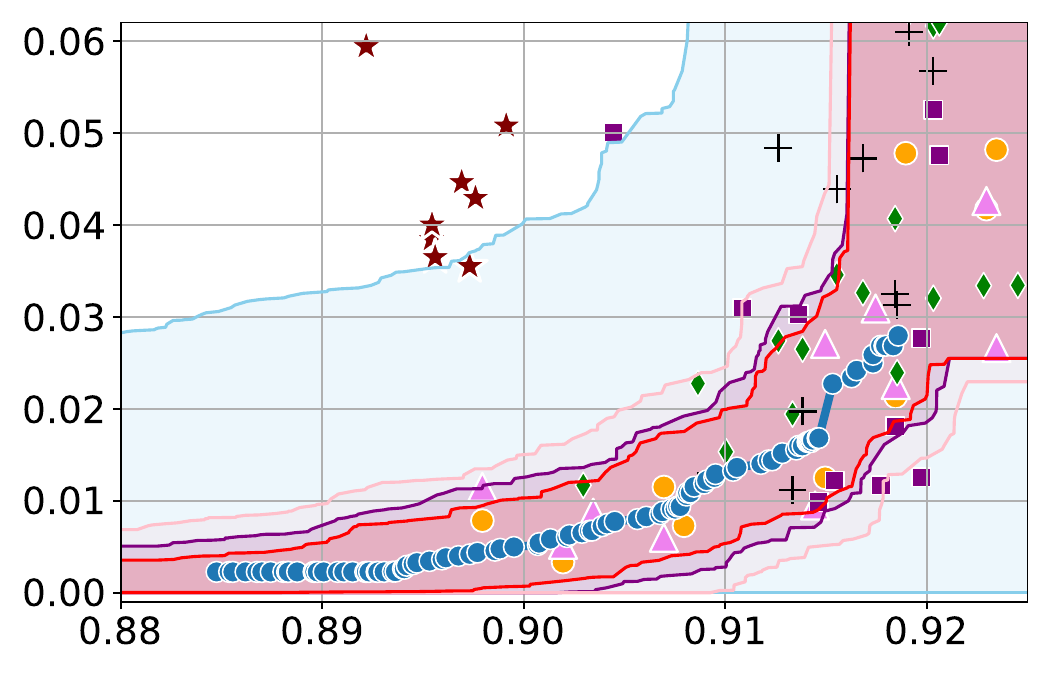}}
    \subfloat{\includegraphics[width=0.25\textwidth]{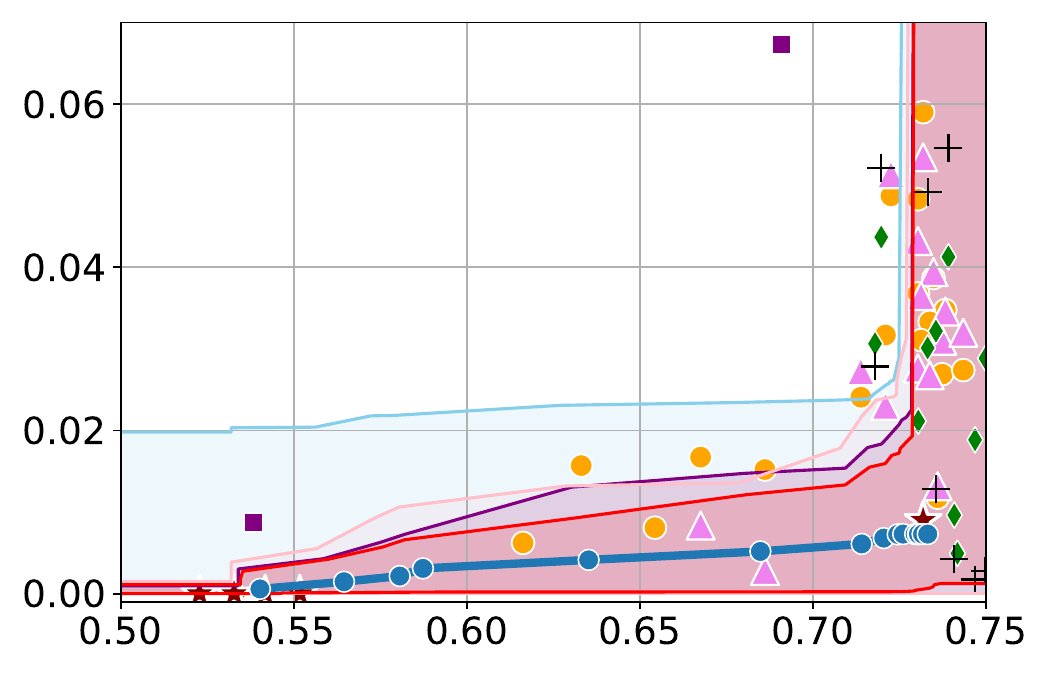}}\\
    \subfloat{\includegraphics[width=0.25\textwidth]{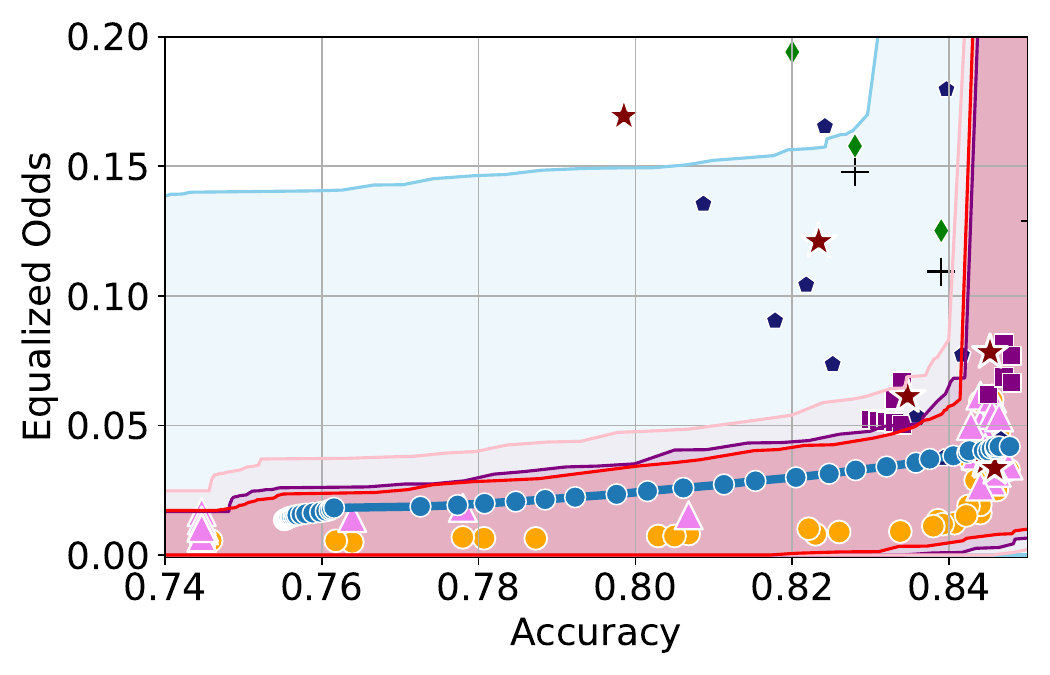}}
    \subfloat{\includegraphics[width=0.25\textwidth]{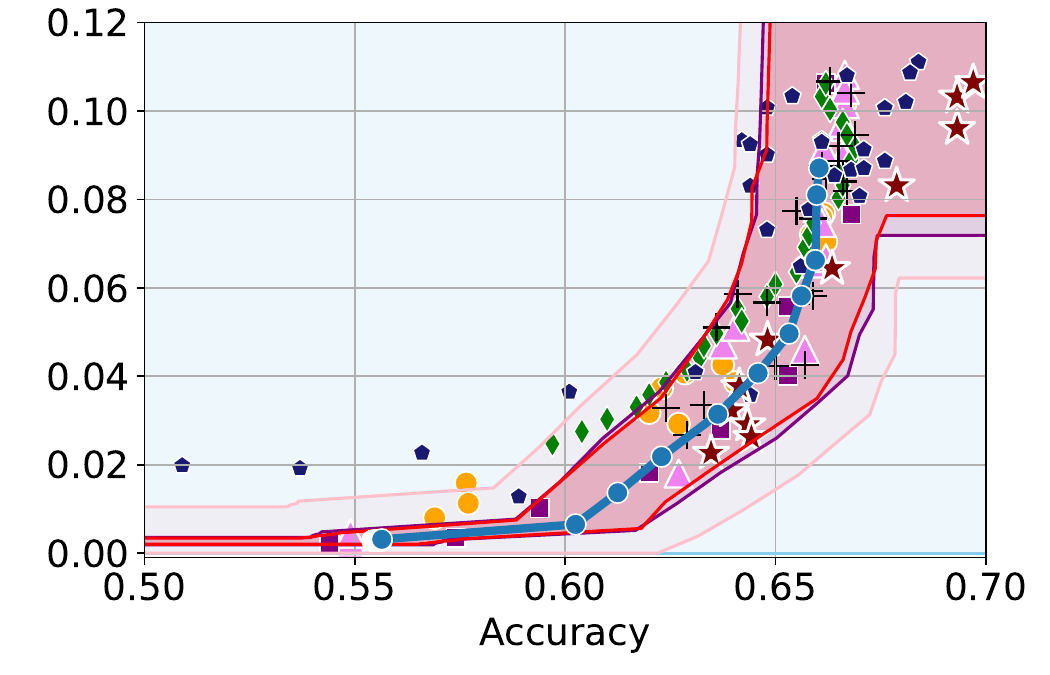}}
    \subfloat{\includegraphics[width=0.25\textwidth]{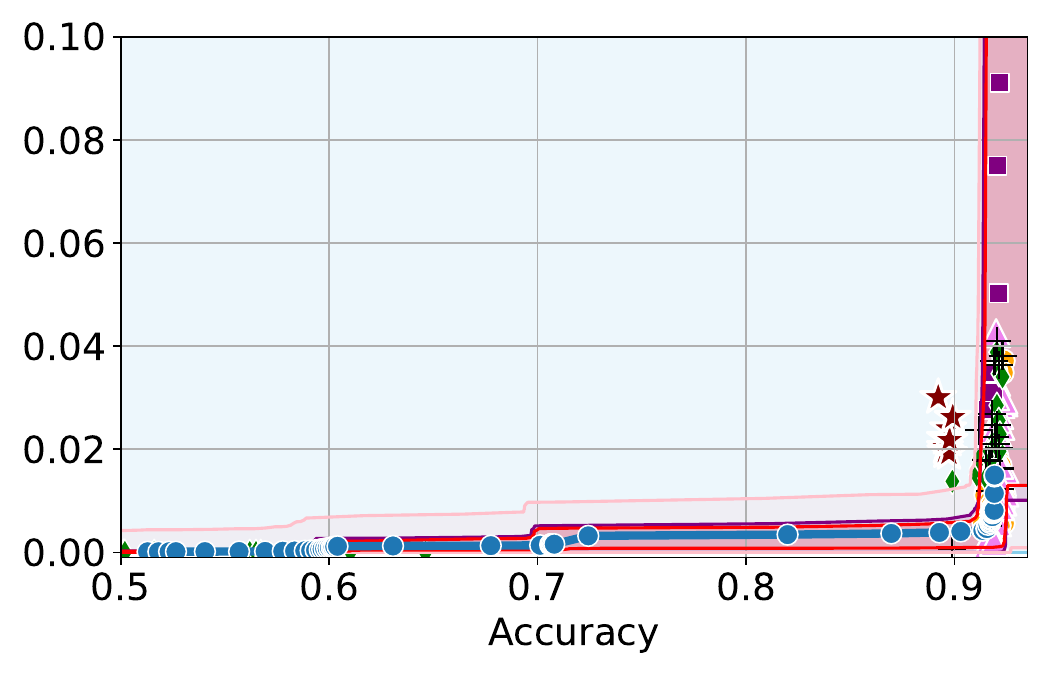}}
    \subfloat{\includegraphics[width=0.25\textwidth]{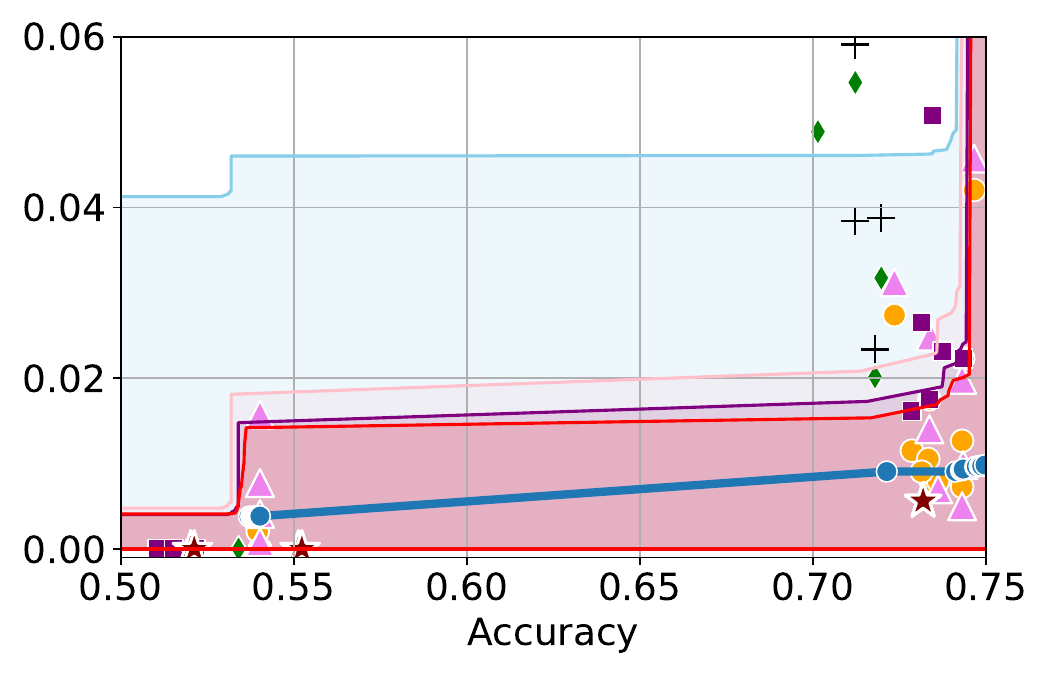}}
    \caption{Results on four real-world datasets where $\caldata$ is a 10\% data split. Here, $\alpha = 0.05$ and we use $|\mathcal{M}|=2$ separately trained models for sensitivity analysis.}
    \label{fig:all_results}
\end{figure*}
\section{Experiments}\label{sec:experiments}

In this section, we empirically validate our methodology of constructing confidence intervals on the fairness trade-off curve, across diverse datasets with neural networks as model class $\mathcal{H}$.
These datasets range from tabular (\texttt{Adult}  and \texttt{COMPAS} ), to image-based (\texttt{CelebA}), and natural language processing datasets (\texttt{Jigsaw}). Recall that 
our approach involves two steps: initial estimation of the trade-off via the YOTO model, followed by the construction of CIs using calibration data $\caldata$.

To evaluate our methodology, we implement a suite of baseline algorithms including SOTA in-processing techniques such as regularization-based approaches~\citep{bendekgey2021scalable}, a SOTA kernel-density based method \cite{cho2020kernel} (denoted as `KDE-fair'), as well as the reductions method~\citep{agarwal2018reductions}. Additionally, we also compare against adversarial fairness techniques~\citep{zhang2018mitigating} and a post-processing approach (denoted as `RTO') ~\citep{alabdulmohsin2021near} and consider the 
three most prominent fairness metrics: Demographic Parity (DP), Equalized Odds (EO), and Equalized Opportunity (EOP). We provide additional details and results in Appendix \ref{sec:exp_app}, where we also consider a synthetic setup with tractable $\tradeoff$.
The code to reproduce our experiments is provided at \href{https://github.com/faaizT/DatasetFairness}{\textcolor{blue}{github.com/faaizT/DatasetFairness}}.

\subsection{Results}
Figure \ref{fig:all_results} shows the results for different datasets and fairness violations, obtained using a 10\% data split as calibration dataset $\caldata$. For each dataset, we construct 4 CIs that serve as the upper and lower bounds on the optimal accuracy-fairness trade-off curve. 
These intervals are computed at a 95\% confidence level using various methodologies, including 1) Hoeffding's, 2) Bernstein's inequalities which both offer finite sample guarantees as well as, 3) bootstrapping \citep{efron1979bootstrap}, and 4) asymptotic intervals based on the Central Limit Theorem \citep{lecam1986central} which are valid asymptotically. 
There are 4 key takeaways: 

\begin{table}[t]
    \centering
        \caption{Proportion of empirical trade-offs for each baseline in the three trade-off regions, aggregated across all datasets and fairness metrics (using Bernstein's CIs). `Unlikely', `Permissible' and `Sub-optimal' correspond to the blue, green and pink regions in Figure \ref{fig:illu} respectively. The last column shows the rough average training time per model across experiments $\times$ no. of models per experiment.}
        
    \begin{scriptsize}
    \begin{tabular}{llcccc}
    \toprule
    Category & Baseline & \cellcolor{blue!15}Unlikely & \cellcolor{green!20}Permissible & \cellcolor{red!20}Sub-optimal & $\approx$ Training time \\
    \midrule
    \multirow{7}{*}{In-processing} & adversary \cite{zhang2018mitigating} & 0.03 $\pm$ 0.03 & 0.51 $\pm$ 0.07 & 0.45 $\pm$ 0.07 & 100 min$\times$40 \\
    & logsig \cite{bendekgey2021scalable} & \,\,\,0.0 $\pm$ 0.0\,\,\, & 0.66 $\pm$ 0.1\,\,\, & 0.33 $\pm$ 0.1\,\,\, & 100 min$\times$40 \\
    & reductions \cite{agarwal2018reductions} & 0.0 $\pm$ 0.0 & 0.79 $\pm$ 0.1\,\,\, & 0.21 $\pm$ 0.1\,\,\, & \,\,\,90 min$\times$40 \\
    & linear \cite{bendekgey2021scalable} & 0.01 $\pm$ 0.0\,\,\, & 0.85 $\pm$ 0.05 & 0.14 $\pm$ 0.06 & 100 min$\times$40 \\
    & KDE-fair \cite{cho2020kernel} & 0.0 $\pm$ 0.0 & 0.97 $\pm$ 0.05 & 0.03 $\pm$ 0.06 & \,\,\,85 min$\times$40 \\
    & separate & 0.0 $\pm$ 0.0 & 0.98 $\pm$ 0.01 & 0.02 $\pm$ 0.02 & 100 min$\times$40 \\
    & YOTO (Ours) & 0.0 $\pm$ 0.0 & 1.0 $\pm$ 0.0 & 0.0 $\pm$ 0.0 & \textbf{105 min$\times$1} \\
    \midrule
    \multirow{2}{*}{Post-processing} & RTO \cite{alabdulmohsin2021near} & 0.0 $\pm$ 0.0 & 0.65 $\pm$ 0.2\,\,\, & 0.35 $\pm$ 0.05 & \,\,\,95 min (training base classifier)\\
    & & & & & + 10min$\times$40 (post-hoc optimisations) \\
    \bottomrule
\end{tabular}

    \end{scriptsize}
    \label{tab:above_below_between}
\end{table}

\textbf{Takeaway 1: Trade-off curves are data dependent.}
The results in Figure \ref{fig:all_results} confirm that the accuracy-fairness trade-offs can vary significantly across the datasets. For example, 
achieving near-perfect fairness (i.e. $\fairnessloss(h) \approx 0$) seems significantly easier for the Jigsaw dataset than the COMPAS dataset, even as the accuracy increases. 
Likewise, 
for Adult and COMPAS, the DP increases gradually with increasing accuracy, whereas for CelebA, the increase is sharp once the accuracy increases above 90\%.
Therefore, using a uniform fairness threshold across datasets \citep[as in][]{agarwal2018reductions} may be too restrictive, and our methodology provides more dataset-specific insights about the entire trade-off curve instead.

\textbf{Takeaway 2: Our CIs are both reliable and informative.}
Recall that, 
any trade-off which lies above our upper CIs is guaranteed to be sub-optimal with probability $1-\alpha$, thereby enabling practitioners to effectively distinguish between genuine sub-optimalities and those due to finite-sample errors.
Table \ref{tab:above_below_between} lists the proportion of sub-optimal empirical trade-offs for each baseline and provides a principled comparison of the baselines. For example, the adversarial, RTO and logsig baselines have a significantly higher proportion of sub-optimal trade-offs than the KDE-fair and separate baselines.

On the other hand, the validity of our lower CIs depends on the optimality of our YOTO model and the lower CIs may be too tight if YOTO is sub-optimal.
Therefore, for the lower CIs to be reliable, it must be unlikely for any baseline to achieve a trade-off below the lower CIs. 
Table \ref{tab:above_below_between} confirms this empirically, as the proportion of models which lie below the lower CIs is negligible. 
In Appendix \ref{sec:baseline_uncertainty}, we also account for the uncertainty in baseline trade-offs when assessing the optimality, hence yielding more robust inferences. The results remain similar to those in Table \ref{tab:above_below_between}. 

\textbf{Takeaway 3: YOTO trade-offs are consistent with SOTA.}
We observe that the YOTO trade-offs align well with most of the  SOTA baselines considered 
while reducing the computational cost by approximately 40-fold (see the final column of Table \ref{tab:above_below_between}). In some cases, YOTO even achieves a better trade-off than the baselines considered. See, e.g., the Jigsaw dataset results (especially for EOP).
Moreover, we observe that the baselines yield empirical trade-offs which have a high variance as accuracy increases (see Jigsaw results in Figure \ref{fig:all_results}, for example). This behaviour starkly contrasts the smooth variations exhibited by our YOTO-generated trade-off curves along the accuracy axis. 

\textbf{Takeaway 4: Sensitivity analysis does not cause unnecessary conservatism.}
We use 2 randomly chosen separately trained models to perform our sensitivity analysis for Figure \ref{fig:all_results}. We find that this only causes a shift in lower CIs 
for 2 out of the 12 trade-off curves presented (i.e. for DP and EO trade-offs on the Adult dataset), leaving the rest of the CIs unchanged.
Therefore, in practice sensitivity analysis does not impose significant computational overhead, and only changes the CIs when YOTO achieves a suboptimal trade-off. Additional results 
have been included in Appendix \ref{sec:sens_analysis_app}.

\section{Discussion and Limitations}\label{sec:discussion}
In this work, we propose a computationally efficient approach to capture the accuracy-fairness trade-offs inherent to individual datasets, backed by sound statistical guarantees.
Our proposed methodology enables a nuanced and dataset-specific understanding of the accuracy-fairness trade-offs. It does so by obtaining confidence intervals on the accuracy-fairness trade-off, leveraging the computational benefits of the You-Only-Train-Once (YOTO) framework \citep{Dosovitskiy2020You}. This empowers practitioners with the ability to, at inference time, specify desired accuracy levels and promptly receive corresponding permissible fairness ranges. 
By eliminating the need for repetitive model training, we significantly streamline the process of obtaining accuracy-fairness trade-offs tailored to individual datasets.

\textbf{Limitations\,\,\,}
Despite the evident merits of our approach, it also has some limitations. Firstly, our methodology requires distinct datasets for training and calibration, posing difficulties when data is limited. Under such constraints, the YOTO model might not capture the optimal accuracy-fairness trade-off, and moreover, the resulting confidence intervals could be overly conservative.
Secondly, our lower CIs incorporate an unknown term \(\Delta(h_\lambda)\). While we propose sensitivity analysis for approximating this term and prove that it is asymptotically negligible under certain mild assumptions in Section \ref{sec:delta_analysis}, a more exhaustive understanding remains an open question. 
Exploring informative upper bounds for \(\Delta(h_\lambda)\) under weaker conditions is a promising avenue for future investigations.

\section*{Acknowledgments}
We would like to express our gratitude to Sahra Ghalebikesabi for her valuable feedback on an earlier draft of this paper. We also thank the anonymous reviewers for their thoughtful and constructive comments, which enhanced the clarity and rigor of our final submission.

\bibliographystyle{abbrv}
\bibliography{refs}

\appendix

\newpage
\section{Proofs}\label{sec:proofs}
\subsection{Confidence intervals on $\tradeoff$}
\begin{proof}[Proof of Lemma \ref{prop:heoff}]
    This lemma is a straightforward application of Heoffding's inequality.
\end{proof}

\begin{proof}[Proof of Proposition \ref{prop:probability-guarantee-ub}]
    Here, we prove the result for general classifiers $h\in \mathcal{H}$. Let $\accintervallo, \fairnessintervalup$ be the lower and upper CIs for $\Acc(h)$ and $\fairnessloss(h)$ respectively,
    \begin{align*}
        \prob(\Acc(h) \geq \accintervallo) \geq 1-\alpha/2 \quad \textup{and} \quad \prob(\fairnessloss(h)\leq \fairnessintervalup) \geq 1-\alpha/2.
    \end{align*}
    Then, using a straightforward application union bounds, we get that
    \begin{align*}
        \prob(\Acc(h) \geq \accintervallo, \fairnessloss(h) \leq \fairnessintervalup) &\geq 1 - \prob(\Acc(h) < \accintervallo) - \prob(\fairnessloss(h) > \fairnessintervalup)\\
        &\geq 1-\alpha/2 -\alpha/2 = 1-\alpha.
    \end{align*}
    Using the definition of the optimal fairness-accuracy trade-off $\tradeoff$,  we get that the event 
    \begin{align*}
        \left\{ \Acc(h) \geq \accintervallo, \fairnessloss(h) \leq \fairnessintervalup\right\} \quad \textup{implies,} \quad  \left\{
        \underbrace{\min \{ \fairnessloss(h') \,|\, h'\in \mathcal{H}, \Acc(h') \geq \accintervallo\}}_{\tradeoff(\accintervallo)}  \leq \fairnessintervalup\right\}.
    \end{align*}
    From this, it follows that
    \begin{align*}
    \prob(\tradeoff(\accintervallo) \leq \fairnessintervalup) \geq \prob(\Acc(h) \geq \accintervallo, \fairnessloss(h) \leq \fairnessintervalup) \geq 1-\alpha. 
    \end{align*}
\end{proof}

\begin{proof}[Proof of Proposition \ref{prop:probability-guarantee-lb}]
We prove the result for general classifiers $h\in \mathcal{H}$. Let $\accintervalup, \fairnessintervallo$ be the upper and lower CIs for $\Acc(h)$ and $\fairnessloss(h)$ respectively,
\begin{align*}
        \prob(\Acc(h) \leq \accintervalup) \geq 1-\alpha/2 \quad \textup{and} \quad \prob(\fairnessloss(h)\geq \fairnessintervallo) \geq 1-\alpha/2.
\end{align*}  
    Then, using an application of union bounds, we get that 
    \begin{align*}
        \prob(\Acc(h) \leq \accintervalup, \fairnessloss(h)\geq \fairnessintervallo) &\geq 1- \prob(\Acc(h) > \accintervalup) - \prob(\fairnessloss(h) < \fairnessintervallo) \\
        &\geq 1-\alpha/2 -\alpha/2 = 1- \alpha.
    \end{align*}
    Then, using the fact that $\Delta(h) = \fairnessloss(h) - \tradeoff(\Acc(h))$, we get that
    \begin{align*}
        1 -\alpha \leq \prob(\Acc(h) \leq \accintervalup, \fairnessloss(h)\geq \fairnessintervallo)
        &= \prob(\Acc(h) \leq \accintervalup, \tradeoff(\Acc(h)) + \Delta(h) \geq \fairnessintervallo)\\
        &\leq \prob(\Acc(h) \leq \accintervalup, \tradeoff(\accintervalup) + \Delta(h) \geq \fairnessintervallo) \\
        &\leq \prob(\tradeoff(\accintervalup) \geq \fairnessintervallo - \Delta(h)),
    \end{align*}
    where in the second last inequality above, we use the fact that $\tradeoff:[0, 1] \rightarrow [0, 1]$ is a monotonically increasing function. 
\end{proof}
\subsection{Asymptotic convergence of $\Delta(h)$}\label{sec:theorem_app}
In this Section, we provide the formal statement for Theorem \ref{theorem:delta_h} along with the assumptions required for this result. 

\begin{assumption}\label{assum:delta_1}
    $\tradeoff$ is $L$-Lipschitz.
\end{assumption}

\begin{assumption}\label{assum:delta_2}
    Let $\mathcal{R}_{\trsize}(\mathcal{H})$ denote the Rademacher complexity of the classifier family $\mathcal{H}$, where $\trsize$ is the number of training examples. We assume that there exists $C \geq 0$ and $\gamma \leq 1/2$ such that $\mathcal{R}_{\trsize}(\mathcal{H}) \leq C\, \trsize^{-\gamma}$.
\end{assumption}

It is worth noting that Assumption \ref{assum:delta_2}, which was also used in \citep[Theorem 2]{agarwal2018reductions}, holds for many classifier families with $\gamma=1/2$, including norm-bounded linear functions, neural networks and classifier families with bounded VC dimension \citep{kakade2008complexity,bartlett2001rademacher}. 

\begin{theorem}\label{theorem:delta_h_app}
    Let $\widehat{\fairnessloss(h')}, \widehat{\Acc(h')}$ denote the fairness violation and accuracy metrics for model $h'$ evaluated on training data $\trdata$ and define $$h \coloneqq \arg\min_{h' \in \mathcal{H}} \widehat{\fairnessloss(h')} \text{ subject to } \widehat{\Acc(h')} \geq \delta.$$ Then, under Assumptions \ref{assum:delta_1} and \ref{assum:delta_2}, we have that with probability at least $1-\eta$,
    $\Delta(h) \leq \widetilde{O}(\trsize^{-\gamma}),$
    where $\widetilde{O}(\cdot)$ suppresses polynomial dependence on $\log{(1/\eta)}$.
\end{theorem}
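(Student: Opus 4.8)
The plan is to combine a uniform convergence argument over $\mathcal{H}$ with the Lipschitz and monotonicity structure of $\tradeoff$, so that the population-level gap $\Delta(h)$ is controlled entirely by the generalization error. First I would establish a two-sided uniform concentration statement: under Assumption \ref{assum:delta_2}, with probability at least $1-\eta$, simultaneously for every $h' \in \mathcal{H}$,
\[
    |\widehat{\Acc(h')} - \Acc(h')| \leq \varepsilon \quad \text{and} \quad |\widehat{\fairnessloss(h')} - \fairnessloss(h')| \leq \varepsilon,
\]
where $\varepsilon = \widetilde{O}(\trsize^{-\gamma})$. The accuracy bound is a standard symmetrization argument applied to the $0$--$1$ loss class composed with $\mathcal{H}$. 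The fairness bound needs more care, since $\fairnessloss$ is assembled from group-conditional selection rates (e.g. $\prob(h(X)=1\mid A=a)$, possibly further conditioned on $Y$): I would bound each such rate uniformly by restricting the Rademacher complexity to the relevant data subgroup, and then invoke the fact that $\fairnessloss$ (DP, EOP, EO) is a $1$-Lipschitz function of these finitely many rates to combine the per-group deviations into a single $\widetilde{O}(\trsize^{-\gamma})$ bound.

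Next I would upper-bound the population fairness violation $\fairnessloss(h)$ of the empirical minimizer. The device is to exhibit a comparison classifier that is \emph{feasible for the empirical constraint} $\widehat{\Acc}\geq\delta$ yet has population fairness violation close to $\tradeoff(\delta)$. Concretely, let $\tilde h \in \mathcal{H}$ attain (up to arbitrarily small slack) the population trade-off at the slightly inflated threshold $\delta+\varepsilon$, so that $\fairnessloss(\tilde h) = \tradeoff(\delta+\varepsilon)$ and $\Acc(\tilde h)\geq \delta+\varepsilon$. The concentration bound then gives $\widehat{\Acc(\tilde h)} \geq \Acc(\tilde h)-\varepsilon \geq \delta$, so $\tilde h$ is feasible for the empirical problem. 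Since $h$ minimizes the empirical fairness violation over the empirical feasible set, chaining the two concentration inequalities through the optimality of $h$ yields
\[
    \fairnessloss(h) \leq \widehat{\fairnessloss(h)} + \varepsilon \leq \widehat{\fairnessloss(\tilde h)} + \varepsilon \leq \fairnessloss(\tilde h) + 2\varepsilon = \tradeoff(\delta+\varepsilon) + 2\varepsilon,
\]
and Assumption \ref{assum:delta_1} converts this into $\fairnessloss(h) \leq \tradeoff(\delta) + (L+2)\varepsilon$.

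Finally I would lower-bound $\tradeoff(\Acc(h))$. Feasibility of $h$ gives $\widehat{\Acc(h)}\geq\delta$, hence $\Acc(h)\geq \delta-\varepsilon$ by concentration; since $\tradeoff$ is monotonically increasing and $L$-Lipschitz, $\tradeoff(\Acc(h)) \geq \tradeoff(\delta-\varepsilon) \geq \tradeoff(\delta) - L\varepsilon$. Subtracting this from the upper bound on $\fairnessloss(h)$ cancels the common $\tradeoff(\delta)$ terms and leaves
\[
    \Delta(h) = \fairnessloss(h) - \tradeoff(\Acc(h)) \leq (2L+2)\,\varepsilon = \widetilde{O}(\trsize^{-\gamma}),
\]
which is the claim, with the $\log(1/\eta)$ dependence absorbed into $\varepsilon$ through the high-probability concentration step.

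The hard part is the uniform convergence of the fairness violation rather than the accuracy: the group-conditional rates underlying $\fairnessloss$ are empirical averages over data subsets whose sizes are themselves random, so obtaining a clean $\widetilde{O}(\trsize^{-\gamma})$ rate requires the regularity conditions (group probabilities bounded away from zero) inherited from Theorem 2 of \cite{agarwal2018reductions}, along with care in the symmetrization step for each subgroup. A secondary subtlety, already resolved above by the inflated-threshold device, is that the population optimum at level $\delta$ need not be feasible for the empirical constraint; comparing instead against $\tilde h$ defined at threshold $\delta+\varepsilon$ guarantees empirical feasibility while Lipschitzness of $\tradeoff$ keeps the incurred cost at order $\varepsilon$.
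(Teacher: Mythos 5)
Your proposal is correct and follows essentially the same route as the paper's proof: uniform concentration of $\widehat{\Acc}$ and $\widehat{\fairnessloss}$ via the Rademacher bounds inherited from \cite{agarwal2018reductions}, a population-optimal comparison classifier at an inflated accuracy threshold whose empirical feasibility follows from concentration, and Lipschitzness of $\tradeoff$ to absorb the threshold shift. The only (immaterial) difference is that you anchor the comparison classifier at $\delta+\varepsilon$ and separately lower-bound $\tradeoff(\Acc(h))$ by $\tradeoff(\delta)-L\varepsilon$, whereas the paper anchors it at $\Acc(h)+\epsilon$ so that the term $\tradeoff(\Acc(h))$ cancels directly in the decomposition; both yield $\Delta(h)\leq \widetilde{O}(\trsize^{-\gamma})$ with the same constants up to a factor of $L$.
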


\begin{proof}[Proof of Theorem \ref{theorem:delta_h_app}]
Let $$
\epsilon \coloneqq 4\, \mathcal{R}_{\trsize}(\mathcal{H}) + \frac{4}{\sqrt{\trsize}} + 2\,\sqrt{\frac{\log{(8/\eta)}}{2\,\trsize}}.$$
Next, we define $$h^* \coloneqq \arg\min_{h\in \mathcal{H}} \fairnessloss(h) \quad\text{ subject to }\quad \Acc(h) \geq \Acc(h) + \epsilon.$$ 
Then, we have that
    \begin{align*}
    \Delta(h) =&\fairnessloss(h) - \tradeoff(\Acc(h)) \\
    =&\fairnessloss(h) - \tradeoff(\Acc(h)+\epsilon) + \tradeoff(\Acc(h) + \epsilon) - \tradeoff(\Acc(h)) \\
    =&\fairnessloss(h) - \fairnessloss(h^*) + \tradeoff(\Acc(h) + \epsilon) - \tradeoff(\Acc(h)).
    \end{align*}
We know using Assumption \ref{assum:delta_1} that 
\begin{align*}
    \tradeoff(\Acc(h) + \epsilon) - \tradeoff(\Acc(h)) \leq L\,\epsilon
\end{align*}
Moreover, 
\begin{align*}
    \fairnessloss(h) - \fairnessloss(h^*) =& \widehat{\fairnessloss(h)} - \widehat{\fairnessloss(h^*)}+\fairnessloss(h)- \widehat{\fairnessloss(h)} + \widehat{\fairnessloss(h^*)}- \fairnessloss(h^*)\\
    \leq & \widehat{\fairnessloss(h)} - \widehat{\fairnessloss(h^*)} + 2\max_{h' \in \mathcal H}|\fairnessloss(h')- \widehat{\fairnessloss(h')}|.
\end{align*}
Putting the two together, we get that 
\begin{align}
    \Delta(h) \leq \widehat{\fairnessloss(h)} - \widehat{\fairnessloss(h^*)} + 2\max_{h' \in \mathcal H}|\fairnessloss(h')- \widehat{\fairnessloss(h')}| + L\epsilon. \label{eq:delta-ub}
\end{align}
Next, we consider the term 
$\widehat{\fairnessloss(h)} - \widehat{\fairnessloss(h^*)}$.
First, we observe that 
\begin{align*}
    \Acc(h^*) \geq \Acc(h) + \epsilon \geq \widehat{\Acc(h)} - |\widehat{\Acc(h)} - \Acc(h)| + \epsilon \geq \delta + \epsilon - |\widehat{\Acc(h)} - \Acc(h)|.
\end{align*}
Next, using \citep[Lemma 4]{agarwal2018reductions} with $g(X, A, Y) = \ind(h(X) = Y)$, we get that with probability at least $1-\eta/4$, we have that
\begin{align*}
    |\Acc(h) - \widehat{\Acc(h)}| \leq  2\, \mathcal{R}_{\trsize}(\mathcal{H}) + \frac{2}{\sqrt{\trsize}} + \sqrt{\frac{\log{(8/\eta)}}{2\,\trsize}} = \epsilon/2.
\end{align*}
This implies that with probability at least $1-\eta/4$,
\begin{align*}
    \Acc(h^*) \geq \delta + \epsilon/2,
\end{align*}
and hence, 
\begin{align*}
    \delta + \epsilon/2 \leq \Acc(h^*) \leq \widehat{\Acc(h^*)} + |\Acc(h^*) -  \widehat{\Acc(h^*)}|.
\end{align*}
Again, using \citep[Lemma 4]{agarwal2018reductions} with $g(X, A, Y) = \ind(h^*(X) = Y)$, we get that with probability at least $1-\eta/4$, we have that
\begin{align*}
    |\Acc(h^*) - \widehat{\Acc(h^*)}| \leq  2\, \mathcal{R}_{\trsize}(\mathcal{H}) + \frac{2}{\sqrt{\trsize}} + \sqrt{\frac{\log{(8/\eta)}}{2\,\trsize}} = \epsilon/2.
\end{align*}
Finally, putting this together using union bounds, we get that with probability at least $1-\eta/2$,
\begin{align*}
    \delta + \epsilon/2 \leq \Acc(h^*) \leq \widehat{\Acc(h^*)} + |\Acc(h^*) -  \widehat{\Acc(h^*)}| \leq \widehat{\Acc(h^*)} + \epsilon/2,
\end{align*}
and hence,
\begin{align*}
    \widehat{\Acc(h^*)} \geq \delta.
\end{align*}
Using the definition of $h$, we have that  $\widehat{\Acc(h^*)} \geq \delta \implies \widehat{\fairnessloss(h)} \leq \widehat{\fairnessloss(h^*)}$.
Therefore, with probability at least $1-\eta/2$,
\begin{align}
    \widehat{\fairnessloss(h)} - \widehat{\fairnessloss(h^*)} \leq 0. \label{eq:proof1}
\end{align}
Next, to bound the term $|\fairnessloss(h)- \widehat{\fairnessloss(h)}|$ above, we consider a general formulation for the fairness violation $\fairnessloss$, also presented in \cite{agarwal2018reductions},
\[
\fairnessloss(h) = |\fairnessloss^\pm(h)| \quad \textup{where,} \quad  \fairnessloss^\pm(h) \coloneqq \sum_{j=1}^{m} \underbrace{\mathbb{E}[g_j(X, A, Y, h(X))\mid \mathcal{E}_j] }_{=: \Phi_j}
\]
where $m\geq 1$, $g_j$ are some known functions and $\mathcal{E}_j$ are events with positive probability defined with respect to $(X, A, Y)$.

Using this, we note that for any $h' \in \mathcal{H}$
\begin{align*}
    |\fairnessloss(h')- \widehat{\fairnessloss(h')}| =& \Big||\fairnessloss^\pm(h')|- |\widehat{\fairnessloss^\pm(h')}|\Big| \\
    \leq& |\fairnessloss^\pm(h')- \widehat{\fairnessloss^\pm(h')}|\\
    =& \Big| \sum_{j=1}^m \mathbb{E}[g_j(X, A, Y, h'(X))\mid \mathcal{E}_j] - \widehat{\mathbb{E}}[g_j(X, A, Y, h'(X))\mid \mathcal{E}_j]\Big|\\
    \leq& \sum_{j=1}^m |\mathbb{E}[g_j(X, A, Y, h'(X))\mid \mathcal{E}_j] - \widehat{\mathbb{E}}[g_j(X, A, Y, h'(X))\mid \mathcal{E}_j]|.
\end{align*}
Let $p^*_j \coloneqq \mathbb{P}(\mathcal{E}_j)$. Then, from \citep[Lemma 6]{agarwal2018reductions}, we have that if $\trsize\,p^*_j \geq 8 \log{(4\,m/\eta)}$, then with probability at least $1-\eta/2m$, we have that
\begin{align*}
    |\mathbb{E}[g_j(X, A, Y, h'(X))\mid \mathcal{E}_j] - \widehat{\mathbb{E}}[g_j(X, A, Y, h'(X))\mid \mathcal{E}_j]| \leq& 2\,R_{\trsize\,p^*_j/2}(\mathcal{H}) + 2\,\sqrt{\frac{2}{\trsize\,p_j^*}} + \sqrt{\frac{\log{(8\,m/\eta)}}{\trsize\,p_j^*}}\\
    =& \widetilde{O}(\trsize^{-\gamma}).
\end{align*}
A straightforward application of the union bounds yields that if $\trsize\,p^*_j \geq 8 \log{(4\,m/\eta)}$ for all $j \in \{1, \ldots, m\}$, then with probability at least $1-\eta/2$, we have that 
\begin{align*}
    \sum_{j=1}^m |\mathbb{E}[g_j(X, A, Y, h'(X))\mid \mathcal{E}_j] - \widehat{\mathbb{E}}[g_j(X, A, Y, h'(X))\mid \mathcal{E}_j]| \leq \widetilde{O}(\trsize^{-\gamma}).
\end{align*}
Therefore, in this case for any $h' \in \mathcal{H}$, we have that with probability at least $1-\eta/2$,
\begin{align*}
    |\fairnessloss(h')- \widehat{\fairnessloss(h')}| \leq \widetilde{O}(\trsize^{-\gamma}),
\end{align*}
and therefore,
\begin{align}
    2\max_{h' \in \mathcal H}|\fairnessloss(h')- \widehat{\fairnessloss(h')}| \leq \widetilde{O}(\trsize^{-\gamma}). \label{eq:proof2}
\end{align}
Finally, putting \eqref{eq:delta-ub}, \eqref{eq:proof1} and \eqref{eq:proof2} together using union bounds, we get that with probability at least $1-\eta$, we have that
\begin{align*}
    \Delta(h) \leq \widehat{\fairnessloss(h)} - \widehat{\fairnessloss(h^*)} + 2\max_{h' \in \mathcal H}|\fairnessloss(h')- \widehat{\fairnessloss(h')}| + L\epsilon \leq \widetilde{O}(\trsize^{-\gamma}).
\end{align*}
\end{proof}

\section{Constructing the confidence intervals on $\fairnessloss(h)$}\label{sec:fairness_cis_app}
In this section, we outline methodologies of obtaining confidence intervals for a fairness violation $\fairnessloss$. 
Specifically, given a model $h\in \mathcal{H}$, with $h:\mathcal{X}\rightarrow \mathcal{Y}$ and $\alpha \in (0,1)$, we outline how to find $\fairnessintervalset^\alpha$ which satisfies, 
\begin{align}
    \prob(\fairnessloss(h) \in  \fairnessintervalset^\alpha)\geq 1-\alpha. \label{eq:app_fairness_ci}
\end{align}
Similar to \cite{agarwal2018reductions} we express the fairness violation $\fairnessloss$ as: 
\[
\fairnessloss(h) = |\fairnessloss^\pm(h)| \quad \textup{where,} \quad  \fairnessloss^\pm(h) \coloneqq \sum_{j=1}^{m} \underbrace{\mathbb{E}[g_j(X, A, Y, h(X))\mid \mathcal{E}_j] }_{=: \Phi_j}
\]
where $m\geq 1$, $g_j$ are some known functions and $\mathcal{E}_j$ are events with positive probability defined with respect to $(X, A, Y)$. For example, when considering the demographic parity (DP), i.e. $\fairnessloss = \dpmetric$, we have $m=2$, with $g_1(X, A, Y, h(X)) = h(X)$, $\mathcal{E}_1 = \{A=1 \}$, $g_2(X, A, Y, h(X)) = -h(X)$ and $\mathcal{E}_2 = \{A=0 \}$.
Moreover, as shown in \cite{agarwal2018reductions}, the commonly used fairness metrics like Equalized Odds (EO) and Equalized Opportunity (EOP) can also be expressed in similar forms. 

Our methodology of constructing CIs on $\fairnessloss(h)$ involves first constructing intervals $\fairnessintervalset^{\alpha, \pm}$ on $\fairnessloss^\pm(h)$ satisfying:
\begin{align}
    \prob(\fairnessloss^\pm(h) \in \fairnessintervalset^{\alpha, \pm}) \geq 1-\alpha. \label{eq:app_fairness_ci_pm}
\end{align}
Once we have a $\fairnessintervalset^{\alpha, \pm}$, the confidence interval $\fairnessintervalset^\alpha$ satisfying \eqref{eq:app_fairness_ci} can simply  be constructed as:
\begin{align*}
    \fairnessintervalset^\alpha = \{|x| \, :\, x \in \fairnessintervalset^{\alpha, \pm} \}.
\end{align*}

In what follows, we outline two different ways of constructing the confidence intervals $\fairnessintervalset^{\alpha, \pm}$ on $\fairnessloss^\pm(h)$ satisfying \eqref{eq:app_fairness_ci_pm}.

\subsection{Separately constructing CIs on $\Phi_j$}
One way to obtain intervals on $\fairnessloss^\pm(h)$ would be to separately construct confidence intervals on $\Phi_j$, denoted by $C^\alpha_{j}$, which satisfies the joint guarantee
\begin{align}
    \prob\left(\cap_{j=1}^{m} \{\Phi_j \in C^\alpha_{j}\}\right) \geq 1-\alpha. \label{eq:joint-guarantee}
\end{align}
Given such set of confidence intervals $\{C^\alpha_{j}\}_{j=1}^m$ which satisfy Eqn. \eqref{eq:joint-guarantee}, we can obtain the confidence intervals on $\fairnessloss^\pm(h)$ by using the fact that
\[
\prob\left(\fairnessloss^\pm(h) \in \sum_{i=1}^m C^\alpha_{j}\right) \geq 1-\alpha.
\]
Where, the notation $\sum_{i=1}^m C^\alpha_{j}$ denotes the set $\{\sum_{i=1}^m x_i \, : \, x_i \in C^\alpha_i \}$. 
One na\"{i}ve way to obtain such $\{C^\alpha_{j}\}_{j=1}^m$ which satisfy \eqref{eq:joint-guarantee} is to use the union bounds, i.e., if $C^\alpha_{j}$ are chosen such that 
\[
\prob(\Phi_j \in C^\alpha_{j}) \geq 1- \alpha/m,
\]
then, we have that
\begin{align*}
    \prob\left(\cap_{j=1}^{m} \{\Phi_j \in C^\alpha_{j}\}\right) &= 1-\prob(\cup_{j=1}^m \{\Phi_j \in C^\alpha_{j}\}^c)\\
    &\geq 1- \sum_{i=1}^m\prob(\{\Phi_j \in C^\alpha_{j}\}^c)\\
    &\geq 1 - \sum_{i=1}^m (1-(1-\alpha/m)) = 1-\alpha.
\end{align*}
Here, for an event $\mathcal{E}$, we use $\mathcal{E}^c$ to denote the complement of the event. This methodology therefore reduces the problem of finding confidence intervals on $\fairnessloss^\pm(h)$ to finding confidence intervals on $\Phi_j$ for $j\in\{1, \ldots, m\}$. Now note that $\Phi_j$ are all expectations and we can use standard methodologies to construct confidence intervals on an expectation. We explicitly outline how to do this in Section \ref{sec:cis_on_expectations}.
 
\paragraph{Remark}
The methodology outlined above provides confidence intervals with valid finite sample coverage guarantees. However, this may come at the cost of more conservative confidence intervals. One way to obtain less conservative confidence intervals while retaining the coverage guarantees would be to consider alternative ways of obtaining confidence intervals which do not require constructing the CIs separately on $\Phi_j$. We outline one such methodology in the next section.

\subsection{Using subsampling to construct the CIs on $\fairnessloss^\pm$ directly}
Here, we outline how we can avoid having to use union bounds when constructing the confidence intervals on $\fairnessloss^\pm$.
Let $\mathcal{D}_{j}$ denote the subset of data $\caldata$, for which the event $\mathcal{E}_j$ is true. 
In the case where the events $\mathcal{E}_j$ are all mutually exclusive and hence $\mathcal{D}_{j}$ are all disjoint subsets of data (which is true for DP, EO and EOP), we can also construct these intervals by randomly sampling without replacement datapoints 
$(x^{(j)}_{i}, a^{(j)}_{i}, y^{(j)}_{i})$ from $\mathcal{D}_{j}$ for $i \leq l \coloneqq \min_{k \leq m} |\mathcal{D}_{k}|$. We use the fact that 
\[
\widehat{\fairnessloss^\pm}(h) = \frac{1}{l}\sum_{i=1}^l \sum_{j=1}^m g_j(x^{(j)}_{i},a^{(j)}_{i}, y^{(j)}_{i}, h(x^{(j)}_{i})) 
\]
is an unbiased estimator of $\fairnessloss^\pm(h)$. Moreover, since $\mathcal{D}_{j}$ are all disjoint datasets, the datapoints $(x^{(j)}_{i}, a^{(j)}_{i}, y^{(j)}_{i})$ are all independent across different values of $j$, and therefore, $\sum_{j=1}^m g_j(x^{(j)}_{i},a^{(j)}_{i}, y^{(j)}_{i}, h(x^{(j)}_{i}))$ are i.i.d.. In other words,
\begin{align*}
    \widehat{\fairnessloss^\pm}(h) = \frac{1}{l}\sum_{i=1}^l \phi_i \quad \textup{where, } \quad \phi_i \coloneqq \sum_{j=1}^m g_j(x^{(j)}_{i},a^{(j)}_{i}, y^{(j)}_{i}, h(x^{(j)}_{i}))
\end{align*}
and $\phi_i$ are all i.i.d. samples and unbiased estimators of $\fairnessloss^\pm(h)$. Therefore, like in the previous section, our problem reduces to constructing CIs on an expectation term (i.e. $\fairnessloss^\pm(h)$), using i.i.d. unbiased samples (i.e. $\phi_i$) and
we can use standard methodologies to construct these intervals.

\paragraph{Benefit of this methodology}
This methodology no longer requires us to separately construct confidence intervals over $\Phi_j$ and combine them using union bounds (for example). Therefore, intervals obtained using this methodology may be less conservative than those obtained by separately constructing confidence intervals over $\Phi_j$. 

\paragraph{Limitation of this methodology}
For each subset of data $\mathcal{D}_{j}$, we can use at most $l \coloneqq \min_{k \leq m} |\mathcal{D}_{k}|$ data points to construct the confidence intervals. Therefore, in cases where $l$ is very small, we may end up discarding a big proportion of the calibration data which could in turn lead to loose intervals.

\subsection{Constructing CIs on expectations}\label{sec:cis_on_expectations}
Here, we outline some standard techniques used to construct CIs on the expectation of a random variable. These techniques can then be used to construct CIs on $\fairnessloss(h)$ (using either of the two methodologies outlined above) as well as on $\Acc(h)$. In this section, we restrict ourselves to constructing upper CIs. Lower CIs can be constructed analogously. 

Given dataset $\{Z_i : 1\leq i \leq n\}$, our goal in this section is to construct upper CIs on $\E[Z]$ which satisfies 
\begin{align*}
    \prob(\E[Z]\leq U^\alpha) \geq 1-\alpha.
\end{align*}

\paragraph{Hoeffding's inequality}
We can use Hoeffding's inequality to construct these intervals $U^\alpha$ as formalised in the following result:
\begin{lemma}[Hoeffding's inequality]
    Let $Z_i \in [0, 1]$, $1\leq i \leq n$ be i.i.d. samples with mean $\E[Z]$. Then,
    \begin{align*}
        \prob\left(\E[Z] \leq \frac{1}{n}\,\sum_{i=1}^n Z_i + \sqrt{\frac{1}{2\,n}\log{\frac{1}{\alpha}}}\right) \geq 1-\alpha.
    \end{align*}
\end{lemma}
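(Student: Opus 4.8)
The plan is to prove the standard one-sided Hoeffding bound for a sample mean of bounded i.i.d.\ random variables. Let $\bar{Z}_n \coloneqq \frac{1}{n}\sum_{i=1}^n Z_i$ denote the empirical mean, and set $S \coloneqq \sum_{i=1}^n (Z_i - \E[Z])$, so that $\E[S]=0$ and the event whose probability I want to bound is the complement of $\{\E[Z] \leq \bar{Z}_n + t\}$ for $t = \sqrt{\frac{1}{2n}\log(1/\alpha)}$. Equivalently, I would bound $\prob(\E[Z] > \bar{Z}_n + t) = \prob(S < -nt) = \prob(-S > nt)$ and show this is at most $\alpha$; then taking complements yields the claim.

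First I would apply the exponential Chernoff transform: for any $s>0$, by Markov's inequality applied to $e^{-sS}$,
\begin{align*}
    \prob(-S > nt) = \prob\left(e^{-sS} > e^{snt}\right) \leq e^{-snt}\,\E\left[e^{-sS}\right] = e^{-snt}\prod_{i=1}^n \E\left[e^{-s(Z_i-\E[Z])}\right],
\end{align*}
where the factorization uses independence of the $Z_i$. The second step is to control each factor using Hoeffding's lemma: for a random variable $W$ with $\E[W]=0$ and $W \in [a,b]$ almost surely, one has $\E[e^{sW}] \leq e^{s^2(b-a)^2/8}$. Since $Z_i \in [0,1]$, the centered variable $-(Z_i-\E[Z])$ lies in an interval of length $1$, so each factor is at most $e^{s^2/8}$, giving the bound $e^{-snt}\,e^{ns^2/8}$.

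The third step is to optimize over $s$: minimizing the exponent $-snt + ns^2/8$ gives $s = 4t$, which yields the bound $e^{-2nt^2}$. Substituting $t = \sqrt{\frac{1}{2n}\log(1/\alpha)}$ gives $e^{-2n \cdot \frac{1}{2n}\log(1/\alpha)} = e^{-\log(1/\alpha)} = \alpha$, so $\prob(\E[Z] > \bar{Z}_n + t) \leq \alpha$ and the stated lower bound on $\prob(\E[Z] \leq \bar{Z}_n + t)$ follows. The one genuinely nontrivial ingredient is Hoeffding's lemma bounding the moment generating function of a bounded centered variable; the cleanest route is convexity of $w \mapsto e^{sw}$ to reduce to a two-point distribution on $\{a,b\}$, then a second-order Taylor expansion of the log-MGF of that Bernoulli-type variable to show its second derivative is at most $(b-a)^2/4$. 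Since the paper treats this as a standard tool (it is only asked to be stated and invoked), I expect the proof to simply cite Hoeffding's inequality rather than reprove the lemma; the main obstacle if a self-contained argument were required would be establishing this MGF bound, but everything else is a routine Chernoff-and-optimize calculation.
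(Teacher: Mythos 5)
Your proposal is correct: the Chernoff transform, the application of Hoeffding's lemma to the centered variables in an interval of length $1$, the optimization at $s=4t$ giving the exponent $-2nt^2$, and the substitution of $t=\sqrt{\tfrac{1}{2n}\log(1/\alpha)}$ all check out. As you anticipated, the paper offers no derivation at all (it simply invokes Hoeffding's inequality as a standard tool), so your argument is just the textbook proof of the result the paper cites, not a different route.
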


\paragraph{Bernstein's inequality}
Bernstein's inequality provides a powerful tool for bounding the tail probabilities of the sum of independent, bounded random variables. Specifically, for a sum \( \sum_{i=1}^n Z_i \) comprised of \( n \) independent random variables $Z_i$ with $Z_i\in [0, B]$, each with a maximum variance of \( \sigma^2 \), and for any \( t > 0 \), the inequality states that
\[
\mathbb{P}\left( \mathbb{E}\left[\sum_{i=1}^n Z_i \right] - \sum_{i=1}^n Z_i > t \right) \leq \exp\left(-\frac{t^2}{2\sigma^2 + \frac{2}{3}tB}\right),
\]
where \( B \) denotes an upper bound on the absolute value of each random variable.
Re-arranging the above, we get that
\[
\mathbb{P}\left(\mathbb{E}[Z] < \frac{1}{n}\left(\sum_{i=1}^n Z_i + t\right)\right) \geq 1- \exp\left(-\frac{t^2}{2\sigma^2 + \frac{2}{3}tB}\right).
\]
This allows us to construct upper CIs on $\mathbb{E}[Z]$.

\paragraph{Central Limit Theorem}
The Central Limit Theorem (CLT) \citep{lecam1986central} serves as a cornerstone in statistics for constructing confidence intervals around sample means, particularly when the sample size is substantial. The theorem posits that, for a sufficiently large sample size, the distribution of the sample mean will closely resemble a normal (Gaussian) distribution, irrespective of the original population's distribution. This Gaussian nature of the sample mean empowers us to form confidence intervals for the population mean using the normal distribution's characteristics.

Given \( Z_1, Z_2, \dots, Z_n \) as \( n \) independent and identically distributed (i.i.d.) random variables with mean \( \mu \) and variance \( \sigma^2 \), the sample mean \( \bar{Z} \) approximates a normal distribution with mean \( \mu \) and variance \( \sigma^2/n \) for large \( n \). An upper \( (1-\alpha) \) confidence interval for \( \mu \) is thus:
\[
U^\alpha =  \bar{Z} + z_{\alpha/2} \frac{\sigma}{\sqrt{n}}
\]
where \( z_{\alpha/2} \) represents the critical value from the standard normal distribution corresponding to a cumulative probability of \( 1 - \alpha \).

\paragraph{Bootstrap Confidence Intervals}
Bootstrapping, introduced by \cite{efron1979bootstrap}, offers a non-parametric approach to estimate the sampling distribution of a statistic. The method involves repeatedly drawing samples (with replacement) from the observed data and recalculating the statistic for each resample. The resulting empirical distribution of the statistic across bootstrap samples forms the basis for confidence interval construction.

Given a dataset \( Z_1, Z_2, \dots, Z_n \), one can produce \( B \) bootstrap samples by selecting \( n \) observations with replacement from the original data. For each of these samples, the statistic of interest (for instance, the mean) is determined, yielding \( B \) bootstrap estimates. An upper \( (1-\alpha) \) bootstrap confidence interval for \( \E[Z] \) is given by:
\[
U^\alpha = \bar{Z} + (z^*_{\alpha} - \bar{Z})
\]
with \( z^*_{\alpha} \) denoting the \( \alpha \)-quantile of the bootstrap estimates. It's worth noting that there exist multiple methods to compute bootstrap confidence intervals, including the basic, percentile, and bias-corrected approaches, and the method described above serves as a general illustration.

\section{Sensitivity analysis for $\Delta(h)$}\label{sec:sens_analysis_app}
Recall from Proposition \ref{prop:probability-guarantee-lb} that the lower confidence intervals for $\tradeoff$ include a $\Delta(h)$ term which is defined as 
\[
\Delta(h) \coloneqq \fairnessloss(h) - \tradeoff(\Acc(h)) \geq 0.
\]
In other words, $\Delta(h)$ quantifies how `far' the fairness loss of classifier $h$ (i.e. $\fairnessloss(h)$) is from the minimum attainable fairness loss for classifiers with accuracy $\Acc(h)$, (i.e. $\tradeoff(\Acc(h))$). This quantity is unknown in general and therefore, a practical strategy of obtaining lower confidence intervals on $\tradeoff(\psi)$ may involve positing values for $\Delta(h)$ which encode our belief on how close the fairness loss $\fairnessloss(h)$ is to $\tradeoff(\Acc(h))$. For example, when we assume that the classifier $h$ achieves the optimal accuracy-fairness tradeoff, i.e. $\fairnessloss(h) = \tradeoff(\Acc(h))$ then $\Delta(h) = 0$. 

However, the assumption $\fairnessloss(h) = \tradeoff(\Acc(h))$ may not hold in general because we only have a finite training dataset and consequently the empirical loss minimisation may not yield the optima to the true expected loss. Moreover, the regularised loss used in training $h$ is a surrogate loss which approximates the solution to the constrained minimisation problem in \eqref{eq:tradeoff-def}.
This means that optimising this regularised loss is not guaranteed to yield the optimal classifier which achieves the optimal fairness $\tradeoff(\Acc(h))$. Therefore, to incorporate any belief on the sub-optimality of the classifier $h$, we may consider conducting sensitivity analyses to plausibly quantify $\Delta(h)$.

Let $h_\theta: \mathcal{X}\times \Lambda \rightarrow \mathbb{R}$ be the YOTO model. Our strategy for sensitivity analysis involves training multiple standard models $\mathcal{M} \coloneqq \{h^{(1)}, h^{(2)}, \ldots, h^{(k)}\} \subseteq \mathcal{H}$ by optimising the regularised losses for few different choices of $\lambda$. 
\begin{align*}
    \mathcal{L}_{\lambda}(\theta) = \E[\celoss(h_\theta(X), Y)] + \lambda\,\fairnesslossrelaxed(h_\theta).
\end{align*}
Importantly, we do not require covering the full range of $\lambda$ values when training separate models $\mathcal{M}$, and our methodology remains valid even when $\mathcal{M}$ is a single model. 
Next, let $h^\ast_{\lambda} \in \mathcal{M} \cup \{h_\theta(\cdot, \lambda)\}$ be such that
\begin{align}
h^\ast_{\lambda} = \argmin_{h' \in \mathcal{M} \cup \{h_\theta(\cdot, \lambda)\}} \widehat{\fairnessloss}(h') \quad \textup{subject to } \quad \widehat{\Acc}(h')\geq \widehat{\Acc}(h_\theta(\cdot, \lambda)). \label{eq:sensitivity_analysis}
\end{align}
Here, $\widehat{\fairnessloss}$ and $\widehat{\Acc}$ denote the finite sample estimates of the fairness loss and model accuracy respectively.
We treat the model $h^\ast_{\lambda}$ as a model which attains the optimum trade-off when estimating subject to the constraint $\Acc(h)\geq \Acc(h_\theta(\cdot, \lambda))$. Specifically, we use the maximum empirical error $\max_\lambda \widehat{\Delta}(h_\theta(\cdot, \lambda))$ as a plausible surrogate value for $\Delta(h_\theta(\cdot, \lambda'))$, where $\widehat{\Delta}(h_\theta(\cdot, \lambda))\coloneqq \widehat{\fairnessloss}(h_\theta(\cdot, \lambda)) - \widehat{\fairnessloss}(h^\ast_{\lambda}) \geq 0$ , i.e., we posit for any $\lambda' \in \Lambda$
\[
\Delta(h_\theta(\cdot, \lambda')) \leftarrow \max_{\lambda \in \Lambda} \widehat{\Delta}(h_\theta(\cdot, \lambda)) \qquad \textup{where,} \qquad \widehat{\Delta}(h_\theta(\cdot, \lambda)) \coloneqq \widehat{\fairnessloss}(h_\theta(\cdot, \lambda)) - \widehat{\fairnessloss}(h^\ast_{\lambda}).
\]

Next, we can use this posited value of $\Delta(h_\theta(\cdot, \lambda'))$ to construct the lower confidence interval using the following corollary of Proposition \ref{prop:probability-guarantee-lb}:
\begin{corollary}\label{corr:lb_yoto_app}
    Consider the YOTO model $h_\theta: \mathcal{X}\times \Lambda \rightarrow \mathbb{R}$.
    Given $\lambda_0 \in \Lambda$, let $\accintervalup, \fairnessintervallo \in [0, 1]$ be such that 
    \[
    \prob(\Acc(h_\theta(\cdot, \lambda_0))\leq \accintervalup) \geq 1-\alpha/2 \quad \textup{and} \quad \prob(\fairnessloss(h_\theta(\cdot, \lambda_0))\geq \fairnessintervallo) \geq 1-\alpha/2.
    \]
    Then, we have that $\prob(\tradeoff(\accintervalup) \geq \fairnessintervallo - \Delta(h_\theta(\cdot, \lambda_0)))\geq 1-\alpha.$
\end{corollary}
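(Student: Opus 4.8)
The plan is to recognize that Corollary~\ref{corr:lb_yoto_app} is an almost immediate special case of Proposition~\ref{prop:probability-guarantee-lb}, specialized to the YOTO model. Recall that the YOTO model $h_\theta:\mathcal{X}\times\Lambda\rightarrow\mathbb{R}$ produces, for each fixed $\lambda_0\in\Lambda$, an ordinary classifier $h_\theta(\cdot,\lambda_0):\mathcal{X}\rightarrow\mathbb{R}$, which we have been abbreviating as $h_{\lambda_0}$. Since $h_\theta(\cdot,\lambda_0)\in\mathcal{H}$ is just a particular element of the model class, the hypotheses and conclusion of Proposition~\ref{prop:probability-guarantee-lb} apply verbatim to it.

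First I would set $h=h_\theta(\cdot,\lambda_0)$ and observe that the two hypotheses of the corollary are exactly the two CI hypotheses of Proposition~\ref{prop:probability-guarantee-lb}, namely $\prob(\Acc(h)\leq \accintervalup)\geq 1-\alpha/2$ and $\prob(\fairnessloss(h)\geq \fairnessintervallo)\geq 1-\alpha/2$, with $\accintervalup$ and $\fairnessintervallo$ playing the roles of the upper and lower CIs on $\Acc(h)$ and $\fairnessloss(h)$. Then I would invoke Proposition~\ref{prop:probability-guarantee-lb} directly: its conclusion states $\prob(\tradeoff(\accintervalup)\geq \fairnessintervallo-\Delta(h))\geq 1-\alpha$, where $\Delta(h)=\fairnessloss(h)-\tradeoff(\Acc(h))\geq 0$. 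Substituting $h=h_\theta(\cdot,\lambda_0)$ gives precisely $\prob(\tradeoff(\accintervalup)\geq \fairnessintervallo-\Delta(h_\theta(\cdot,\lambda_0)))\geq 1-\alpha$, which is the claimed statement.

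Since the corollary is a direct instantiation, there is essentially no mathematical obstacle in the derivation itself; the proof is a one-line application of the proposition. The only point worth flagging explicitly is that $h_\theta(\cdot,\lambda_0)$ must genuinely lie in $\mathcal{H}$ for Proposition~\ref{prop:probability-guarantee-lb} to apply, which holds by construction of the YOTO model (it is trained to recover, at inference time, classifiers within the same parameterized family $\mathcal{H}$). If I wanted to be fully careful, I would also note that $\Delta(h_\theta(\cdot,\lambda_0))$ here is the \emph{true} (unknown) gap, so the corollary's statement is exact; the practical use of the posited surrogate $\max_{\lambda}\widehat{\Delta}(h_\theta(\cdot,\lambda))$ in place of $\Delta(h_\theta(\cdot,\lambda_0))$ is a separate modeling choice addressed by the sensitivity analysis discussion and is not part of proving this probabilistic inequality.

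Thus the whole proof reduces to stating ``apply Proposition~\ref{prop:probability-guarantee-lb} with $h=h_\theta(\cdot,\lambda_0)$'' and reading off the conclusion. The main conceptual content has already been carried out in the proof of Proposition~\ref{prop:probability-guarantee-lb}, which combined the two one-sided CIs via a union bound and then used monotonicity of $\tradeoff$ together with the definition of $\Delta$; nothing new needs to be re-derived here.
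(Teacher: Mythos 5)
Your proof is correct and matches the paper's treatment exactly: the paper states this result as an immediate corollary of Proposition \ref{prop:probability-guarantee-lb} (whose appendix proof is already written for a general classifier $h\in\mathcal{H}$) and gives no separate argument, so instantiating $h=h_\theta(\cdot,\lambda_0)$ is all that is needed. Your additional remarks about $h_\theta(\cdot,\lambda_0)\in\mathcal{H}$ and about $\Delta$ being the true gap rather than its empirical surrogate are accurate and consistent with the paper's discussion.
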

This result shows that if the goal is to construct lower confidence intervals on $\tradeoff(\psi)$ and we obtain that $\psi \geq \accintervalup$, then using the monotonicity of $\tradeoff$ we have that $\tradeoff(\psi) \geq \tradeoff(\accintervalup)$. Therefore the interval $[\fairnessintervallo - \Delta(h_\theta(\cdot, \lambda_0))), 1]$ serves as a lower confidence interval for $\tradeoff(\psi)$.

\textbf{When YOTO satisfies Pareto optimality, $\Delta(h_\theta(\cdot, \lambda)) \rightarrow 0$ as $|\caldata| \rightarrow \infty$:}
Here, we show that in the case when YOTO achieves the optimal trade-off, then our sensitivity analysis leads to $\Delta(h_\theta(\cdot, \lambda)) = 0$ as the calibration data size increases for all $\lambda \in \Lambda$. Our arguments in this section are not formal, however, this idea can be formalised without any significant difficulty.

First, the concept of Pareto optimality (defined below) formalises the idea that YOTO achieves the optimal trade-off:
\begin{assumption}[Pareto optimality]\label{assumption:pareto}
    \begin{align*}
         \textup{If for some } \lambda \in \Lambda \textup{ and } h'\in\mathcal{H} \textup{ we have that, } \, \Acc(h')\geq \Acc(h_\theta(\cdot, \lambda)) \, \textup{ then, } \, \fairnessloss(h') \geq \fairnessloss(h_\theta(\cdot, \lambda)),
\end{align*}
\end{assumption}
In the case when YOTO satisfies this optimality property, then it is straightforward to see that $\Delta(h_\theta(\cdot, \lambda))=0$ for all $\lambda \in \Lambda$. In this case, as $\caldata \rightarrow \infty$, we get that \eqref{eq:sensitivity_analysis} roughly becomes 
\begin{align*}
    h^\ast_{\lambda} = \argmin_{h' \in \mathcal{M} \cup \{h_\theta(\cdot, \lambda)\}} \fairnessloss(h') \quad \textup{subject to } \quad \Acc(h')\geq \Acc(h_\theta(\cdot, \lambda)).
\end{align*}
Here, Assumption \ref{assumption:pareto} implies that $h^\ast_{\lambda} = h_\theta(\cdot, \lambda)$, and therefore
\[
\widehat{\Delta}(h_\theta(\cdot, \lambda))\coloneqq \widehat{\fairnessloss}(h_\theta(\cdot, \lambda)) - \widehat{\fairnessloss}(h^\ast_{\lambda}) = 0.
\]

\paragraph{Intuition behind our sensitivity analysis procedure}
Intuitively, the high-level idea behind our sensitivity analysis is that it checks if we train models separately for fixed values of $\lambda$ (i.e. models in $\mathcal{M}$), how much better do these separately trained models perform in terms of the accuracy-fairness trade-offs as compared to our YOTO model. If we find that the separately trained models achieve a better trade-off than the YOTO model for specific values of $\lambda$, then the sensitivity analysis adjusts the empirical trade-off obtained using YOTO models (using the $\widehat{\Delta}(h_\theta(\cdot, \lambda))$ term defined above). If, on the other hand, we find that the YOTO model achieves a better trade-off than the separately trained models in $\mathcal{M}$, then the sensitivity analysis has no effect on the lower confidence intervals as in this case $\widehat{\Delta}(h_\theta(\cdot, \lambda))=0$.

\subsection{Experimental results}
\begin{figure}[t]
     \centering
     \includegraphics[width=\textwidth]{figs_results_wrto_cho_legend_with_rto.pdf}\\
     \begin{subfigure}[b]{0.33\textwidth}
         \centering
         \includegraphics[width=\textwidth]{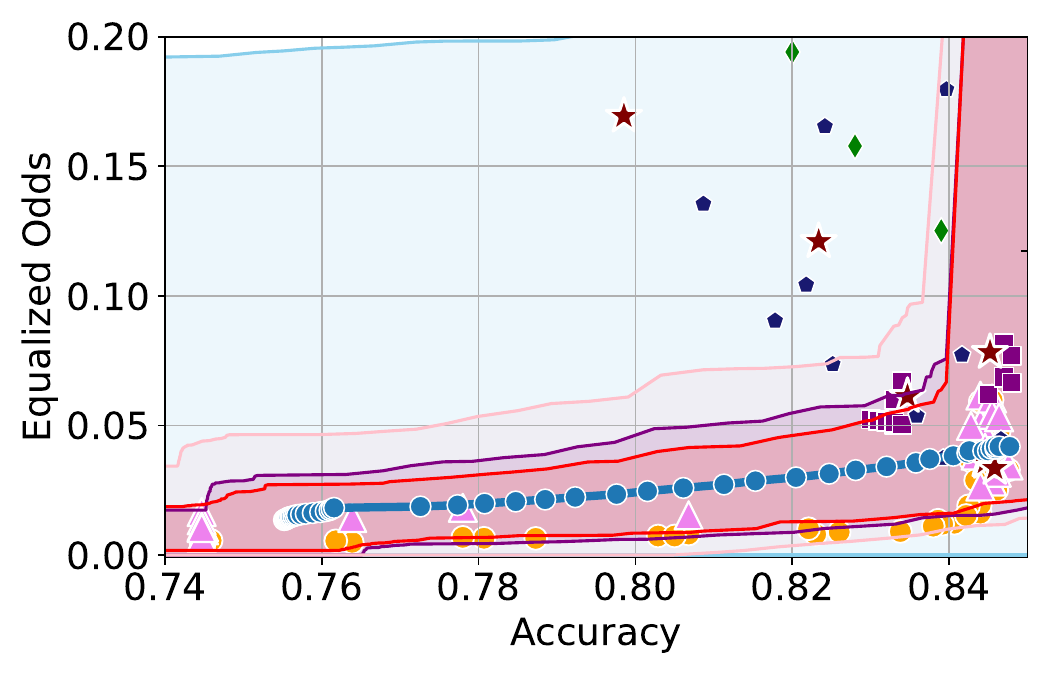}
         \caption{No sensitivity analysis}
         \label{fig:adult_no_sensanal}
     \end{subfigure}
     \begin{subfigure}[b]{0.33\textwidth}
         \centering
         \includegraphics[width=\textwidth]{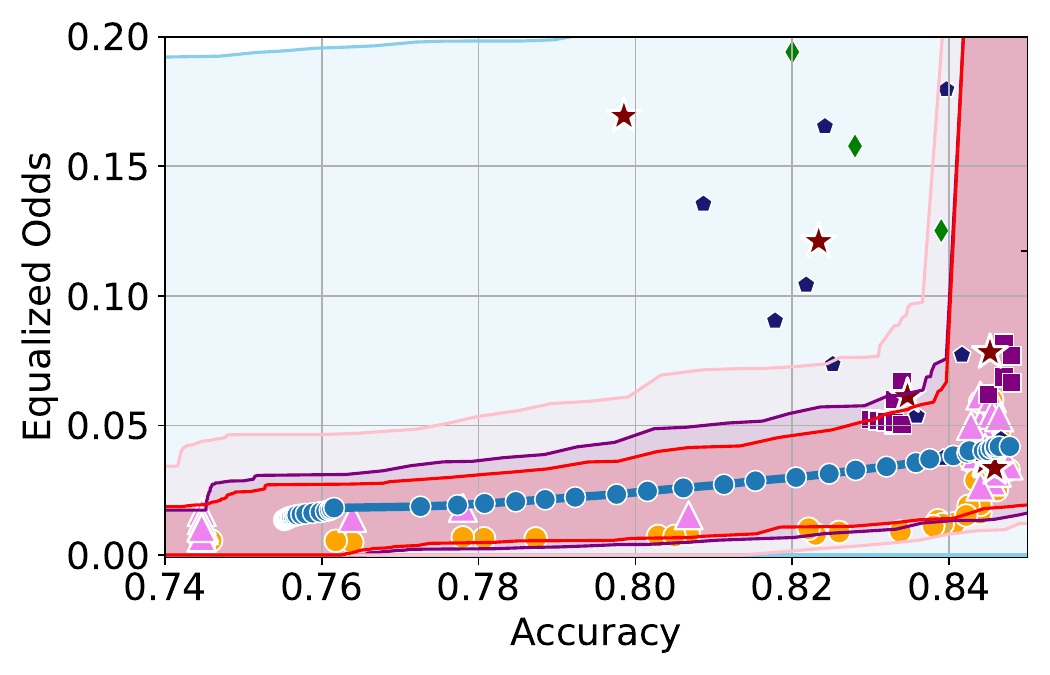}
         \caption{$|\mathcal{M}| = 2$}
         \label{fig:adult_sensanal2}
     \end{subfigure}%
     \begin{subfigure}[b]{0.33\textwidth}
         \centering
         \includegraphics[width=\textwidth]{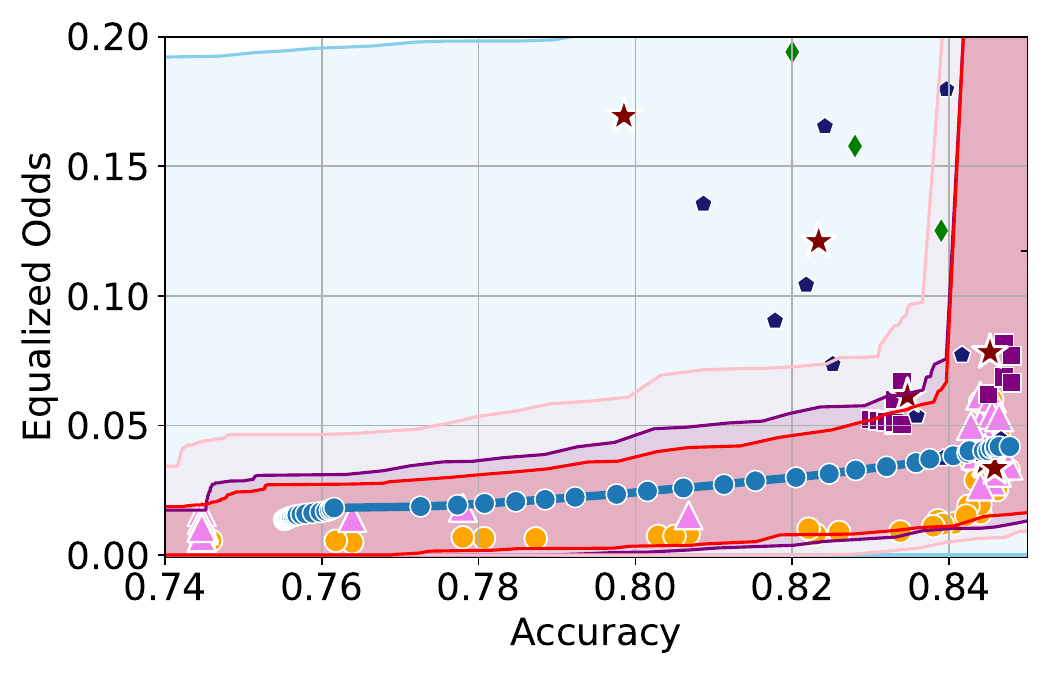}
         \caption{$|\mathcal{M}| = 5$}
         \label{fig:adult_sensanal5}
     \end{subfigure}
        \caption{CIs with and without sensitivity analysis for Adult dataset for EO violation.}
        \label{fig:adult_sensanal}
     \begin{subfigure}[b]{0.33\textwidth}
         \centering
         \includegraphics[width=\textwidth]{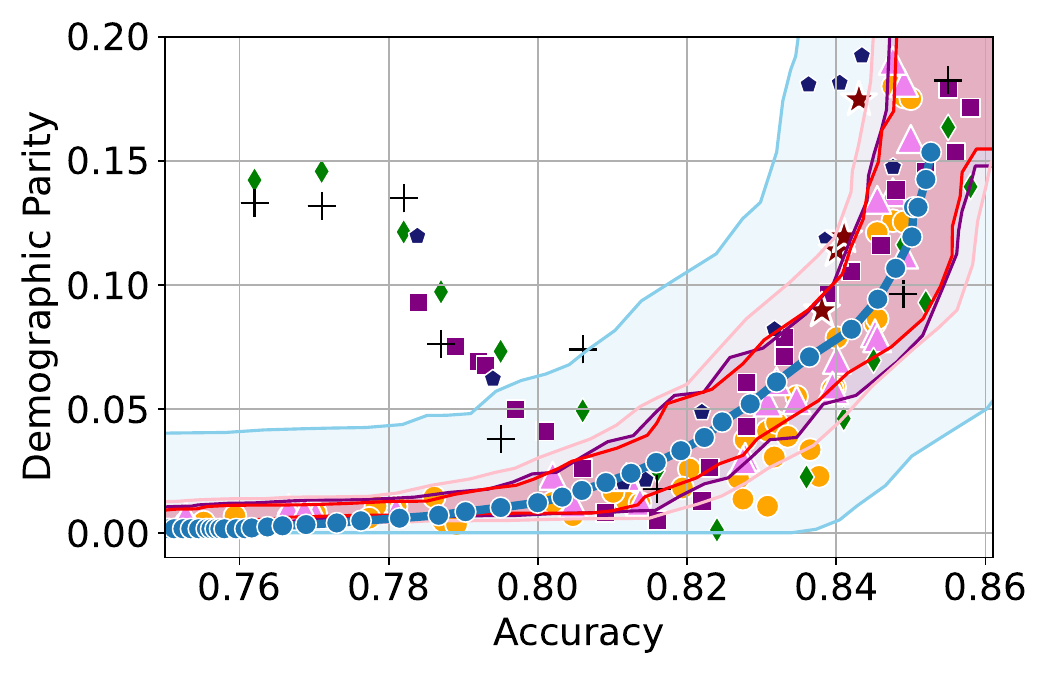}
         \caption{No sensitivity analysis}
         \label{fig:celeba_no_sensanal}
     \end{subfigure}%
     \begin{subfigure}[b]{0.33\textwidth}
         \centering
         \includegraphics[width=\textwidth]{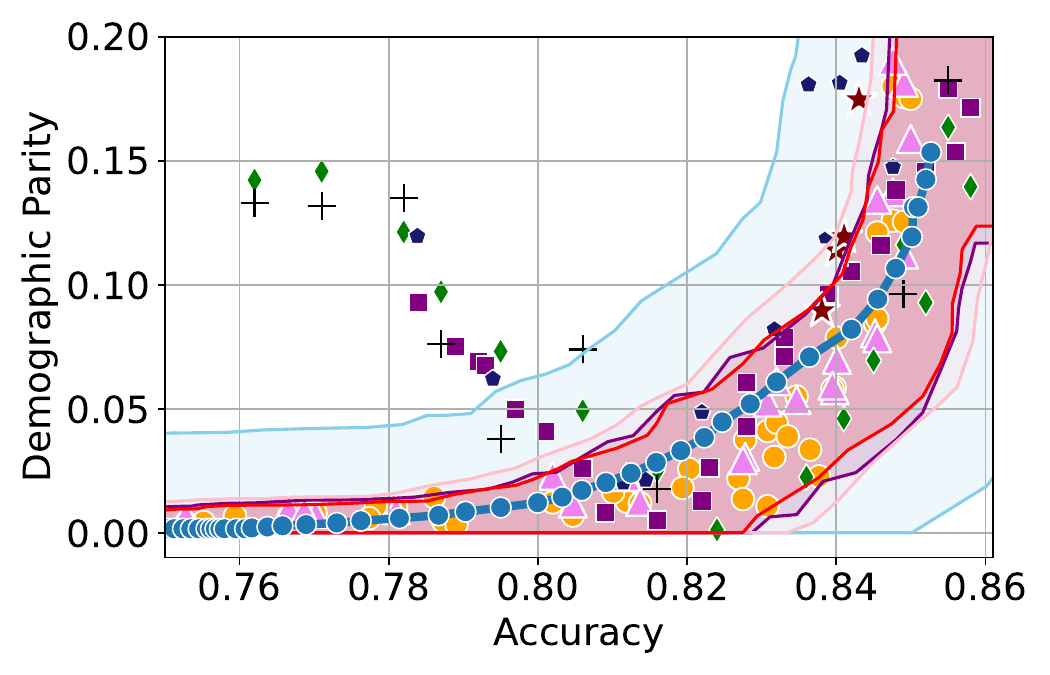}
         \caption{$|\mathcal{M}| = 2$}
         \label{fig:celeba_sensanal2}
     \end{subfigure}%
     \begin{subfigure}[b]{0.33\textwidth}
         \centering
         \includegraphics[width=\textwidth]{figs_adult_dp_s5.pdf}
         \caption{$|\mathcal{M}| = 5$}
         \label{fig:celeba_sensanal5}
     \end{subfigure}
        \caption{CIs with and without sensitivity analysis for Adult dataset for DP violation.}
        \label{fig:celeba_sensanal}
     \begin{subfigure}[b]{0.33\textwidth}
         \centering
         \includegraphics[width=\textwidth]{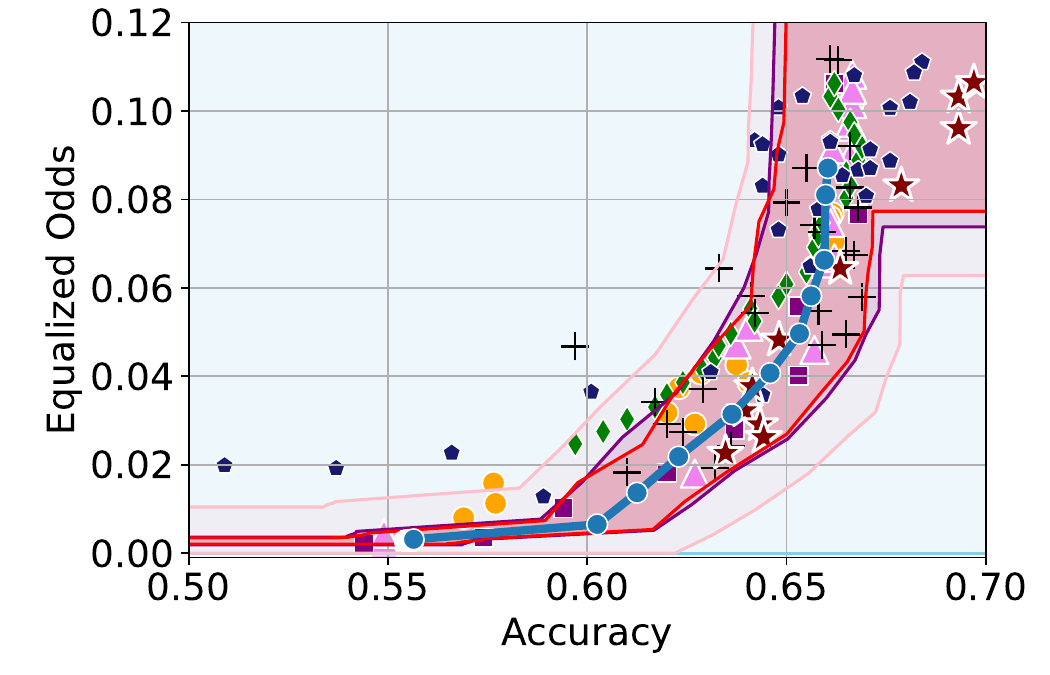}
         \caption{No sensitivity analysis}
         \label{fig:celeba_no_sensanal}
     \end{subfigure}%
     \begin{subfigure}[b]{0.33\textwidth}
         \centering
         \includegraphics[width=\textwidth]{figs_compas_eo_s0.pdf}
         \caption{$|\mathcal{M}| = 2$}
         \label{fig:celeba_sensanal2}
     \end{subfigure}%
     \begin{subfigure}[b]{0.33\textwidth}
         \centering
         \includegraphics[width=\textwidth]{figs_compas_eo_s0.pdf}
         \caption{$|\mathcal{M}| = 5$}
         \label{fig:celeba_sensanal5}
     \end{subfigure}
        \caption{CIs with and without sensitivity analysis for COMPAS dataset for EO violation. Here, sensitivity analysis has no effect on the constructed CIs as the YOTO model achieves a better empirical trade-off than the separately trained models.}
        \label{fig:compas_sensanal}
\end{figure}
Here, we include empirical results showing how the CIs constructed change as a result of our sensitivity analysis procedure. In Figures \ref{fig:adult_sensanal} and \ref{fig:celeba_sensanal}, we include examples of CIs where the empirical trade-off obtained using YOTO is sub-optimal. In these cases, the lower CIs obtained without sensitivity analysis (i.e. when we assume $\Delta(h_\lambda)=0$) do not cover the empirical trade-offs for the separately trained models. However, the figures show that the sensitivity analysis procedure adjusts the lower CIs in both cases so that they encapsulate the empirical trade-offs that were not captured without sensitivity analysis. 

Recall that $\mathcal{M}$ represents the set of additional separately trained models used for the sensitivity analysis. It can be seen from Figures \ref{fig:adult_sensanal} and \ref{fig:celeba_sensanal} that in both cases our sensitivity analysis performs well with as little as two models (i.e. $|\mathcal{M}| =2$), which shows that our sensitivity analysis does not come at a high computational cost.

Tables \ref{tab:coverage_input_dim_102_fairness_eo_sens_anal} and \ref{tab:coverage_input_dim_12288_fairness_eop_sens_anal} contain results corresponding to these figures and show the proportion of trade-offs which lie in the three trade-off regions shown in Figure \ref{fig:illu} with and without sensitivity analysis. It can be seen that in both tables, when $|\mathcal{M}|=2$, the proportion of trade-offs which lie below the lower CIs (blue region in Figure \ref{fig:illu}) is negligible.

Additionally, in Figure \ref{fig:compas_sensanal} we also consider an example where YOTO achieves a better empirical trade-off than most other baselines considered, and therefore there is no need for sensitivity analysis. In this case, Figure \ref{fig:compas_sensanal} (and Table \ref{tab:coverage_input_dim_9_fairness_eo_sens_anal}) show that sensitivity analysis has no effect on the CIs constructed since in this case sensitivity analysis gives us $\widehat{\Delta}(h_\theta(\cdot, \lambda)) = 0$ for $\lambda \in \Lambda$. This shows that in cases where sensitivity analysis is not needed (for example, if YOTO achieves optimal empirical trade-off), our sensitivity analysis procedure does not make the CIs more conservative. 

\begin{table}[h!]
\caption{Results for the Adult dataset and EO fairness violation with and without sensitivity analysis: Proportion of empirical trade-offs for each baseline which lie in the three trade-off regions (using Bootstrap CIs).}
    \label{tab:coverage_input_dim_102_fairness_eo_sens_anal}
    \centering
    \begin{tiny}
\begin{tabular}{llllllll}
\toprule
Baseline & Sub-optimal & Unlikely & Permissible & Unlikely & Permissible & Unlikely & Permissible \\
 &  &  ($|\mathcal{M}|=0$) &  ($|\mathcal{M}|=0$) &  ($|\mathcal{M}|=2$) &  ($|\mathcal{M}|=2$) &  ($|\mathcal{M}|=5$) &  ($|\mathcal{M}|=5$) \\
\midrule
KDE-fair & 0.00 & 0.15 & 0.85 & 0.00 & 1.00 & 0.00 & 1.00 \\
RTO & 0.60 & 0.00 & 0.40 & 0.00 & 0.40 & 0.00 & 0.40 \\
adversary & 0.64 & 0.00 & 0.36 & 0.00 & 0.36 & 0.00 & 0.36 \\
linear & 0.60 & 0.00 & 0.40 & 0.00 & 0.40 & 0.00 & 0.40 \\
logsig & 0.35 & 0.00 & 0.65 & 0.00 & 0.65 & 0.00 & 0.65 \\
reductions & 0.93 & 0.00 & 0.07 & 0.00 & 0.07 & 0.00 & 0.07 \\
separate & 0.00 & 0.54 & 0.46 & 0.00 & 1.00 & 0.00 & 1.00 \\
\bottomrule
\end{tabular}

    \end{tiny}
\end{table}

\begin{table}[h!]
\caption{Results for the Adult dataset and DP fairness violation with and without sensitivity analysis: Proportion of empirical trade-offs for each baseline which lie in the three trade-off regions (using Bootstrap CIs).}
    \label{tab:coverage_input_dim_12288_fairness_eop_sens_anal}
    \centering
    \begin{tiny}
\begin{tabular}{llllllll}
\toprule
Baseline & Sub-optimal & Unlikely  & Permissible  & Unlikely  & Permissible  & Unlikely  & Permissible \\
 & &  ($|\mathcal{M}|=0$) & ($|\mathcal{M}|=0$) & ($|\mathcal{M}|=2$) & ($|\mathcal{M}|=2$) & ($|\mathcal{M}|=5$) & ($|\mathcal{M}|=5$) \\

\midrule
KDE-fair & 0.03 & 0.10 & 0.87 & 0.00 & 0.97 & 0.00 & 0.97 \\
RTO & 0.67 & 0.00 & 0.33 & 0.00 & 0.33 & 0.00 & 0.33 \\
adversary & 0.91 & 0.00 & 0.09 & 0.00 & 0.09 & 0.00 & 0.09 \\
linear & 0.40 & 0.33 & 0.27 & 0.00 & 0.60 & 0.00 & 0.60 \\
logsig & 0.73 & 0.05 & 0.23 & 0.00 & 0.27 & 0.00 & 0.27 \\
reductions & 0.87 & 0.00 & 0.13 & 0.00 & 0.13 & 0.00 & 0.13 \\
separate & 0.03 & 0.25 & 0.71 & 0.00 & 0.97 & 0.00 & 0.97 \\
\bottomrule
\end{tabular}

    \end{tiny}
\end{table}

\begin{table}[h!]
\caption{Results for the COMPAS dataset and EO fairness violation with and without sensitivity analysis: Proportion of empirical trade-offs for each baseline which lie in the three trade-off regions (using Bootstrap CIs).}
    \label{tab:coverage_input_dim_9_fairness_eo_sens_anal}
    \centering
    \begin{tiny}
\begin{tabular}{llllllll}
\toprule
Baseline & Sub-optimal & Unlikely  & Permissible  & Unlikely & Permissible  & Unlikely  & Permissible  \\
 &  &  ($|\mathcal{M}|=0$) &  ($|\mathcal{M}|=0$) &  ($|\mathcal{M}|=2$) & 
 ($|\mathcal{M}|=2$) &  ($|\mathcal{M}|=5$) &  ($|\mathcal{M}|=5$) \\
\midrule
KDE-fair & 0.00 & 0.00 & 1.00 & 0.00 & 1.00 & 0.00 & 1.00 \\
RTO & 0.15 & 0.00 & 0.85 & 0.00 & 0.85 & 0.00 & 0.85 \\
adversary & 0.00 & 0.00 & 1.00 & 0.00 & 1.00 & 0.00 & 1.00 \\
linear & 0.21 & 0.00 & 0.79 & 0.00 & 0.79 & 0.00 & 0.79 \\
logsig & 0.55 & 0.00 & 0.45 & 0.00 & 0.45 & 0.00 & 0.45 \\
reductions & 0.30 & 0.00 & 0.70 & 0.00 & 0.70 & 0.00 & 0.70 \\
separate & 0.03 & 0.80 & 0.17 & 0.80 & 0.17 & 0.80 & 0.17 \\
\bottomrule
\end{tabular}

    \end{tiny}

\end{table}

\section{Scarce sensitive attributes}\label{sec:scarce_sens_atts_app}
Our methodology of obtaining confidence intervals on $\fairnessloss$ assumes access to the sensitive attributes $A$ for all data points in the held-out dataset $\mathcal{D}$. However, in practice, we may only have access to $A$ for a small proportion of the data in $\mathcal{D}$. In this case, a na\"{i}ve strategy would involve constructing confidence intervals using only the data for which $A$ is available. However, since such data is scarce, the confidence intervals constructed are very loose. 

Suppose that we additionally have access to a predictive model $\surrogatemodel$ which predicts the sensitive attributes $A$ using the features $X$. In this case, another simple strategy would be to simply impute the missing values of $A$, with the values $\Ah$ predicted using $\surrogatemodel$. However, this will usually lead to a biased estimate of the fairness violation $\fairnessloss(h)$, and hence is not very reliable unless the model $\surrogatemodel$ is highly accurate.
In this section, we show how to get the best of both worlds, i.e. how to utilise the data with missing sensitive attributes to obtain tighter and more accurate confidence intervals on $\tradeoff(\psi)$. 

Formally, we consider $\caldata = \mathcal{D} \cup \Dt$ where 
$\mathcal{D}$ denotes a data subset 
of size $n$ that contains sensitive attributes (i.e. we observe $A$) and $\Dt$ denotes 
the data subset of size $N$ for which we do not observe the sensitive attributes $A$,
and $N \gg n$. Additionally, for both datasets, we have predictions of the sensitive attributes made by a machine-learning algorithm $\surrogatemodel: \mathcal{X}\rightarrow \mathcal{A}$, where $\surrogatemodel(X) \approx A$. Concretely we have that $\mathcal{D} = \{(X_i, A_i, Y_i, \surrogatemodel(X_i))\}_{i=1}^n$ and $\Dt = \{(\Xt_i, \Yt_i, \surrogatemodel(\Xt_i))\}_{i=1}^N$

\paragraph{High-level methodology}
Our methodology is inspired by prediction-powered inference \citep{angelopoulos2023predictionpowered} which builds confidence intervals on the expected outcome $\mathbb{E}[Y]$ using data for which the true outcome $Y$ is only available for a small proportion of the dataset. In our setting, however, it is the sensitive attribute $A$ that is missing for the majority of the data (and not the outcome $Y$). 

For $h\in \mathcal{H}$, let $\fairnessloss(h)$ be a fairness violation (such as DP or EO), and let $\widetilde{\fairnessloss}(h)$ be the corresponding fairness violation computed on the data distribution where $A$ is replaced by the surrogate sensitive attribute $\surrogatemodel(X)$. For example, in the case of DP violation, $\fairnessloss(h)$  and $\widetilde{\fairnessloss}(h)$ denote:
\begin{align*}
    \fairnessloss(h) &= | \prob(h(X) = 1 \mid A=1) - \prob(h(X) = 1 \mid A=0) |,\\
    \widetilde{\fairnessloss}(h) &= | \prob(h(X) = 1 \mid \surrogatemodel(X)=1) - \prob(h(X) = 1 \mid \surrogatemodel(X)=0) |.
\end{align*}
We next construct the confidence intervals on $\fairnessloss(h)$ using the following steps:
\begin{enumerate}
    \item Using $\mathcal{D}$, we construct intervals $C_\epsilon(\alpha; h)$ on $\epsilon(h) \coloneqq \fairnessloss(h) - \widetilde{\fairnessloss}(h)$ satisfying
    \begin{align}
        \prob(\epsilon(h) \in C_\epsilon(\alpha; h)) \geq 1-\alpha. \label{eq:epsilon-interval}
    \end{align}
    Even though the size of $\mathcal{D}$ is small, we choose a methodology which yields tight intervals for $\epsilon(h)$ when $\surrogatemodel(X_i) = A_i$ with a high probability.
    \item Next, using the dataset $\Dt$, we construct intervals $\tilde{C}_{\textup{f}}(\alpha; h)$ on $\widetilde{\fairnessloss}(h)$ satisfying
    \begin{align}
        \prob(\widetilde{\fairnessloss}(h) \in \tilde{C}_{\textup{f}}(\alpha; h)) \geq 1-\alpha. \label{eq:tilde-fairness-interval}
    \end{align}
    This interval will also be tight as the size of $\Dt$, $N \gg n$.
\end{enumerate}

Finally, using the union bound idea we combine the two confidence intervals to obtain the confidence interval for $\fairnessloss(h) - \widetilde{\fairnessloss}(h) + \widetilde{\fairnessloss}(h) = \fairnessloss(h)$. We make this precise in the following result:
\begin{lemma}\label{lem:surrogate_senstive_attr}
    Let $C_\epsilon(\alpha; h), \tilde{C}_{\textup{f}}(\alpha; h)$ be as defined in equations \ref{eq:epsilon-interval} and \ref{eq:tilde-fairness-interval}. Then, if we define $\fairnessintervalset^{\alpha}(h) = \{x + y \,|\, x\in C_\epsilon(\alpha; h), y\in \tilde{C}_{\textup{f}}(\alpha; h)\}$, we have that
    \[
    \prob(\fairnessloss(h) \in \fairnessintervalset^{\alpha}(h)) \geq 1- 2\alpha.
    \]
\end{lemma}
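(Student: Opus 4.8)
The plan is to exploit the exact additive decomposition $\fairnessloss(h) = \epsilon(h) + \widetilde{\fairnessloss}(h)$, which holds by the very definition $\epsilon(h) \coloneqq \fairnessloss(h) - \widetilde{\fairnessloss}(h)$, and then to combine the two given marginal coverage guarantees via a union bound. This mirrors the arguments used in the proofs of Propositions \ref{prop:probability-guarantee-ub} and \ref{prop:probability-guarantee-lb}, where two high-probability events are intersected; the only genuine difference here is the bookkeeping of the confidence level.

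Concretely, first I would introduce the two ``good'' events $E_1 \coloneqq \{\epsilon(h) \in C_\epsilon(\alpha; h)\}$ and $E_2 \coloneqq \{\widetilde{\fairnessloss}(h) \in \tilde{C}_{\textup{f}}(\alpha; h)\}$, whose probabilities are each at least $1-\alpha$ by \eqref{eq:epsilon-interval} and \eqref{eq:tilde-fairness-interval} respectively. The key deterministic step is to observe that on the intersection $E_1 \cap E_2$ the two point quantities $\epsilon(h)$ and $\widetilde{\fairnessloss}(h)$ serve as admissible choices of $x$ and $y$ in the Minkowski sum, so that $\fairnessloss(h) = \epsilon(h) + \widetilde{\fairnessloss}(h)$ lies in $\fairnessintervalset^{\alpha}(h) = \{x + y \,:\, x \in C_\epsilon(\alpha; h),\, y \in \tilde{C}_{\textup{f}}(\alpha; h)\}$. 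In other words, one obtains the set inclusion $E_1 \cap E_2 \subseteq \{\fairnessloss(h) \in \fairnessintervalset^{\alpha}(h)\}$.

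Finally I would apply the union bound to lower-bound the probability of the intersection,
\[
\prob(E_1 \cap E_2) \geq 1 - \prob(E_1^c) - \prob(E_2^c) \geq 1 - \alpha - \alpha = 1 - 2\alpha,
\]
and monotonicity of probability together with the inclusion above yields $\prob(\fairnessloss(h) \in \fairnessintervalset^{\alpha}(h)) \geq \prob(E_1 \cap E_2) \geq 1 - 2\alpha$, which is exactly the claim.

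There is essentially no hard step here: the result reduces to a deterministic decomposition plus a union bound, and crucially it does \emph{not} require $C_\epsilon(\alpha;h)$ and $\tilde{C}_{\textup{f}}(\alpha;h)$ to be independent, even though in this setting they are constructed from the disjoint subsets $\mathcal{D}$ and $\Dt$. The only point that warrants a sentence of care is confirming that membership of each summand in its own interval suffices for the sum to fall in the combined set, which is immediate from the definition of $\fairnessintervalset^{\alpha}(h)$. If one instead wanted a $1-\alpha$ (rather than $1-2\alpha$) guarantee, that would be the genuinely nontrivial direction, requiring either a split budget of $\alpha/2$ per interval or an exploitation of independence; but this is outside the scope of the stated lemma.
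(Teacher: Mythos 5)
Your proposal is correct and matches the paper's argument: the paper itself only sketches this lemma by noting that one combines the two intervals via a union bound applied to the identity $\fairnessloss(h) = \bigl(\fairnessloss(h) - \widetilde{\fairnessloss}(h)\bigr) + \widetilde{\fairnessloss}(h)$, which is precisely your decomposition into the events $E_1$ and $E_2$ followed by the Minkowski-sum inclusion. Your additional remarks --- that no independence between the two intervals is needed and that a $1-\alpha$ guarantee would require splitting the budget --- are accurate and consistent with how the paper uses the result.
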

When constructing the CIs over $\widetilde{\fairnessloss}(h)$ using imputed sensitive attributes $\surrogatemodel(X)$ in step 2 above, the prediction error of $\surrogatemodel$ introduces an error in the obtained CIs (denoted by $\epsilon(h)$). Step 1 rectifies this by constructing a CI over the incurred error $\epsilon(h)$, and therefore combining the two allows us to obtain intervals which utilise all of the available data while ensuring that the constructed CIs are well-calibrated.

\paragraph{Example: Demographic parity}
Having defined our high-level methodology above, we concretely demonstrate how this can be applied to the case where the fairness loss under consideration is DP.
As described above, the first step involves constructing intervals on $\epsilon(h) \coloneqq \fairnessloss(h) - \widetilde{\fairnessloss}(h)$  using a methodology which yields tight intervals when $\surrogatemodel(X_i) = A_i$ with a high probability. To this end, we use bootstrapping as described in Algorithm \ref{alg:bootstrap}. 

Even though bootstrapping does not provide us with finite sample coverage guarantees, it is asymptotically exact and satisfies the property that the confidence intervals are tight when $\Ah = A$ with a high probability. On the other hand, concentration inequalities (such as Hoeffding's inequality) seek to construct confidence intervals individually on $\fairnessloss(h)$ and $\widetilde{\fairnessloss}(h)$ and subsequently combine them through union bounds argument, for example. In doing so, these methods do not account for how close the values of $\fairnessloss(h)$ and $\widetilde{\fairnessloss}(h)$ might be in the data. 

To make this concrete, consider the example where $\surrogatemodel(X) \overset{\text{a.s.}}{=} A$ and hence $\fairnessloss(h) = \widetilde{\fairnessloss}(h)$. When using concentration inequalities to construct the $1-\alpha$ confidence intervals on $\fairnessloss(h)$ and $\widetilde{\fairnessloss}(h)$, we obtain identical intervals for the two quantities, say $[l, u]$. Then, using union bounds we obtain that $\fairnessloss(h) - \widetilde{\fairnessloss}(h) \in [l-u, u-l]$ with probability at least $1-2\alpha$. In this case even though $\fairnessloss(h) - \widetilde{\fairnessloss}(h) = 0$, the width of the interval $[l-u, u-l]$ does not depend on the closeness of $\fairnessloss(h)$ and $\widetilde{\fairnessloss}(h)$ and therefore is not tight. Bootstrapping helps us circumvent this problem, since in this case for each resample of the data $\mathcal{D}$, the finite sample estimates $\widehat{\fairnessloss(h)}$ and $\widehat{\widetilde{\fairnessloss}(h)}$ will be equal. We outline the bootstrapping algorithm below.

\begin{algorithm}
\caption{Bootstrapping for estimating $\epsilon(h) \coloneqq \fairnessloss(h) - \widetilde{\fairnessloss}(h)$}
\label{alg:bootstrap}
\begin{algorithmic}
\REQUIRE Dataset $\mathcal{D}$, number of bootstrap samples $B$, significance level $\alpha$
\ENSURE $1-\alpha$ confidence interval for $\epsilon(h)$
\STATE Initialize empty array $\textbf{v}_b$
\FOR{$i = 1$ to $B$}
    \STATE Draw a bootstrap sample $\mathcal{D}^{*}$ of size $|\mathcal{D}|$ with replacement from $\mathcal{D}$
    \STATE Compute $\widehat{\fairnessloss(h)}$ and $\widehat{\widetilde{\fairnessloss}(h)}$ on $\mathcal{D}^{*}$
    \STATE Compute the difference $\widehat{\epsilon(h)} \coloneqq \widehat{\fairnessloss(h)} - \widehat{\widetilde{\fairnessloss}(h)}$
    \STATE Append $\widehat{\epsilon(h)}$ to $\textbf{v}_b$
\ENDFOR
\STATE Compute the $\alpha/2$ and $1-\alpha/2$ quantiles of $\textbf{v}_b$, denoted as $l$ and $u$
\STATE \textbf{Return:} Confidence interval $C_\epsilon(\alpha; h) = [l, u]$
\end{algorithmic}
\end{algorithm}

Using Algorithm \ref{alg:bootstrap} we construct a confidence interval $C_\epsilon(\alpha; h)$ on $\epsilon(h)$ of size $1-\alpha$, which approximately satisfies \eqref{eq:epsilon-interval}.
Next, using standard techniques we can obtain an interval $\tilde{C}_{\textup{f}}(\alpha; h)$ on $\widetilde{\fairnessloss}(h)$ using $\Dt$ which satisfies \eqref{eq:tilde-fairness-interval}.
Like before, the interval $\tilde{C}_{\textup{f}}(\alpha; h)$ is likely to be tight as we use $\Dt$ to construct it, which is significantly larger than $\mathcal{D}$. 
Finally, combining the two as shown in Lemma \ref{lem:surrogate_senstive_attr}, we obtain the confidence interval on $\fairnessloss(h)$.

\subsection{Experimental results}
\begin{figure}[t]
     \centering
     \includegraphics[width=\textwidth]{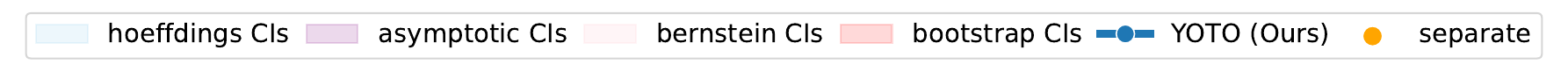}\\
     \begin{subfigure}[b]{0.33\textwidth}
         \centering
         \includegraphics[width=\textwidth]{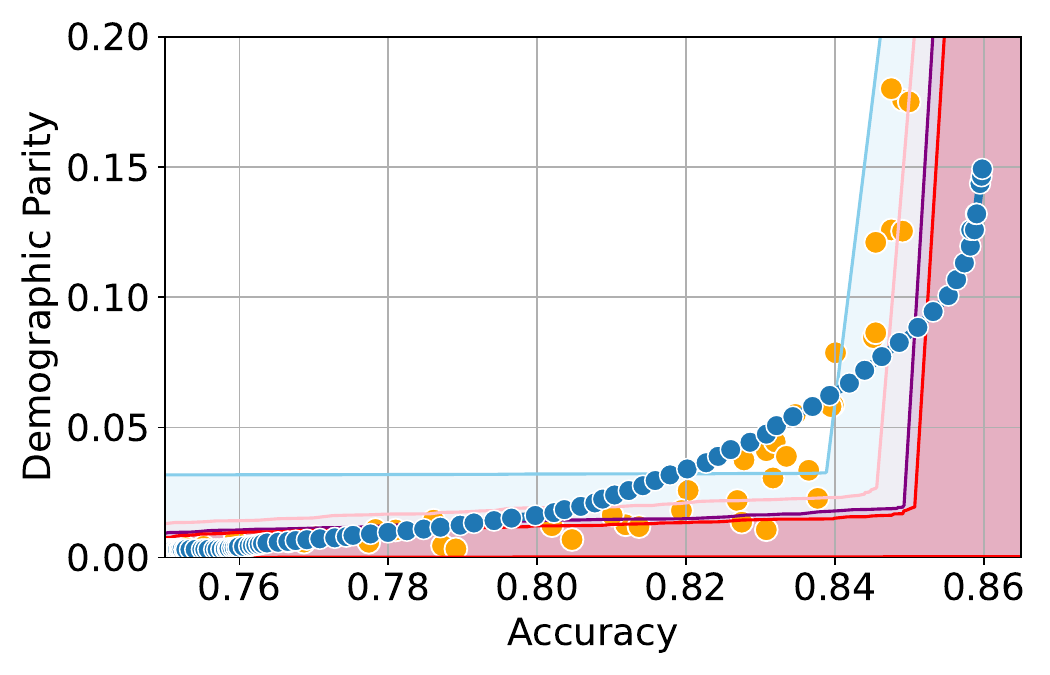}
         \caption{$\Acc(\surrogatemodel)=50\%$}
         \label{fig:surr_50_imputed_adult}
     \end{subfigure}%
     \begin{subfigure}[b]{0.33\textwidth}
         \centering
         \includegraphics[width=\textwidth]{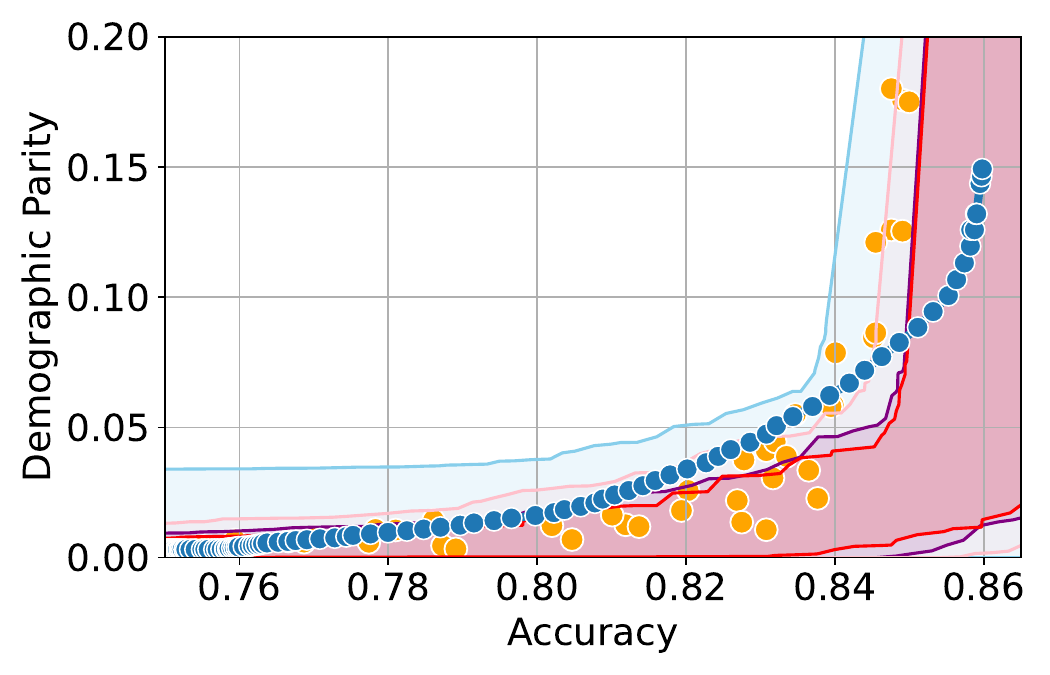}
         \caption{$\Acc(\surrogatemodel)=70\%$}
         \label{fig:surr_70_imputed_adult}
     \end{subfigure}%
     \begin{subfigure}[b]{0.33\textwidth}
         \centering
         \includegraphics[width=\textwidth]{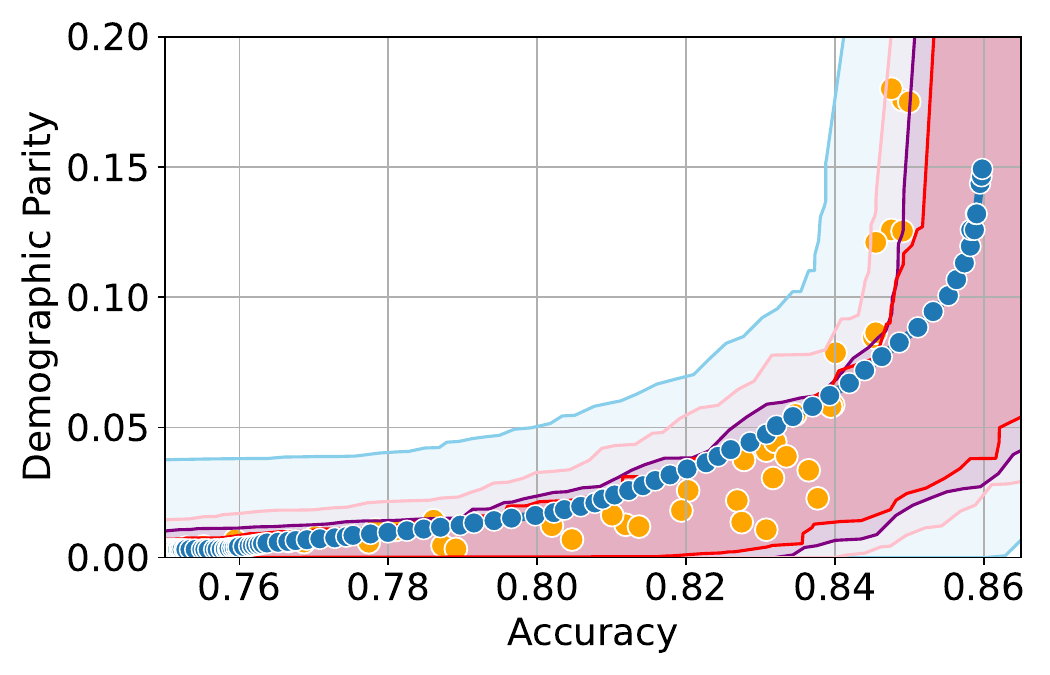}
         \caption{$\Acc(\surrogatemodel)=90\%$}
         \label{fig:surr_90_imputed_adult}
     \end{subfigure}
        \caption{CIs obtained by imputing missing senstive attributes using $\surrogatemodel$ for Adult dataset. Here $n=50$ and $N=2500$.}
        \label{fig:adult_imputed}
     \begin{subfigure}[b]{0.33\textwidth}
         \centering
         \includegraphics[width=\textwidth]{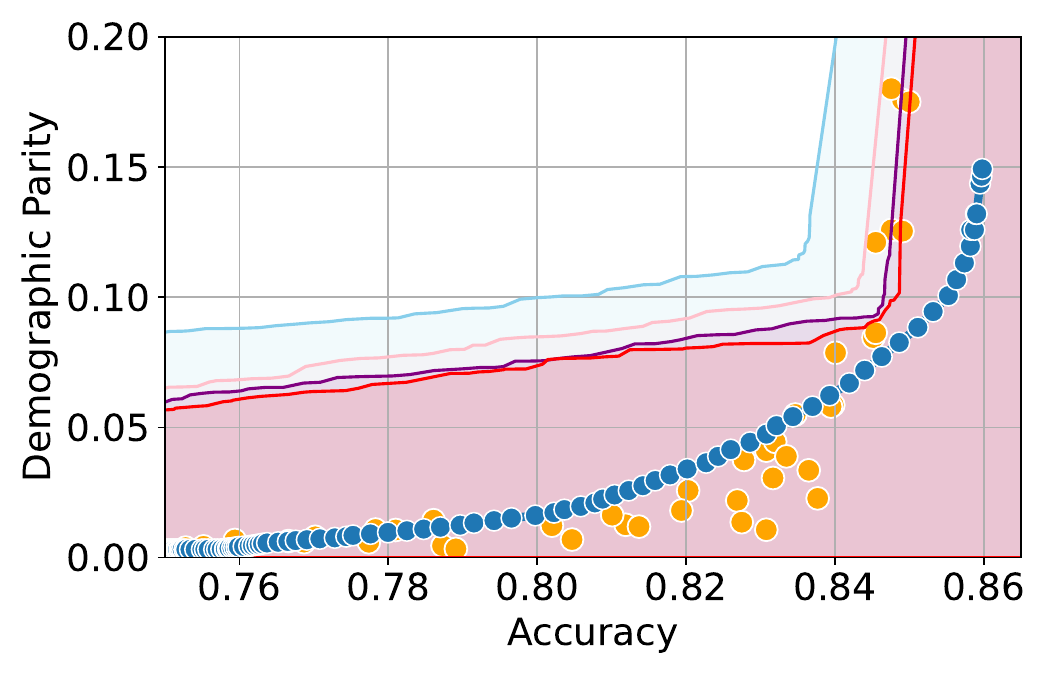}
         \caption{$\Acc(\surrogatemodel)=50\%$}
         \label{fig:surr_50_our_adult}
     \end{subfigure}%
     \begin{subfigure}[b]{0.33\textwidth}
         \centering
         \includegraphics[width=\textwidth]{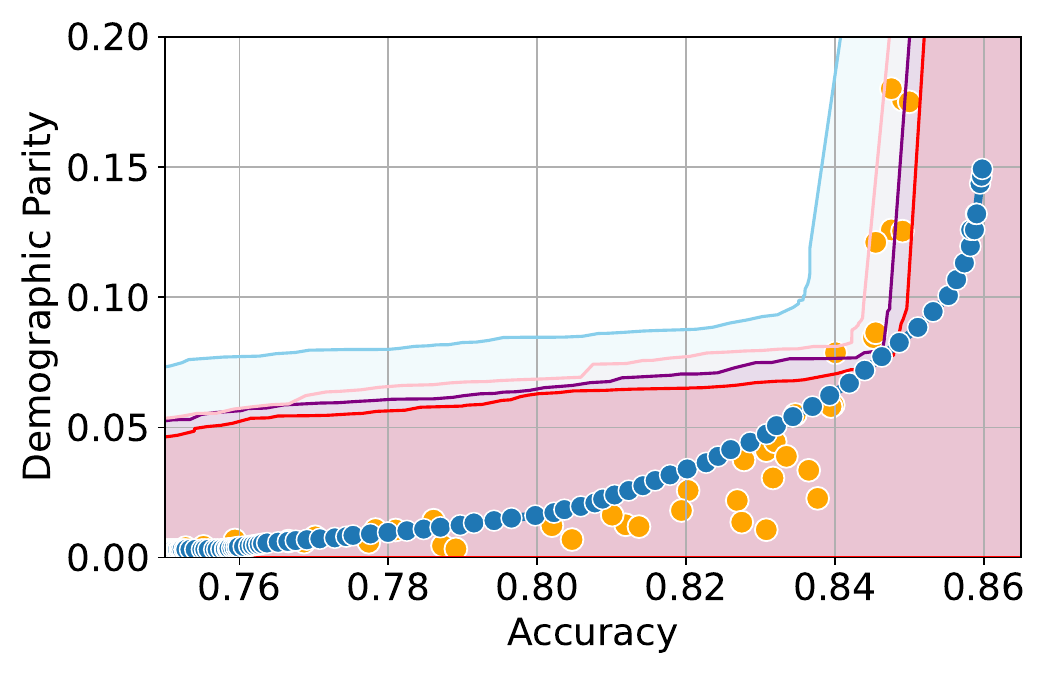}
         \caption{$\Acc(\surrogatemodel)=70\%$}
         \label{fig:surr_70_our_adult}
     \end{subfigure}%
     \begin{subfigure}[b]{0.33\textwidth}
         \centering
         \includegraphics[width=\textwidth]{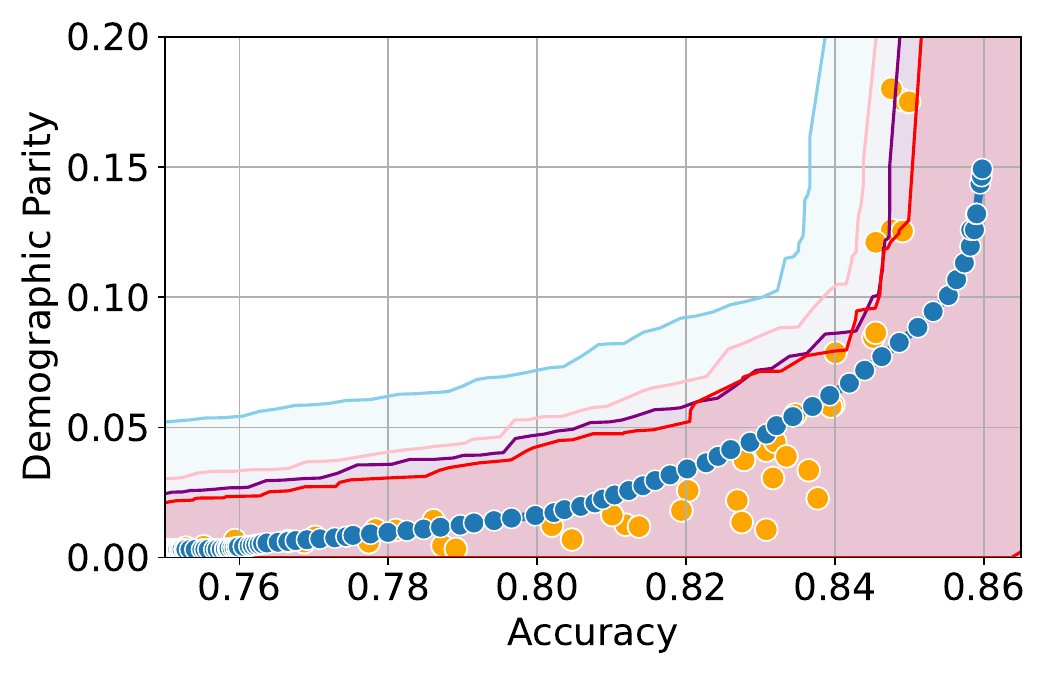}
         \caption{$\Acc(\surrogatemodel)=90\%$}
         \label{fig:surr_90_our_adult}
     \end{subfigure}
        \caption{CIs were obtained using our methodology for the Adult dataset. Here $n=50$ and $N=2500$.}
        \label{fig:adult_our_surr}
         \begin{subfigure}[b]{0.33\textwidth}
         \centering
         \includegraphics[width=\textwidth]{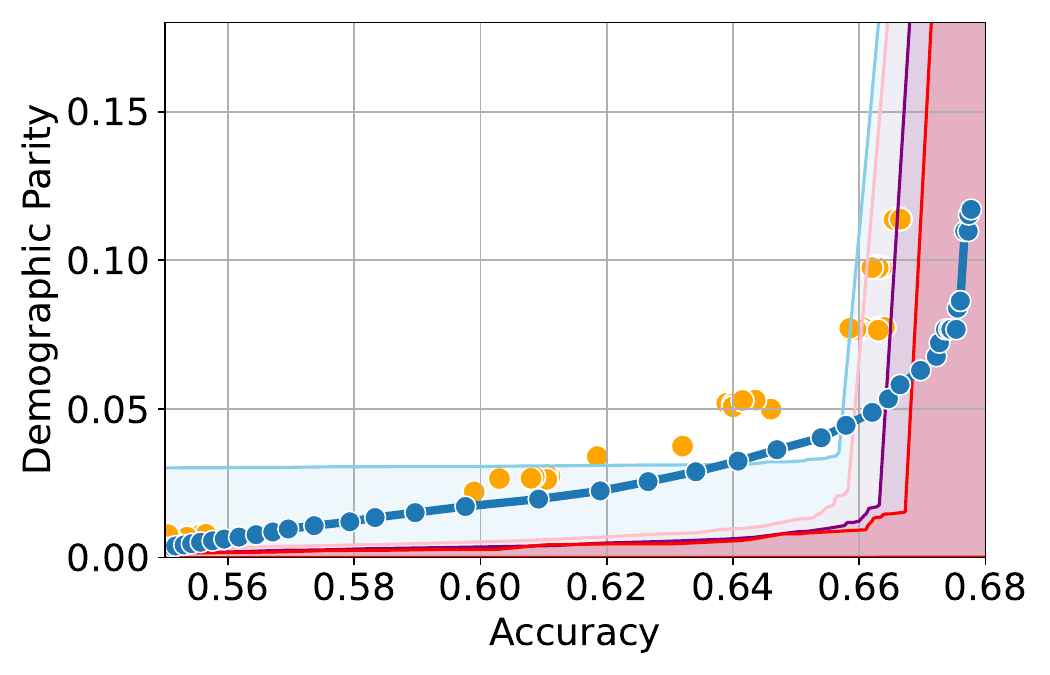}
         \caption{$\Acc(\surrogatemodel)=50\%$}
         \label{fig:surr_50_imputed_compas}
     \end{subfigure}%
     \begin{subfigure}[b]{0.33\textwidth}
         \centering
         \includegraphics[width=\textwidth]{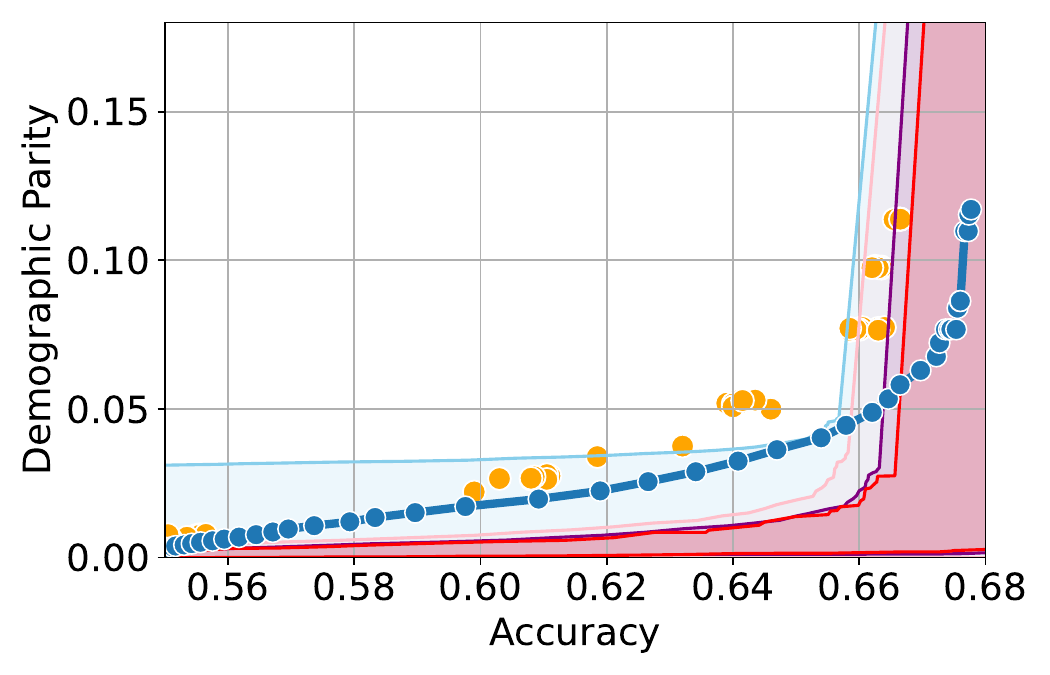}
         \caption{$\Acc(\surrogatemodel)=70\%$}
         \label{fig:surr_70_imputed_compas}
     \end{subfigure}%
     \begin{subfigure}[b]{0.33\textwidth}
         \centering
         \includegraphics[width=\textwidth]{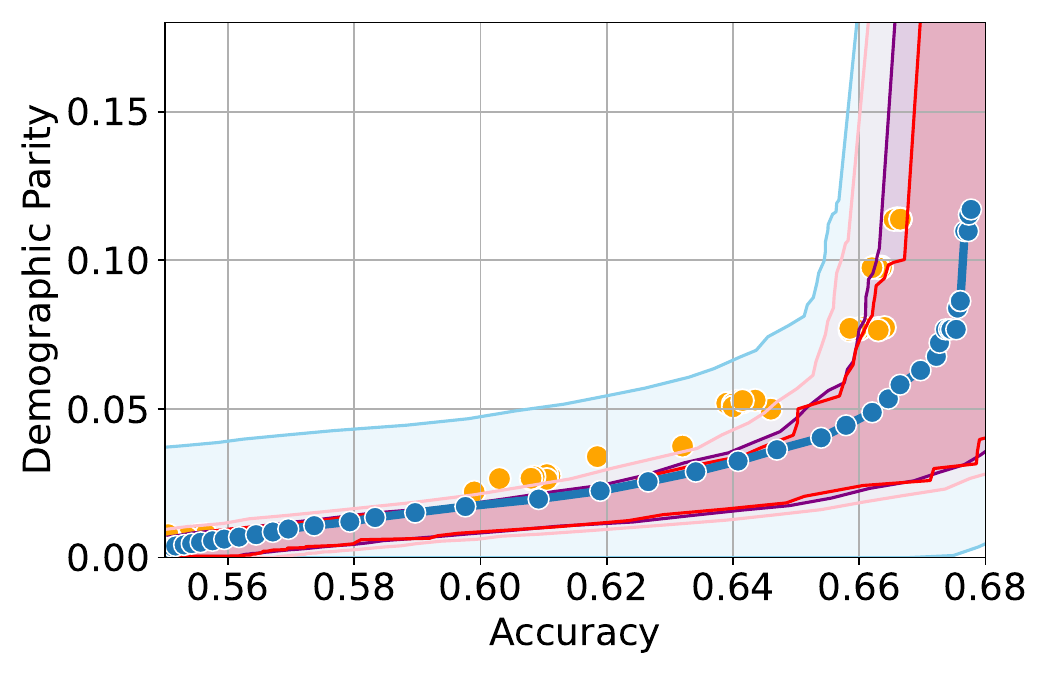}
         \caption{$\Acc(\surrogatemodel)=90\%$}
         \label{fig:surr_90_imputed_compas}
     \end{subfigure}
        \caption{CIs obtained by imputing missing senstive attributes using $\surrogatemodel$ for COMPAS dataset. Here $n=50$ and $N=2000$.}
        \label{fig:COMPAS_imputed}
     \begin{subfigure}[b]{0.33\textwidth}
         \centering
         \includegraphics[width=\textwidth]{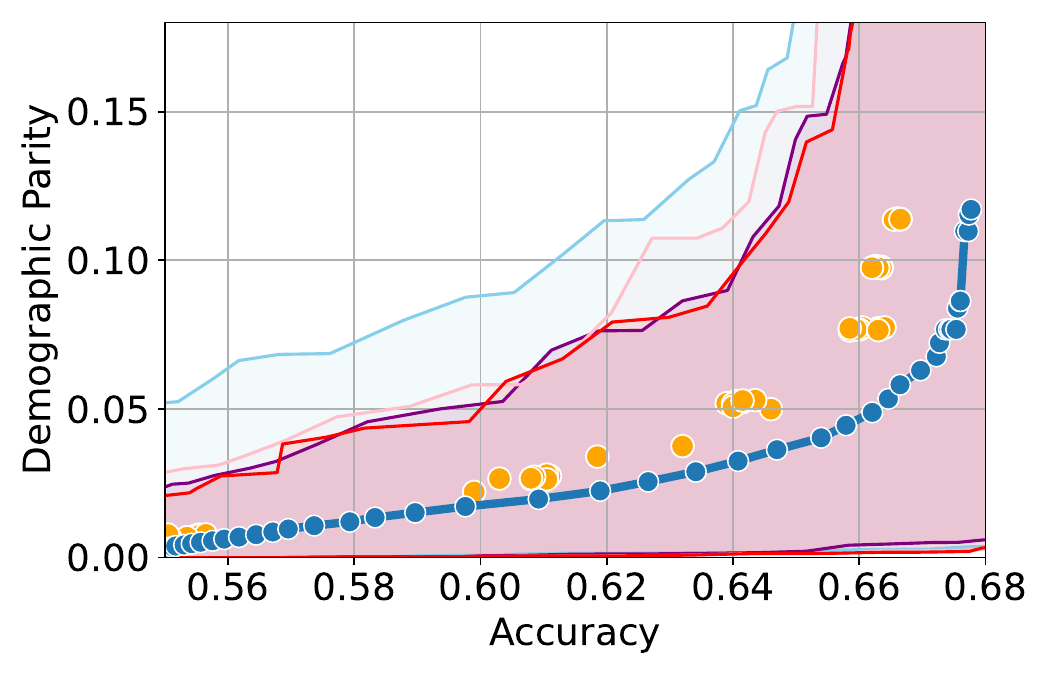}
         \caption{$\Acc(\surrogatemodel)=50\%$}
         \label{fig:surr_50_our_compas}
     \end{subfigure}%
     \begin{subfigure}[b]{0.33\textwidth}
         \centering
         \includegraphics[width=\textwidth]{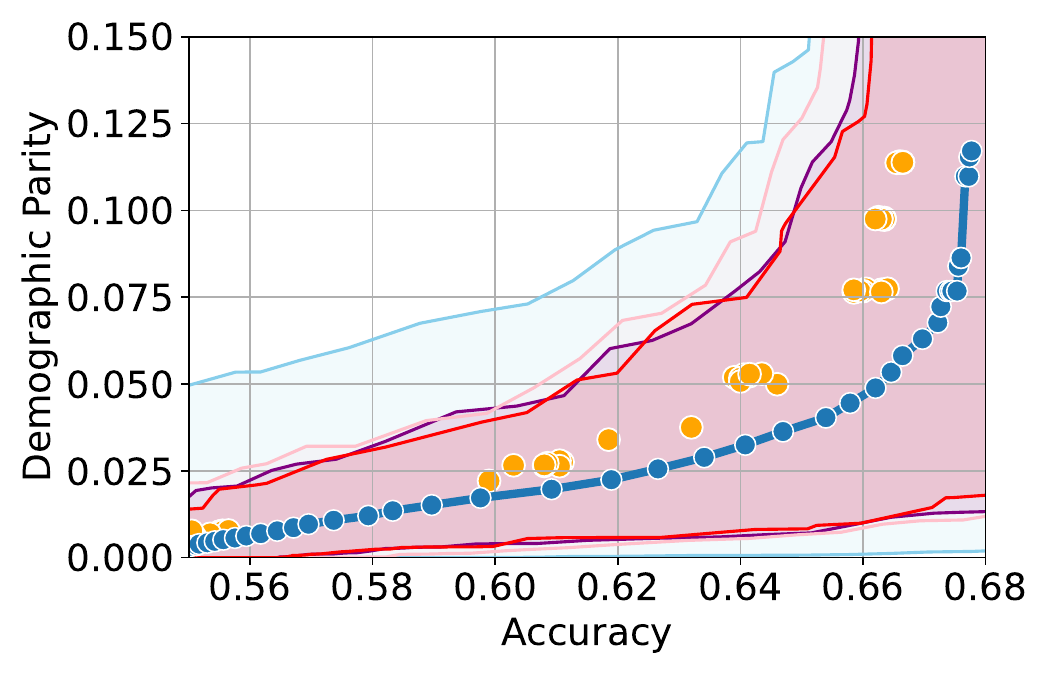}
         \caption{$\Acc(\surrogatemodel)=70\%$}
         \label{fig:surr_70_our_compas}
     \end{subfigure}%
     \begin{subfigure}[b]{0.33\textwidth}
         \centering
         \includegraphics[width=\textwidth]{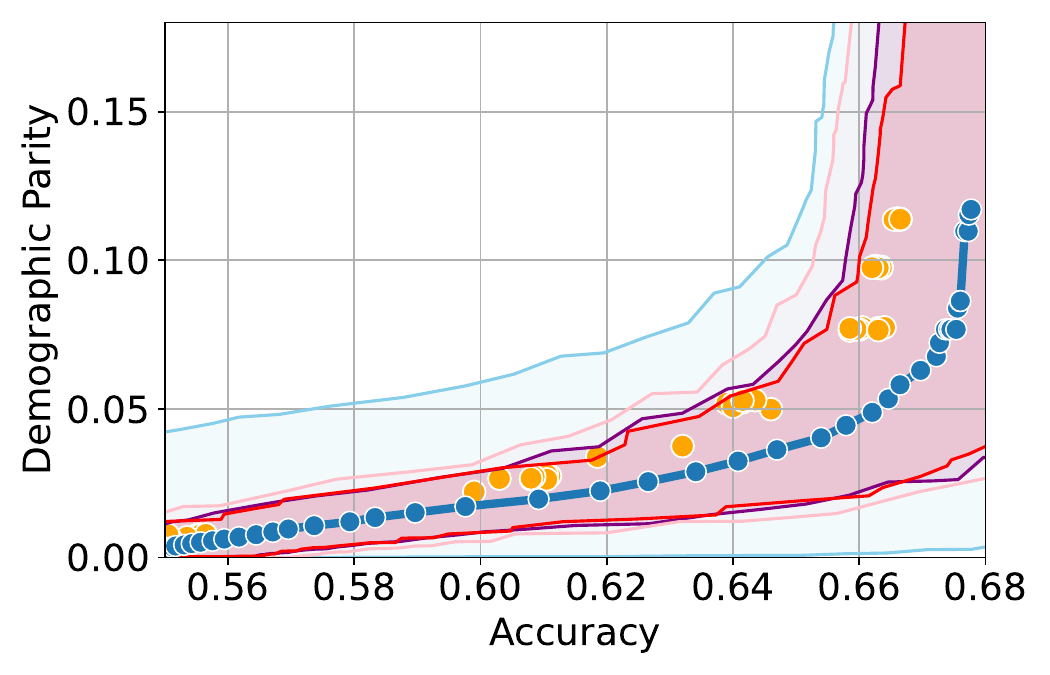}
         \caption{$\Acc(\surrogatemodel)=90\%$}
         \label{fig:surr_90_our_compas}
     \end{subfigure}
        \caption{CIs obtained using our methodology for COMPAS dataset. Here $n=50$ and $N=2000$.}
        \label{fig:compas_our_surr}
\end{figure}

\begin{figure}
     \centering
     \includegraphics[width=\textwidth]{figs_legend_simple.pdf}\\
     \begin{subfigure}[b]{0.33\textwidth}
         \centering
         \includegraphics[width=\textwidth]{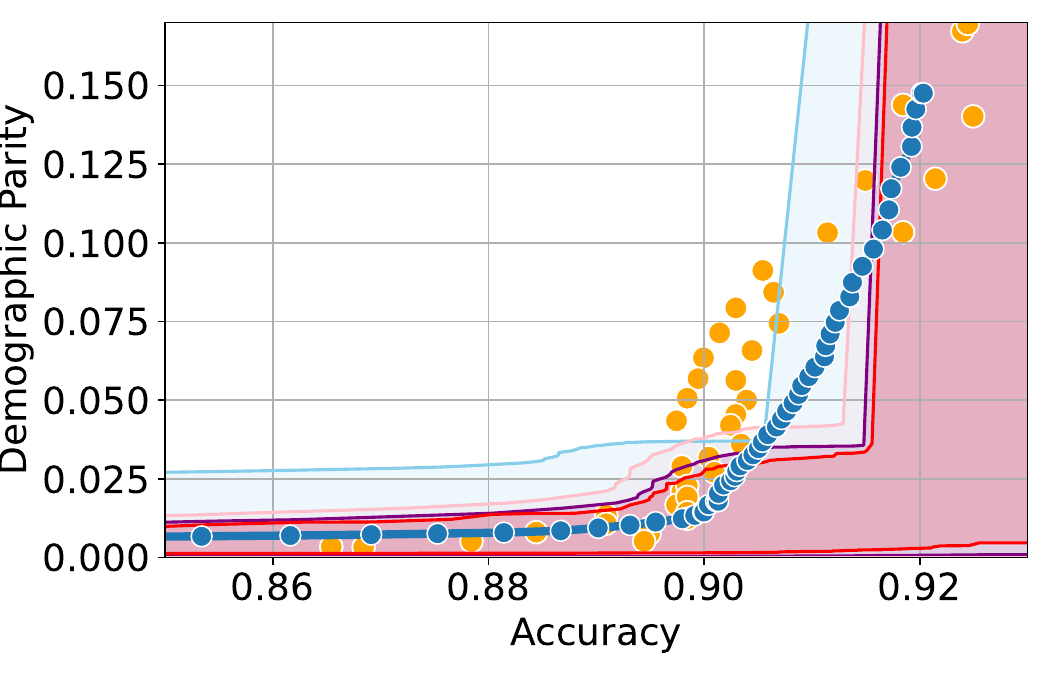}
         \caption{$\Acc(\surrogatemodel)=50\%$}
         \label{fig:surr_50_imputed_celeba}
     \end{subfigure}%
     \begin{subfigure}[b]{0.33\textwidth}
         \centering
         \includegraphics[width=\textwidth]{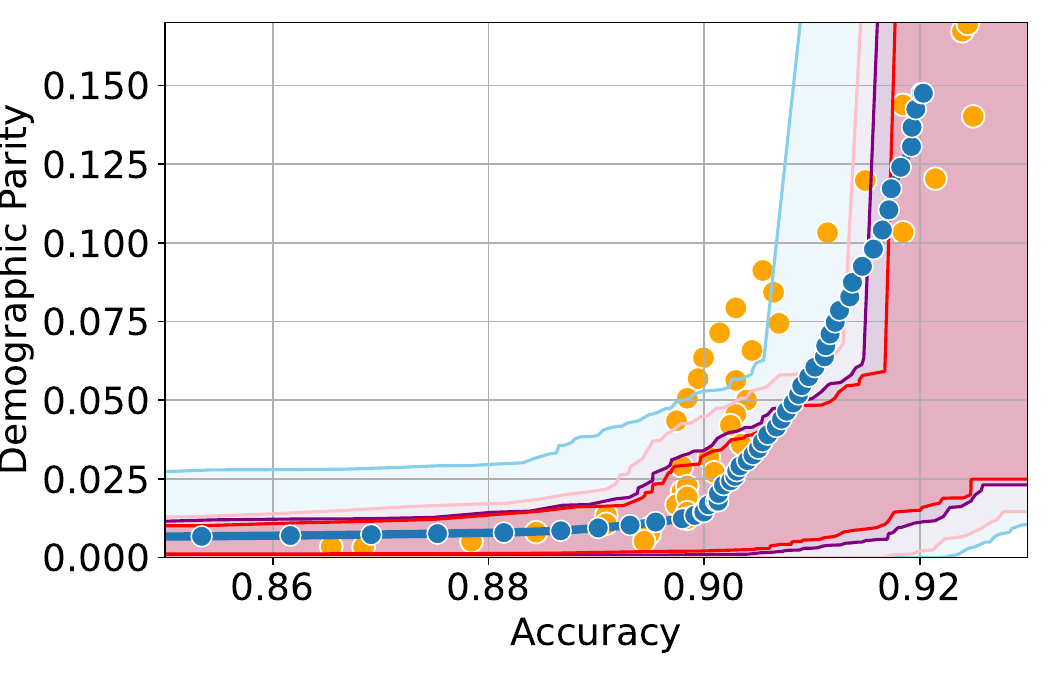}
         \caption{$\Acc(\surrogatemodel)=70\%$}
         \label{fig:surr_70_imputed_celeba}
     \end{subfigure}%
     \begin{subfigure}[b]{0.33\textwidth}
         \centering
         \includegraphics[width=\textwidth]{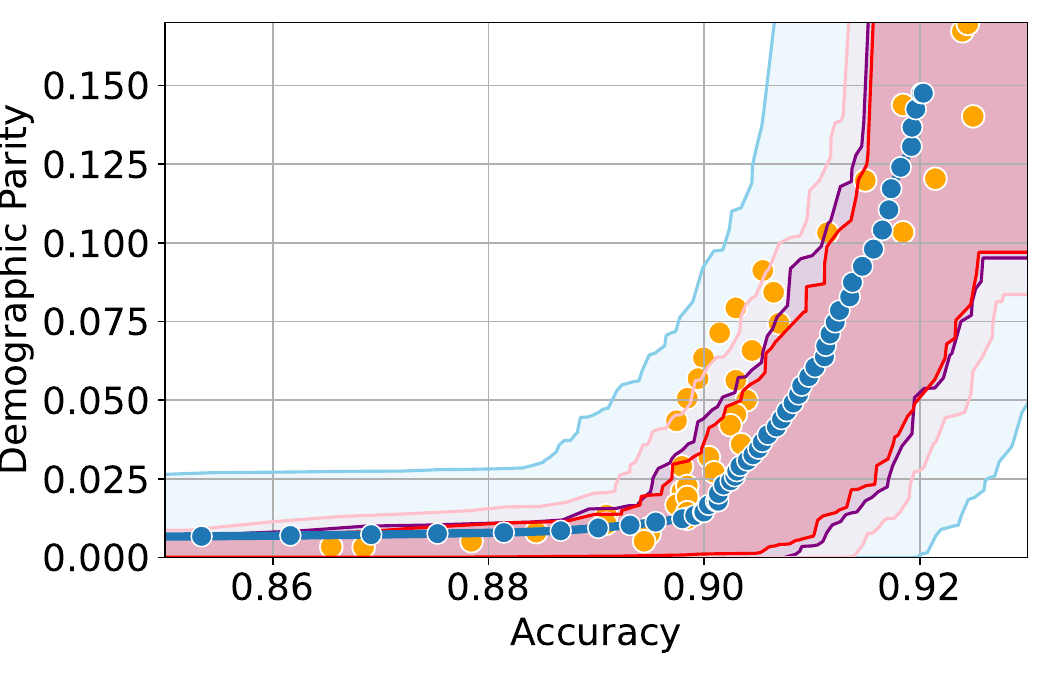}
         \caption{$\Acc(\surrogatemodel)=90\%$}
         \label{fig:surr_90_imputed_celeba}
     \end{subfigure}
        \caption{CIs obtained by imputing missing senstive attributes using $\surrogatemodel$ for CelebA dataset. Here $n=50$ and $N=2500$.}
        \label{fig:celeba_imputed}
     \begin{subfigure}[b]{0.33\textwidth}
         \centering
         \includegraphics[width=\textwidth]{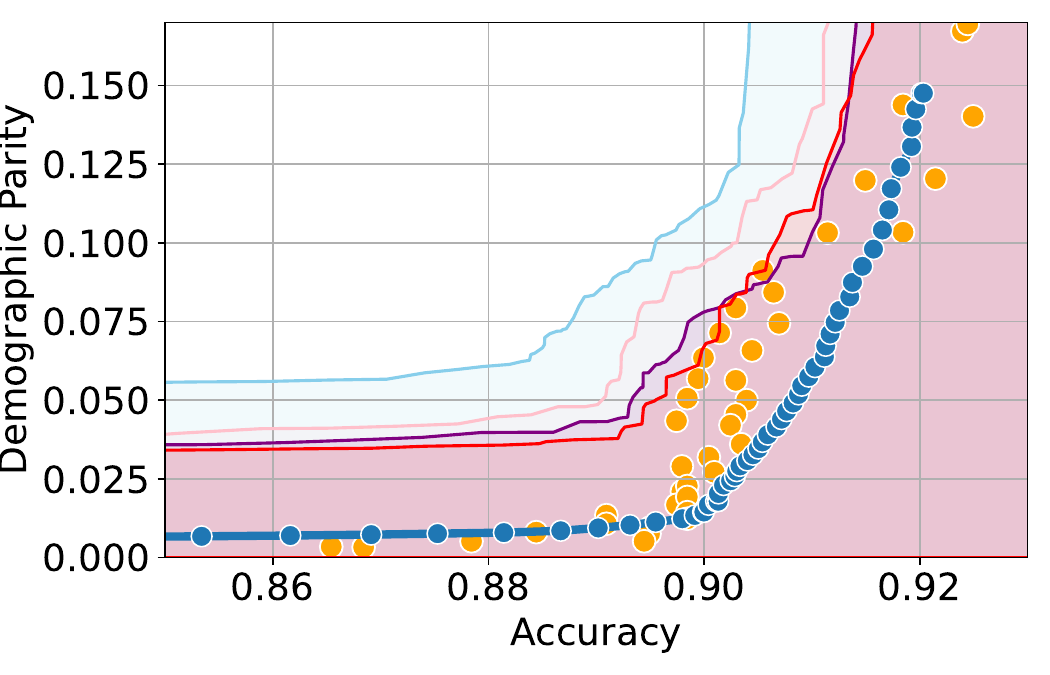}
         \caption{$\Acc(\surrogatemodel)=50\%$}
         \label{fig:surr_50_our_celeba}
     \end{subfigure}%
     \begin{subfigure}[b]{0.33\textwidth}
         \centering
         \includegraphics[width=\textwidth]{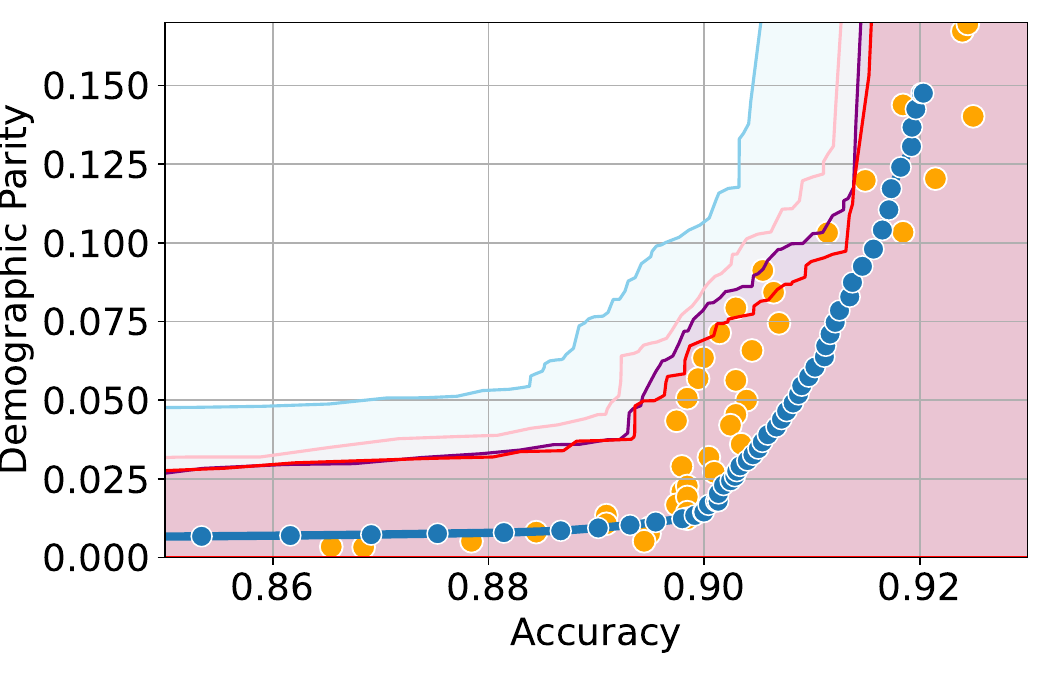}
         \caption{$\Acc(\surrogatemodel)=70\%$}
         \label{fig:surr_70_our_celeba}
     \end{subfigure}%
     \begin{subfigure}[b]{0.33\textwidth}
         \centering
         \includegraphics[width=\textwidth]{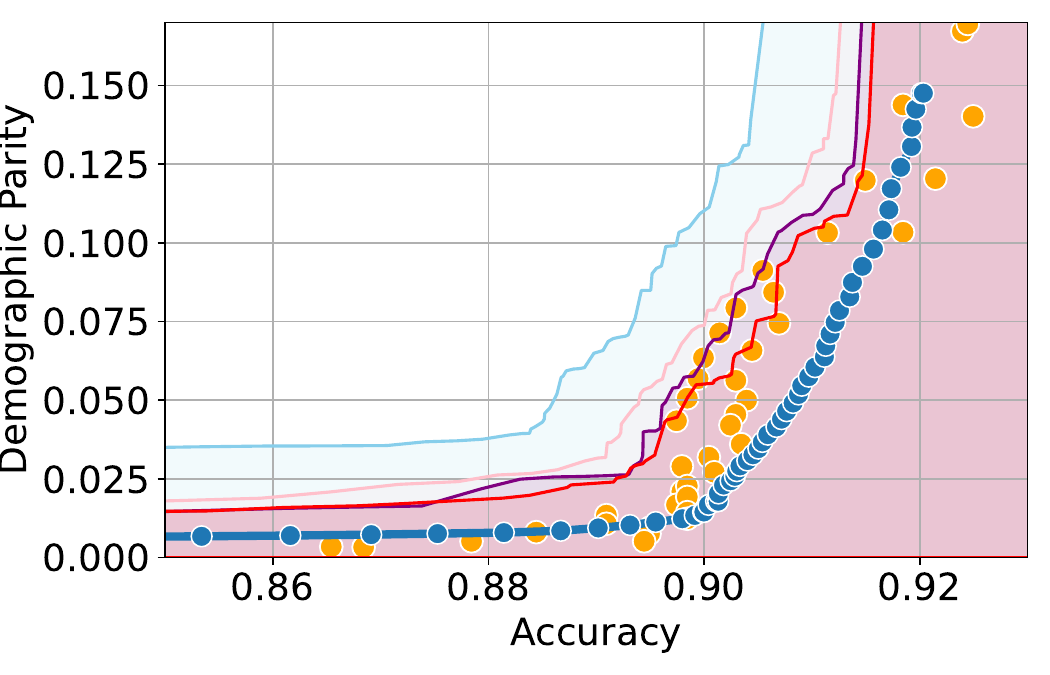}
         \caption{$\Acc(\surrogatemodel)=90\%$}
         \label{fig:surr_90_our_celeba}
     \end{subfigure}
        \caption{CIs obtained using our methodology for CelebA dataset. Here $n=50$ and $N=2500$.}
        \label{fig:celeba_our_surr}
\end{figure}
Here, we present experimental results in the setting where the sensitive attributes are missing for majority of the calibration data. Figures \ref{fig:adult_imputed}-\ref{fig:celeba_our_surr} show the results for different datasets and predictive models $\surrogatemodel$ with varying accuracies. Here, the empirical fairness violation values for both YOTO and separately trained models are evaluated using the true sensitive attributes over the entire calibration data.

\paragraph{CIs with imputed sensitive attributes are mis-calibrated}
Figures \ref{fig:adult_imputed}, \ref{fig:COMPAS_imputed} and \ref{fig:celeba_imputed} show results for Adult, COMPAS and CelebA datasets, where the CIs are computed by imputing the missing sensitive attributes with the predicted sensitive attributes $\surrogatemodel(X)\approx A$. The figures show that when the accuracy of $\surrogatemodel$ is below 90\%, the CIs are highly miscalibrated as they do not entirely contain the empirical trade-offs for both YOTO and separately trained models. 

\paragraph{Our methodology corrects for the mis-calibration}
In contrast, Figures \ref{fig:adult_our_surr}, \ref{fig:compas_our_surr} and \ref{fig:celeba_our_surr} which include the corresponding results using our methodology, show that our methodology is able to correct for the mis-calibration in CIs arising from the prediction error in $\surrogatemodel$. Even though the CIs obtained using our methodology are more conservative than those obtained by imputing the missing sensitive attributes with $\surrogatemodel(X)$, they are more well-calibrated and contain the empirical trade-offs for both YOTO and separately trained model. 

\paragraph{Imputing missing sensitive attributes may work when $\surrogatemodel$ has high accuracy}
Finally, Figures \ref{fig:surr_90_imputed_adult}, \ref{fig:surr_90_imputed_compas} and \ref{fig:surr_90_imputed_celeba} show that the CIs with imputed sensitive attributes are relatively better calibrated as the accuracy of $\surrogatemodel$ increases to 90\%. In this case, the CIs with imputed sensitive attributes mostly contain empirical trade-offs. This shows that in cases where the predictive model $\surrogatemodel$ has high accuracy, it may be sufficient to impute missing sensitive attributes with $\surrogatemodel(X)$ when constructing the CIs. 

\section{Accounting for the uncertainty in baseline trade-offs}\label{sec:baseline_uncertainty}
In this section, we extend our methodology to also account for uncertainty in the baseline trade-offs when assessing the optimality of different baselines. 
Recall that our confidence intervals constructed on $\tradeoff$ satisfy the guarantee
\begin{align*}
    \prob(\tradeoff(\Psi) \in \Gamma^\alpha_{\textup{fair}}) \geq 1-\alpha.
\end{align*}
This means that if the accuracy-fairness tradeoff for a given model $h'\in \mathcal{H}$, $(\Acc(h'), \fairnessloss(h'))$, lies above the confidence intervals $\Gamma^\alpha_{\textup{fair}}$ (i.e. in the pink region in Figure \ref{fig:illu_baseline_uncertainty}), then we can confidently infer that the model $h'$ achieves a suboptimal trade-off. This is because we know from the probabilistic guarantee above that the optimal trade-off $(\Acc(h'), \tradeoff(\Acc(h')))$ must lie in the intervals $\Gamma^\alpha_{\textup{fair}}$ with probability at least $1-\alpha$. 

Here, $\Acc(h'), \fairnessloss(h')$ denote the accuracy and fairness violations for model $h'$ on the full data distribution.
However, in practice, we only have access to finite data and therefore can only compute the empirical values of accuracy and fairness violations which we denote by $\widehat{\Acc}(h'), \widehat{\fairnessloss}(h')$. This means that when checking if the accuracy-fairness trade-off $(\Acc(h'), \fairnessloss(h'))$ lies inside the confidence intervals $\Gamma^\alpha_{\textup{fair}}$, we must account for the uncertainty in the empirical estimates $\widehat{\Acc}(h'), \widehat{\fairnessloss}(h')$. This can be achieved by constructing confidence regions $C^\alpha(h')$ satisfying
\begin{align}
    \prob((\Acc(h'), \fairnessloss(h')) \in C^\alpha(h')) \geq 1-\alpha. \label{eq:confidence-region}
\end{align}

\paragraph{Baseline's best-case accuracy-fairness trade-off}
If the confidence region $C^\alpha(h')$ lies entirely above the confidence intervals $\Gamma^\alpha_{\textup{fair}}$ (i.e. in the pink region in Figure \ref{fig:baseline_uncertainty_ub}), then using union bounds we can confidently conclude that with probability $1-2\alpha$ we have that the model $h'$ achieves suboptimal fairness violation, i.e.
\begin{align*}
    \fairnessloss(h')\geq \tradeoff(\Acc(h')).
\end{align*}
This allows practitioners to confidently check if a model $h'$ is suboptimal in terms of its accuracy-fairness trade-off. This is different from simply checking if the empirical trade-off achieved by the model $(\widehat{\Acc}(h'), \widehat{\fairnessloss}(h'))$ lies in the permissible trade-off region (green region in Figure \ref{fig:illu}) as it also accounts for the finite-sampling uncertainty in the tradeoff achieved by model $h'$. However, this means that the criterion for flagging a baseline model $h'$ as suboptimal becomes more conservative.  

Next, we show how to construct confidence regions $C^\alpha(h')$ satisfying \eqref{eq:confidence-region}.

\begin{lemma}\label{lem:confidence-region}
    For a classifier $h' \in \mathcal{H}$, let $\accintervalup, \fairnessintervallo$ be the upper and lower CIs on $\Acc(h'), \fairnessloss(h')$ respectively,
    \begin{align*}
        \prob(\Acc(h')\leq \accintervalup) \geq 1 - \alpha/2, \quad \textup{and} \quad \prob(\fairnessloss(h')\geq \fairnessintervallo) \geq 1 - \alpha/2.
    \end{align*}
    Then, $\prob((\Acc(h'), \fairnessloss(h')) \in [0, \accintervalup] \times [\fairnessintervallo, 1]) \geq 1-\alpha$.
\end{lemma}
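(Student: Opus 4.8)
The plan is to recognize that this is a direct union-bound argument, structurally identical to the proofs of Propositions \ref{prop:probability-guarantee-ub} and \ref{prop:probability-guarantee-lb}, and to exploit the fact that both $\Acc(h')$ and $\fairnessloss(h')$ take values in $[0,1]$ by definition.

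First I would observe that membership in the product set $[0, \accintervalup] \times [\fairnessintervallo, 1]$ decomposes into the conjunction of two one-sided events. Because $\Acc(h') \in [0,1]$ and $\fairnessloss(h') \in [0,1]$ hold deterministically (the latter as established in Section \ref{sec:preliminaries}), the constraints $\Acc(h') \geq 0$ and $\fairnessloss(h') \leq 1$ are automatically satisfied, and only the two nontrivial boundaries remain. Hence
\[
\{(\Acc(h'), \fairnessloss(h')) \in [0, \accintervalup] \times [\fairnessintervallo, 1]\} = \{\Acc(h') \leq \accintervalup\} \cap \{\fairnessloss(h') \geq \fairnessintervallo\}.
\]

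Second I would apply the union bound to the complements of these two events. Writing $E_1 \coloneqq \{\Acc(h') \leq \accintervalup\}$ and $E_2 \coloneqq \{\fairnessloss(h') \geq \fairnessintervallo\}$, we have $\prob(E_1 \cap E_2) \geq 1 - \prob(E_1^c) - \prob(E_2^c)$. The hypotheses give $\prob(E_1) \geq 1 - \alpha/2$ and $\prob(E_2) \geq 1 - \alpha/2$, equivalently $\prob(E_1^c) \leq \alpha/2$ and $\prob(E_2^c) \leq \alpha/2$. Substituting yields $\prob(E_1 \cap E_2) \geq 1 - \alpha/2 - \alpha/2 = 1 - \alpha$, which, combined with the set identity above, delivers the claimed bound.

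There is essentially no substantive obstacle here; the only point requiring (minor) care is the first step, where one must verify that the two-sided product-set event genuinely reduces to the two one-sided CI events, a reduction that hinges on the a priori boundedness of accuracy and fairness violation in $[0,1]$. The remainder is precisely the same two-event union bound used repeatedly throughout Section \ref{sec:cis}.
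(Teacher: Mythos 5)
Your proof is correct and follows exactly the two-event union-bound argument the paper uses for the analogous statements (Propositions \ref{prop:probability-guarantee-ub} and \ref{prop:probability-guarantee-lb}); the paper leaves this lemma's proof implicit precisely because it is the same computation, and your observation that the product-set event reduces to the intersection of the two one-sided CI events via the a priori boundedness of $\Acc(h')$ and $\fairnessloss(h')$ in $[0,1]$ is the right (and only) point needing care.
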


Lemma \ref{lem:confidence-region} shows that $C^\alpha(h') = [0, \accintervalup] \times [\fairnessintervallo, 1]$ forms the $1-\alpha$ confidence region for the accuracy-fairness trade-off $(\Acc(h'), \fairnessloss(h'))$. We illustrate this in Figure \ref{fig:baseline_uncertainty_ub}. If this confidence region lies entirely above the permissible region (i.e. in the pink region in Figure \ref{fig:illu_baseline_uncertainty}), we can confidently conclude that the model $h'$ achieves suboptimal accuracy-fairness trade-off. From Figure \ref{fig:baseline_uncertainty_ub} it can be seen that this will occur if $(\accintervalup, \fairnessintervallo)$ lies above the permissible region. 

\begin{figure*}[t]
    \centering
    \captionsetup[subfigure]{aboveskip=0pt,belowskip=0pt}
    \includegraphics[width=0.8\textwidth]{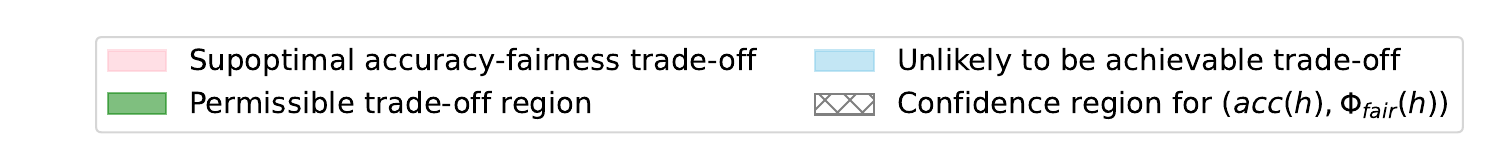}
    \begin{subfigure}[t]{0.4\textwidth}
        \centering
        \includegraphics[width=\textwidth]{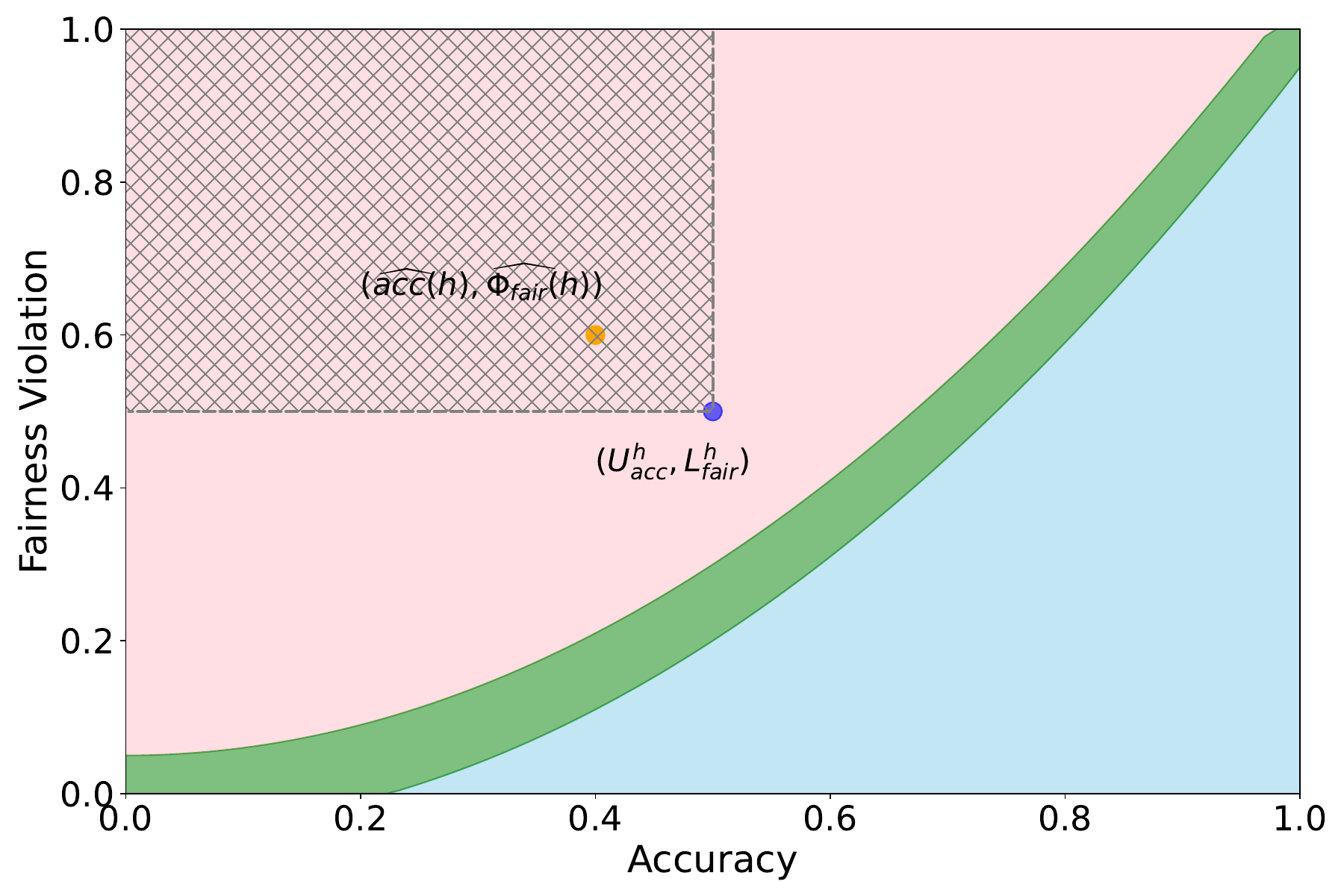}
        \caption{Here, $(\accintervalup, \fairnessintervallo)$ represents the best-case accuracy-fairness trade-off achieved by $h$. If the confidence region lies entirely in the suboptimal region, we can confidently conclude that  $h$ achieves a suboptimal trade-off.}
        \label{fig:baseline_uncertainty_ub}
    \end{subfigure}%
    \hspace{0.1\textwidth} %
    \begin{subfigure}[t]{0.4\textwidth}
        \centering
        \includegraphics[width=\textwidth]{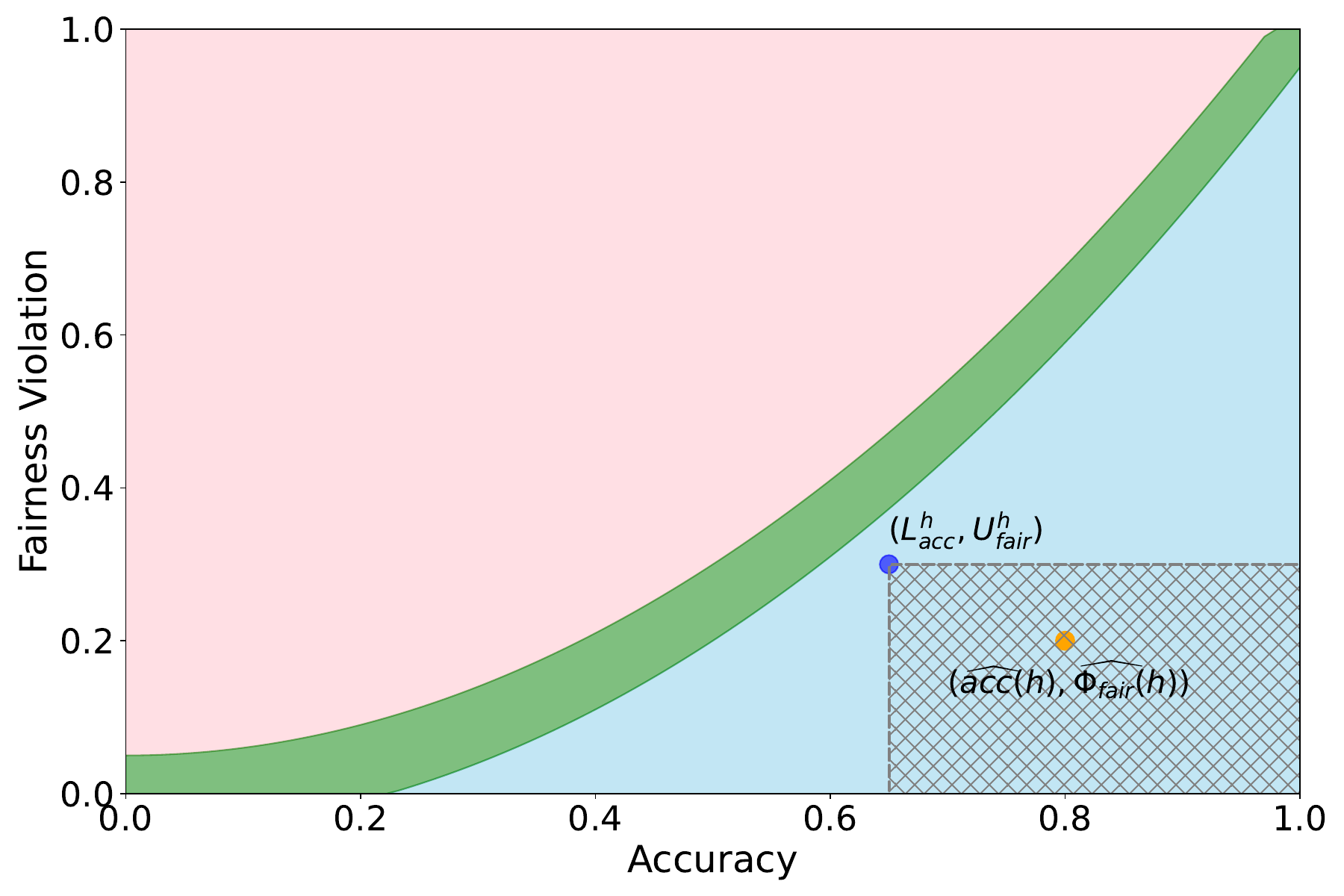}
        \caption{Here, $(\accintervallo, \fairnessintervalup)$ represents the worst-case accuracy-fairness trade-off achieved by $h$. If the confidence region lies entirely in the `unlikely' region, we can conclude that $h$ achieves a better trade-off than YOTO.}
        \label{fig:baseline_uncertainty_lb}
    \end{subfigure}
    \caption{Confidence region $C^\alpha(h)$ for the accuracy-fairness trade-off achieved by $h$.}
    \label{fig:illu_baseline_uncertainty}
\end{figure*}

Intuitively, $(\accintervalup, \fairnessintervallo)$ can be seen as an optimistic best-case accuracy-fairness trade-off achieved by the model $h'$, since at this point in the confidence region $C^\alpha(h')$ the accuracy is maximised and fairness violation is minimised. Therefore if this best-case trade-off lies above the permissible region, then this intuitively indicates that the worst-case optimal trade-off $\tradeoff$ is still better than the best-case trade-off achieved by the model $h'$, leading us to the confident conclusion that $h'$ achieves suboptimal accuracy-fairness trade-off.

\paragraph{Baseline's worst-case accuracy-fairness trade-off}
Conversely, if the confidence region $C^\alpha(h')$ lies entirely below the confidence intervals $\Gamma^\alpha_{\textup{fair}}$ (i.e. in the blue region in Figure \ref{fig:illu_baseline_uncertainty}), then using union bounds we can confidently conclude that with probability $1-2\alpha$ we have that the model $h'$ achieves a better trade-off than the YOTO model $h_\theta(\cdot, \lambda)$. Formally, this means that
\begin{align*}
    \tau^{\textup{YOTO}}(\Acc(h')) \geq \fairnessloss(h'),
\end{align*}
where
\begin{align*}
    \tau^{\textup{YOTO}}(\delta) \coloneqq \min_{\lambda \in \Lambda} \fairnessloss(h_\theta(\cdot, \lambda)) \quad \textup{subject to} \quad \Acc(h_\theta(\cdot, \lambda))\geq \delta. 
\end{align*}
This would indicate that the YOTO model does not achieve the optimal trade-off and can be used to further calibrate the $\Delta(h_\lambda)$ values when constructing the lower confidence interval using Proposition \ref{prop:probability-guarantee-lb}. Again, this is different from simply checking if the empirical trade-off achieved by the model $(\widehat{\Acc}(h'), \widehat{\fairnessloss}(h'))$ lies in the unlikely-to-be-achieved trade-off region (blue region in Figure \ref{fig:illu_baseline_uncertainty}) as it also accounts for the finite-sampling uncertainty in the tradeoff achieved by model $h'$.

Next, we show to construct such confidence region $C^\alpha(h')$ using an approach analogous to the one outlined above:

\begin{lemma}
    \label{lem:confidence-region-worst-case}
    For a classifier $h' \in \mathcal{H}$, let $\accintervallo, \fairnessintervalup$ be the lower and upper CIs on $\Acc(h'), \fairnessloss(h')$ respectively,
    \begin{align*}
        \prob(\Acc(h')\geq \accintervallo) \geq 1 - \alpha/2, \quad \textup{and} \quad \prob(\fairnessloss(h')\leq \fairnessintervalup) \geq 1 - \alpha/2.
    \end{align*}
    Then, $\prob((\Acc(h'), \fairnessloss(h')) \in [\accintervallo, 1] \times [0, \fairnessintervalup]) \geq 1-\alpha$.
\end{lemma}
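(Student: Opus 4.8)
The plan is to prove this by a direct union bound on the complementary events, mirroring exactly the argument used in the first half of the proof of Proposition \ref{prop:probability-guarantee-ub} (and identically to Lemma \ref{lem:confidence-region}, with the roles of ``lower/upper'' swapped). The key observation is that because both $\Acc(h')$ and $\fairnessloss(h')$ take values in $[0,1]$, the rectangular membership event decomposes into a conjunction of two one-sided events:
\begin{align*}
    \left\{(\Acc(h'), \fairnessloss(h')) \in [\accintervallo, 1]\times[0, \fairnessintervalup]\right\} = \left\{\Acc(h') \geq \accintervallo\right\} \cap \left\{\fairnessloss(h') \leq \fairnessintervalup\right\}.
\end{align*}
This reduces the claim entirely to controlling the intersection of the two given one-sided confidence events.

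First I would state the two hypotheses, namely $\prob(\Acc(h') \geq \accintervallo) \geq 1-\alpha/2$ and $\prob(\fairnessloss(h') \leq \fairnessintervalup) \geq 1-\alpha/2$, which give $\prob(\Acc(h') < \accintervallo) \leq \alpha/2$ and $\prob(\fairnessloss(h') > \fairnessintervalup) \leq \alpha/2$ for the complements. Next I would apply the union bound to the failure event $\{\Acc(h') < \accintervallo\} \cup \{\fairnessloss(h') > \fairnessintervalup\}$, yielding
\begin{align*}
    \prob\left(\Acc(h') \geq \accintervallo,\ \fairnessloss(h') \leq \fairnessintervalup\right) &\geq 1 - \prob(\Acc(h') < \accintervallo) - \prob(\fairnessloss(h') > \fairnessintervalup)\\
    &\geq 1 - \alpha/2 - \alpha/2 = 1-\alpha.
\end{align*}
Combining this with the set identity above then gives the stated conclusion immediately.

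There is no genuine obstacle here; this is a routine finite-sample union bound, and the only thing to be careful about is the boundedness of $\Acc(h')$ and $\fairnessloss(h')$ in $[0,1]$, which is what makes the one-sided intervals $[\accintervallo, 1]$ and $[0, \fairnessintervalup]$ equivalent to the half-line events $\{\Acc(h') \geq \accintervallo\}$ and $\{\fairnessloss(h') \leq \fairnessintervalup\}$. Both quantities are bounded in $[0,1]$ by construction (accuracy is a probability and $\fairnessloss \in [0,1]$ as stated in the preliminaries), so this is automatic. The structure is the exact dual of Lemma \ref{lem:confidence-region}: there one bounds accuracy \emph{above} and fairness \emph{below} to obtain the best-case (suboptimality) region, whereas here one bounds accuracy \emph{below} and fairness \emph{above} to obtain the worst-case region used to certify that a baseline beats YOTO.
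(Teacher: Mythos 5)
Your proposal is correct and uses exactly the argument the paper intends: the paper states this lemma as the direct analogue of Lemma \ref{lem:confidence-region} and of the first half of the proofs of Propositions \ref{prop:probability-guarantee-ub} and \ref{prop:probability-guarantee-lb}, namely a union bound on the two complementary failure events combined with the observation that the rectangle membership is equivalent to the conjunction of the two one-sided events. No gaps; the boundedness remark is the only detail worth stating and you state it.
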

Lemma \ref{lem:confidence-region-worst-case} shows that $C^\alpha(h') = [\accintervallo, 1] \times [0, \fairnessintervalup]$ forms the $1-\alpha$ confidence region for the accuracy-fairness trade-off $(\Acc(h'), \fairnessloss(h'))$. If this confidence region lies entirely below the permissible region (i.e. in the blue region in Figure \ref{fig:illu_baseline_uncertainty}), we can confidently conclude that the model $h'$ achieves a better accuracy-fairness trade-off than the YOTO model. From Figure \ref{fig:baseline_uncertainty_lb} it can be seen that this will occur if $(\accintervallo, \fairnessintervalup)$ lies below the permissible region. 

Intuitively, $(\accintervallo, \fairnessintervalup)$ can be seen as a conservative worst-case accuracy-fairness trade-off achieved by the model $h'$. Therefore if this worst-case trade-off lies below the permissible region, then this intuitively indicates that the best-case YOTO trade-off $\tau^{\textup{YOTO}}$ is still worse than the worst-case trade-off achieved by the model $h'$, leading us to the confident conclusion that $h'$ achieves a better accuracy-fairness trade-off than the YOTO model. This can subsequently be used to calibrate the suboptimality gap for the YOTO model, denoted by $\Delta(h_\lambda)$ in Proposition \ref{prop:probability-guarantee-lb}.

\subsection{Experimental results}
Here, we present the results with the empirical baseline trade-offs replaced by best or worst-case trade-offs as appropriate. More specifically, in Figure \ref{fig:all_results_baseline_uncertainty},
\begin{itemize}
    \item if the empirical trade-off for a baseline $(\widehat{\Acc}(h'), \widehat{\fairnessloss}(h'))$ lies in the suboptimal region (as in Figure \ref{fig:baseline_uncertainty_ub}), then we plot the best-case trade-off $(\accintervalup, \fairnessintervallo)$ for the baseline,
    \item if the empirical trade-off for a baseline $(\widehat{\Acc}(h'), \widehat{\fairnessloss}(h'))$ lies in the unlikely-to-be-achievable trade-off region (as in Figure \ref{fig:baseline_uncertainty_lb}), then we plot the worst-case trade-off $(\accintervallo, \fairnessintervalup)$ for the baseline,
    \item if the empirical trade-off for a baseline $(\widehat{\Acc}(h'), \widehat{\fairnessloss}(h'))$ lies in the permissible region, then we simply plot the empirical trade-off $(\widehat{\Acc}(h'), \widehat{\fairnessloss}(h'))$.
\end{itemize}

Therefore, in Figure \ref{fig:all_results_baseline_uncertainty} if a baseline's best-case trade-off lies above the permissible trade-off region, then we can confidently conclude that the baseline achieves suboptimal accuracy-fairness trade-off with probability $1-2\alpha = 90\%$. Similarly, a baseline's worst-case trade-off lying below the permissible trade-off region would suggest that the YOTO trade-off achieves a suboptimal trade-off and that the value of $\Delta(h_\theta)$ needs to be adjusted accordingly. 

Table \ref{tab:all_results_baseline_uncertainty} shows the proportion of best-case trade-offs which lie above the permissible trade-off region and the proportion of worst-case trade-offs which lie below the permissible region. 
Firstly, the table shows that the proportion of worst-case trade-offs which lie in the `unlikely' region is negligible, empirically confirming that our confidence intervals on optimal trade-off $\tradeoff$ are indeed valid. Secondly, we can see that there are a considerable proportion of baselines whose best-case trade-off lies above the permissible region, highlighting that our methodology remains effective in flagging suboptimalities in SOTA baselines even when we account for the possible uncertainty in baseline trade-offs. This shows that our methodology yields CIs which are not only reliable but also informative.

\begin{figure*}[t]
    \centering
    \captionsetup[subfigure]{aboveskip=0pt,belowskip=0pt}
    \subfloat{\includegraphics[width=0.8\textwidth]{figs_results_wrto_cho_legend_with_rto.pdf}}\\
    \begin{tabular}{cccc}
    Adult dataset  \qquad &  \qquad \quad COMPAS dataset \qquad & \qquad  CelebA dataset \qquad & \qquad  Jigsaw dataset \\
    \end{tabular}
    \subfloat{\includegraphics[width=0.25\textwidth]{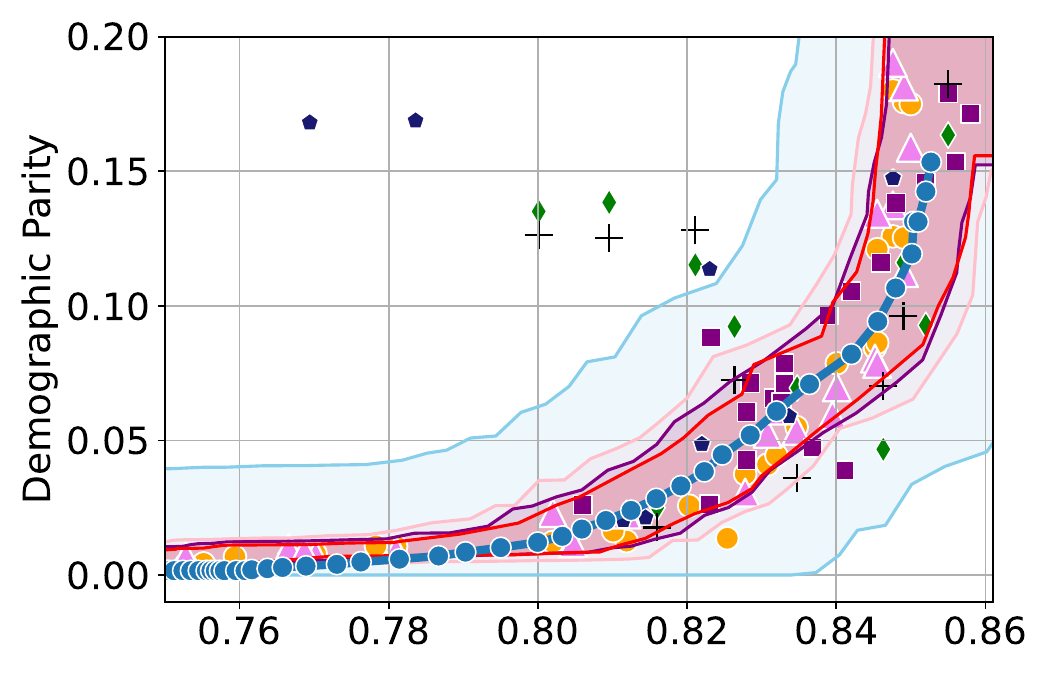}}
    \subfloat{\includegraphics[width=0.25\textwidth]{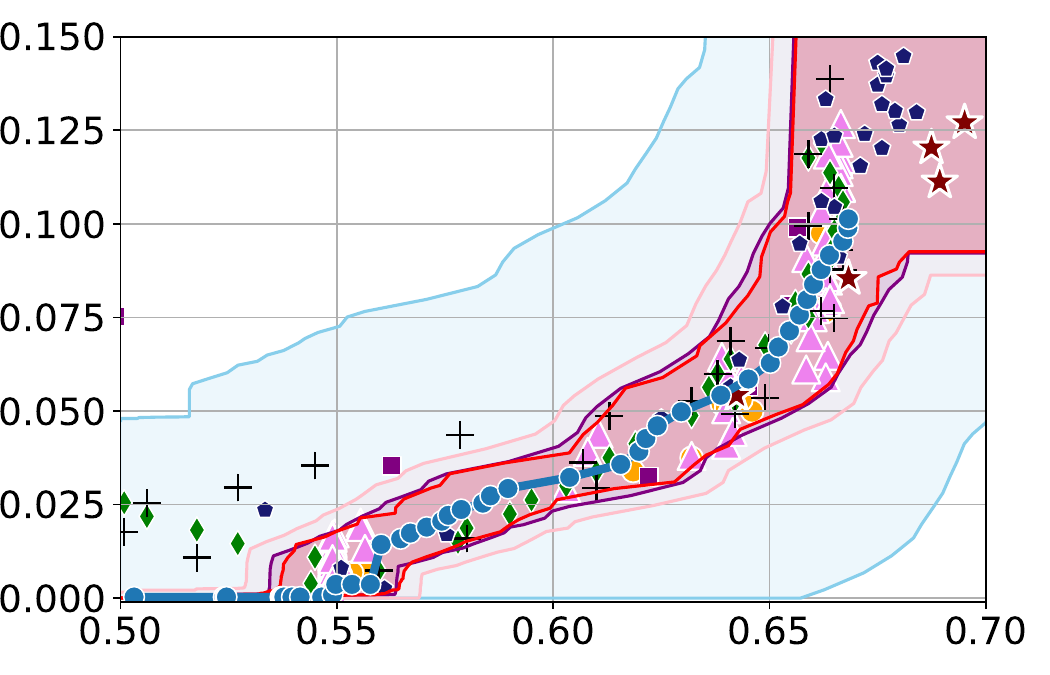}}
    \subfloat{\includegraphics[width=0.25\textwidth]{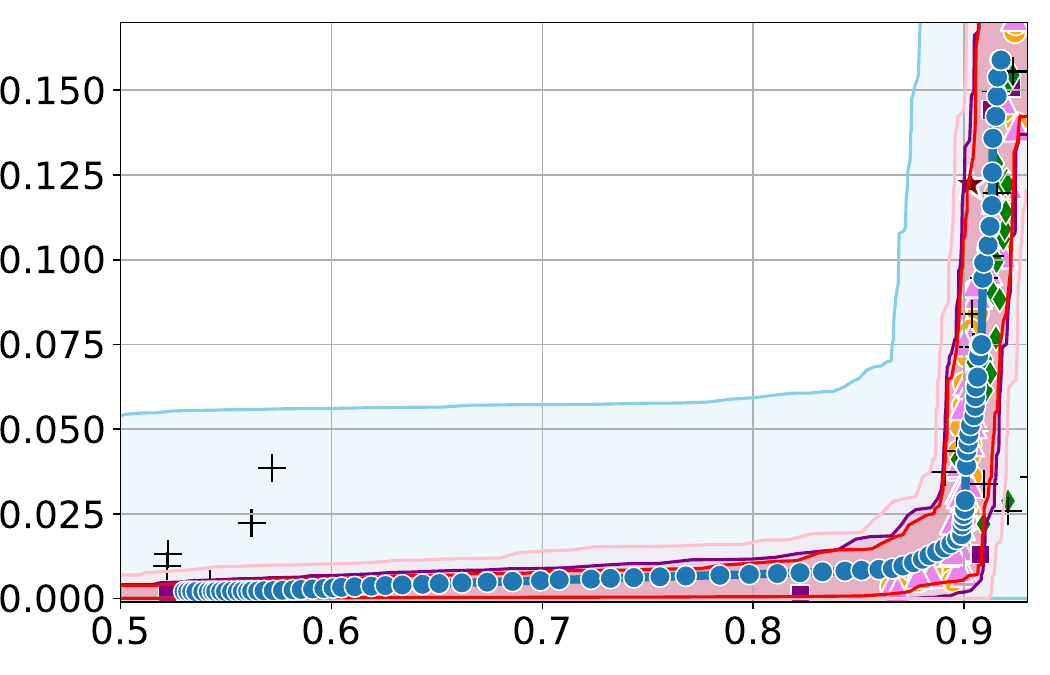}}
    \subfloat{\includegraphics[width=0.25\textwidth]{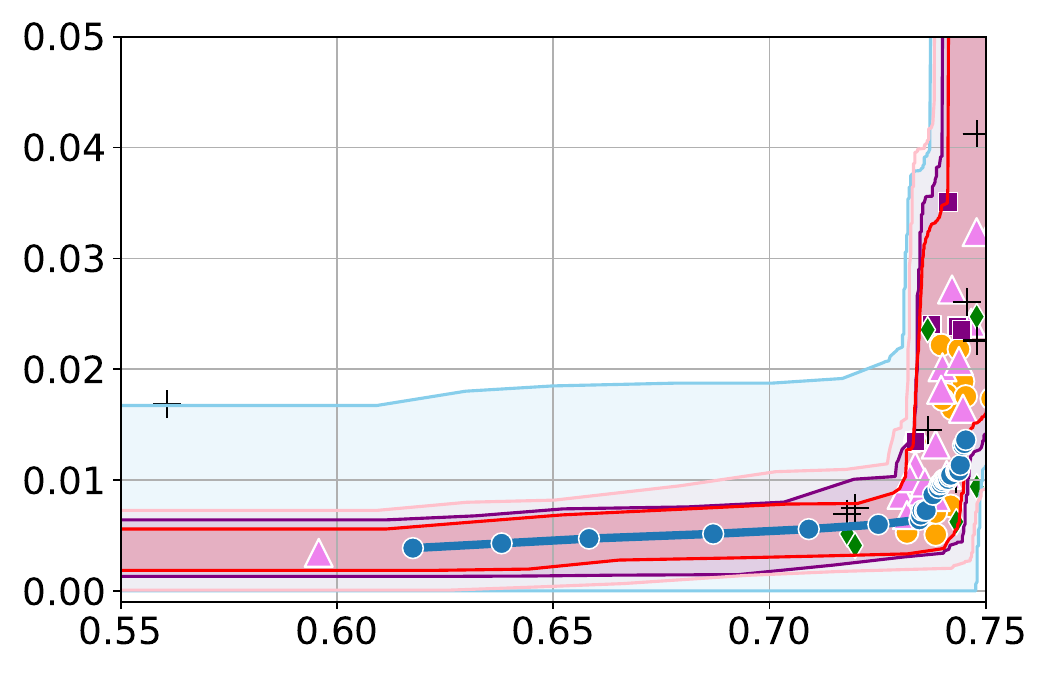}}\\
    \subfloat{\includegraphics[width=0.25\textwidth]{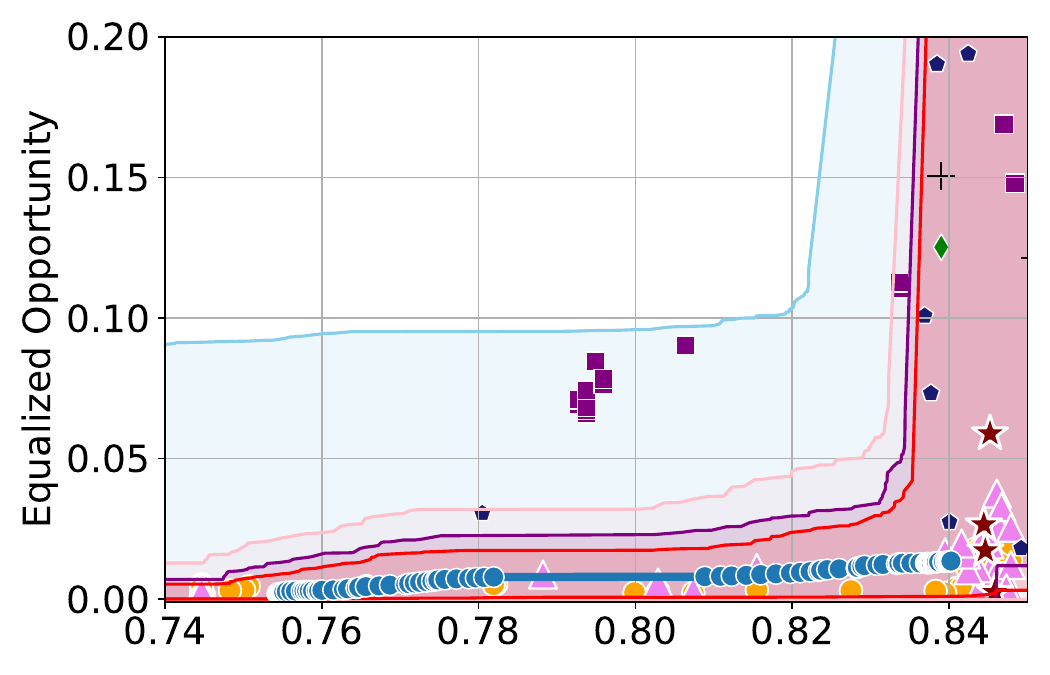}}
    \subfloat{\includegraphics[width=0.25\textwidth]{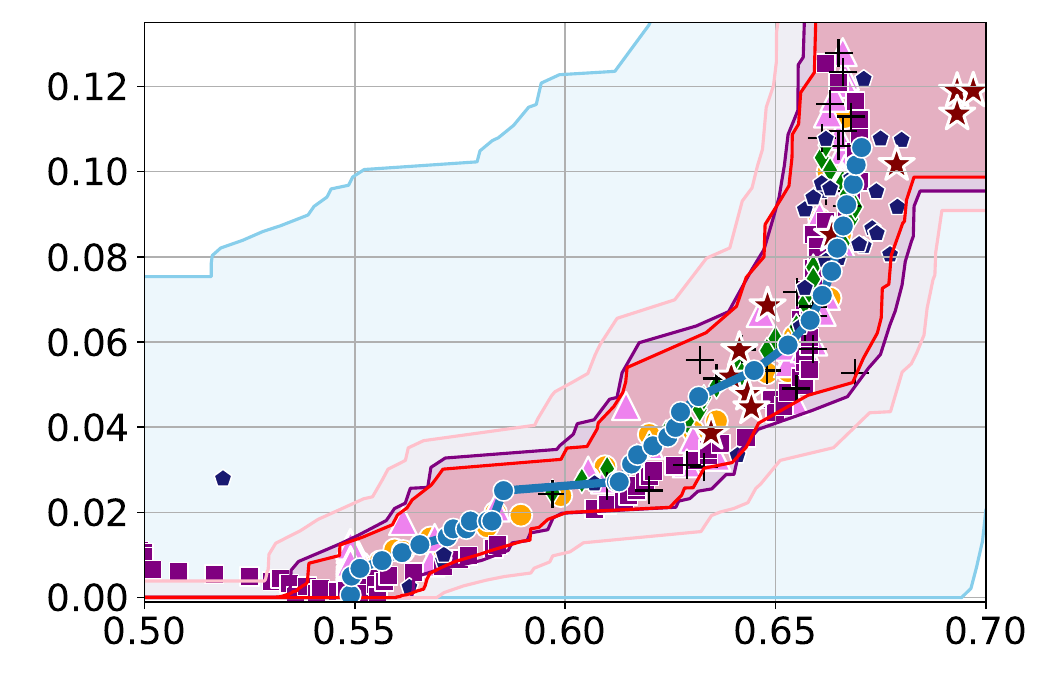}}
    \subfloat{\includegraphics[width=0.25\textwidth]{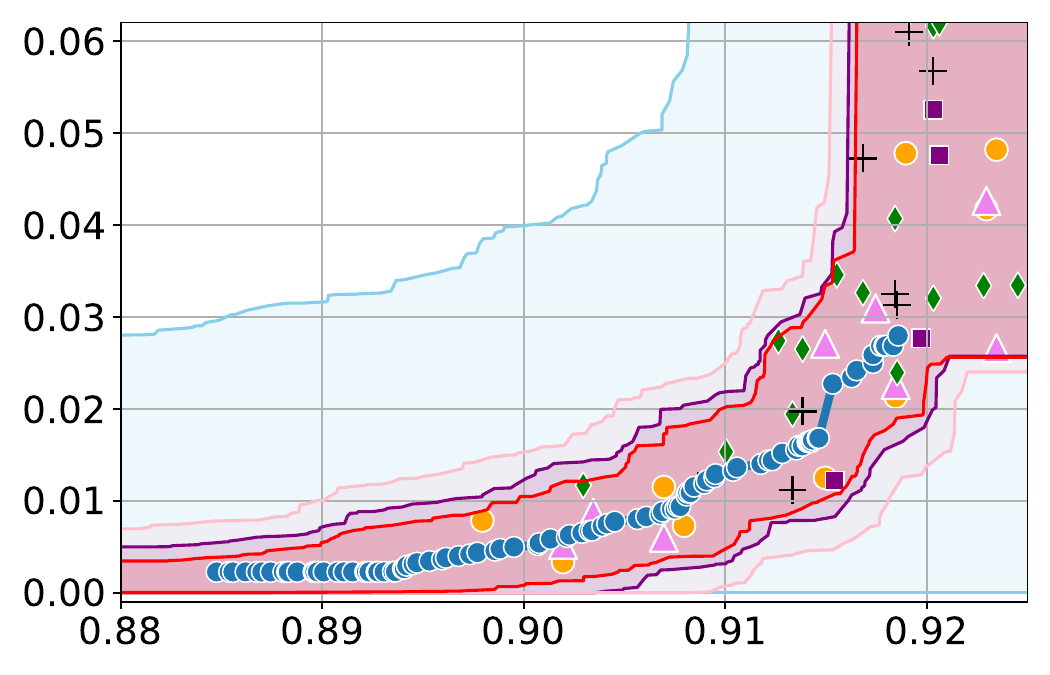}}
    \subfloat{\includegraphics[width=0.25\textwidth]{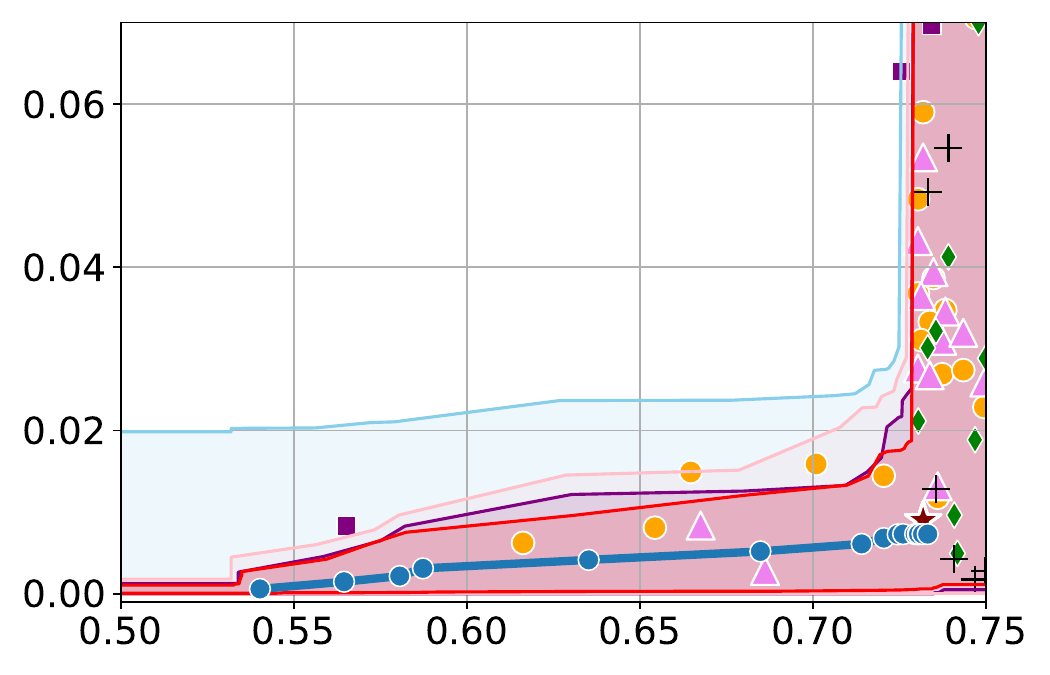}}\\
    \subfloat{\includegraphics[width=0.25\textwidth]{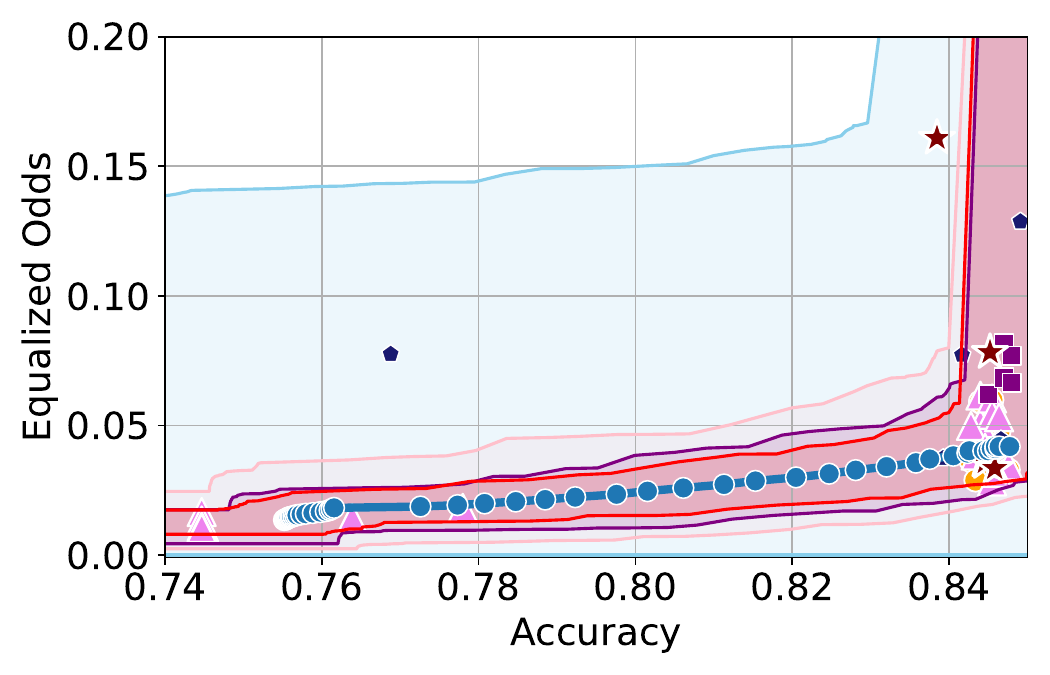}}
    \subfloat{\includegraphics[width=0.25\textwidth]{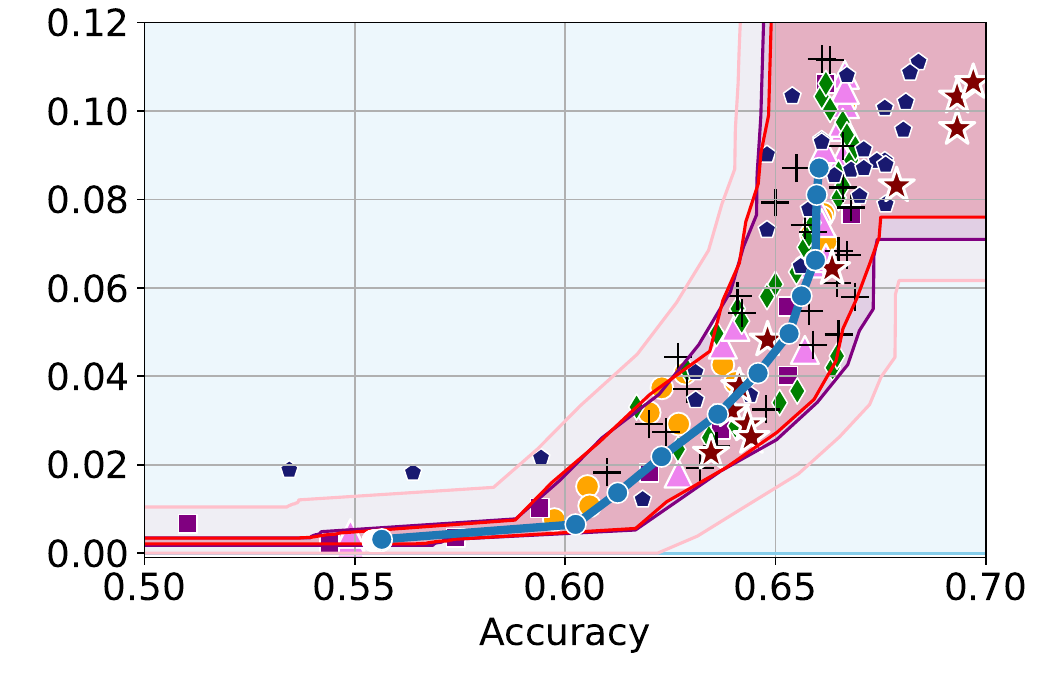}}
    \subfloat{\includegraphics[width=0.25\textwidth]{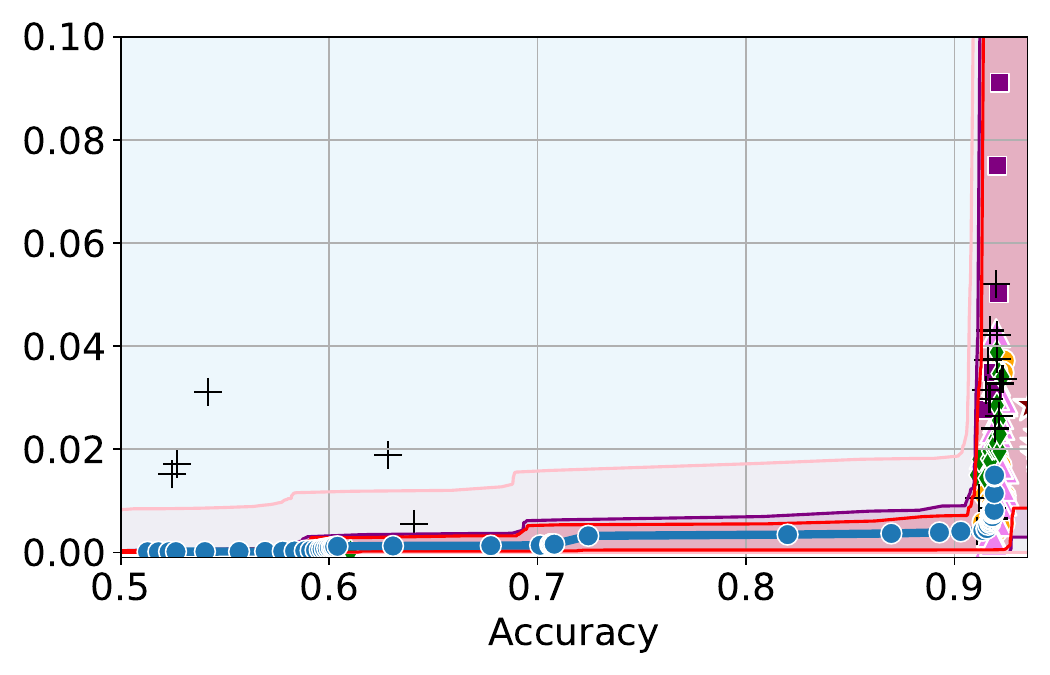}}
    \subfloat{\includegraphics[width=0.25\textwidth]{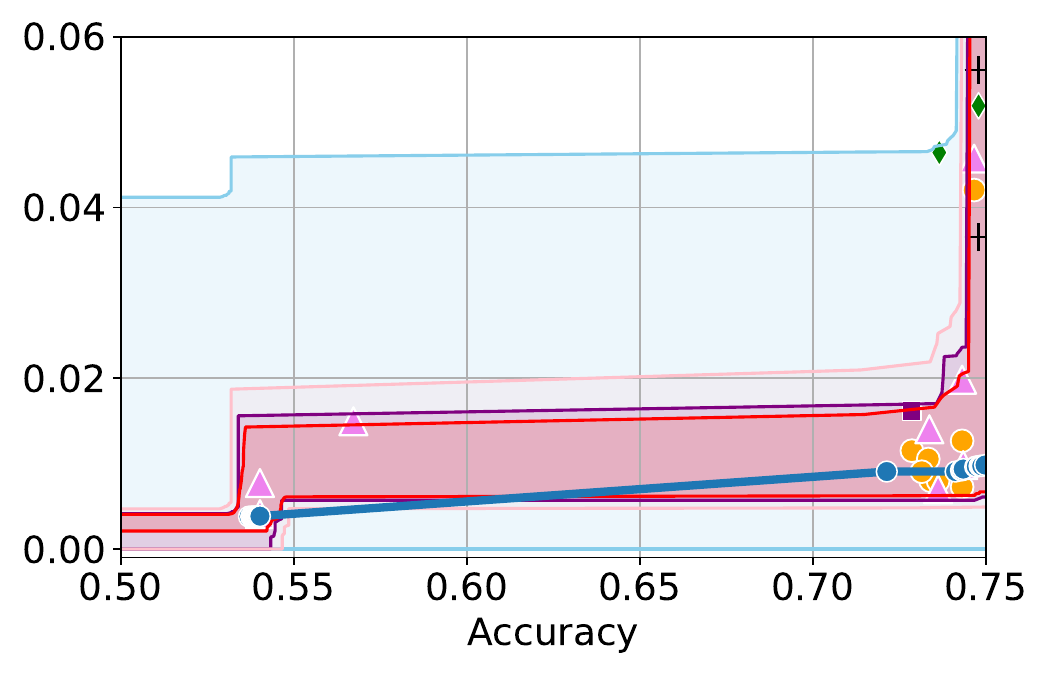}}
    \caption{Results with the empirical baseline trade-offs replaced by best-case and worst-case trade-offs for baselines lying above and below the permissible region respectively.
    Here, $\caldata$ is 10\% data split and $\alpha = 0.05$.}
    \label{fig:all_results_baseline_uncertainty}
\end{figure*}

\begin{table}[t]
\caption{The table shows the proportion of models whose best-case trade-off $(\accintervalup, \fairnessintervallo)$ lies in the suboptimal region, models whose worst-case trade-off $(\accintervallo, \fairnessintervallo)$ lies in the unlikely region and models whose confidence regions $C^\alpha(h)$ overlaps with the permissible trade-off region. The table reports average across all experiments. }
    \label{tab:all_results_baseline_uncertainty}
    \centering
    \begin{small}
    \begin{sc}
    \begin{tabular}{llll}
\toprule
Baseline & \cellcolor{blue!15}Unlikely & \cellcolor{green!20}Permissible & \cellcolor{red!20}Sub-optimal \\
\midrule
KDE-fair   &     0.0 $\pm$ 0.0&      1.0$\pm$0.0&        0.0$\pm$0.0 \\
RTO        &     0.0 $\pm$ 0.0&      0.77 $\pm$0.05&         0.23$\pm$ 0.04\\
adversary  &     0.0 $\pm$ 0.0&      0.79 $\pm$0.1&         0.2$\pm$0.08\\
linear     &     0.08$\pm$0.05 &      0.81$\pm$0.06 &        0.11$\pm$0.05 \\
logsig     &     0.05 $\pm$0.02 &       0.71$\pm$ 0.09&        0.25$\pm$0.1\\
reductions &     0.0$\pm$0.0 &       0.82$\pm$ 0.09&        0.18 $\pm$0.08\\
separate   &     0.03$\pm$0.02 &      0.95$\pm$ 0.08&        0.02 $\pm$0.03\\
\bottomrule
\end{tabular}

    \end{sc}
    \end{small}
\end{table}

\section{Experimental details and additional results}\label{sec:exp_app}

In this section, we provide greater details regarding our experimental setup and models used. 
We first begin by defining the Equalized Odds metric which has been used in our experiments, along with DP and EOP.
\paragraph{Equalized Odds (EO):} EO condition states that, both the true positive rates and false positive rates for all sensitive groups are equal, i.e. $\prob(h(X) = 1 \mid A=a, Y=y) = \prob(h(X) = 1 \mid Y=y)$ for any $a \in \mathcal{A}$ and $y\in \{0, 1\}$. The absolute EO violation  is defined as:
\begin{align*}
    \eometric(h) \coloneqq & 1/2 \, | \prob(h(X) = 1 \mid A=1, Y=1) - \prob(h(X) = 1 \mid A=0, Y=1)| \\
    &+ 1/2 \, | \prob(h(X) = 1 \mid A=1, Y=0) - \prob(h(X) = 1 \mid A=0, Y=0)|.
\end{align*}

Next, we provide additional details regarding the YOTO model. 
\subsection{Practical details regarding YOTO model}
As described in Section \ref{sec:methodology}, we consider optimising regularized losses of the form
\begin{align*}
    \mathcal{L}_{\lambda}(\theta) = \E[\celoss(h_\theta(X), Y)] + \lambda\,\fairnesslossrelaxed(h_\theta).
\end{align*}
When training YOTO models, instead of fixing $\lambda$, we sample the parameter $\lambda$ from a distribution $P_\lambda$. As a result, during training the model observes many different values of $\lambda$ and learns to optimise the loss $\mathcal{L}_\lambda$ for all of them simultaneously. At inference time, the model can be conditioned on a chosen parameter value $\lambda'$ and recovers the model trained to optimise $\mathcal{L}_{\lambda'}$. The loss being minimised can thus be expressed as follows:
\begin{align*}
     \argmin_{h_\theta: \mathcal{X} \times \Lambda \rightarrow \mathbb{R}} \E_{\lambda \sim P_{\lambda}} \left[ \E[\celoss(h_\theta(X, \lambda), Y)] + \lambda\,\fairnesslossrelaxed(h_\theta(\cdot, \lambda))\right]. 
\end{align*}
The fairness losses $\fairnesslossrelaxed$ considered for the YOTO model are:
\begin{align*}
        &\textup{DP:} \quad \fairnesslossrelaxed(h_\theta(\cdot, \lambda)) = | \E[\sigma(h_\theta(X, \lambda))\mid A = 1] - \E[\sigma(h_\theta(X, \lambda))\mid A = 0] |\\
        &\textup{EOP:} \quad \fairnesslossrelaxed(h_\theta(\cdot, \lambda)) = | \E[\sigma(h_\theta(X, \lambda))\mid A = 1, Y = 1] - \E[\sigma(h_\theta(X,\lambda))\mid  A = 0, Y = 1] |\\
        &\textup{EO:} \quad \fairnesslossrelaxed(h_\theta(\cdot, \lambda)) = | \E[\sigma(h_\theta(X, \lambda))\mid A = 1, Y = 1] - \E[\sigma(h_\theta(X, \lambda))\mid A = 0, Y = 1] | \\
        &\qquad + |\E[\sigma(h_\theta(X, \lambda))\mid A = 1, Y = 0] - \E[\sigma(h_\theta(X, \lambda))\mid A = 0, Y = 0]|.
    \end{align*}
Here, $\sigma(x)\coloneqq 1/(1+e^{-x})$ denotes the sigmoid function.

In our experiments, we sample a new $\lambda$ for every batch. Moreover, we use the log-uniform distribution as per \cite{Dosovitskiy2020You} as the sampling distribution $P_\lambda$, where the uniform distribution is $U[10^{-6}, 10]$. To condition the network on $\lambda$ parameters, we follow in the footsteps of \cite{Dosovitskiy2020You} to use Feature-wise Linear Modulation (FiLM) \citep{perez2017film}. For completeness, we include the description of the architecture next. 

Initially, we determine which network layers should be conditioned, which can encompass all layers or just a subset. For each chosen layer, we condition it based on the weight parameters \(\lambda\). Given a layer that yields a feature map \(f\) with dimensions \(W \times H \times C\), where \(W\) and \(H\) denote the spatial dimensions and \(C\) stands for the channels, we introduce the parameter vector \(\lambda\) to two distinct multi-layer perceptrons (MLPs), denoted as \(M_\sigma\) and \(M_\mu\). These MLPs produce two vectors, \(\sigma\) and \(\mu\), each having a dimensionality of \(C\). The feature map is then transformed by multiplying it channel-wise with \(\sigma\) and subsequently adding \(\mu\). The resultant transformed feature map \(f'\) is given by:
\[ f'_{ijk} = \sigma_k f_{ijk} + \mu_k \quad \text{where} \quad \sigma = M_\sigma(\lambda) \quad \text{and} \quad \mu = M_\mu(\lambda). \]
Next, we provide exact architectures we used for each dataset in our experiments.

\subsubsection{YOTO Architectures}\label{subsec:yoto_architectures}
\paragraph{Adult and COMPAS dataset}
Here, we use a simple logistic regression as the main model, with only the scalar logit outputs of the logistic regression being conditioned using FiLM. The MLPs $M_\mu, M_\sigma$ both have two hidden layers, each of size 4, and ReLU activations.
We train the model for a maximum of 1000 epochs, with early stopping based on validation losses. Training these simple models takes roughly 5 minutes on a Tesla-V100-SXM2-32GB GPU. 

\paragraph{CelebA dataset}
For the CelebA dataset, our architecture is a convolutional neural network (ConvNet) integrated with the FiLM (Feature-wise Linear Modulation) mechanism. The network starts with two convolutional layers: the first layer has 32 filters with a kernel size of \(3 \times 3\), and the second layer has 64 filters, also with a \(3 \times 3\) kernel. Both convolutional layers employ a stride of 1 and are followed by a max-pooling layer that reduces each dimension by half.

The feature maps from the convolutional layers are flattened and passed through a series of fully connected (MLP) layers. Specifically, the first layer maps the features to 64 dimensions, and the subsequent layers maintain this size until the final layer, which outputs a scalar value. The activation function used in these layers is ReLU.

To condition the network on the \(\lambda\) parameter using FiLM, we design two multi-layer perceptrons (MLPs), \(M_\mu\) and \(M_\sigma\). Both MLPs take the \(\lambda\) parameter as input and have 4 hidden layers. Each of these hidden layers is of size 256. These MLPs produce the modulation parameters \(\mu\) and \(\sigma\), which are used to perform feature-wise linear modulation on the outputs of the main MLP layers.
The final output of the network is passed through a sigmoid activation function to produce the model's prediction.
We train the model for a maximum of 1000 epochs, with early stopping based on validation losses. Training this model takes roughly 1.5 hours on a Tesla-V100-SXM2-32GB GPU. 

\paragraph{Jigsaw dataset}
For the Jigsaw dataset, we employ a neural network model built upon the BERT architecture \citep{devlin2018bert} integrated with the Feature-wise Linear Modulation (FiLM) mechanism. We utilize the representation corresponding to the [CLS] token, which carries aggregate information about the entire sequence.
To condition the BERT's output on the \(\lambda\) parameter using FiLM, we design two linear layers, which map the \(\lambda\) parameter to modulation parameters \(\gamma\) and \(\beta\), both of dimension equal to BERT's hidden size of 768. These modulation parameters are then used to perform feature-wise linear modulation on the [CLS] representation.
The modulated representation is passed through a classification head, which consists of a linear layer mapping from BERT's hidden size (768) to a scalar output.
In terms of training details, our model is trained for a maximum of 10 epochs, with early stopping based on validation losses. Training this model takes roughly 6 hours on a Tesla-V100-SXM2-32GB GPU. 

\subsection{Datasets} We used four real-world datasets for our experiments. 
\paragraph{Adult dataset}
The Adult income dataset \citep{misc_adult_2} includes employment data for 48,842 individuals where the task is to predict whether an individual earns more than \$50k per year and includes demographic attributes such as age, race and gender. In our experiments, we consider gender as the sensitive attribute.

\paragraph{COMPAS dataset} 
The COMPAS recidivism data comprises collected by ProPublica \citep{angwin2016machine}, includes information for 6172 defendants from Broward County, Florida. This information comprises 52 features including defendants’ criminal history and demographic attributes, and the task is to predict recidivism for defendants. The sensitive attribute in this dataset is the defendants’ race where $A=1$ represents `African American' and $A=0$ corresponds to all other races.

\paragraph{CelebA dataset}
The CelebA dataset \citep{liu2015faceattributes} consists of 202,599 celebrity images annotated with 40 attribute labels. In our task, the objective is to predict whether an individual in the image is smiling. The dataset comprises features in the form of image pixels and additional attributes such as hairstyle, eyeglasses, and more. The sensitive attribute for our experiments is gender.

\paragraph{Jigsaw Toxicity Classification dataset}
The Jigsaw Toxicity Classification dataset \citep{jigsaw2019unintended} contains online comments from various platforms, aimed at identifying and mitigating toxic behavior online. 
The task is to predict whether a given comment is toxic or not. The dataset includes features such as the text of the comment and certain metadata such as the gender or race to which each comment relates. In our experiments, the sensitive attribute is the gender to which the comment refers, and we only filter the comments which refer to exactly one of `male' or `female' gender. This leaves us with 107,106 distinct comments. 

\subsection{Baselines}\label{subsec:smooth_fairness_app}
The baselines considered in our experiments include:
\begin{itemize}
    \item \textbf{Regularization based approaches \citep{bendekgey2021scalable}:} These methods seek to minimise fairness loss using regularized losses as shown in Section \ref{sec:methodology}.
    We consider different regularization terms $\fairnesslossrelaxed(h_\theta)$ as smooth relaxations of the fairness violation $\fairnessloss$ as proposed in the literature \citep{bendekgey2021scalable}. To make this concrete, when the fairness violation under consideration is DP, we consider
    \[
    \fairnesslossrelaxed(h_\theta) = \E[g(h_\theta(X))\mid A = 1] - \E[g(h_\theta(X))\mid A = 0],
    \]
    with $g(x) = x$ denoted as `linear' in our results and $g(x)= \log{\sigma(x)}$ where $\sigma(x)\coloneqq 1/(1+e^{-x})$, denoted as `logsig' in our results. In addition to these methods, we also consider separately trained models with the same regularization term as the YOTO models, i.e., 
    \begin{align*}
        &\textup{DP:} \quad \fairnesslossrelaxed(h_\theta) = | \E[\sigma(h_\theta(X))\mid A = 1] - \E[\sigma(h_\theta(X))\mid A = 0] |\\
        &\textup{EOP:} \quad \fairnesslossrelaxed(h_\theta) = | \E[\sigma(h_\theta(X))\mid A = 1, Y = 1] - \E[\sigma(h_\theta(X))\mid  A = 0, Y = 1] |\\
        &\textup{EO:} \quad \fairnesslossrelaxed(h_\theta) = | \E[\sigma(h_\theta(X))\mid A = 1, Y = 1] - \E[\sigma(h_\theta(X))\mid A = 0, Y = 1] | \\
        &\qquad + |\E[\sigma(h_\theta(X))\mid A = 1, Y = 0] - \E[\sigma(h_\theta(X))\mid A = 0, Y = 0]|.
    \end{align*}
    Here, $\sigma(x)\coloneqq 1/(1+e^{-x})$ denotes the sigmoid function. We denote these models as `separate' in our experimental results as they are the separately trained counterparts to the YOTO model.
    For each relaxation, we train models for a range of $\lambda$ values uniformly chosen in $[0, 10]$ interval. 
    \item \textbf{Reductions Approach \citep{agarwal2018reductions}:} This method transforms the fairness problem into a sequence of cost-sensitive classification problems. Like the regularization approaches this requires multiple models to be trained. Here, to try and reproduce the trade-off curves, we train the reductions approach with a range of different fairness constraints uniformly in $[0, 1]$.
    \item \textbf{KDE-fair \citep{cho2020kernel}:} This method employs a kernel density estimation trick to quantify fairness measures, capturing the degree of irrelevancy of prediction outputs to sensitive attributes. These quantified fairness measures are expressed as differentiable functions with respect to classifier model parameters, allowing the use of gradient descent to solve optimization problems that respect fairness constraints. We focus on binary classification and well-known definitions of group fairness: Demographic Parity (DP), Equalized Odds (EO) and Equalized Opportunity (EOP).

    \item \textbf{Randomized Threshold Optimizer (RTO) \citep{alabdulmohsin2021near}:} This scalable post-processing algorithm debiases trained models, including deep neural networks, and is proven to be near-optimal by bounding its excess Bayes risk. RTO optimizes the trade-off curve by applying adjustments to the trained model's predictions to meet fairness constraints. In our experiments, we first train a standard classifier which is referred to as the base classifier. Next, we use the RTO algorithm to apply adjustments to the trained model's prediction to meet fairness constraints. To reproduce the trade-off curves, we apply post-hoc adjustments to the base classifier for a range of fairness violation constraints. 

    \item \textbf{Adversarial Approaches \citep{zhang2018mitigating}:} 
    These methods utilize an adversarial training paradigm where an additional model, termed the adversary, is introduced during training. The primary objective of this adversary is to predict the sensitive attribute \( A \) using the predictions \( h_\theta(X) \) generated by the main classifier \( h_\theta \). The training process involves an adversarial game between the primary classifier and the adversary, striving to achieve equilibrium. This adversarial dynamic ensures that the primary classifier's predictions are difficult to use for determining the sensitive attribute \( A \), thereby minimizing unfair biases associated with \( A \). Specifically, for DP constraints, the adversary takes the logit outputs of the classifier as the input and predicts $A$. In contrast for EO and EOP constraints, the adversary also takes the true label $Y$ as the input. For EOP constraint, the adversary is only trained on data with $Y=1$.
\end{itemize}

The training times for each baseline on the different datasets have been listed in Table \ref{tab:training_times}. 

\begin{table}[ht]
    \centering\caption{Approximate training times per model for different baselines across various datasets.}
    \label{tab:training_times}
    \begin{small}
    \begin{tabular}{lcccc}
        \toprule
        & Adult & COMPAS & CelebA & Jigsaw \\
        \midrule
        adversary & 3 min & 3 min & 90 min & 310 min \\
        linear/logsig/separate & 4 min & 3 min & 80 min & 310 min \\
        KDE-fair & 2 min & 2 min & 60 min & 280 min \\
        reductions & 3 min & 2 min & 70 min & 290 min \\
        YOTO & 4 min & 3 min & 90 min & 320 min \\
        \midrule
        RTO (base classifier training) & 3 min & 3 min & 70 min & 310 min \\
        RTO (Post-hoc optimisation per fairness constraint) & 1 min & 1 min & 10 min & 30  min \\
        \bottomrule
    \end{tabular}
    \end{small}
    
\end{table}

\subsection{Additional results}
\begin{figure}[h!]
     \centering
     \includegraphics[width=\textwidth]{figs_results_wrto_cho_legend_with_rto.pdf}\\
     \begin{subfigure}[b]{0.33\textwidth}
         \centering
         \includegraphics[width=\textwidth]{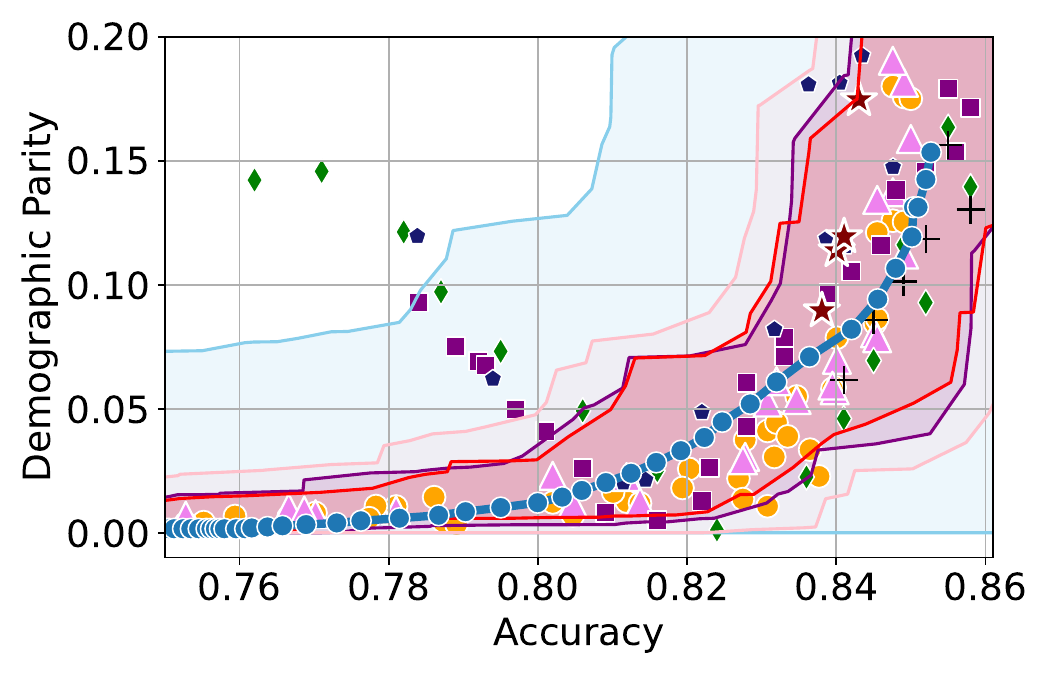}
         \caption{$|\caldata| = 1000$}
         \label{fig:adult_1000_dp}
     \end{subfigure}%
     \begin{subfigure}[b]{0.33\textwidth}
         \centering
         \includegraphics[width=\textwidth]{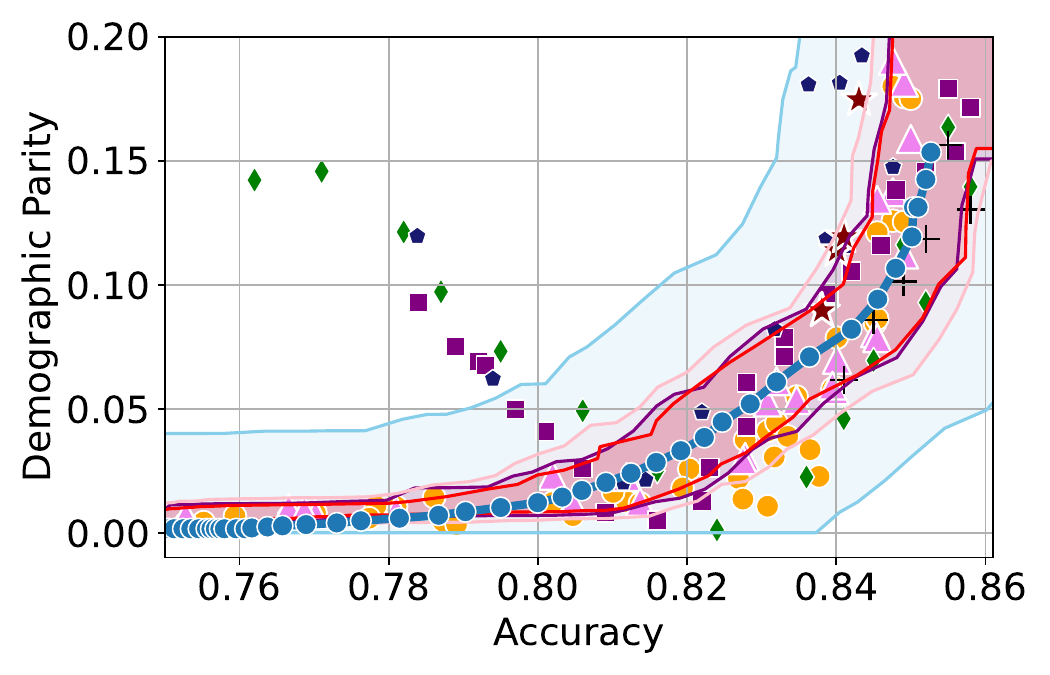}
         \caption{$|\caldata| = 5000$}
         \label{fig:adult_5000_dp}
     \end{subfigure}%
     \begin{subfigure}[b]{0.33\textwidth}
         \centering
         \includegraphics[width=\textwidth]{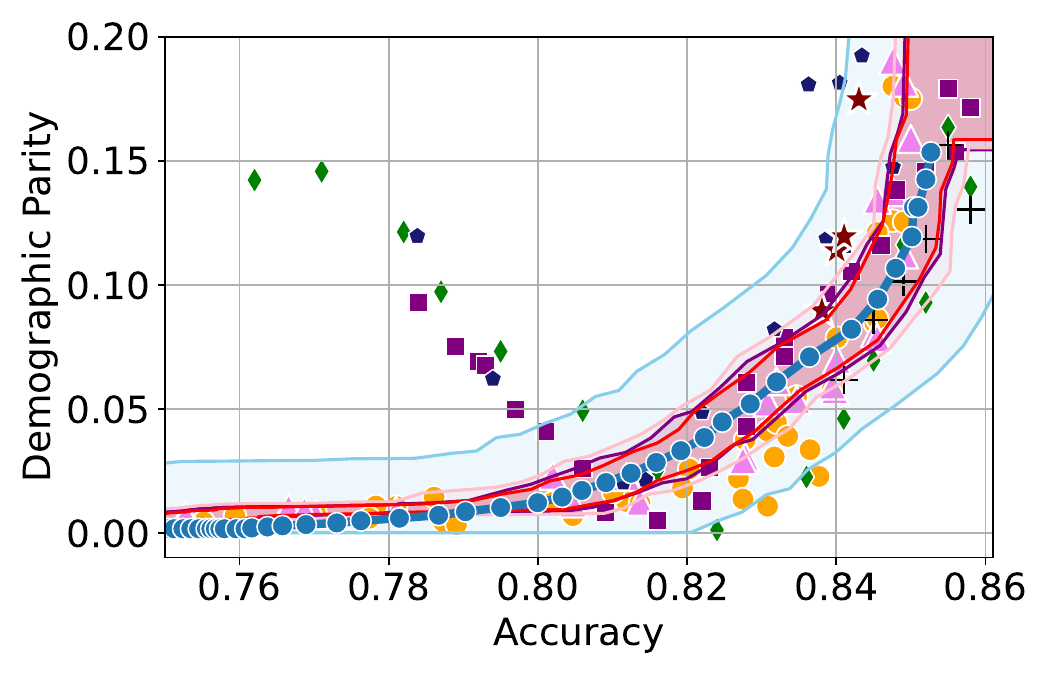}
         \caption{$|\caldata| = 10,000$}
         \label{fig:adult_10000_dp}
     \end{subfigure}
        \caption{Demographic Parity results for Adult dataset}
        \label{ref:adult_dp_app}
     \begin{subfigure}[b]{0.33\textwidth}
         \centering
         \includegraphics[width=\textwidth]{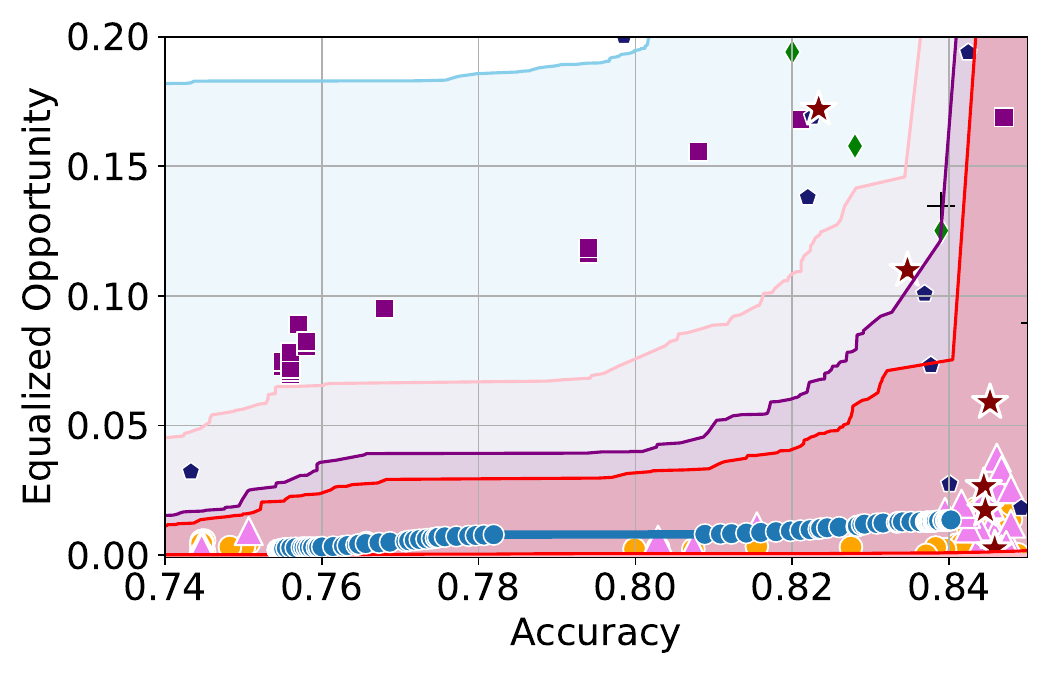}
         \caption{$|\caldata| = 1000$}
         \label{fig:adult_1000_eop}
     \end{subfigure}%
     \begin{subfigure}[b]{0.33\textwidth}
         \centering
         \includegraphics[width=\textwidth]{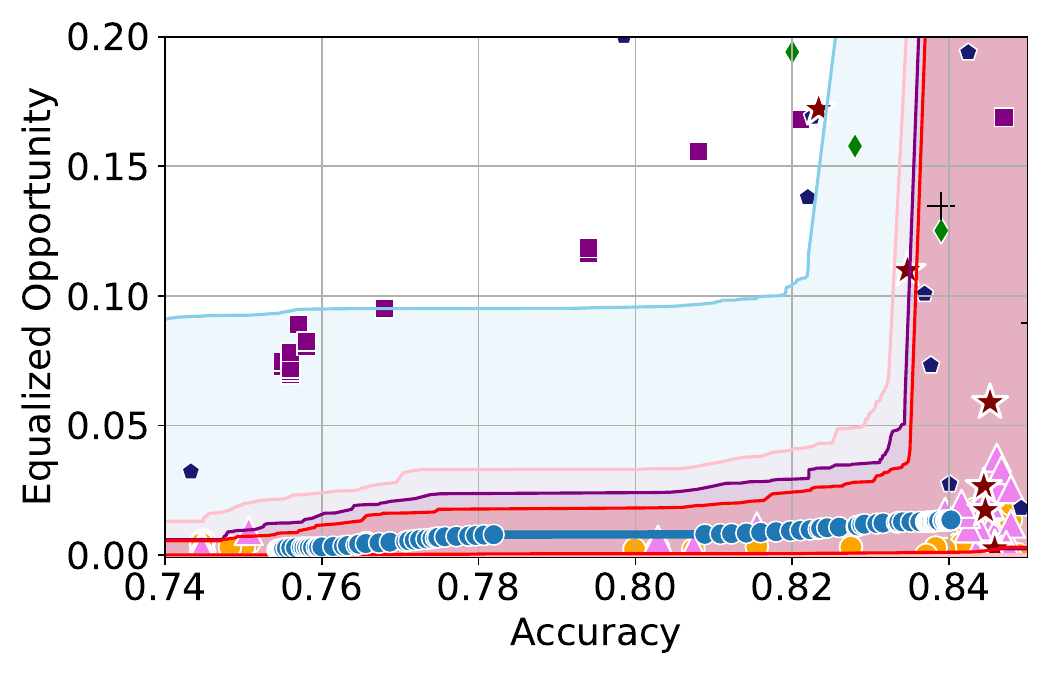}
         \caption{$|\caldata| = 5000$}
         \label{fig:adult_5000_eop}
     \end{subfigure}%
     \begin{subfigure}[b]{0.33\textwidth}
         \centering
         \includegraphics[width=\textwidth]{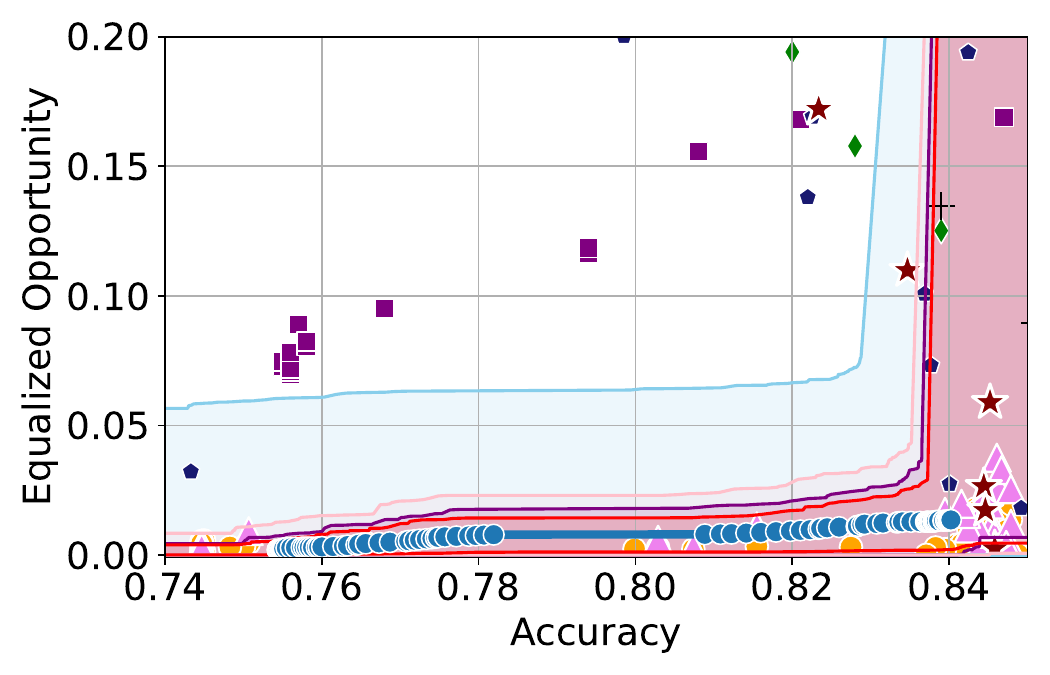}
         \caption{$|\caldata| = 10,000$}
         \label{fig:adult_10000_eop}
     \end{subfigure}
        \caption{Equalized Opportunity results for Adult dataset}
        \label{fig:adult_eop_app}
             \begin{subfigure}[b]{0.33\textwidth}
         \centering
         \includegraphics[width=\textwidth]{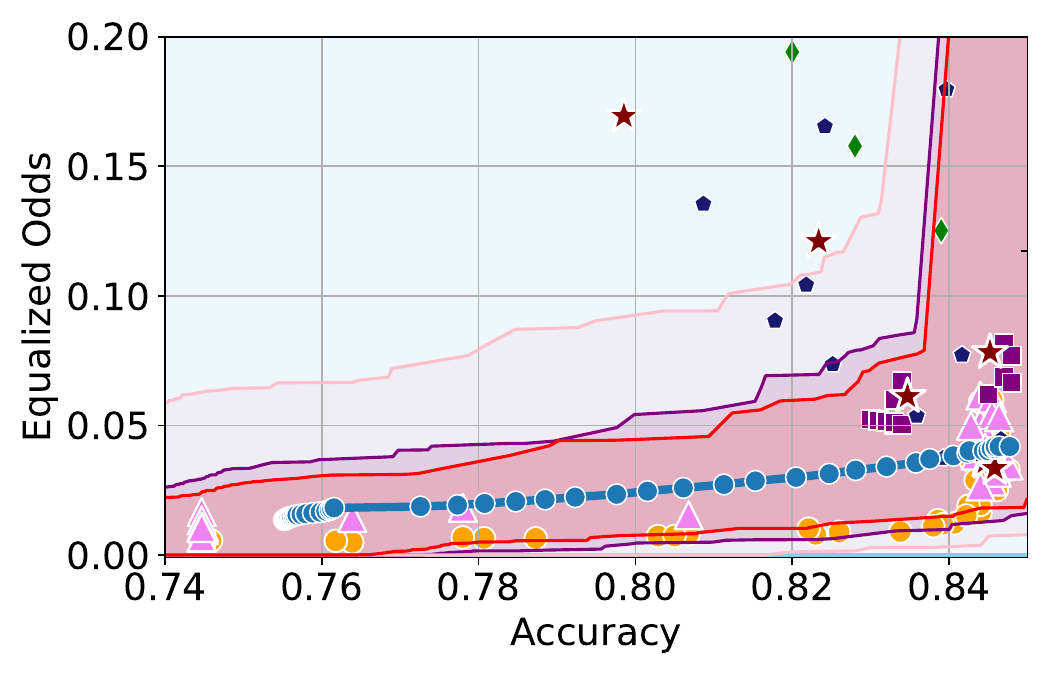}
         \caption{$|\caldata| = 1000$}
         \label{fig:adult_1000_eo}
     \end{subfigure}%
     \begin{subfigure}[b]{0.33\textwidth}
         \centering
         \includegraphics[width=\textwidth]{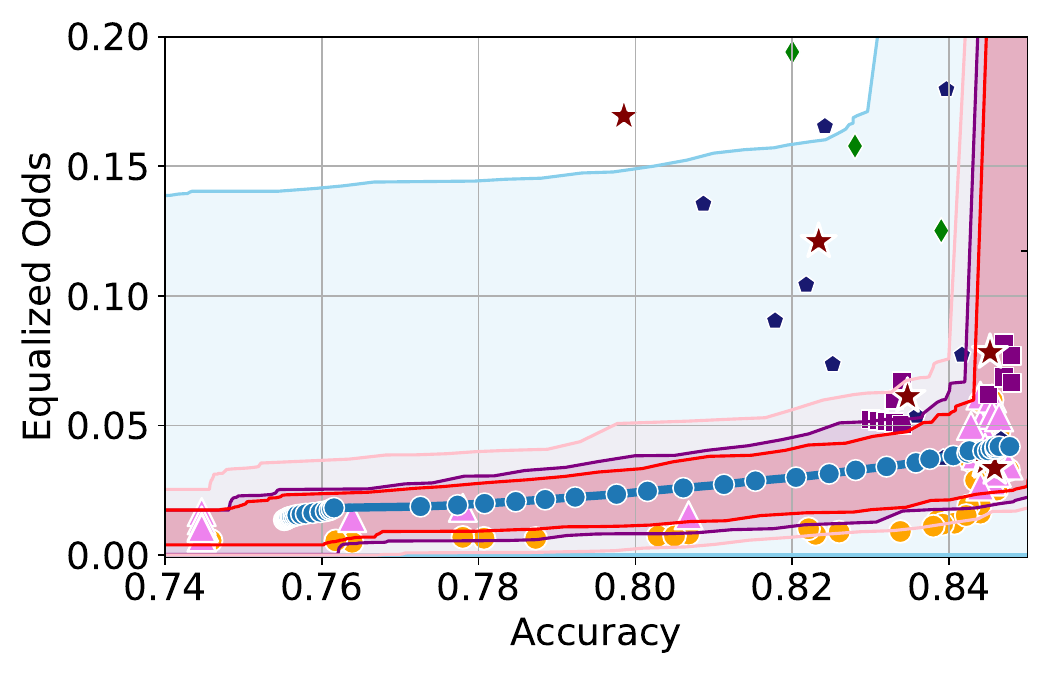}
         \caption{$|\caldata| = 5000$}
         \label{fig:adult_5000_eo}
     \end{subfigure}%
     \begin{subfigure}[b]{0.33\textwidth}
         \centering
         \includegraphics[width=\textwidth]{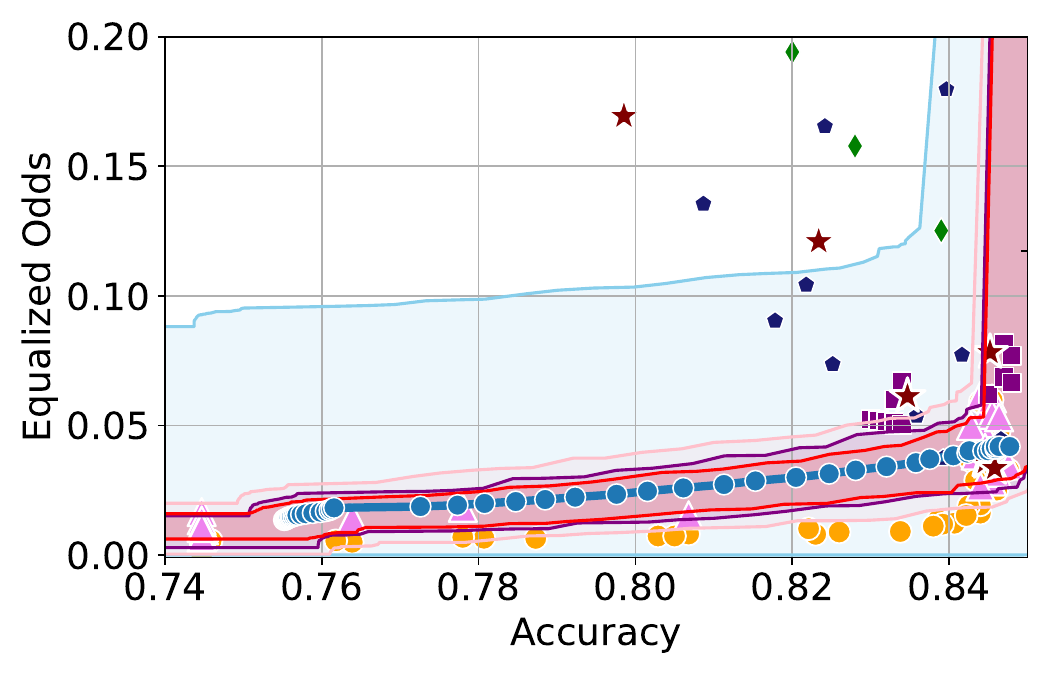}
         \caption{$|\caldata| = 10,000$}
         \label{fig:adult_10000_eo}
     \end{subfigure}
        \caption{Equalized Odds results for Adult dataset}
        \label{fig:adult_eo_app}
\end{figure}

\begin{figure}[h!]
     \centering
     \includegraphics[width=\textwidth]{figs_results_wrto_cho_legend_with_rto.pdf}\\
     \begin{subfigure}[b]{0.33\textwidth}
         \centering
         \includegraphics[width=\textwidth]{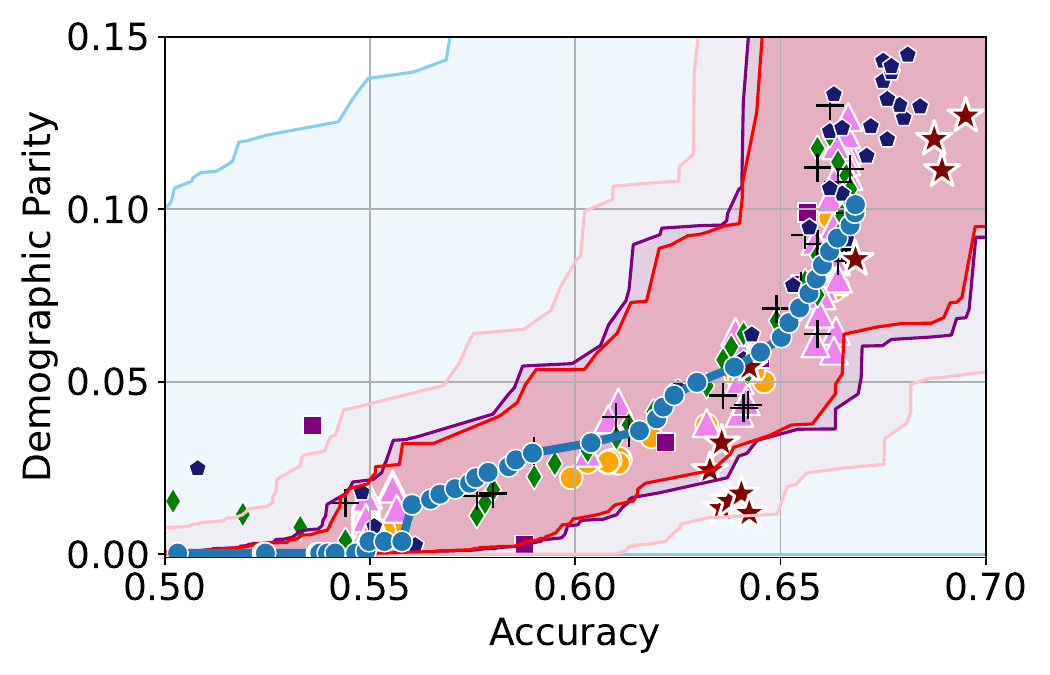}
         \caption{$|\caldata| = 500$}
         \label{fig:compas_500_dp}
     \end{subfigure}%
     \begin{subfigure}[b]{0.33\textwidth}
         \centering
         \includegraphics[width=\textwidth]{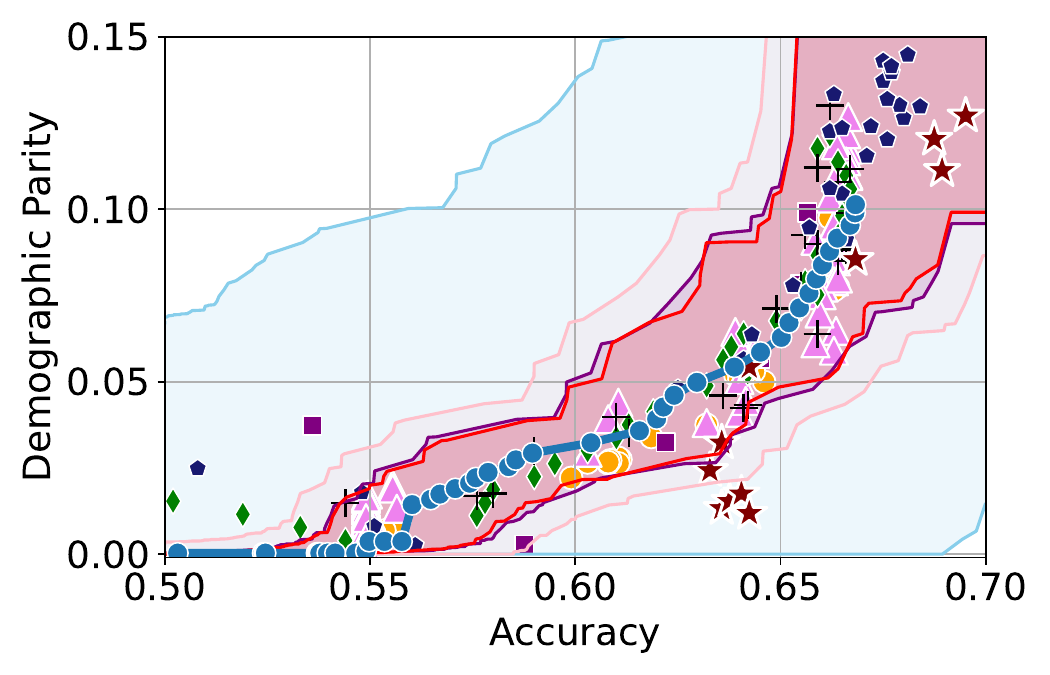}
         \caption{$|\caldata| = 1000$}
         \label{fig:compas_1000_dp}
     \end{subfigure}%
     \begin{subfigure}[b]{0.33\textwidth}
         \centering
         \includegraphics[width=\textwidth]{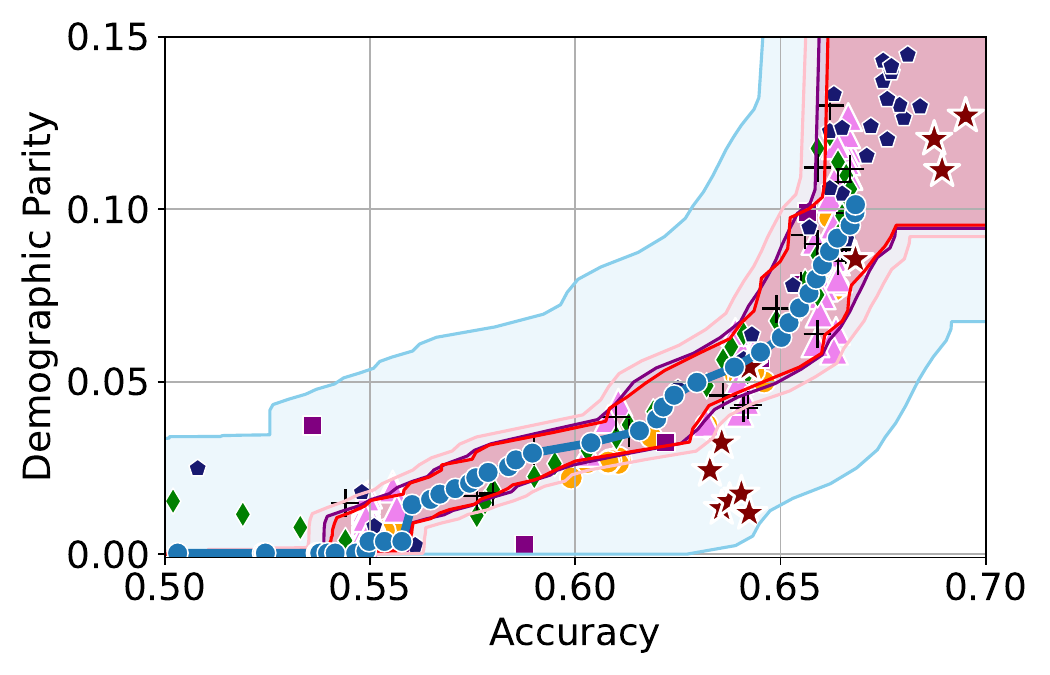}
         \caption{$|\caldata| = 5000$}
         \label{fig:compas_5000_dp}
     \end{subfigure}
        \caption{Demographic Parity results for COMPAS dataset}
        \label{fig:compas_dp_app}
     \begin{subfigure}[b]{0.33\textwidth}
         \centering
         \includegraphics[width=\textwidth]{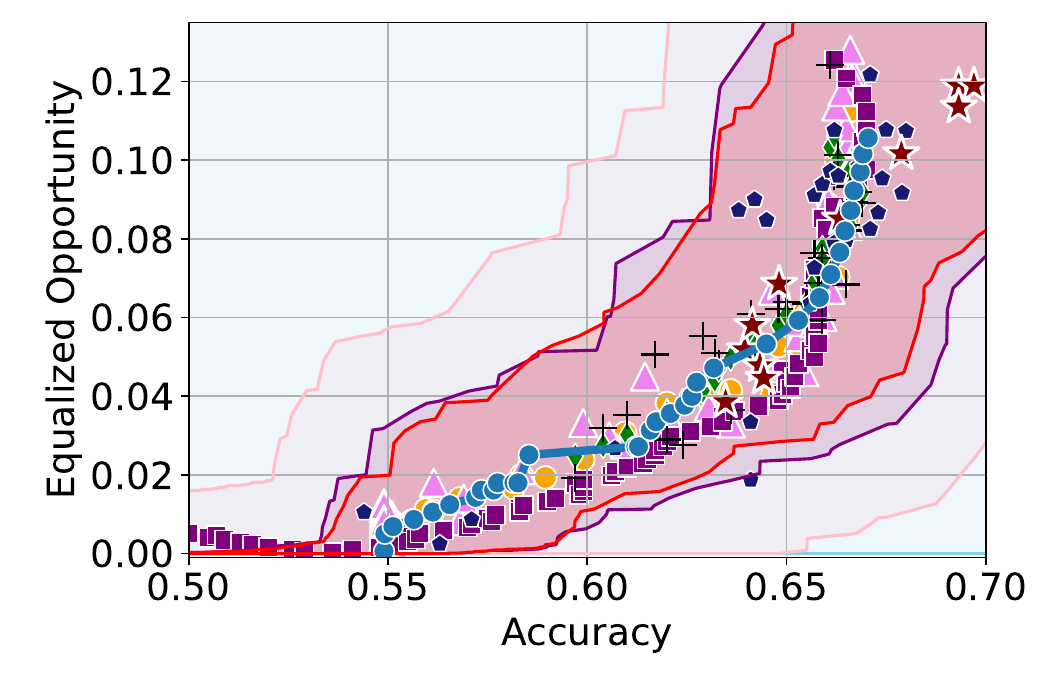}
         \caption{$|\caldata| = 500$}
         \label{fig:compas_500_eop}
     \end{subfigure}%
     \begin{subfigure}[b]{0.33\textwidth}
         \centering
         \includegraphics[width=\textwidth]{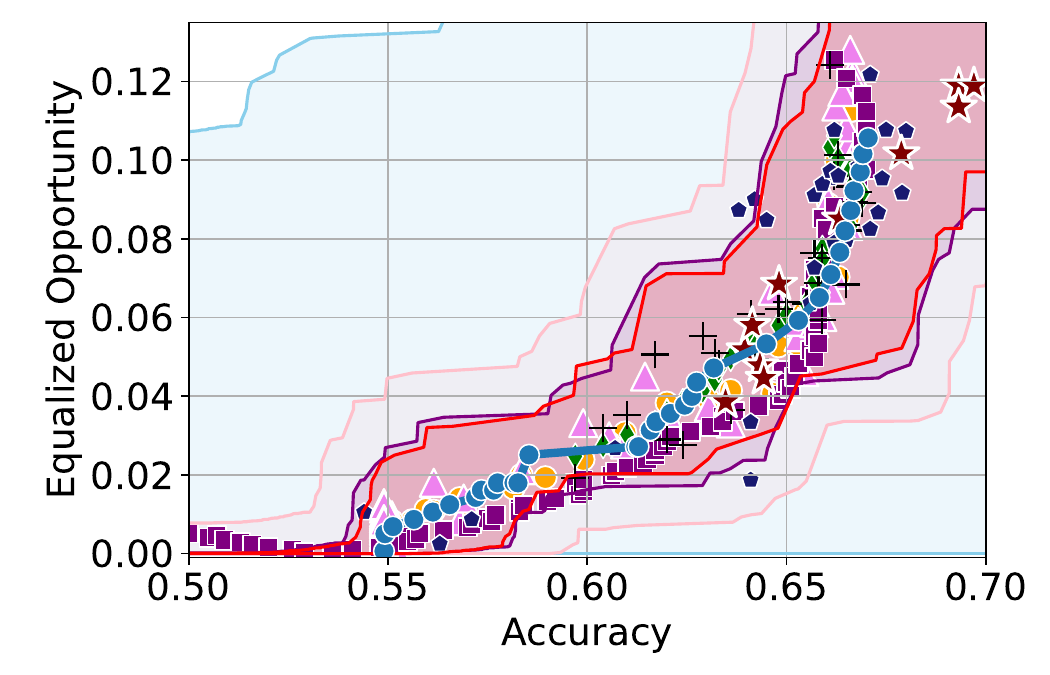}
         \caption{$|\caldata| = 1000$}
         \label{fig:compas_1000_eop}
     \end{subfigure}%
     \begin{subfigure}[b]{0.33\textwidth}
         \centering
         \includegraphics[width=\textwidth]{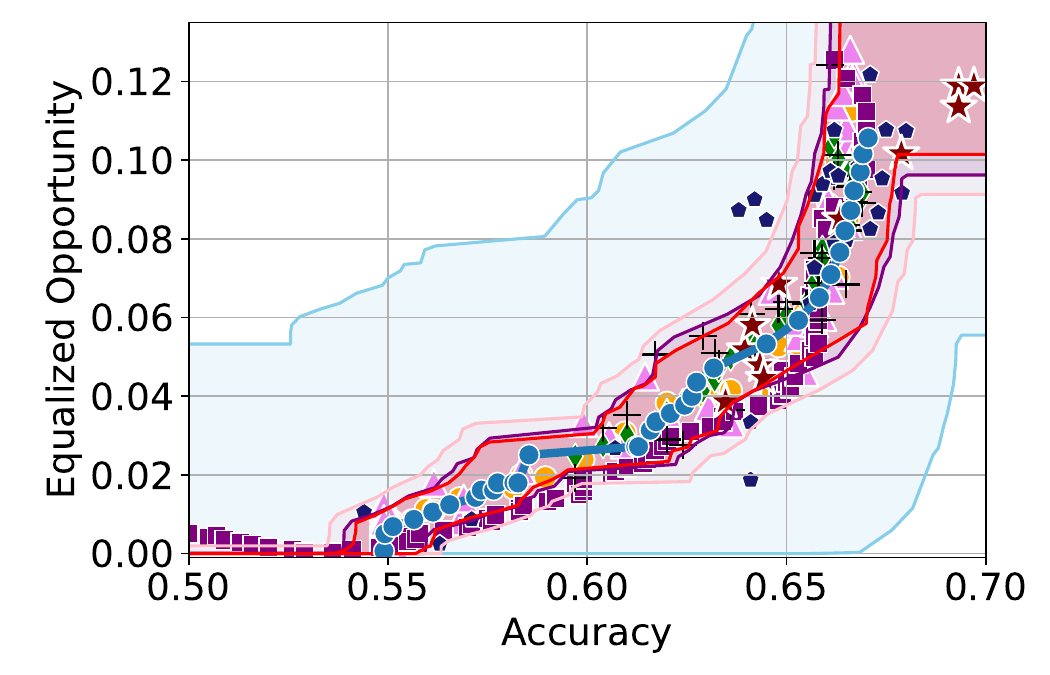}
         \caption{$|\caldata| = 5000$}
         \label{fig:compas_5000_eop}
     \end{subfigure}
        \caption{Equalized Opportunity results for COMPAS dataset}
        \label{fig:compas_eop_app}
             \begin{subfigure}[b]{0.33\textwidth}
         \centering
         \includegraphics[width=\textwidth]{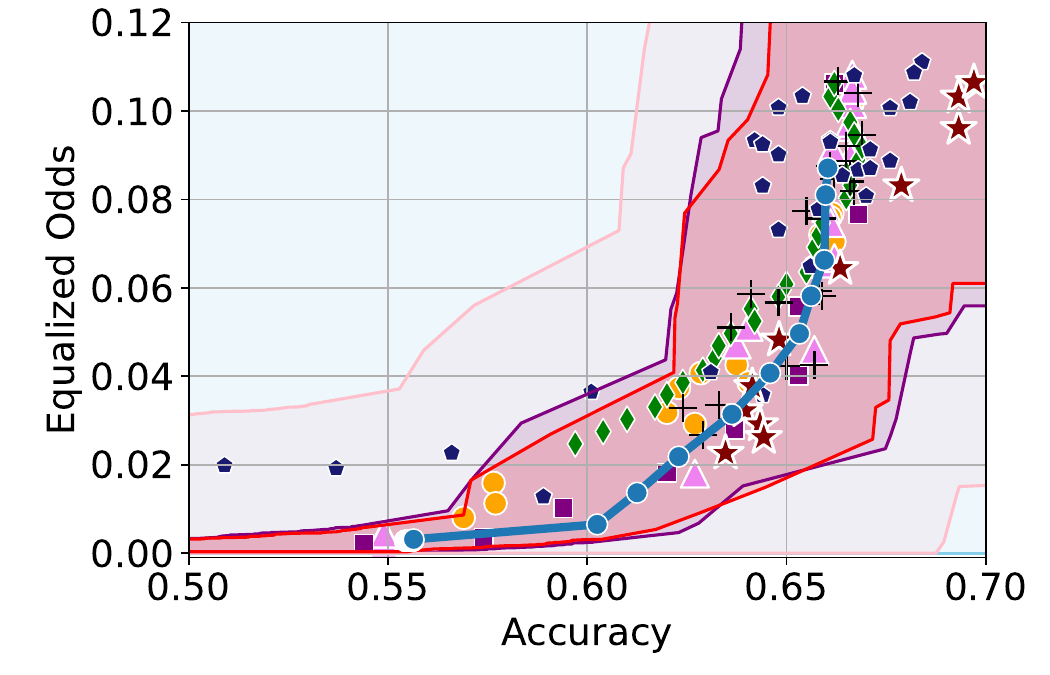}
         \caption{$|\caldata| = 500$}
         \label{fig:compas_500_eo}
     \end{subfigure}%
     \begin{subfigure}[b]{0.33\textwidth}
         \centering
         \includegraphics[width=\textwidth]{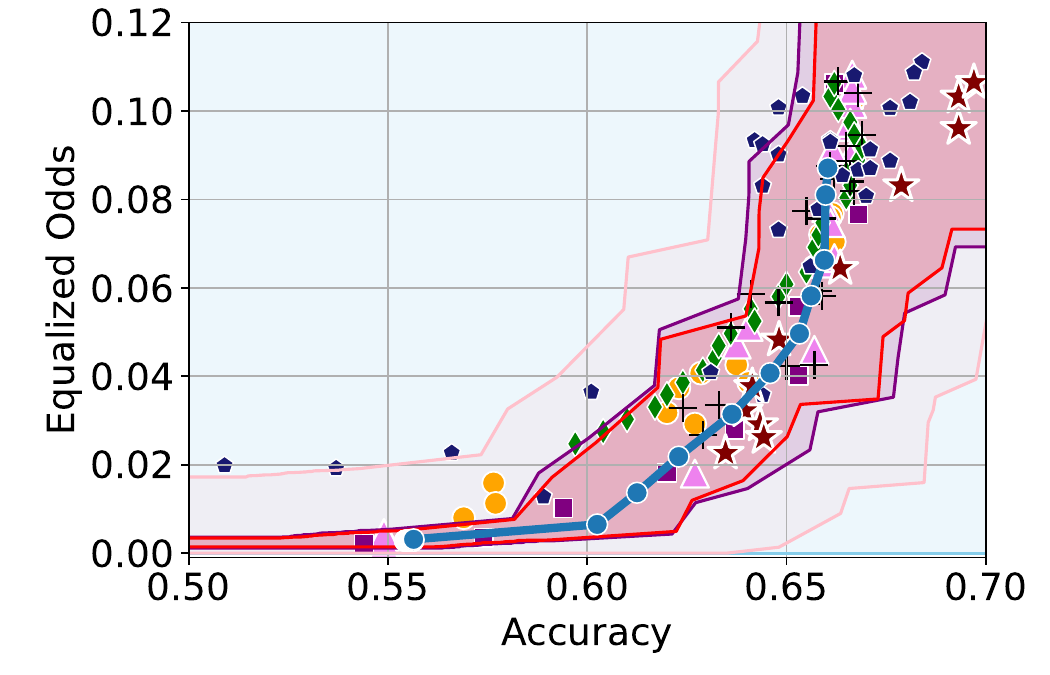}
         \caption{$|\caldata| = 1000$}
         \label{fig:compas_1000_eo}
     \end{subfigure}%
     \begin{subfigure}[b]{0.33\textwidth}
         \centering
         \includegraphics[width=\textwidth]{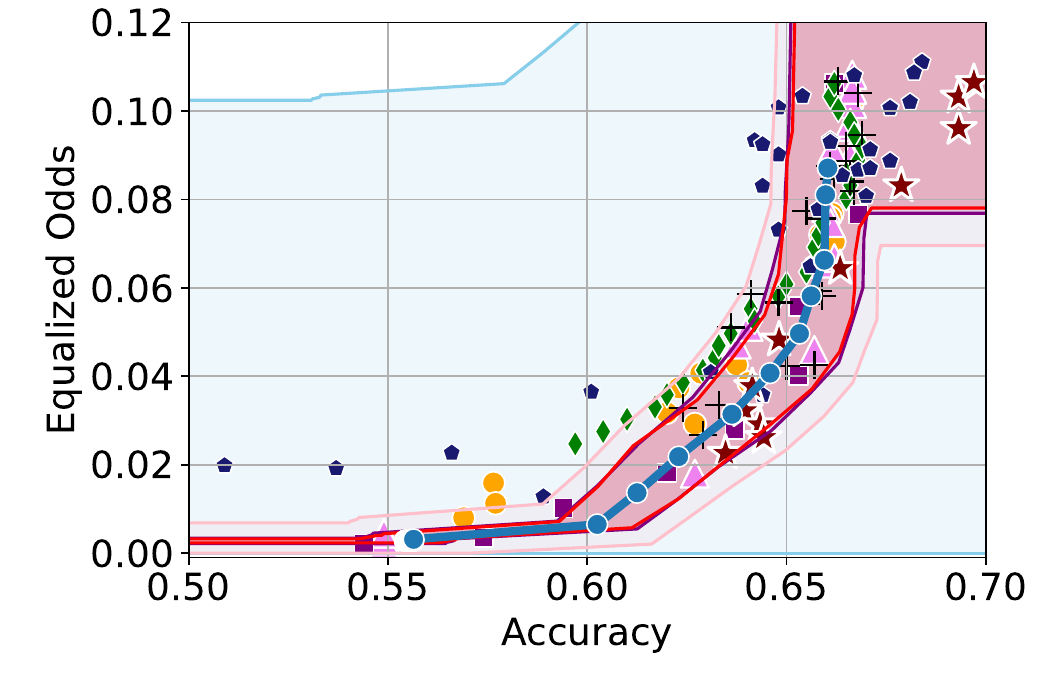}
         \caption{$|\caldata| = 5000$}
         \label{fig:compas_5000_eo}
     \end{subfigure}
        \caption{Equalized Odds results for COMPAS dataset}
        \label{fig:compas_eo_app}
\end{figure}

\begin{figure}[h!]
     \centering
     \includegraphics[width=\textwidth]{figs_results_wrto_cho_legend_with_rto.pdf}\\
     \begin{subfigure}[b]{0.33\textwidth}
         \centering
         \includegraphics[width=\textwidth]{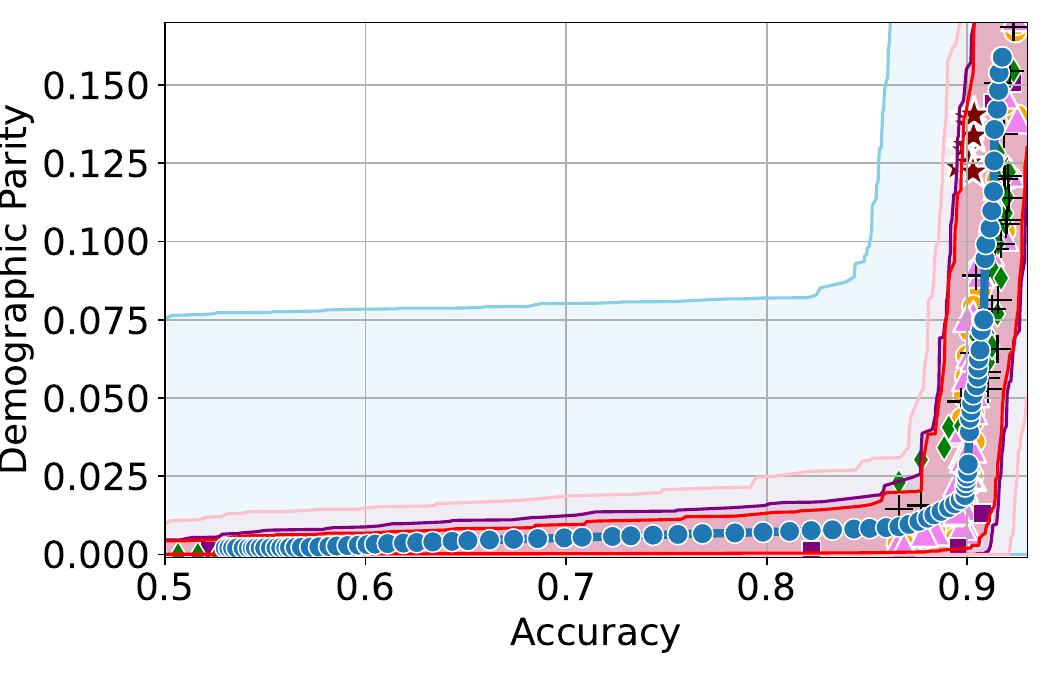}
         \caption{$|\caldata| = 1000$}
         \label{fig:celeba_1000_dp}
     \end{subfigure}%
     \begin{subfigure}[b]{0.33\textwidth}
         \centering
         \includegraphics[width=\textwidth]{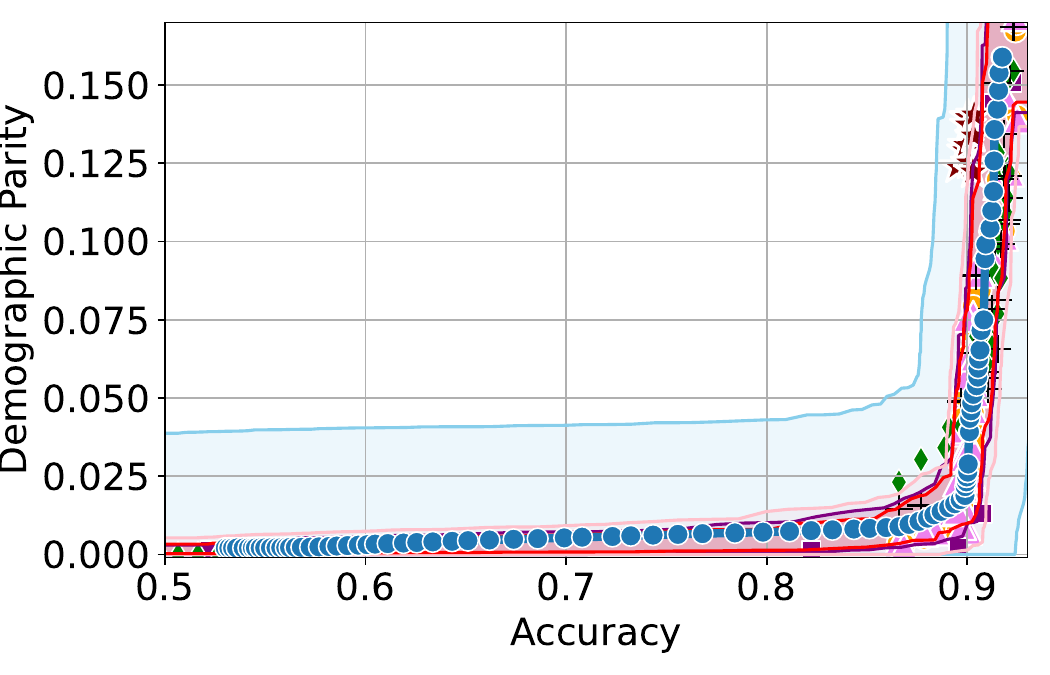}
         \caption{$|\caldata| = 5000$}
         \label{fig:celeba_5000_dp}
     \end{subfigure}%
     \begin{subfigure}[b]{0.33\textwidth}
         \centering
         \includegraphics[width=\textwidth]{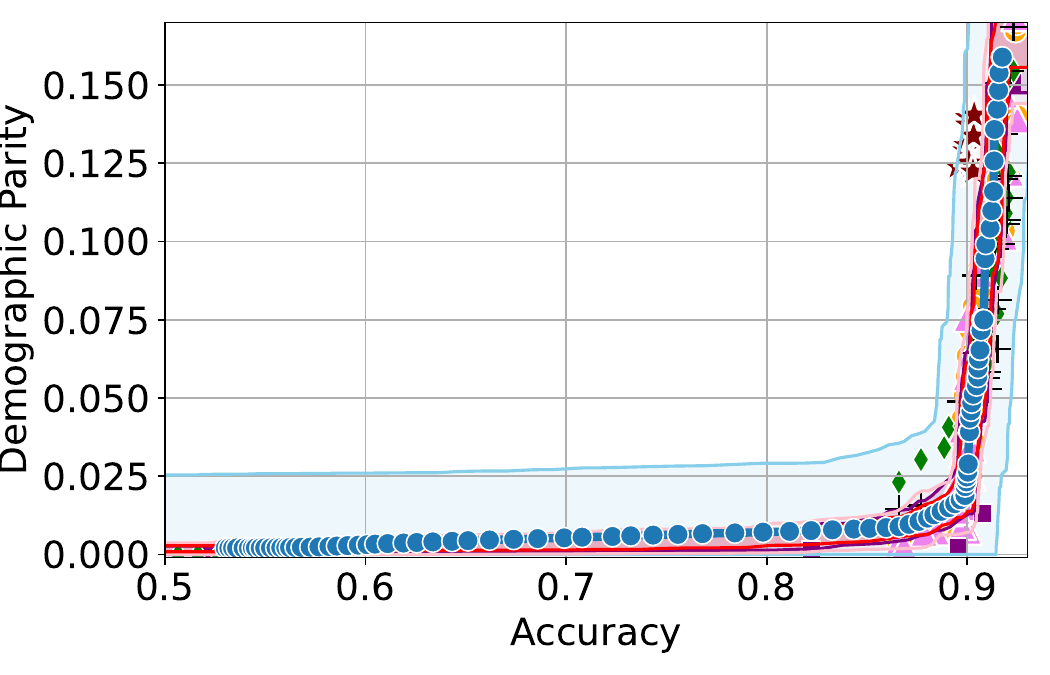}
         \caption{$|\caldata| = 10,000$}
         \label{fig:celeba_10000_dp}
     \end{subfigure}
        \caption{Demographic Parity results for CelebA dataset}
     \begin{subfigure}[b]{0.33\textwidth}
         \centering
         \includegraphics[width=\textwidth]{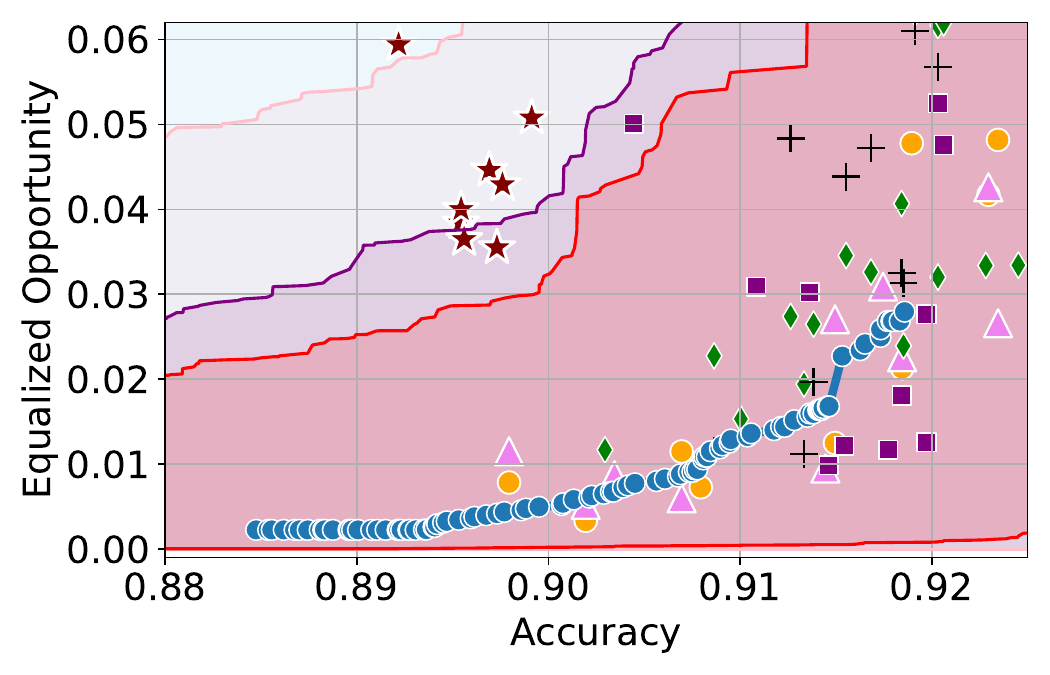}
         \caption{$|\caldata| = 1000$}
         \label{fig:celeba_1000_eop}
     \end{subfigure}%
     \begin{subfigure}[b]{0.33\textwidth}
         \centering
         \includegraphics[width=\textwidth]{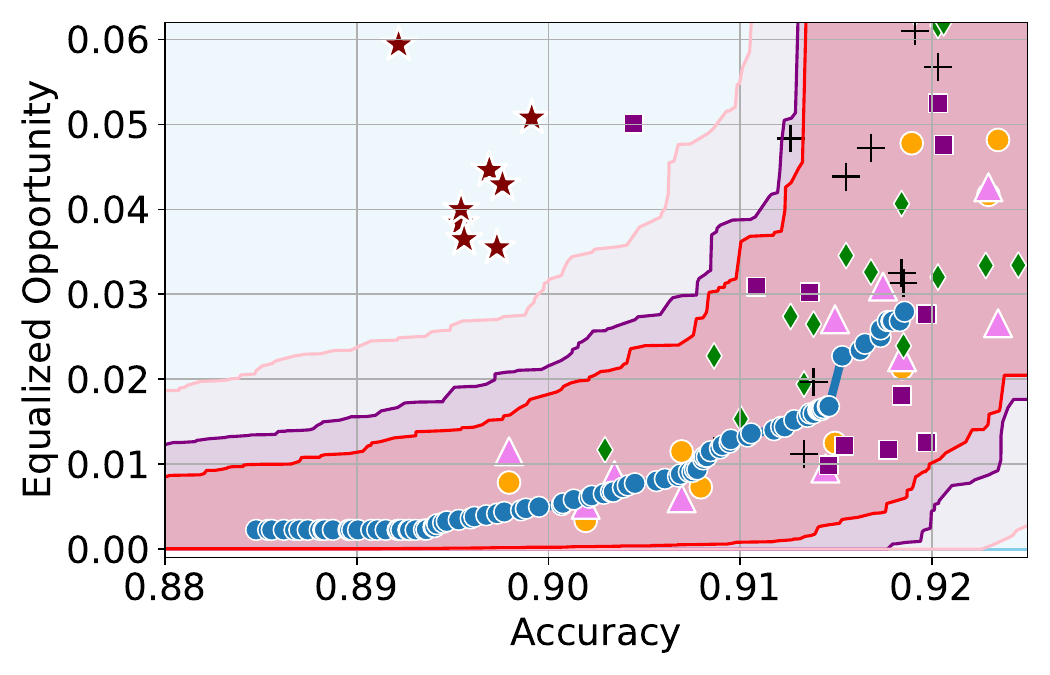}
         \caption{$|\caldata| = 5000$}
         \label{fig:celeba_5000_eop}
     \end{subfigure}%
     \begin{subfigure}[b]{0.33\textwidth}
         \centering
         \includegraphics[width=\textwidth]{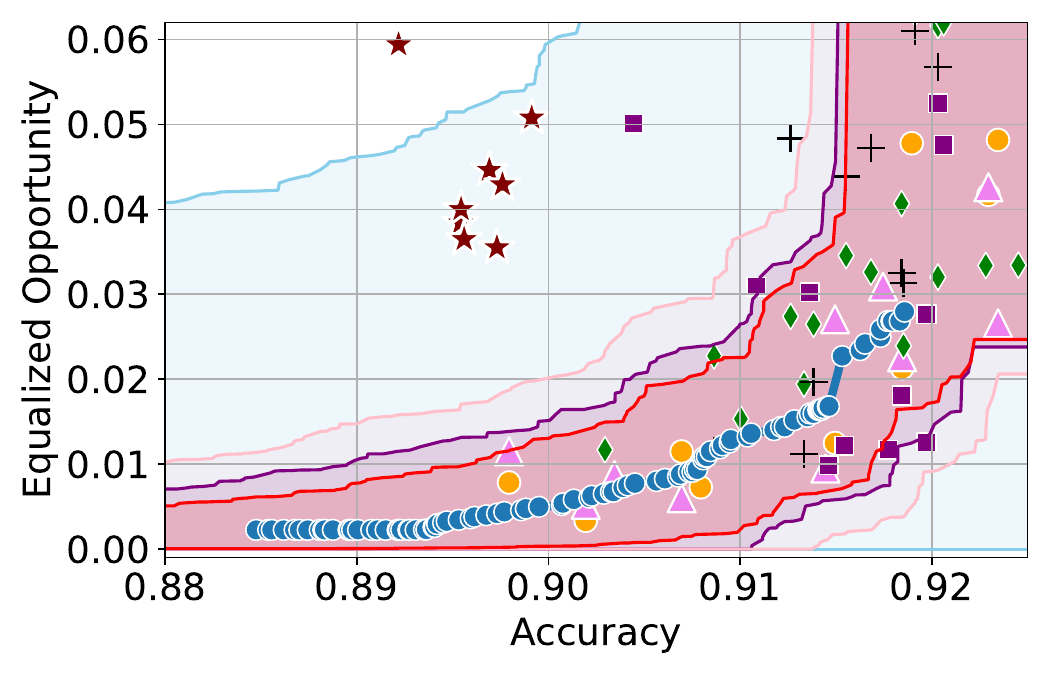}
         \caption{$|\caldata| = 10,000$}
         \label{fig:celeba_10000_eop}
     \end{subfigure}
        \caption{Equalized Opportunity results for CelebA dataset}
             \begin{subfigure}[b]{0.33\textwidth}
         \centering
         \includegraphics[width=\textwidth]{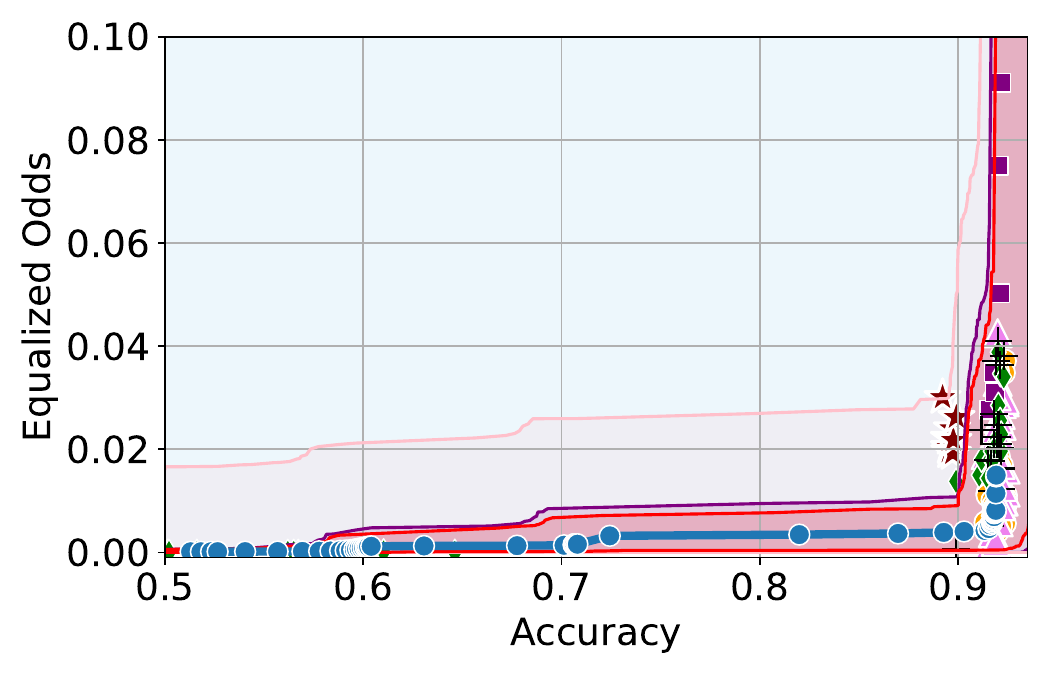}
         \caption{$|\caldata| = 1000$}
         \label{fig:celeba_1000_eo}
     \end{subfigure}%
     \begin{subfigure}[b]{0.33\textwidth}
         \centering
         \includegraphics[width=\textwidth]{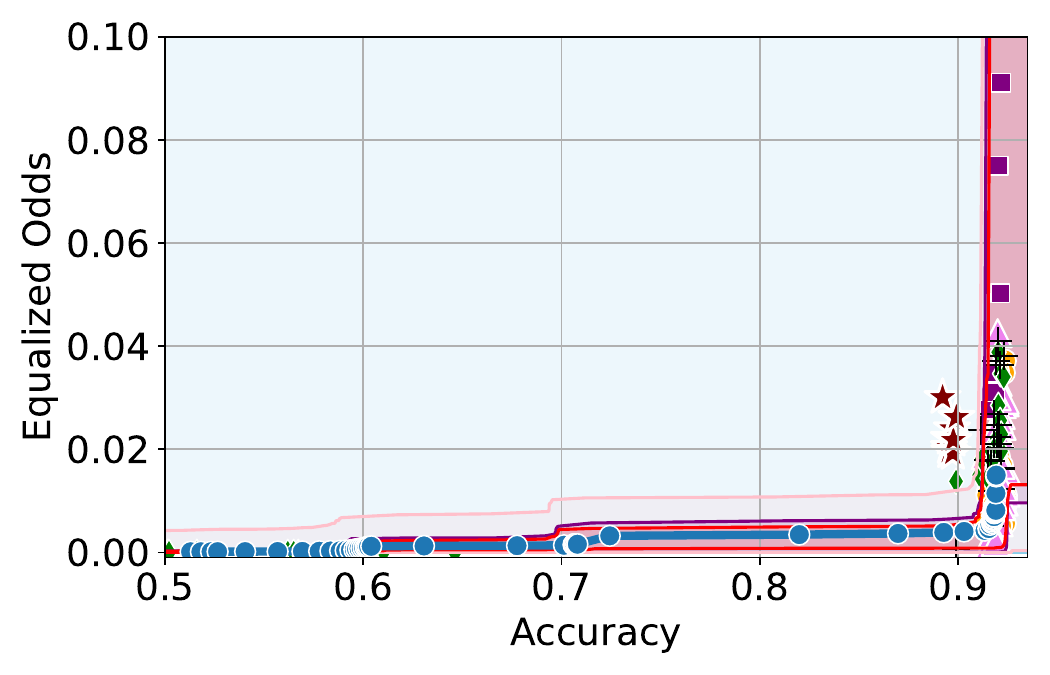}
         \caption{$|\caldata| = 5000$}
         \label{fig:celeba_5000_eo}
     \end{subfigure}%
     \begin{subfigure}[b]{0.33\textwidth}
         \centering
         \includegraphics[width=\textwidth]{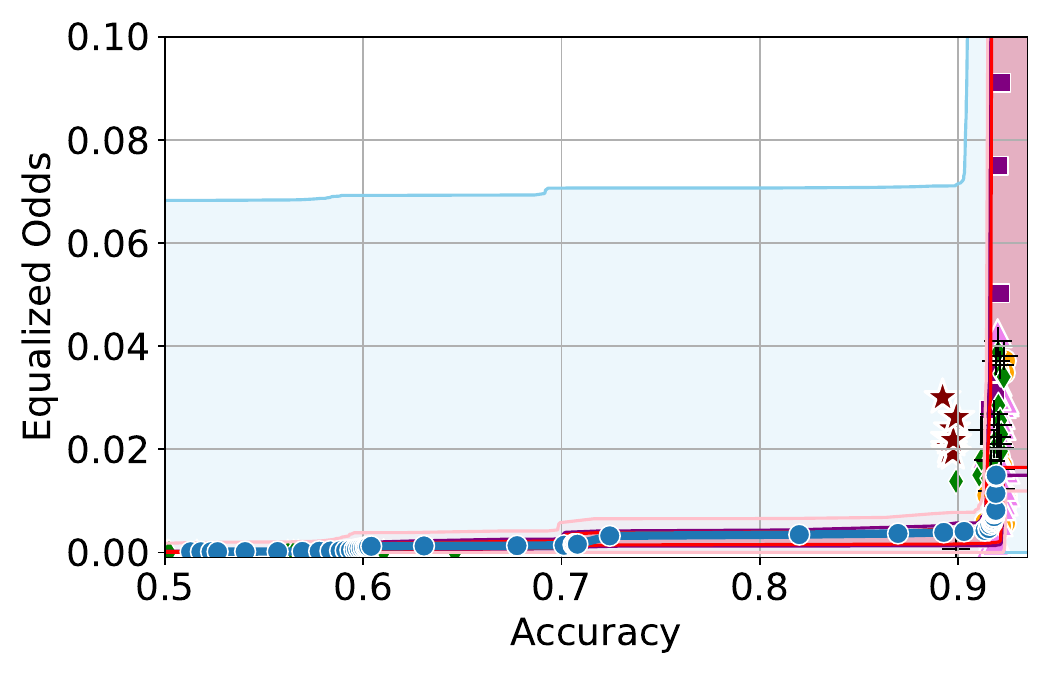}
         \caption{$|\caldata| = 10,000$}
         \label{fig:celaba_10000_eo}
     \end{subfigure}
        \caption{Equalized Odds results for CelebA dataset}
\end{figure}

\begin{figure}[h!]
     \centering
     \includegraphics[width=\textwidth]{figs_results_wrto_cho_legend_with_rto.pdf}\\
     \begin{subfigure}[b]{0.33\textwidth}
         \centering
         \includegraphics[width=\textwidth]{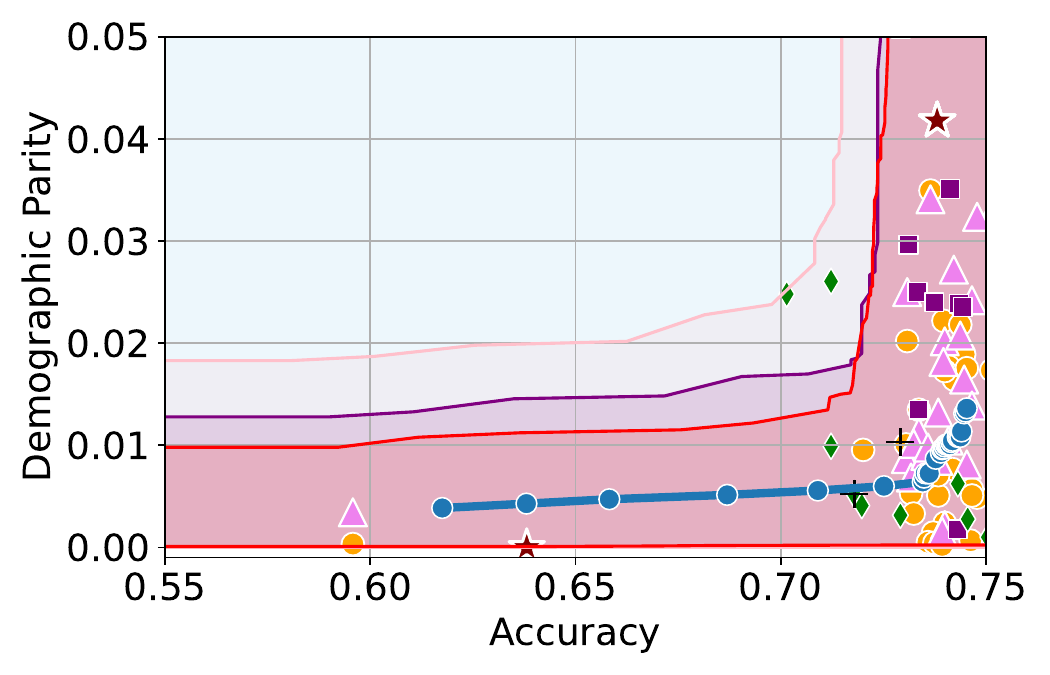}
         \caption{$|\caldata| = 1000$}
         \label{fig:jigsaw_1000_dp}
     \end{subfigure}%
     \begin{subfigure}[b]{0.33\textwidth}
         \centering
         \includegraphics[width=\textwidth]{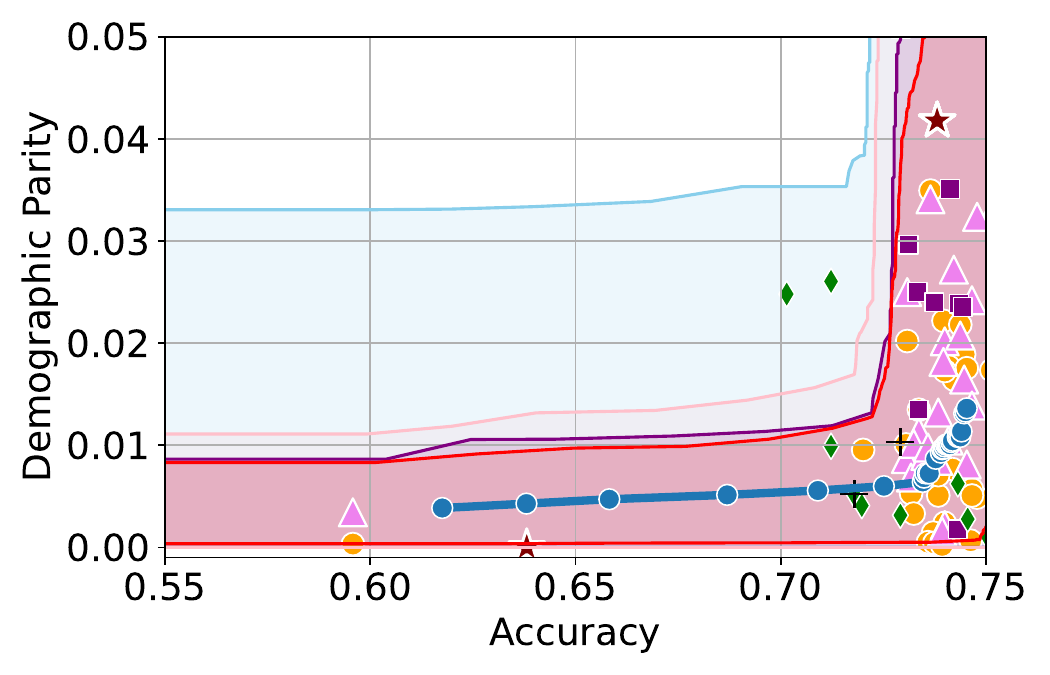}
         \caption{$|\caldata| = 5000$}
         \label{fig:jigsaw_5000_dp}
     \end{subfigure}%
     \begin{subfigure}[b]{0.33\textwidth}
         \centering
         \includegraphics[width=\textwidth]{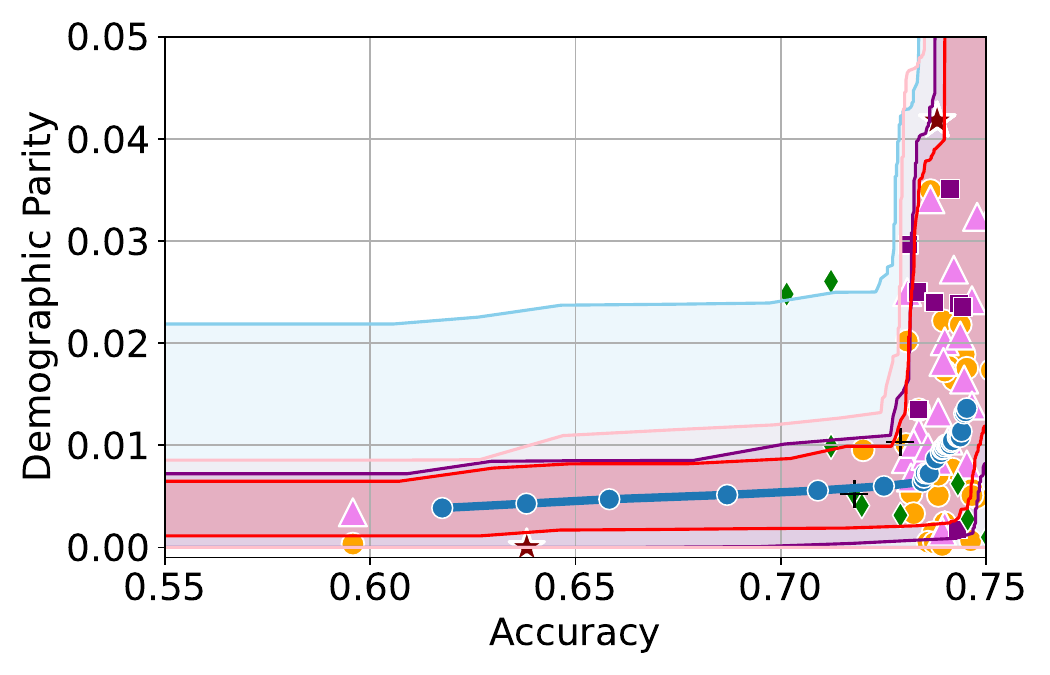}
         \caption{$|\caldata| = 10,000$}
         \label{fig:jigsaw_10000_dp}
     \end{subfigure}
        \caption{Demographic Parity results for Jigsaw dataset}
     \begin{subfigure}[b]{0.33\textwidth}
         \centering
         \includegraphics[width=\textwidth]{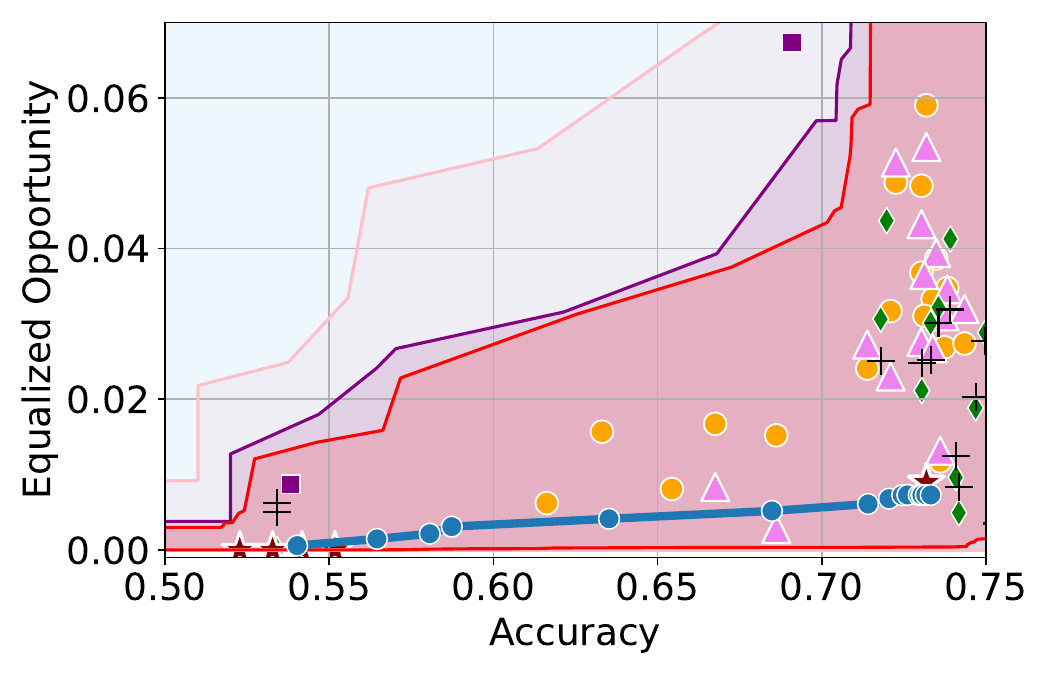}
         \caption{$|\caldata| = 1000$}
         \label{fig:jigsaw_1000_eop}
     \end{subfigure}%
     \begin{subfigure}[b]{0.33\textwidth}
         \centering
         \includegraphics[width=\textwidth]{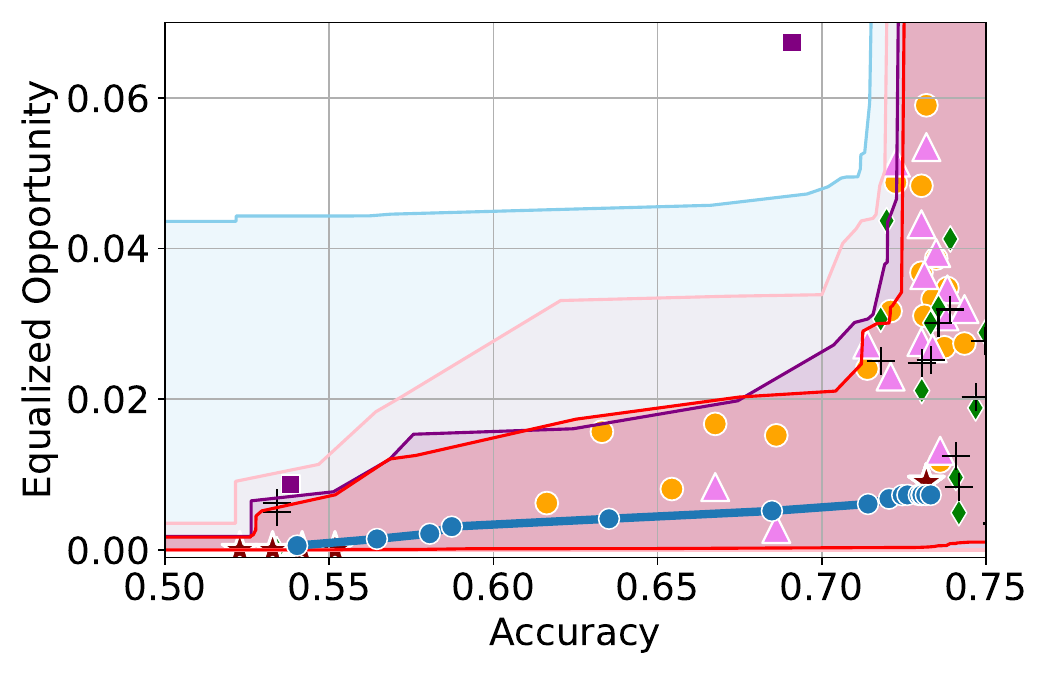}
         \caption{$|\caldata| = 5000$}
         \label{fig:jigsaw_5000_eop}
     \end{subfigure}%
     \begin{subfigure}[b]{0.33\textwidth}
         \centering
         \includegraphics[width=\textwidth]{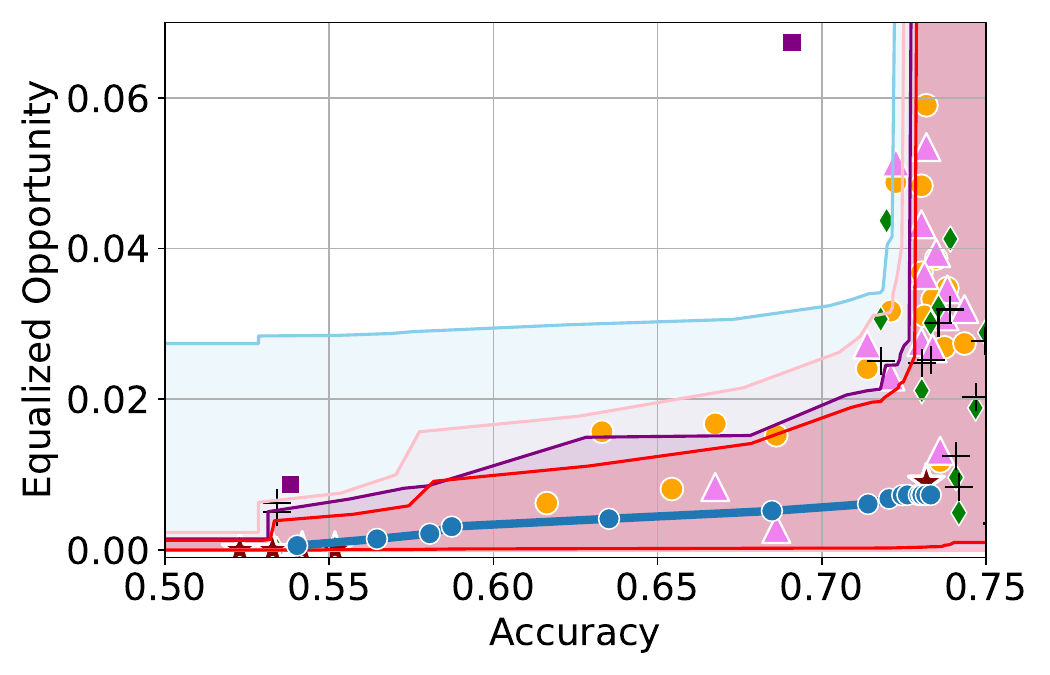}
         \caption{$|\caldata| = 10,000$}
         \label{fig:jigsaw_10000_eop}
     \end{subfigure}
        \caption{Equalized Opportunity results for Jigsaw dataset}
             \begin{subfigure}[b]{0.33\textwidth}
         \centering
         \includegraphics[width=\textwidth]{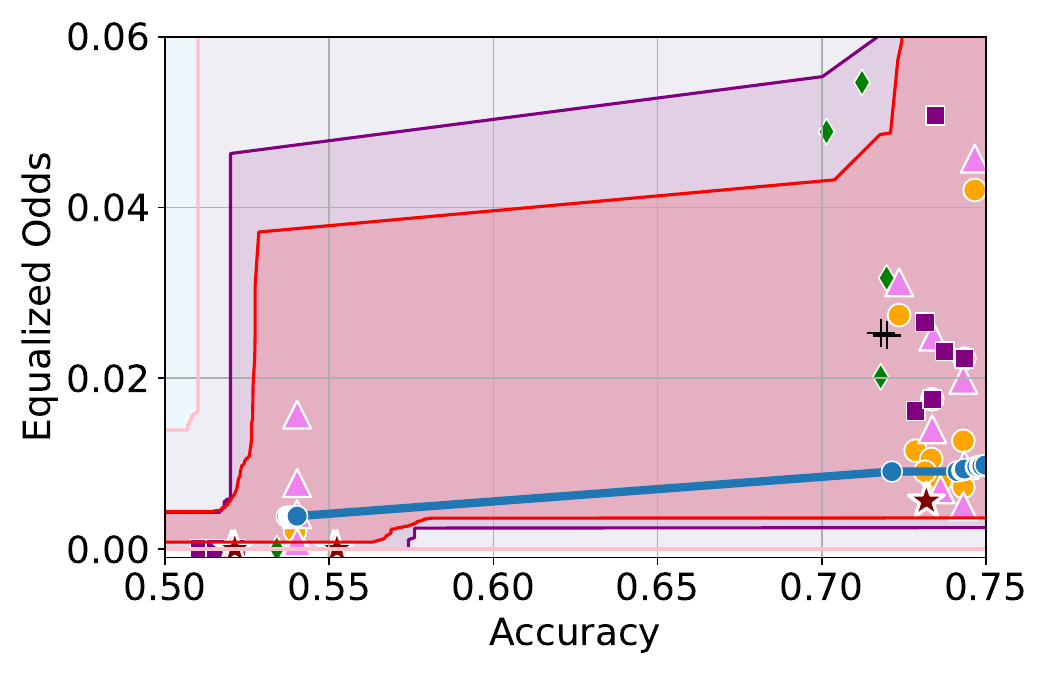}
         \caption{$|\caldata| = 1000$}
         \label{fig:jigsaw_1000_eo}
     \end{subfigure}%
     \begin{subfigure}[b]{0.33\textwidth}
         \centering
         \includegraphics[width=\textwidth]{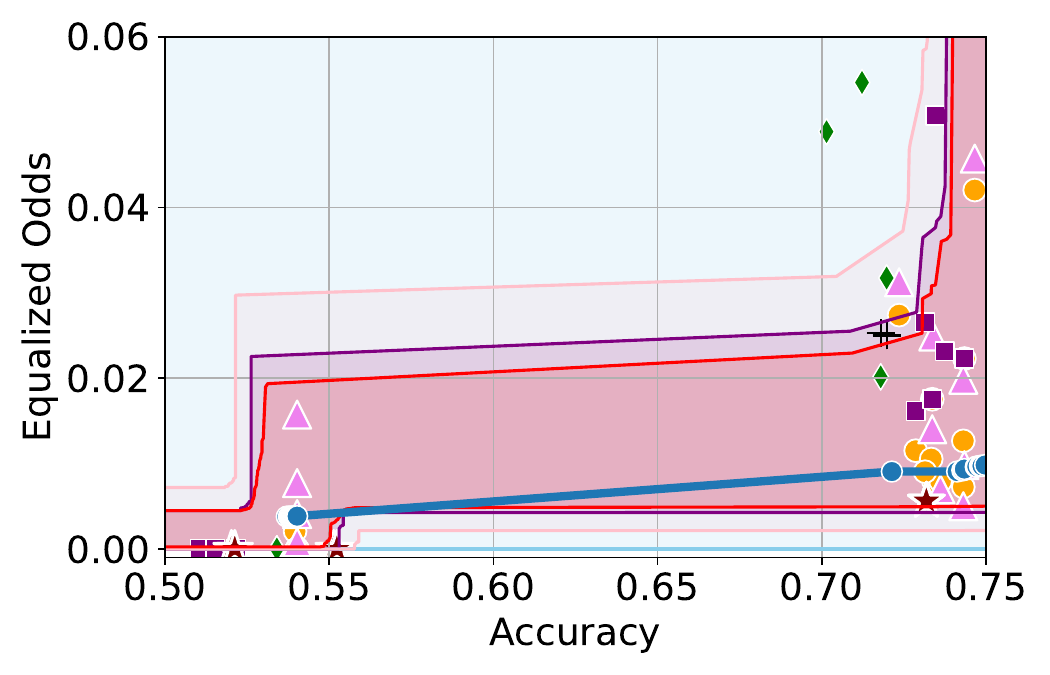}
         \caption{$|\caldata| = 5000$}
         \label{fig:jigsaw_5000_eo}
     \end{subfigure}%
     \begin{subfigure}[b]{0.33\textwidth}
         \centering
         \includegraphics[width=\textwidth]{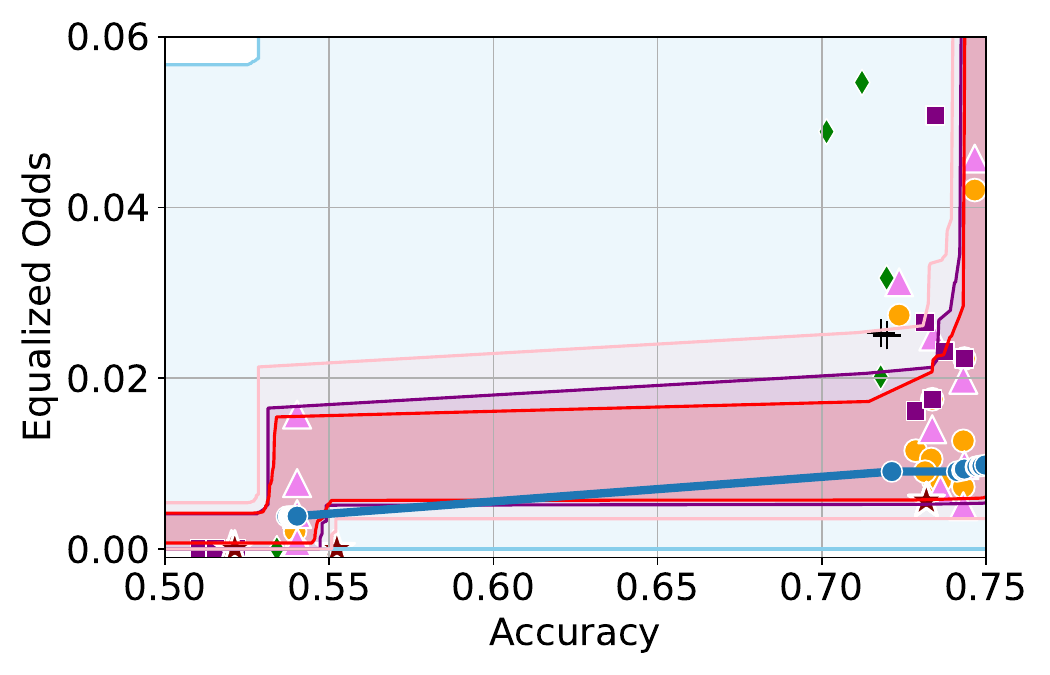}
         \caption{$|\caldata| = 10,000$}
         \label{fig:jigsaw_10000_eo}
     \end{subfigure}
        \caption{Equalized Odds results for Jigsaw dataset}
        \label{ref:jigsaw_eo_app}
\end{figure}

In Figures \ref{ref:adult_dp_app}-\ref{ref:jigsaw_eo_app} we include additional results for all datasets and fairness violations with an increasing number of calibration data $\caldata$. It can be seen that as the number of calibration data increases, the CIs constructed become increasingly tighter. However, the asymptotic, Bernstein and bootstrap CIs are informative even when the calibration data is as little as 500 for COMPAS data (Figures \ref{fig:compas_dp_app}-\ref{fig:compas_eo_app}) and 1000 for all other datasets. These results show that the larger the calibration data $\caldata$, the tighter the constructed CIs are likely to be. However, even in cases where the calibration dataset is relatively small, we obtain informative CIs in most cases. 

Tables \ref{tab:coverage_input_dim_9_fairness_dp} - \ref{tab:coverage_input_dim_512_fairness_eo} show the proportion of the baselines which lie in the three trade-off regions, `Unlikely', `Permissible' and `Sub-optimal' for the Bernstein's CIs, across the different datasets and fairness metrics. Overall, it can be seen that our CIs are reliable since the proportion of trade-offs which lie below our CIs is small. On the other hand, the tables show that our CIs can detect a significant number of sub-optimalities in our baselines, showing that our intervals are informative.

        \begin{table}[t]
        \caption{Proportion of empirical trade-offs for each baseline which lie in the three trade-off regions, for the COMPAS dataset and DP (using Bernstein's CIs). Here $\caldata$ is a 10\% dataset split.}
            \label{tab:coverage_input_dim_9_fairness_dp}
            \centering
            \begin{small}
            \begin{sc}
            \begin{tabular}{llll}
\toprule
{} & Unlikely & Plausible & Sub-optimal \\
\midrule
KDE-fair   &      0.0 &       1.0 &         0.0 \\
RTO        &      0.0 &      0.67 &        0.33 \\
adversary  &     0.15 &      0.85 &         0.0 \\
linear     &      0.0 &      0.82 &        0.18 \\
logsig     &     0.05 &       0.2 &        0.75 \\
reductions &      0.0 &      0.83 &        0.17 \\
separate   &      0.0 &       1.0 &         0.0 \\
\bottomrule
\end{tabular}

            \end{sc}
            \end{small}
        \end{table}
        
        \begin{table}[t]
        \caption{Proportion of empirical trade-offs for each baseline which lie in the three trade-off regions, for the COMPAS dataset and EOP (using Bernstein's CIs). Here $\caldata$ is a 10\% dataset split.}
            \label{tab:coverage_input_dim_9_fairness_eop}
            \centering
            \begin{small}
            \begin{sc}
            \begin{tabular}{llll}
\toprule
{} & Unlikely & Plausible & Sub-optimal \\
\midrule
KDE-fair   &      0.0 &      0.99 &        0.01 \\
RTO        &      0.0 &      0.96 &        0.04 \\
adversary  &      0.0 &       1.0 &         0.0 \\
linear     &      0.0 &       1.0 &         0.0 \\
logsig     &     0.01 &      0.36 &        0.62 \\
reductions &     0.03 &      0.77 &         0.2 \\
separate   &      0.0 &       1.0 &         0.0 \\
\bottomrule
\end{tabular}

            \end{sc}
            \end{small}
        \end{table}
        
        \begin{table}[t]
        \caption{Proportion of empirical trade-offs for each baseline which lie in the three trade-off regions, for the COMPAS dataset and EO (using Bernstein's CIs). Here $\caldata$ is a 10\% dataset split.}
            \label{tab:coverage_input_dim_9_fairness_eo}
            \centering
            \begin{small}
            \begin{sc}
            \begin{tabular}{llll}
\toprule
{} & Unlikely & Plausible & Sub-optimal \\
\midrule
KDE-fair   &      0.0 &       1.0 &         0.0 \\
RTO        &      0.0 &      0.89 &        0.11 \\
adversary  &      0.0 &       1.0 &         0.0 \\
linear     &      0.0 &      0.76 &        0.24 \\
logsig     &      0.0 &      0.45 &        0.55 \\
reductions &      0.0 &       0.7 &         0.3 \\
separate   &      0.1 &      0.87 &        0.03 \\
\bottomrule
\end{tabular}

            \end{sc}
            \end{small}
        \end{table}
        
        \begin{table}[t]
        \caption{Proportion of empirical trade-offs for each baseline which lie in the three trade-off regions, for the Adult dataset and DP (using Bernstein's CIs). Here $\caldata$ is a 10\% dataset split.}
            \label{tab:coverage_input_dim_102_fairness_dp}
            \centering
            \begin{small}
            \begin{sc}
            \begin{tabular}{llll}
\toprule
{} & Unlikely & Plausible & Sub-optimal \\
\midrule
KDE-fair   &      0.0 &       1.0 &         0.0 \\
RTO        &      0.0 &      0.33 &        0.67 \\
adversary  &      0.0 &       0.0 &         1.0 \\
linear     &     0.13 &      0.47 &         0.4 \\
logsig     &      0.0 &      0.27 &        0.73 \\
reductions &      0.0 &      0.13 &        0.87 \\
separate   &     0.03 &      0.95 &        0.02 \\
\bottomrule
\end{tabular}

            \end{sc}
            \end{small}
        \end{table}
        
        \begin{table}[t]
        \caption{Proportion of empirical trade-offs for each baseline which lie in the three trade-off regions, for the Adult dataset and EOP (using Bernstein's CIs). Here $\caldata$ is a 10\% dataset split.}
            \label{tab:coverage_input_dim_102_fairness_eop}
            \centering
            \begin{small}
            \begin{sc}
            \begin{tabular}{llll}
\toprule
{} & Unlikely & Plausible & Sub-optimal \\
\midrule
KDE-fair   &     0.07 &      0.89 &        0.04 \\
RTO        &      0.0 &       0.5 &         0.5 \\
adversary  &      0.0 &      0.36 &        0.64 \\
linear     &      0.0 &       0.5 &         0.5 \\
logsig     &      0.0 &      0.09 &        0.91 \\
reductions &      0.0 &      0.17 &        0.83 \\
separate   &     0.02 &      0.95 &        0.03 \\
\bottomrule
\end{tabular}

            \end{sc}
            \end{small}
        \end{table}
        
        \begin{table}[t]
        \caption{Proportion of empirical trade-offs for each baseline which lie in the three trade-off regions, for the Adult dataset and EO (using Bernstein's CIs). Here $\caldata$ is a 10\% dataset split.}
            \label{tab:coverage_input_dim_102_fairness_eo}
            \centering
            \begin{small}
            \begin{sc}
            \begin{tabular}{llll}
\toprule
{} & Unlikely & Plausible & Sub-optimal \\
\midrule
KDE-fair   &      0.0 &       1.0 &         0.0 \\
RTO        &      0.0 &       0.4 &         0.6 \\
adversary  &      0.0 &      0.36 &        0.64 \\
linear     &      0.0 &       0.4 &         0.6 \\
logsig     &      0.0 &      0.65 &        0.35 \\
reductions &      0.0 &       0.1 &         0.9 \\
separate   &      0.0 &       1.0 &         0.0 \\
\bottomrule
\end{tabular}

            \end{sc}
            \end{small}
        \end{table}
        
        \begin{table}[t]
        \caption{Proportion of empirical trade-offs for each baseline which lie in the three trade-off regions, for the CelebA dataset and DP (using Bernstein's CIs). Here $\caldata$ is a 10\% dataset split.}
            \label{tab:coverage_input_dim_12288_fairness_dp}
            \centering
            \begin{small}
            \begin{sc}
            \begin{tabular}{llll}
\toprule
{} & Unlikely & Plausible & Sub-optimal \\
\midrule
KDE-fair  &      0.0 &       1.0 &         0.0 \\
RTO       &      0.0 &      0.56 &        0.44 \\
adversary &      0.0 &      0.18 &        0.82 \\
linear    &      0.0 &      0.88 &        0.12 \\
logsig    &     0.05 &      0.95 &         0.0 \\
separate  &      0.0 &       1.0 &         0.0 \\
\bottomrule
\end{tabular}

            \end{sc}
            \end{small}
        \end{table}
        
        \begin{table}[t]
        \caption{Proportion of empirical trade-offs for each baseline which lie in the three trade-off regions, for the CelebA dataset and EOP (using Bernstein's CIs). Here $\caldata$ is a 10\% dataset split.}
            \label{tab:coverage_input_dim_12288_fairness_eop}
            \centering
            \begin{small}
            \begin{sc}
            \begin{tabular}{llll}
\toprule
{} & Unlikely & Plausible & Sub-optimal \\
\midrule
KDE-fair  &      0.0 &      0.91 &        0.09 \\
RTO       &      0.0 &      0.45 &        0.55 \\
adversary &      0.0 &      0.27 &        0.73 \\
linear    &     0.15 &      0.82 &        0.03 \\
logsig    &      0.0 &      0.73 &        0.27 \\
separate  &      0.0 &       1.0 &         0.0 \\
\bottomrule
\end{tabular}

            \end{sc}
            \end{small}
        \end{table}
        
        \begin{table}[t]
        \caption{Proportion of empirical trade-offs for each baseline which lie in the three trade-off regions, for the CelebA dataset and EO (using Bernstein's CIs). Here $\caldata$ is a 10\% dataset split.}
            \label{tab:coverage_input_dim_12288_fairness_eo}
            \centering
            \begin{small}
            \begin{sc}
            \begin{tabular}{llll}
\toprule
{} & Unlikely & Plausible & Sub-optimal \\
\midrule
KDE-fair  &      0.0 &       1.0 &         0.0 \\
RTO       &      0.0 &      0.71 &        0.29 \\
adversary &      0.0 &      0.27 &        0.73 \\
linear    &      0.0 &      0.97 &        0.03 \\
logsig    &      0.0 &      0.73 &        0.27 \\
separate  &      0.0 &       1.0 &         0.0 \\
\bottomrule
\end{tabular}

            \end{sc}
            \end{small}
        \end{table}
        
        \begin{table}[t]
        \caption{Proportion of empirical trade-offs for each baseline which lie in the three trade-off regions, for the Jigsaw dataset and DP (using Bernstein's CIs). Here $\caldata$ is a 10\% dataset split.}
            \label{tab:coverage_input_dim_512_fairness_dp}
            \centering
            \begin{small}
            \begin{sc}
            \begin{tabular}{llll}
\toprule
{} & Unlikely & Plausible & Sub-optimal \\
\midrule
KDE-fair  &      0.0 &      0.92 &        0.08 \\
RTO       &      0.0 &       0.5 &         0.5 \\
adversary &      0.0 &      0.09 &        0.91 \\
linear    &      0.0 &      0.76 &        0.24 \\
logsig    &      0.0 &       0.4 &         0.6 \\
separate  &      0.0 &      0.88 &        0.12 \\
\bottomrule
\end{tabular}

            \end{sc}
            \end{small}
        \end{table}
        
        \begin{table}[t]
        \caption{Proportion of empirical trade-offs for each baseline which lie in the three trade-off regions, for the Jigsaw dataset and EOP (using Bernstein's CIs). Here $\caldata$ is a 10\% dataset split.}
            \label{tab:coverage_input_dim_512_fairness_eop}
            \centering
            \begin{small}
            \begin{sc}
            \begin{tabular}{llll}
\toprule
{} & Unlikely & Plausible & Sub-optimal \\
\midrule
KDE-fair  &      0.0 &      0.83 &        0.17 \\
RTO       &      0.0 &      0.62 &        0.38 \\
adversary &     0.06 &      0.94 &         0.0 \\
linear    &     0.14 &      0.59 &        0.27 \\
logsig    &      0.0 &      0.29 &        0.71 \\
separate  &      0.0 &      0.47 &        0.53 \\
\bottomrule
\end{tabular}

            \end{sc}
            \end{small}
        \end{table}
        
        \begin{table}[t]
        \caption{Proportion of empirical trade-offs for each baseline which lie in the three trade-off regions, for the Jigsaw dataset and EO (using Bernstein's CIs). Here $\caldata$ is a 10\% dataset split.}
            \label{tab:coverage_input_dim_512_fairness_eo}
            \centering
            \begin{small}
            \begin{sc}
            \begin{tabular}{llll}
\toprule
{} & Unlikely & Plausible & Sub-optimal \\
\midrule
KDE-fair  &      0.0 &      0.75 &        0.25 \\
RTO       &      0.0 &      0.57 &        0.43 \\
adversary &      0.0 &       1.0 &         0.0 \\
linear    &      0.0 &      0.73 &        0.27 \\
logsig    &      0.0 &       0.4 &         0.6 \\
separate  &      0.0 &      0.87 &        0.13 \\
\bottomrule
\end{tabular}

            \end{sc}
            \end{small}
        \end{table}

\subsection{Additional details for Figure \ref{fig:illu}}\label{subsec:figure_details}
For Figure \ref{fig:illu}, we consider a calibration data $\caldata$ of size 2000.
For the optimal model, we trained a YOTO model on the COMPAS dataset with the architecture given in \ref{subsec:yoto_architectures}. We train the model for a maximum of 1000 epochs, with early stopping based on validation losses. Once trained, 
we obtained the red and black curves by first splitting $\caldata$ randomly into two subsets, and evaluating the trade-offs on each split separately.
On the other hand, for the sub-optimal model, we use another YOTO model with the same architecture but stop the training after 20 epochs. This results in the trained model achieving a sub-optimal trade-off as shown in the figure. We evaluated this trade-off curve on the entire calibration data.
\subsection{Experimental results with FACT Frontiers}\label{subsec:fact_frontiers}
\begin{figure*}[h!]
    \centering
    \captionsetup[subfigure]{aboveskip=0pt,belowskip=0pt}
    \subfloat{\includegraphics[width=\textwidth]{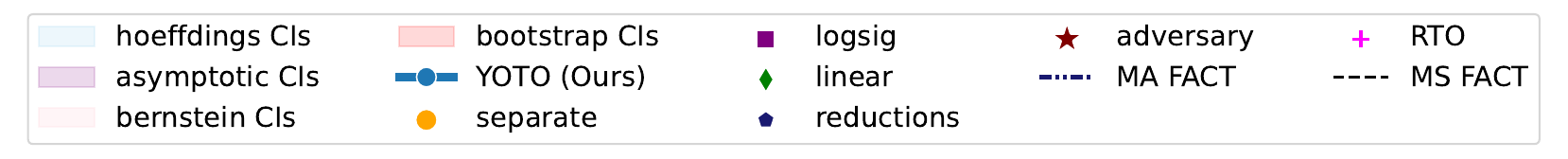}}\\
    \subfloat{\includegraphics[width=0.33\textwidth]{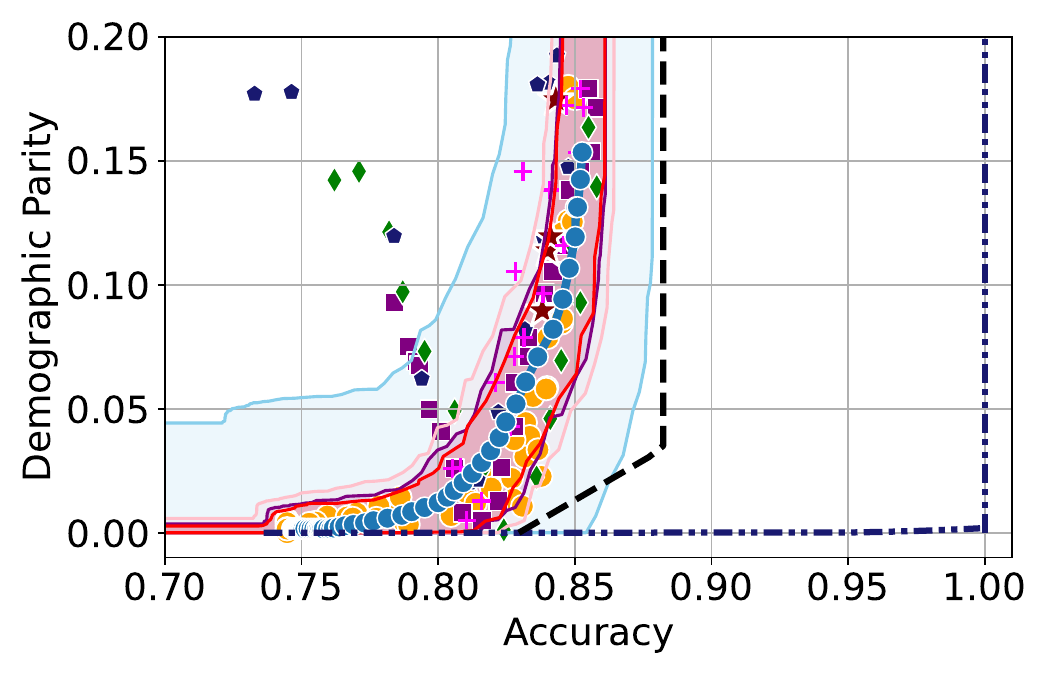}}
    \subfloat{\includegraphics[width=0.33\textwidth]{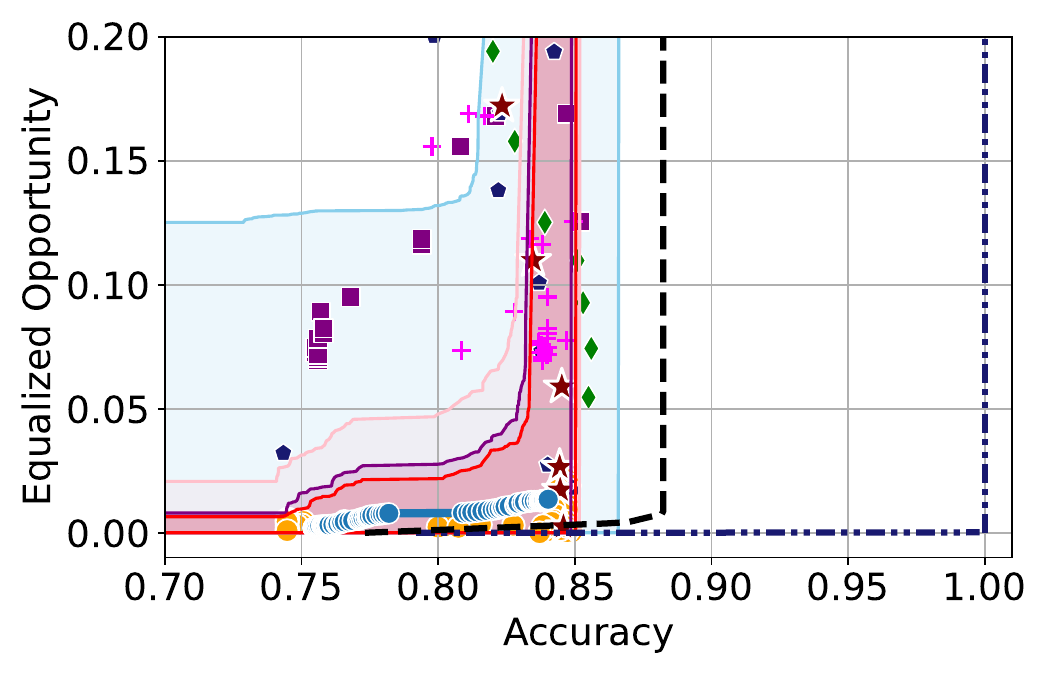}}
    \subfloat{\includegraphics[width=0.33\textwidth]{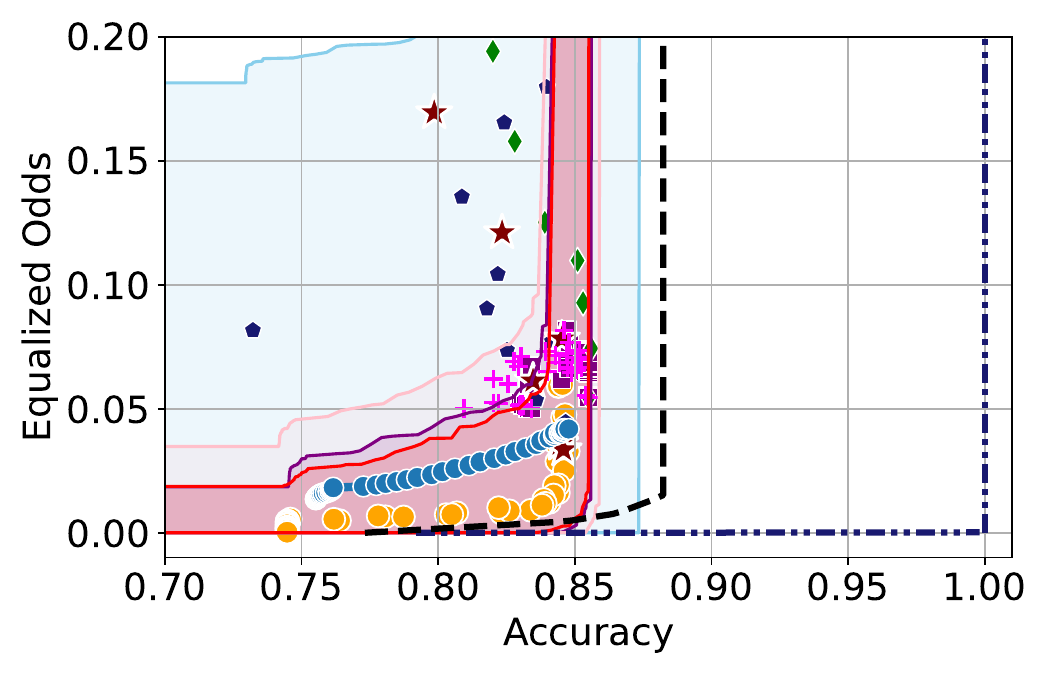}}
    
    \caption{Results on the Adult dataset with FACT frontiers (pre-sensitivity analysis, i.e. $\Delta(h_\lambda)=0$). Here, $|\mathcal{D}_{\text{cal}}| = 2000$ and $\alpha = 0.05$.}
    \label{fig:rebuttal_results}
\end{figure*}

\paragraph{MA FACT and MS FACT yield conservative Pareto Frontiers.}
In Figure \ref{fig:rebuttal_results} we include the results for the Adult dataset along with the model-specific (MS) and model-agnostic (MA) FACT Pareto frontiers, obtained using the methodology in \cite{joon2020fact}. It can be seen that both Pareto frontiers, MS FACT and MA FACT are overly conservative, with the MA FACT being completely non-informative on the Adult dataset. The model-agnostic FACT trade-off considers the best-case trade-off for any given dataset and does not depend on a specific model class. As a result, there is no guarantee that this Pareto frontier is achievable. This is evident from the fact that the MA FACT yields a non-informative Pareto frontier on the Adult dataset in Figure \ref{fig:rebuttal_results}, which is non-achievable in practice. 

The model-specific FACT trade-off curve is comparatively more realistic as it is derived using a pre-trained model. However, it is important to note that the trade-off for this pre-trained model may not lie on the obtained MS FACT frontier. In fact, MS FACT may still not be achievable by any model in the given model class, as it does not provide any guarantee of being achievable either. This can be seen from Figure \ref{fig:rebuttal_results} which shows that the MS-FACT frontier is not achieved by any of the SOTA baselines under consideration. In these experiments, we used a logistic regression classifier as a pre-trained model for obtaining the MS FACT frontier.

\subsection{Synthetic data experiments}\label{subsec:synth_data_experiments}
In real-world settings, the ground truth trade-off curve $\tradeoff$ remains intractable because we only have access to a finite dataset. In this section, we consider a synthetic data setting, where the ground truth trade-off curve can be obtained, to verify that the YOTO trade-off curves are consistent with the ground truth and that the confidence intervals obtained using our methodology contain $\tradeoff$. 

\paragraph{Dataset} Here, we consider a setup with $\mathcal{X} = \mathbb{R}$, $\mathcal{A}= \{0, 1\}$ and $\mathcal{Y} = \{0, 1\}$. Specifically, $A \sim \textup{Bern}(0.5)$ and we define the conditional distributions $X\mid A=a$ as:
\[
X \mid A = a \sim \mathcal{N}(a, 0.2^2) 
\]
Moreover, we define the labels $Y$ as follows:
\begin{align*}
    Y = Z\, \ind(X > 0.5) + (1 - Z)\, \ind(X \leq 0.5),
\end{align*}
where $Z \sim \textup{Bern}(0.9)$ and $Z \perp \!\!\! \perp X$. Here, $Z$ introduces some `noise' to the labels $Y$ and means that perfect accuracy is not achievable by linear classifiers. If perfect accuracy was achievable, the optimal values for Equalized Odds and Equalized Opportunity would be 0 (and would be achieved by the perfect classifier), therefore our use of `noisy' labels $Y$ ensures that the ground truth trade-off curves will be non-trivial. 

\textbf{YOTO model training}
Using the data generating mechanism, we generate $5000$ training datapoints, which we use to train the YOTO model. The YOTO model for this dataset comprises of a simple logistic regression as the main model, with only the scalar logit outputs of the logistic regression being conditioned using FiLM. The MLPs $M_\mu, M_\sigma$ both have two hidden layers, each of size 4, and ReLU activations. We train the model for a maximum of 1000 epochs, with early stopping based on validation losses. Training these simple model only requires one CPU and takes roughly 2 minutes.

\textbf{Ground truth trade-off curve}
To obtain the ground truth trade-off curve $\tradeoff$, we consider the family of classifiers 
\begin{align*}
    h_c(X) = \ind(X > c) 
\end{align*}
for $c\in \mathbb{R}$. Next, we calculate the trade-offs achieved by this model family for a fine grid of $c$ values between -3 and 3, using a dataset of size 500,000 obtained using the data-generating mechanism described above. The large dataset size ensures that the finite sample errors in accuracy and fairness violation values are negligible. This allows us to reliably plot the trade-off curve $\tradeoff$. 

\subsubsection{Results}
Figure \ref{fig:synth-dataset} shows the results for the synthetic data setup for three different fairness violations, obtained using a calibration dataset $\caldata$ of size 2000. It can be seen that for each fairness violation considered, the YOTO trade-off curve aligns very well with the ground-truth trade-off curve $\tradeoff$. Additionally, we also consider four different confidence intervals obtained using our methodology, and Figure \ref{fig:synth-dataset} shows that each of the four confidence intervals considered contain the ground-truth trade-off curve. This empirically verifies the validity of our confidence intervals in this synthetic setting. 

Additionally, in Figure \ref{fig:delta_empirical_values} we plot how the worst-case values for $\Delta(h)$ change (relative to the optimal trade-off $\tradeoff$) as the number of training data $\trdata$ increases. Here, we use $h_\lambda$ as a short-hand notation for the YOTO model $h_\theta(\cdot, \lambda)$ and the quantity on the $y$-axis, 
$
\max_{\lambda \in \Lambda} \frac{\Delta(h_\lambda)}{\tradeoff(\Acc(h_\lambda)},
$
can be intuitively thought of as the worst-case error percentage between YOTO trade-off and the optimal trade-off. The figure empirically verifies our result in Theorem \ref{theorem:delta_h} by showing that as the number of training data increases, the error term declines across all fairness metrics.
\begin{figure}[h!]
     \centering
     \includegraphics[width=\textwidth]{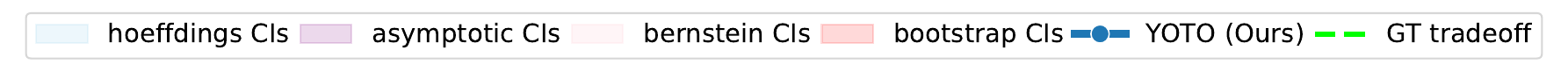}\\
     \begin{subfigure}[b]{0.33\textwidth}
         \centering
         \includegraphics[width=\textwidth]{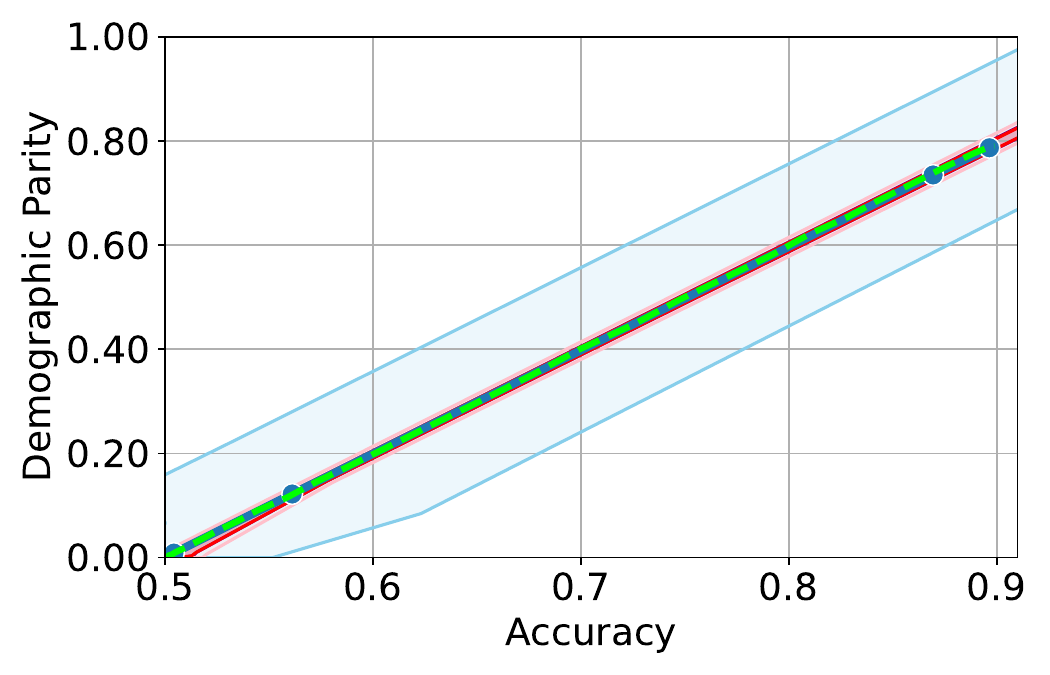}
     \end{subfigure}%
     \begin{subfigure}[b]{0.33\textwidth}
         \centering
         \includegraphics[width=\textwidth]{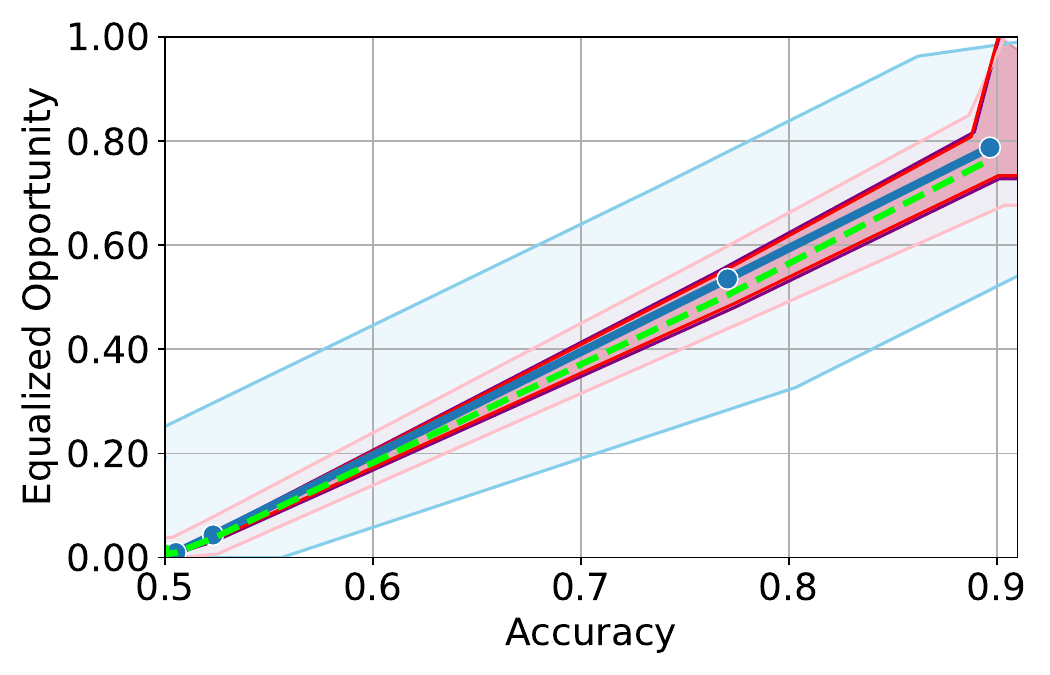}
     \end{subfigure}%
     \begin{subfigure}[b]{0.33\textwidth}
         \centering
         \includegraphics[width=\textwidth]{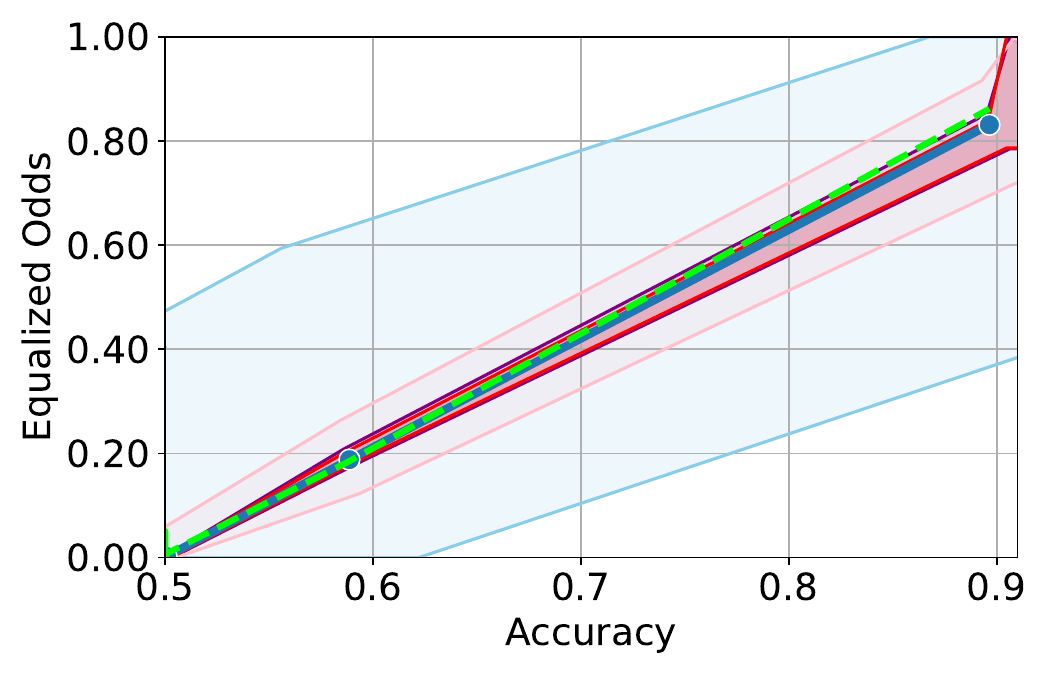}
     \end{subfigure}
        \caption{Results for the synthetic dataset with ground truth trade-off curves $\tradeoff$.}
        \label{fig:synth-dataset}
\end{figure}

\begin{figure}[h!]
    \centering
    \includegraphics[width=0.8\textwidth]{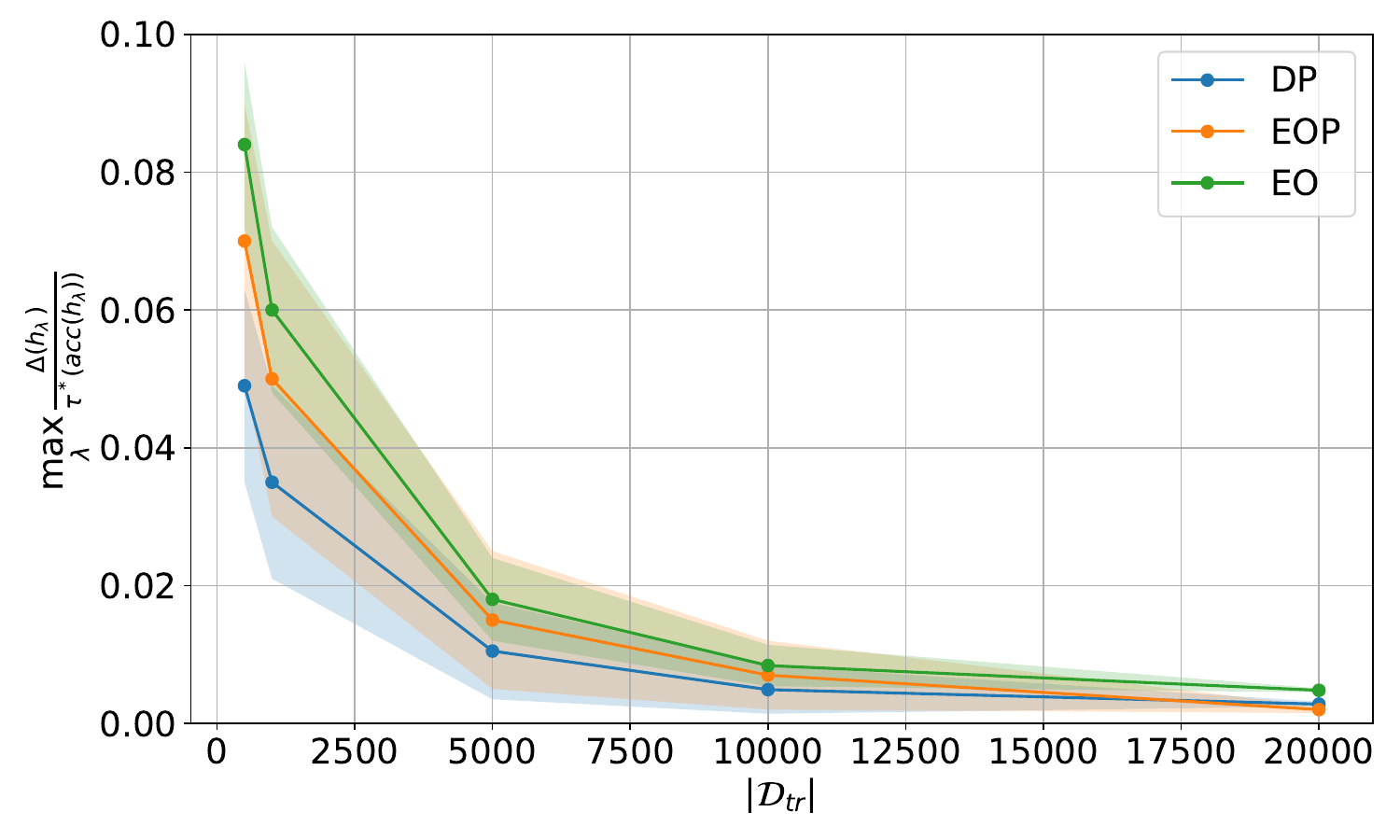}
    \caption{Plot showing how $\Delta(h)$ decreases (relative to the ground truth trade-off value $\tradeoff(\Acc(h))$) as the training data size $|\trdata|$ increases. Here, we plot the worst (i.e. largest) value of $\Delta(h_\lambda)/\tradeoff(\Acc(h_\lambda))$ achieved by our YOTO model over a grid of $\lambda$ values in $[0, 5]$.}
    \label{fig:delta_empirical_values}
\end{figure}

\end{document}